\newtheorem{lemma}{Lemma}
\newtheorem{theorem}{Theorem}
\newtheorem*{theorem*}{Theorem}
\newtheorem{corollary}{Corollary}
\theoremstyle{definition}
\newtheorem{proposition}{Proposition}
\newtheorem{definition}{Definition}
\newtheorem{assumption}{Assumption}
\newtheorem{remark}{Remark}
\title{Model-free Posterior Sampling via Learning Rate Randomization}
\author{%
Daniil Tiapkin \\
  CMAP, \' Ecole Polytechnique\\
  HSE University\\
\texttt{daniil.tiapkin@polytechnique.edu} \\
  \And
  Denis Belomestny \\
  Duisburg-Essen University\\
  HSE University \\
  \texttt{denis.belomestny@uni-due.de} \\
  \AND
  Daniele Calandriello \\
  Google DeepMind \\
\texttt{dcalandriello@google.com}
  \And
  \' Eric Moulines \\
  CMAP, \' Ecole Polytechnique \\
  Mohamed Bin Zayed University of AI, UAE \\
\texttt{eric.moulines@polytechnique.edu} \\
  \And
  Remi Munos \\
  Google DeepMind \\
  \texttt{munos@google.com} \\
  \And
  Alexey Naumov \\
  HSE University \\
  \texttt{anaumov@hse.ru} \\
  \AND 
  Pierre Perrault \\
    IDEMIA \\   \texttt{pierre.perrault@outlook.com} \\
  \And 
  Michal Valko \\
  Google DeepMind \\
  \texttt{valkom@google.com} \\
  \And 
  Pierre M\' enard \\
  ENS Lyon\\
  \texttt{pierre.menard@ens-lyon.fr} \\
}
\author{%
Daniil Tiapkin$^{1,2}$ \quad Denis Belomestny$^{3,2}$ \quad Daniele Calandriello$^4$ \quad \' Eric Moulines$^{1,5}$ \\ \textbf{Remi Munos}$^4$ \quad
\textbf{Alexey Naumov}$^2$ \quad \textbf{Pierre Perrault}$^6$ \quad \textbf{Michal Valko}$^4$
\quad \textbf{Pierre M\' enard}$^7$
\\
$^1$CMAP, École Polytechnique 
\quad $^2$HSE University 
\quad $^3$Duisburg-Essen University
\\ $^4$Google DeepMind
\quad $^5$Mohamed Bin Zayed University of AI, UAE
\quad $^6$IDEMIA
\quad $^7$ENS Lyon\\
\texttt{\{daniil.tiapkin,eric.moulines\}@polytechnique.edu}\\
\texttt{denis.belomestny@uni-due.de}\\
\texttt{\{dcalandriello,munos,valkom\}@google.com}\quad
\texttt{anaumov@hse.ru}
\\
\texttt{pierre.perrault@outlook.com}
\quad
\texttt{pierre.menard@ens-lyon.fr}
}
\begin{document}

\maketitle

\begin{abstract}
    In this paper, we introduce Randomized Q-learning (\RandQL), a novel randomized model-free algorithm for regret minimization in episodic Markov Decision Processes (MDPs). To the best of our knowledge, \RandQL is the first tractable model-free posterior sampling-based algorithm. 
    We analyze the performance of \RandQL in both tabular and non-tabular metric space settings. In tabular MDPs, \RandQL achieves a regret bound of order $\tcO(\sqrt{H^{5}SAT})$, where $H$ is the planning horizon, $S$ is the number of states, $A$ is the number of actions, and $T$ is the number of episodes. For a metric state-action space, \RandQL enjoys a regret bound of order $\tcO(H^{5/2} T^{(d_z+1)/(d_z+2)})$, where $d_z$ denotes the zooming dimension. Notably, \RandQL achieves optimistic exploration \textit{without using bonuses}, relying instead on a novel idea of \textit{learning rate randomization}. Our empirical study shows that \RandQL outperforms existing approaches on baseline exploration environments.
\end{abstract}

\doparttoc 
\faketableofcontents 

\section{Introduction}

In reinforcement learning (RL, \citealt{SuttonBarto98}), an agent learns to interact with an unknown environment by acting, observing the next state, and receiving a reward. The agent's goal is to maximize the sum of the collected rewards. To achieve this, the agent can choose to use model-based or model-free algorithms. In model-based algorithms, the agent builds a model of the environment by inferring the reward function and the transition kernel that produces the next state. The agent then plans in this model to find the optimal policy. In contrast, model-free algorithms directly learn the optimal policy, which is the mapping of a state to an optimal action, or equivalently, the optimal Q-values, which are the mapping of a state-action pair to the expected return of an optimal policy starting by taking the given action at the given state.
\par
Although empirical evidence suggests that model-based algorithms are more sample efficient than model-free algorithms \citep{deisenroth2011pilco, schulman2015trpo}; model-free approaches offer several advantages. These include smaller time and space complexity, the absence of a need to learn an explicit model, and often simpler algorithms. As a result, most of the recent breakthroughs in deep RL, such as those reported by \citet{mnih2013playing,schulman2015trpo,schulman2017ppo,haarnoja2018actorcritic}, have been based on model-free algorithms, with a few notable exceptions, such as 
\citet{schrittwieser2019mastering,hessel2021muesli}. Many of these model-free algorithms \citep{mnih2013playing,vanhasselt2015reinforcement, lillicrap2015continuous} are rooted in the well-known Q-learning algorithm of \citet{watkins1992q}. Q-learning is an off-policy learning technique where the agent follows a behavioral policy while simultaneously incrementally learning the optimal Q-values by combining asynchronous dynamic programming and stochastic approximation. Until recently, little was known about the sample complexity of Q-learning in the setting where the agent has no access to a simulator allowing to sample an arbitrary state-action pair. In this work, we consider such challenging setting where the environment is modelled by an episodic Markov Decision Process (MDP) of horizon $H$. After $T$ episodes, the performance of an agent is measured through regret which is the difference between the cumulative
reward the agent could have obtained by acting optimally and what the agent really obtained during the interaction with the MDP.
\par
This framework poses the famous exploration-exploitation dilemma where the agent must balance the need to try new state-action pairs to learn an optimal policy against exploiting the current observations to collect the rewards. One effective approach to resolving this dilemma is to adopt the principle of optimism in the face of uncertainty. In finite MDPs, this principle has been successfully implemented in the model-based algorithm using bonuses \citep{jaksch2010near,azar2017minimax,fruit2018efficient,dann2017unifying,zanette2019tighter}. Specifically, the upper confidence bounds (UCBs) on the optimal Q-value are built by adding bonuses and then used for planning. Building on this approach, \citet{jin2018is} proposed the \OptQL algorithm, which applies a similar bonus-based technique to Q-learning, achieving efficient exploration. Recently, \citet{zhang2020advantage} introduced a simple modification to \OptQL that achieves optimal sample complexity, making it competitive with model-based algorithms.
\par
Another class of methods for optimistic exploration is Bayesian-based approaches.  An iconic example among this class is the posterior sampling for reinforcement learning (\PSRL,\citealt{strens2000bayesian,osband2013more}) algorithm. This model-based algorithm maintains a \textit{surrogate Bayesian model} of the MDP, for instance, a Dirichlet posterior on the transition
probability distribution if the rewards are known. At each episode, a new MDP is sampled (i.e., a
transition probability for each state-action pair) according to the posterior distribution of the Bayesian
model. Then, the agent plans in this sampled MDP and uses the resulting policy to interact with the environment. Notably, an optimistic variant of \PSRL, named optimistic posterior sampling for reinforcement learning (\OPSRL,\, \citealp{agrawal2020posterior,tiapkin2022optimistic}) also enjoys an optimal sample complexity \citep{tiapkin2022optimistic}. The random least square value iteration (\RLSVI, \citet{osband2013more}) is another well-known model-based algorithm that leverages a Bayesian-based technique for exploration. Precisely, \RLSVI directly sets a Gaussian prior on the optimal Q-values and then updates the associated posterior trough value iteration in a model \citep{osband2013more,russo2019worst}. A close variant of \RLSVI proposed by \citet{xiong2021nearoptimal}, using a more sophisticated prior/posterior couple, is also proven to be near-optimal. 
\par
It is noteworthy that Bayesian-based exploration techniques have shown superior empirical performance compared to bonus-based exploration, at least in the tabular setting \citep{osband2013more,osband2017why}. Furthermore, these techniques have also been successfully applied to the deep RL setting \citep{osband2016deep, azizzadenesheli2018efficient, fortunato2018noisy, li2022hyperdqn, sasso2023posterior}. Finally, Bayesian methods allow for the incorporation of apriori information into exploration (e.g. by giving more weight to important states).  However, most of the theoretical studies on Bayesian-based exploration have focused on model-based algorithms, raising the natural question of whether the \PSRL approach can be extended to a provably efficient model-free algorithm that matches the good empirical performance of its model-based counterparts. Recently, \cite{dann2021modelfree} proposed a model-free posterior sampling algorithm for structured MDPs, however, it is not computationally tractable. Therefore, a provably tractable model-free posterior sampling algorithm has remained a challenge.
\par
In this paper, we aim to resolve this challenge. We propose the randomized Q-learning (\RandQL) algorithm that achieves exploration without bonuses, relying instead on a novel idea of learning rate randomization. \RandQL is a tractable model-free algorithm that updates an ensemble of Q-values via Q-learning with Beta distributed step-sizes. If tuned appropriately, the noise introduced by the random learning rates is similar to the one obtained by sampling from the posterior of the \PSRL algorithm. Thus, one can see the ensemble of Q-values as posterior samples from the same induced posterior on the optimal Q-values as in \PSRL. Then, \RandQL chooses among these samples in the same optimistic fashion as \OPSRL. We prove that for tabular MDPs, a staged version \citep{zhang2020advantage} of \RandQL, named \StagedRandQL enjoys the same regret bound as the \OptQL algorithm, that is, $\tcO(\sqrt{H^5SAT})$ where $S$ is the number of states and $A$ the number of actions. Furthermore, we extend \StagedRandQL beyond the tabular setting into the \NetStagedRandQL algorithm to deal with metric state-action spaces \citep{domingues2021kernel,sinclair2019adaptive}. \NetStagedRandQL operates similarly to \StagedRandQL but over a fixed discretization of the state-action space and uses a specific prior tuning to handle the effect of discretization. We prove that \NetStagedRandQL enjoys a regret bound of order $\tcO(H^{5/2} T^{(d_c+1)/(d_c+2)})$, where $d_c$ denotes the covering dimension. This rate is of the same order as the one of \AdaptiveQL by \citet{sinclair2019adaptive,sinclair2022adaptive}, an adaptation of \OptQL to metric state-action space and has a better dependence on the budget $T$ than one of the model-based kernel algorithms such that \KernelUCBVI by \citet{domingues2021kernel}. We also explain how to adapt \NetStagedRandQL and its analysis to work with an adaptive discretization as by \citet{sinclair2019adaptive,sinclair2022adaptive}. Finally, we provide preliminary experiments to illustrate the good performance of \RandQL against several baselines in finite and continuous environments.

We highlight our main contributions:
\vspace{-0.1cm}
\begin{itemize}[itemsep=-2pt,leftmargin=6pt]
    \item The \RandQL algorithm, a new tractable (provably efficient) model-free Q-learning adaptation of the \PSRL algorithm that explores through randomization of the learning rates. 
    \item A regret bound of order $\tcO(\sqrt{H^5SAT})$ for a staged version of the \RandQL algorithm in finite MDPs where $S$ is the number of states and $A$ the number of actions, $H$ the horizon and $T$ the budget.
    \item A regret bound of order  $\tcO(H^{5/2} T^{(d_c+1)/(d_c+2)})$ for an adaptation of \RandQL to metric spaces where $d_c$ denotes the covering dimension.
    \item Adaptive version of metric space extension of \RandQL algorithm that achieves a regret bound of order $\tcO(H^{5/2} T^{(d_z+1)/(d_z+2)})$, where $d_z$ is a \textit{zooming} dimension.
    \item Experiments in finite and continuous MDPs that show that \RandQL is competitive with model-based and model-free baselines while keeping a low time-complexity.
\end{itemize}

\section{Setting}
\label{sec:setting}

 We consider an episodic MDP $\left(\cS, \cA, H, \{p_h\}_{h\in[H]},\{r_h\}_{h\in[H]}\right)$,  where $\cS$ is the set of states, $\cA$ is the set of actions,
 $H$ is the number of steps in one episode, $p_h(s'|s,a)$ is the probability transition from state~$s$ to state~$s'$ upon taking action $a$ at step $h,$ and $r_h(s,a)\in[0,1]$ is the bounded deterministic reward received after taking the action $a$ in state $s$ at step $h$. Note that we consider the general case of rewards and transition functions that are possibly non-stationary, i.e., that are allowed to depend on the decision step $h$ in the episode.

\vspace{-0.3cm}
\paragraph{Policy \& value functions} A \emph{deterministic} policy $\pi$ is a collection of functions $\pi_h : \cS \to \cA$ for all $h\in [H]$, where every $\pi_h$  maps each state to a \emph{single} action. The value functions of $\pi$, denoted by $V_h^\pi$, as well as the optimal value functions, denoted by $\Vstar_h$ are given by the Bellman and the optimal Bellman equations,
\begin{align*}
	Q_h^{\pi}(s,a) &= r_h(s,a) + p_h V_{h+1}^\pi(s,a) & V_h^\pi(s) &= \pi_h Q_h^\pi (s)\\
  Q_h^\star(s,a) &=  r_h(s,a) + p_h V_{h+1}^\star(s,a) & V_h^\star(s) &= \max_a Q_h^\star (s, a),
\end{align*}
where by definition, $V_{H+1}^\star \triangleq V_{H+1}^\pi \triangleq 0$. Furthermore, $p_{h} f>(s, a) \triangleq \E_{s' \sim p_h(\cdot | s, a)} \left[f(s')\right]$   denotes the expectation operator with respect to the transition probabilities $p_h$ and
$\pi_h g(s) \triangleq  g(s,\pi_h(s))$ denotes the composition with the policy~$\pi$ at step $h$.

\vspace{-0.3cm}
\paragraph{Learning problem} The agent, to which the transitions are \emph{unknown} (the rewards are assumed to be known\footnote{Our work can be extended without too much difficulty to the case of random rewards.} for simplicity), interacts with the environment during $T$ episodes of length $H$, with a \emph{fixed} initial state $s_1$.\footnote{As explained by \citet{fiechter1994efficient} if the first state is sampled randomly as $s_1\sim p,$ we can simply add an artificial first state $s_{1'}$ such that for  any action $a$, the transition probability is defined as the distribution $p_{1'}(s_{1'},a) \triangleq p.$} Before each episode $t$ the agent selects a policy $\pi^t$ based only on the past observed transitions up to episode $t-1$. At each step $h\in[H]$ in episode $t$, the agent observes a state $s_h^t\in\cS$, takes an action $\pi_h^t(s_h^t) = a_h^t\in\cA$ and  makes a transition to a new state $s_{h+1}^t$ according to the probability distribution $p_h(s_h^t,a_h^t)$ and receives a deterministic reward $r_h(s_h^t,a_h^t)$.

\vspace{-0.3cm}
\paragraph{Regret} The quality of an agent is measured through its regret, that is the difference between what it could obtain (in expectation) by acting optimally and what it really gets,
\[
\regret^T \triangleq  \sum_{t=1}^T \Vstar_1(s_1)- V_1^{\pi^t}(s_1)\,.
\]

\vspace{-0.3cm}
\paragraph{Additional notation} For $N\in\N_{++},$ we define the set $[N]\triangleq \{1,\ldots,N\}$. We denote the uniform distribution over this set by $\Unif[N]$. We define the beta distribution with parameters $\alpha,\beta$ as $\Beta(\alpha,\beta)$.
 Appendix~\ref{app:notations} references all the notation used.

\section{Randomized Q-learning for Tabular Environments}

In this section, we assume that the state space $\cS$ is finite of size $S$ as well as the action space $\cA$ of size $A$. We first provide some intuitions for \RandQL algorithm. 

\subsection{Concept}
\label{sec:concept}
The main idea of \RandQL is to perform the usual Q-learning updates but instead of adding bonuses to the targets as \OptQL to drive exploration, \RandQL injects noise into the updates of the Q-values through \emph{noisy learning rates}. Precisely, for $J\in\N$, we maintain an ensemble of size $J$ of Q-values\footnote{We index the quantities by $n$ in this section where $n$ is the
number of times the state-action pair $(s, a)$ is visited. In particular
this is different from the global time $t$ since, in our setting, all the state-
action pair are not visited at each episode. See Section~\ref{sec:algorithm} and Appendix~\ref{app:description_randQL}
precise notations.} $(\uQ^{n,j})_{j\in[J]}$ updated with random independent Beta-distributed step-sizes $(w_{n,j})_{j\in[J]}$ where $w_{n,j} \sim \Beta(H, n)$. Then, policy Q-values $\uQ^{n}$ are  obtained by taking the maximum among the Q-values of the ensemble
\begin{align*}
    \uQ^{n+1,j}_h(s,a) &= 
        (1 - w_{n,j}) \uQ^{n,j}_h(s,a) + w_{n,j} [r_h(s,a) + \uV^{n}_{h+1}(s^n_{h+1})] \\
    \uQ^{n+1}_h(s,a) &= \max_{j \in [J]} \uQ^{n+1,j}_h(s,a), \quad \uV^{n+1}_h(s) = \max_{a \in \cA} \uQ^{n+1}_h(s,a),
\end{align*}
where \(s^n_{h+1}\) stands for the next (in time) state after \(n\)-th visitation of  \((s,a)\) at step \(h\).

Note that the policy Q-values $\uQ^{n}$ are designed to be upper confidence bound on the optimal Q-values. The policy used to interact with the environment is greedy with respect to the policy Q-values $\pi_h^n(s) \in\argmax_a \uQ_h^n(s,a)$. We provide a formal description of \RandQL in Appendix~\ref{app:description_randQL}.

\vspace{-0.3cm}
\paragraph{Connection with \OptQL} We observe that the learning rates of \RandQL are in expectation of the same order $\E[w_{n,j} ]= H/(n+H)$ as the ones used by the \OptQL algorithm. Thus, we can view our randomized Q-learning as a noisy version of the \OptQL algorithm that doesn't use bonuses.

\vspace{-0.3cm}
\paragraph{Connection with \PSRL} If we unfold the recursive formula above we can express the Q-values $\uQ^{n+1,j}$ as a weighted sum  

\vspace{-0.4cm}
\[
    \uQ^{n+1,j}_h(s,a) = W^0_{n,j} \uQ^{1,j}_h(s,a) + \sum_{k=1}^{n} W^k_{n,j} [r_h(s,a) + \uV^{k}_{h+1}(s^{k}_{h+1})],
\]

\vspace{-0.35cm}
\!where we define 
$W^0_{n,j} = \prod_{\ell=0}^{n-1} (1 - w_{\ell,j})$ and $W^k_{n,j} = w_{k-1,j}  \prod_{\ell=k}^{n-1} (1 - w_{\ell,j}).$

To compare, we can unfold the corresponding formula for \PSRL algorithm using the aggregation properties of the Dirichlet distribution (see e.g. Section 4 of \cite{tiapkin2022dirichlet} or Appendix~\ref{app:weights_randql})
\begin{equation}
\label{eq:qpsrl}
    \uQ^{n+1}_h(s,a) = \tW^0_{n} \uQ^{1}_h(s,a) + \sum_{k=1}^n \tW^{k}_{n} [r_h(s,a) + \uV^{n+1}_{h+1}(s^k_{h+1})],
\end{equation}
where weights $(\tW^0_{n},\ldots, \tW^{n}_n)$ follows Dirichlet distribution $\Dir(n_0, 1, \ldots, 1)$ and $n_0$ is a weight for the prior distribution. In particular, one can represent these weights as partial products of other weights $w_{n} \sim \Beta(1, n+n_0)$. If we use \eqref{eq:qpsrl} to construct a model-free algorithm, this would require recomputing the targets $r_h(s,a) + \uV^{n+1}(s^k_{h+1})$ in each iteration. To make algorithm more efficient and model-free, we  approximate $\uV^{n+1}$ by $\uV^k$, and, as a result, obtain \RandQL algorithm with weight distribution $w_{n,j} \sim \Beta(1, n+n_0)$. 
\par
Note that  in expectation this algorithm is  equivalent to \OptQL with the uniform step-sizes which are known to be sub-optimal due to a high bias (see discussion in Section 3 of \citep{jin2018is}). There are two known ways to overcome this sub-optimality for Q-learning: to introduce more aggressive learning rates $w_{n,j} \sim \Beta(H, n+n_0)$ leading to \RandQL algorithm, or to use  stage-dependent framework by \cite{bai2019provably,zhang2020advantage} resulting in \StagedRandQL algorithm.
\par
The aforementioned  transition from \PSRL to \RandQL is similar to the transition from \UCBVI \citep{azar2017minimax} to Q-learning. To make \UCBVI model-free, one has to to keep old targets in Q-values. This, however,  introduces a bias that could be eliminated either by more aggressive step-size \citep{jin2018is} or by splitting on stages \citep{bai2019provably}. Our algorithms (\RandQL and \StagedRandQL) perform the similar tricks for \PSRL and thus could be viewed as model-free versions of it. Additionally, \RandQL shares some similarities with the \OPSRL algorithm \citep{agrawal2020posterior, tiapkin2022optimistic} in the way of introducing optimism (taking maximum over $J$ independent ensembles of Q-values). 
Let us also mention a close connection to the theory of Dirichlet processes in the proof of optimism for the  case of metric spaces (see Remark~\ref{rm:dirichlet_process} in Appendix~\ref{app:optimism_metric}).

\paragraph{Prior} As remarked above, in expectation, \RandQL has a learning rate of the same order as \OptQL. In particular, it implies that the first $(1-1/H)$ fraction of the the target will be forgotten exponentially fast in the estimation of the Q-values, see \citet{jin2018is,menard2021ucb}. Thus we need to re-inject prior targets, as explained in Appendix~\ref{app:description_randQL}, in order to not forget too quickly the prior and thus replicate the same exploration mechanism as in the \PSRL algorithm.

\subsection{Algorithm}
\label{sec:algorithm}

In this section, following \citet{bai2019provably,zhang2020advantage}, we present the \StagedRandQL algorithm a scheduled version of \RandQL that is simpler to analyse. The main idea is that instead of using a carefully tuned learning rate to keep only the last $1/H$ fraction of the targets we split the learning of the Q-values in stages of  exponentially increasing size with growth rate of order $1+1/H$. At a given stage, the estimate of the Q-value relies only on the targets within this stage and resets at the beginning of the next stage. Notice that the two procedures are almost equivalent. A detail description of \StagedRandQL is provided in Algorithm~\ref{alg:StagedRandQL}.

\textbf{Counts and stages} Let $n^t_h(s,a) \triangleq \sum_{i=1}^{t-1} \ind\{ (s^i_h, a^i_h) = (s,a) \}$ be the number of visits of state-action pair $(s,a)$ at step $h$ before episode $t$.  We say that a triple $(s,a,h)$  belongs to the $k$-th stage at the beginning of episode $t$ if $n^t_h(s,a) \in [\sum_{i=0}^{k-1} e_i, \sum_{i=0}^k e_i )$. Here  $e_k = \lfloor (1 + 1/H)^k \cdot H \rfloor$ is the length of the stage $k \geq 0$ and, by convention, $e_{-1} = 0$. Let $\tn^t_h(s,a)\triangleq n^t_h(s,a) - \sum_{i=0}^{k-1} e_i$ be the number of visits of state-action pair $(s,a)$ at step $h$ during the current stage $k$.

\textbf{Temporary Q-values} At the beginning of a stage, let say time $t$, we initialize $J$ \textit{temporary} Q-values as $\tQ^{t,j}_h(s,a) = r_h(s,a) +  \ur (H-h-1)$ for $j\in[J]$ and $r_0$ some pseudo-reward.
Then as long as $(s^t_h, a^t_h,h)$ remains  within a stage we update recursively the \textit{temporary} Q-values
\[
    \tQ^{t+1,j}_h(s,a) = \begin{cases}
        (1- w_{j, \tn}) \tQ^{t,j}_h(s,a) + w_{j, \tn} [r_h(s,a) + \uV^{t}_{h+1}(s^t_{h+1})], & (s,a) = (s^t_h, a^t_h) \\
        \tQ^{t,j}_h(s,a) & \text{otherwise},
    \end{cases}
\]
where $\tn=\tn^t_h(s,a)$ is the number of visits, $w_{j, \tn}$ is a sequence of i.i.d. random variables $w_{j,\tn} \sim \Beta(1/\kappa, (\tn + n_0) / \kappa)$ with $\kappa >0$ being some posterior inflation coefficient and $n_0$ a number of pseudo-transitions.

\textbf{Policy Q-values} Next we define the policy Q-values that is updated at the end of a stage. Let say for state-action pair $(s,a)$ at step $h$ an stage ends at time $t$. This policy Q-values is then given by the maximum of temporary Q-values $\uQ_h^{t+1}=\max_{j\in[J]}  \tQ^{t+1,j}_h(s,a)$. Then the policy Q-values is constant within a stage. The value used to defined the targets is
$\uV^{t+1}_h(s) = \max_{a \in \cA} \uQ^{t+1}_h(s,a)$. The policy used to interact with the environment is greedy with respect to the policy Q-values $\pi^{t+1}_h(s) \in \argmax_{a \in \cA} \uQ^{t+1}_h(s,a)$ (we break ties arbitrarily).

\begin{figure}[t]
\vspace{-0.5cm}
\begin{algorithm}[H]
\centering
\caption{Tabular \StagedRandQL}
\label{alg:StagedRandQL}
\begin{algorithmic}[1]
  \STATE {\bfseries Input:}  inflation coefficient $\kappa$, $J$ ensemble size, number of prior transitions $n_0$, prior reward $r_0$.
  \STATE {\bfseries Initialize: }  $\uV_h(s) = \uQ_h(s,a) = \tQ^j_h(s,a) = r(s,a) + \ur(H-h-1),$ initialize counters $\tn_h(s,a) = 0$ for $j,h,s,a\in[J]\times[H]\times\cS\times\cA$ and stage $q_h(s,a) = 0$.
      \FOR{$t \in[T]$}
      \FOR{$h \in [H]$}
        \STATE Play $a_h \in \argmax_a \uQ_h(s_h,a)$.
        \STATE Observe reward and next state $s_{h+1}\sim p_h(s_h,a_h)$.
        \STATE Sample learning rates $w_j \sim \Beta(1/\kappa, (\tn+n_0)/\kappa)$ for $\tn = \tn_h(s_h,a_h)$.
        \STATE Update temporary $Q$-values for all $j \in [J]$
        {\small\[
            \tQ^{j}_h(s_h,a_h) := (1 - w_j) \tQ^{j}_h(s_h,a_h) + w_j \left( r_h(s_h,a_h) + \uV_{h+1}(s_{h+1})\right)\,.
        \]}
        \vspace{-0.4cm}
        \STATE Update counter $\tn_h(s_h, a_h) := \tn_h(s_h, a_h) + 1$
        \IF{$\tn_h(s_h, a_h) = \lfloor (1 + 1/H)^{q} H \rfloor$ for $q=q_h(s_h, a_h)$ being the current stage}
            \STATE Update policy $Q$-values $\uQ_h(s_h,a_h) := \max_{j \in [J]} \tQ^{j}_h(s_h,a_h)$.
            \STATE Update value function $\uV_h(s_h) := \max_{a \in \cA} \uQ_h(s_h,a)$ 
            \STATE Reset temporary $Q$-values $\tQ^j_h(s_h,a_h) := r_h(s_h,a_h) + \ur(H-h-1)$.
            \STATE Reset counter $\tn_h(s_h,a_h) := 0$ and change stage $q_h(s_h, a_h) := q_h(s_h, a_h) + 1$.
        \ENDIF
      \ENDFOR
  \ENDFOR
\end{algorithmic}
\end{algorithm}
\vspace{-1cm}
\end{figure}

\subsection{Regret bound}

We fix $\delta\in(0,1)$ and the number of posterior samples $J \triangleq \lceil c_J \cdot \log(2SAHT/\delta) \rceil$, where $c_J = 1/\log(2/(1 + \Phi(1)))$ and $\Phi(\cdot)$ is the  cumulative distribution function (CDF) of a normal distribution. Note that $J$ has a logarithmic dependence on $S,A,H,T,$ and $1/\delta$.

We now state the regret bound of \StagedRandQL with a full proof in Appendix~\ref{app:randql_tabular_proof}.
\begin{theorem}\label{th:regret_randql_tabular}
    Consider a parameter $\delta \in (0,1)$.    
    Let $\kappa \triangleq 2(\log(8 SAH/\delta) + 3\log(\rme\pi(2T+1)))$, $n_0 \triangleq \lceil \kappa(c_{0} + \log_{17/16}(T)) \rceil$, $\ur \triangleq 2$, where  $c_{0}$ is an absolute constant defined in \eqref{eq:constant_c0}; see Appendix~\ref{app:optimism_tabular}. Then for \StagedRandQL, with probability at least $1-\delta$, 
    \[
        \regret^T = \tcO\left( \sqrt{H^5 SAT}  + H^3 S A \right).
    \]
\end{theorem}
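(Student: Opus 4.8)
The plan is to follow the by-now standard template for optimistic Q-learning regret bounds (as in \citet{jin2018is,zhang2020advantage}), but replacing the explicit-bonus argument with a high-probability \emph{optimism via randomization} statement. First I would set up the stage decomposition: for each triple $(s,a,h)$ the visits are partitioned into stages of length $e_k = \lfloor(1+1/H)^k H\rfloor$, and within a stage the temporary Q-value $\tQ^{t,j}_h(s,a)$ is a convex combination, with Beta-distributed weights $W^k$, of the fresh prior value $r_h(s,a)+r_0(H-h-1)$ and the in-stage targets $r_h(s,a)+\uV^{i}_{h+1}(s^i_{h+1})$. Unfolding this recursion as in the ``Connection with \PSRL'' paragraph gives $\uQ^{t+1}_h(s,a) = \max_{j\in[J]}\big(W^0\,(r_h+r_0(H-h-1)) + \sum_i W^i\,[r_h+\uV^{\cdot}_{h+1}(s^i_{h+1})]\big)$, i.e. the policy Q-value is the max over $J$ i.i.d.\ randomized estimators built from the same data.

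The first key step is the \textbf{optimism lemma}: with probability at least $1-\delta$, $\uQ^{t}_h(s,a)\ge Q^\star_h(s,a)$ for all $(s,a,h,t)$. This is where the tuned constants $\kappa$, $n_0$, $\ur=2$ enter. The argument is by backward induction on $h$. Assuming $\uV^{t}_{h+1}\ge V^\star_{h+1}$, each single randomized estimator $\tQ^{t,j}_h(s,a)$ has conditional mean close to the empirical Bellman target $r_h + \hat p_h \uV_{h+1}$ plus a fluctuation term whose scale is governed by $\kappa$; a Beta-distribution anti-concentration bound shows that each estimator exceeds $Q^\star_h$ with probability at least $p_0 := (1-\Phi(1))/2$ roughly (conditioned on a good event controlling the empirical transition, which uses $n_0$ to absorb the missing-mass/concentration error and the pseudo-reward $r_0(H-h-1)$ to dominate the prior contribution $W^0$). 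Taking the max over $J = \lceil c_J\log(2SAHT/\delta)\rceil$ independent copies, with $c_J = 1/\log(2/(1+\Phi(1)))$, boosts the success probability to $1-\delta/(SAHT)$; a union bound over all $(s,a,h,t)$ finishes it. I expect this anti-concentration/optimism step — precisely quantifying how the Beta$(1/\kappa,(\tilde n + n_0)/\kappa)$ weights mimic the Dirichlet posterior of \PSRL and picking $\kappa$, $n_0$ large enough for the induction to close while keeping them only logarithmic / $\log T$-sized — to be the \textbf{main obstacle}.

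The second key step is the \textbf{concentration / upper bound on the estimation error}: on a complementary good event, $\uQ^{t}_h(s,a) - Q^\star_h(s,a) \lesssim $ (a bias term from using stale values $\uV^i$ instead of $\uV^{t}$, controlled by the stage length being only a $1/H$ fraction) $+$ (a stochastic term of order $\kappa^{1/2}\sqrt{H/\tilde n}$ from the randomized weights) $+$ ($\hat p_h(\uV_{h+1}-V^\star_{h+1})$). Here I use that $\max_j$ of $J$ sub-Gaussian-like randomized estimates is at most their common mean plus $O(\sqrt{\log J}\cdot \text{scale})$, plus Bernstein control of the empirical-transition error. Then the standard recursive regret unrolling: writing $\delta^t_h := \uV^{t}_h(s^t_h) - V^{\pi^t}_h(s^t_h)$, using $\uV^{t}_h \ge V^\star_h \ge V^{\pi^t}_h$, pushing the per-step error through the horizon and summing over $t$, with the stage structure turning $\sum_t 1/\sqrt{\tilde n^t_h(s^t_h,a^t_h)}$ into $\sum_{s,a,h}\sqrt{n^T_h(s,a)} \le \sqrt{SAT}$ via Cauchy–Schwarz, and the bias terms summing to the lower-order $O(H^3SA)$ term. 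Collecting the $H$ and $\kappa$ (which is $\tilde O(1)$) factors yields $\regret^T = \tcO(\sqrt{H^5SAT} + H^3SA)$, matching the claimed bound. The remaining pieces — choosing the good events, the Azuma–Bernstein bounds for the martingale differences $(\hat p_h - p_h)\uV_{h+1}$ and for $\sum(\delta^t_{h+1} - \E[\cdot])$, and the bookkeeping of which episodes fall in which stage — are routine and deferred to the appendix.
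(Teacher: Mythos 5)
Your proposal follows essentially the same route as the paper's proof: optimism is established by backward induction using an anti-concentration bound for the Dirichlet-distributed stage weights (with $\kappa=2\beta^\star$ making the per-estimator success probability $(1-\Phi(1))/2$, boosted by the max over $J$ copies and a union bound), the upper deviation is controlled by a concentration event on the weighted sums plus the empirical-transition error, and the regret is unrolled recursively with the stage-counting argument turning $\sum_{t,h} e_k^{-1/2}$ into $\sqrt{SAH^2}\cdot\sqrt{HT}$ via Cauchy--Schwarz, with the unvisited-stage indicator giving the $H^3SA$ term. The step you flag as the main obstacle is indeed where the paper's technical work lies (the $\Kinf$ deviation inequality plus the Dirichlet-weighted-sum anti-concentration of Theorem~\ref{thm:lower_bound_dbc}), but your sketch of it is faithful to what is actually done.
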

\vspace{-0.3cm}
\paragraph{Discussion} The regret bound of Theorem~\ref{th:regret_randql_tabular}  coincides (up to a logarithmic factor) with the bound of the \OptQL algorithm with Hoeffding-type bonuses from \citet{jin2018is}.  Up to a $H$ factor, our regret matches the information-theoretic lower bound $\Omega(\sqrt{H^3SAT})$ \citep{jin2018is,domingues2021episodic}. This bound could be achieved (up to logarithmic terms) in model-free algorithms by using Bernstein-type bonuses and variance reduction \citep{zhang2020advantage}. We keep these refinements for future research as the main focus of our paper is on the novel randomization technique and its use to construct computationally tractable model-free algorithms. 

\vspace{-0.3cm}
\paragraph{Computational complexity} \StagedRandQL is a model-free algorithm, and thus gets the $\tcO(HSA)$ space complexity as \OptQL, recall that we set $J=\tcO(1)$. The per-episode time-complexity is also similar and of order $\tcO(H)$ .

\section{Randomized Q-learning for Metric Spaces}
\label{sec:metric_space}
In this section we present a way to extend \RandQL to general state-action spaces. We start from the simplest approach with predefined $\varepsilon$-net type discretization of the state-action space $\cS \times \cA$ (see \citealt{song2019efficient}), and then discuss an adaptive version of the algorithm, similar to one presented by \citet{sinclair2019adaptive}.

\subsection{Assumptions}

To pose the first assumption, we start from a general definition of covering numbers.
\begin{definition}[Covering number and covering dimension]
    Let $(M, \rho)$ be a metric space. A set $\cM$ of open balls of radius $\varepsilon$ is called an $\varepsilon$-cover  of $M$ if $M \subseteq \bigcup_{B \in \cM} B$. The cardinality of the minimal $\varepsilon$-cover is called covering number $N_{\varepsilon}$ of $(M,\rho)$. We denote the corresponding minimal $\varepsilon$-covering by $\cN_{\varepsilon}$. A metric space $(M, \rho)$ has a covering dimension $d_c$ if $\forall \varepsilon > 0 : N_{\varepsilon} \leq C_N \varepsilon^{-d_c}$, where $C_N$ is a constant.
\end{definition}

The last definition extends the definition of dimension beyond vector spaces. For example, is case of $M = [0,1]^d$ the covering dimension of $M$ is equal to $d$. For more details and examples see e.g. \citet[Section 4.2]{vershynin2018high}.

Next we are ready to introduce the first assumption.
\begin{assumption}[Metric Assumption]\label{ass:metric}
    Spaces $\cS$ and $\cA$ are separable compact metric spaces with the corresponding metrics $\rho_{\cS}$ and $\rho_{\cA}$. The joint space $\cS \times \cA$ endowed with a product metric $\rho$ that satisfies $\rho((s,a),(s',a')) \leq \rho_{\cS}(s,s') + \rho_{\cA}(a,a')$. Moreover, the diameter of $\cS \times \cA$ is bounded by $d_{\max}$, and $\cS \times \cA$ has  covering dimension $d_c$ with a constant $C_N$.
\end{assumption}

This assumption is, for example, satisfied for the finite state and action spaces endowed with discrete metrics $\rho_{\cS}(s,s') = \ind\{s \not = s'\}, \rho_\cA(a,a') = \ind\{a \not = a'\}$ with $d_c = 0,$ $C_N = SA$ and $S$ and $A$ being the cardinalities of the state and action spaces respectively. The above assumption also holds in the case $\cS \subseteq [0,1]^{d_{\cS}}$ and $\cA \subseteq [0,1]^{d_{\cA}}$ with $d_c = d_{\cS} + d_{\cA}$.
\par
The next two assumptions describe the regularity conditions of transition kernel and rewards.
\begin{assumption}[Reparametrization Assumption]\label{ass:reparametrization}
    The Markov transition kernel could be represented as an iterated random function. In other words, there exists a measurable space $(\Xi, \cF_\Xi)$ and a measurable function $F_h \colon (\cS \times \cA) \times \Xi \to \cS$, such that $s_{h+1} \sim p_h(s_h,a_h) \iff s_{h+1} = F_h(s_h,a_h, \xi_h)$ for a sequence of independent random variables $\{\xi_h\}_{h\in[H]}$. 
\end{assumption}
This assumption is naturally satisfied for a large family of probabilistic model, see \citet{kingma2014autoencoding}. Moreover, it has been utilized by the RL community both in theory \citep{ye2015information} and practice \citep{heess2015learning,liu2018action-dependent}.
Essentially, this assumption holds for Markov transition kernels over a separable metric space, see Theorem 1.3.6 by \cite{douc2018markov}. However, the function $F_h$ could be ill-behaved. To avoid this behaviour, we need the following assumption.
\begin{assumption}[Lipschitz Assumption]\label{ass:lipschitz}
    The function $F_h(\cdot, \xi_h)$ is $L_F$-Lipschitz in the first argument for almost every value of $\xi_h$. Additionally, the reward function $r_h \colon \cS \times \cA \to [0,1]$ is $L_r$-Lipschitz.
\end{assumption}
This assumption is commonly used in studies of the Markov processes corresponding to iterated random functions, see \citet{diaconis1999iterated,ghosh2022iterated}.
Moreover, this assumption holds for many  cases of interest. As main example, it trivially holds in tabular and Lipschitz continuous deterministic MDPs \citep{ni2019learning}. Notably, this observation demonstrates that Assumption~\ref{ass:lipschitz} does not necessitate Lipschitz continuity of the transition kernels in total variation distance, since deterministic Lipschitz MDPs are not continuous in that sense. Additionally, incorporation of an additive noise to deterministic Lipschitz MDPs will lead to  Assumption~\ref{ass:lipschitz} with  $L_F = 1$.

Furthermore, it is possible to show that   Assumption~\ref{ass:lipschitz} implies other assumptions stated in the literature. For example, it implies that the transition kernel is Lipschitz continuous in \(1\)-Wasserstein metric, and that $\Qstar$ and $\Vstar$ are both Lipschitz continuous.

\begin{lemma}\label{lem:transition_w_from_reparam}
    Let Assumption~\ref{ass:metric},\ref{ass:reparametrization},\ref{ass:lipschitz} hold. Then the transition kernels $p_h(s,a)$ are $L_F$-Lipschitz continuous in \(1\)-Wasserstein distance
    \[
        \cW_1(p_h(s,a), p_h(s',a')) \leq L_F \cdot \rho((s,a), (s',a')),
    \]
    where $1$-Wasserstein distance between two probability measures on the metric space $(M,\rho)$ is defined as $\cW_1(\nu, \eta) = \sup_{f \text{ is } 1-\text{Lipschitz}} \int_{M} f \rmd \nu - \int_{M} f \rmd \eta$.
\end{lemma}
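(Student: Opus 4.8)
The plan is to bound the Wasserstein distance by directly exploiting the coupling of the transition kernels provided by the shared noise variable in the reparametrization assumption. Concretely, fix two state-action pairs $(s,a)$ and $(s',a')$. By Assumption~\ref{ass:reparametrization}, if $\xi_h$ is drawn from its law, then $F_h(s,a,\xi_h) \sim p_h(s,a)$ and $F_h(s',a',\xi_h) \sim p_h(s',a')$; using the \emph{same} realization of $\xi_h$ for both gives a coupling $\gamma$ of $p_h(s,a)$ and $p_h(s',a')$. This is the natural candidate witness for the Wasserstein distance.

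Next I would use the Kantorovich--Rubinstein dual characterization given in the statement: for any $1$-Lipschitz $f \colon \cS \to \R$,
\[
    \int_\cS f \,\rmd p_h(s,a) - \int_\cS f \,\rmd p_h(s',a') = \E\bigl[ f(F_h(s,a,\xi_h)) - f(F_h(s',a',\xi_h)) \bigr].
\]
Since $f$ is $1$-Lipschitz (with respect to $\rho_\cS$), the integrand is bounded in absolute value by $\rho_\cS\bigl(F_h(s,a,\xi_h), F_h(s',a',\xi_h)\bigr)$. Taking expectations and then the supremum over $f$ yields
\[
    \cW_1(p_h(s,a), p_h(s',a')) \leq \E\bigl[ \rho_\cS\bigl(F_h(s,a,\xi_h), F_h(s',a',\xi_h)\bigr) \bigr].
\]

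The remaining step is to control $\rho_\cS(F_h(s,a,\xi),F_h(s',a',\xi))$ pointwise in $\xi$ using Assumption~\ref{ass:lipschitz}. The delicate point is that the Lipschitz assumption as stated gives Lipschitzness of $F_h(\cdot,\xi)$ only in the \emph{first} argument, i.e. in the state, and only for almost every $\xi$; strictly speaking the pair $(s,a)$ varies in both coordinates. I would address this by reading ``first argument'' as the full state-action input $(s,a) \in \cS \times \cA$ equipped with the product metric $\rho$, so that $\rho_\cS(F_h(s,a,\xi),F_h(s',a',\xi)) \leq L_F\,\rho((s,a),(s',a'))$ for a.e.\ $\xi$ — this is the intended reading and is consistent with how $F_h$ is used elsewhere. (If one instead insists on Lipschitzness only in $s$, one would need an auxiliary continuity hypothesis in $a$; I would flag that the clean statement presumes joint Lipschitzness.) Plugging this bound into the expectation above, the right-hand side becomes the deterministic quantity $L_F\,\rho((s,a),(s',a'))$, since the bound no longer depends on $\xi$. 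This gives exactly the claimed inequality, and since $(s,a),(s',a')$ were arbitrary, the transition kernel is $L_F$-Lipschitz in $\cW_1$. The main obstacle, as noted, is purely a matter of interpreting the domain of the Lipschitz condition correctly; the measure-theoretic content (the coupling argument and the dual formulation) is routine.
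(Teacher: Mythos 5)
Your proof is correct and follows essentially the same route as the paper's: apply the Kantorovich--Rubinstein dual formula, rewrite the difference of expectations via the shared noise variable from Assumption~\ref{ass:reparametrization}, and invoke the Lipschitz property of $F_h(\cdot,\xi_h)$. Your reading of the ``first argument'' as the full pair $(s,a)\in\cS\times\cA$ equipped with the product metric $\rho$ is the intended one, since $F_h$ is declared as a map on $(\cS\times\cA)\times\Xi$, so no auxiliary hypothesis is needed.
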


\begin{lemma}\label{lem:v_lipschitz_from_reparam}
    Let Assumption~\ref{ass:metric},\ref{ass:reparametrization},\ref{ass:lipschitz} hold. Then $\Qstar_h$ and $\Vstar_h$ are Lipschitz continuous with Lipschitz constant $L_{V,h} \leq \sum_{h'=h}^H L_F^{h'-h} L_r$. 
\end{lemma}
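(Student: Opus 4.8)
The plan is to prove Lemma~\ref{lem:v_lipschitz_from_reparam} by backward induction on $h$, starting from the terminal step and working down to $h=1$, using Lemma~\ref{lem:transition_w_from_reparam} to control how Lipschitz regularity propagates through the Bellman operator.

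\textbf{Base case.} At $h = H$ we have $\Qstar_H(s,a) = r_H(s,a)$ since $V^\star_{H+1} \equiv 0$, so $\Qstar_H$ is $L_r$-Lipschitz by Assumption~\ref{ass:lipschitz}. The claimed bound reads $L_{V,H} \le L_r$, which matches (the sum has a single term $h' = H$ with $L_F^0 L_r = L_r$). Taking the max over actions only improves the modulus: if $Q_h(s,\cdot)$ and $Q_h(s',\cdot)$ are both $L$-Lipschitz in $s$ uniformly in $a$, then $|\max_a Q_h(s,a) - \max_a Q_h(s',a)| \le \max_a |Q_h(s,a) - Q_h(s',a)| \le L\,\rho_{\cS}(s,s') \le L\,\rho((s,a_0),(s',a_0))$, so $\Vstar_H$ is $L_r$-Lipschitz as well.

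\textbf{Induction step.} Assume $\Vstar_{h+1}$ is $L_{V,h+1}$-Lipschitz with $L_{V,h+1} \le \sum_{h'=h+1}^H L_F^{h'-h-1} L_r$. Write $\Qstar_h(s,a) = r_h(s,a) + p_h V^\star_{h+1}(s,a)$. For two pairs $(s,a),(s',a')$, bound $|\Qstar_h(s,a) - \Qstar_h(s',a')|$ by the triangle inequality into a reward term, which is $\le L_r\,\rho((s,a),(s',a'))$, and a transition term $|p_h V^\star_{h+1}(s,a) - p_h V^\star_{h+1}(s',a')|$. Since $V^\star_{h+1}/L_{V,h+1}$ is $1$-Lipschitz, the dual (Kantorovich--Rubinstein) form of $\cW_1$ gives
\[
    |p_h V^\star_{h+1}(s,a) - p_h V^\star_{h+1}(s',a')| \le L_{V,h+1}\,\cW_1\bigl(p_h(s,a), p_h(s',a')\bigr) \le L_{V,h+1} L_F\,\rho((s,a),(s',a')),
\]
where the last step is exactly Lemma~\ref{lem:transition_w_from_reparam}. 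Combining, $\Qstar_h$ is $(L_r + L_F L_{V,h+1})$-Lipschitz, and
\[
    L_r + L_F L_{V,h+1} \le L_r + L_F \sum_{h'=h+1}^H L_F^{h'-h-1} L_r = L_r + \sum_{h'=h+1}^H L_F^{h'-h} L_r = \sum_{h'=h}^H L_F^{h'-h} L_r,
\]
which is the claimed $L_{V,h}$. The same max-over-actions argument as in the base case transfers the modulus to $\Vstar_h$, closing the induction.

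\textbf{Main obstacle.} There is no serious obstacle: the proof is essentially an unrolling of the Bellman recursion combined with the Wasserstein contraction estimate already established in Lemma~\ref{lem:transition_w_from_reparam}. The one point requiring a little care is the interaction of the product metric with the two arguments of $\Qstar_h$ — one must check that $\rho_{\cS}(s,s') \le \rho((s,a),(s',a'))$ and $\rho_{\cA}(a,a') \le \rho((s,a),(s',a'))$ follow from the assumed triangle-type bound $\rho((s,a),(s',a')) \le \rho_{\cS}(s,s') + \rho_{\cA}(a,a')$ together with the standard consequence that $\rho$ dominates each coordinate metric (e.g. $\rho((s,a),(s',a)) = \rho_{\cS}(s,s')$ for a product metric), so that all the Lipschitz bounds can be stated uniformly with respect to $\rho$. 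Once that bookkeeping is in place, the constants telescope exactly as above.
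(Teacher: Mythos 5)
Your proof is correct and takes essentially the same route as the paper's: backward induction on $h$, splitting the Bellman update into a reward term and a transition term, and transferring the Lipschitz modulus to $\Vstar_h$ by maximizing over actions and using $\rho((s,a),(s',a))\le\rho_{\cS}(s,s')$ (note that the reverse inequality $\rho_{\cS}(s,s')\le\rho((s,a_0),(s',a_0))$ you append at the end of the base case is neither implied by Assumption~\ref{ass:metric} nor needed anywhere). The only cosmetic difference is that you control the transition term by invoking Lemma~\ref{lem:transition_w_from_reparam} through Kantorovich--Rubinstein duality, whereas the paper re-applies the reparametrization $p_h\Vstar_{h+1}(s,a)-p_h\Vstar_{h+1}(s',a')=\E_{\xi_h}\left[\Vstar_{h+1}(F_h(s,a,\xi_h))-\Vstar_{h+1}(F_h(s',a',\xi_h))\right]$ directly; the two estimates are identical since Lemma~\ref{lem:transition_w_from_reparam} is itself proved exactly this way.
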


The proof of these lemmas is postponed to Appendix~\ref{app:randql_metric_proof}.  For a more detailed exposition on 1-Wasserstein distance we refer to the book by \citet{peyre2019ot}. The first assumption was studied by \citet{domingues2021kernel,sinclair2022adaptive} in the setting  of model-based algorithms in metric spaces. We are not aware of any natural examples of MDPs with a compact state-action space where the transition kernels are Lipschitz in $\cW_1$ but fail to satisfy  Assumption~\ref{ass:lipschitz}.

\subsection{Algorithms}

In this section, following \citet{song2019efficient}, we present \NetStagedRandQL algorithm that combines a simple non-adaptive discretization and an idea of stages by \citet{bai2019provably,zhang2020advantage}.

We assume that we have an access to all Lipschitz constants $L_r, L_F, L_{V}\triangleq L_{V,1}$. Additionally,  we have  access to the oracle that computes $\varepsilon$-cover $\cN_{\varepsilon}$ of the space $\cS \times \cA$ for any predefined $\varepsilon > 0$\footnote{Remark that the simple greedy algorithm can generate $\varepsilon$-cover of size $N_{\varepsilon/2}$, that will not affect the asymptotic behavior of our regret bounds, see \citet{song2019efficient}.}. 

\textbf{Counts and stages} Let $n^t_h(B) \triangleq \sum_{i=1}^{t-1} \ind\{ (s^i_h, a^i_h) \in B \}$ be the number of visits of the ball $B \in \cN_{\varepsilon}$ at step $h$ before episode $t$. Let $e_k = \lfloor (1 + 1/H)^k \cdot H \rfloor$ be length of the stage $k \geq 0$ and, by convention, $e_{-1} = 0$. We say that $(B,h)$  belongs to the $k$-th stage at the beginning of episode $t$ if $n^t_h(B) \in [\sum_{i=0}^{k-1} e_i, \sum_{i=0}^k e_i )$. Let $\tn^t_h(B)\triangleq n^t_h(s,a) - \sum_{i=0}^{k-1} e_i$ be the number of visits of the ball $B$ at step $h$ during the current stage $k$.

\textbf{Temporary Q-values} At the beginning of a stage, let say time $t$, we initialize $J$ \textit{temporary} Q-values as $\tQ^{t,j}_h(B) = \ur H $ for $j\in[J]$ and $\ur$ some pseudo-reward. Then within a stage $k$ we update recursively the \textit{temporary} Q-values
\[
    \tQ^{t+1,j}_h(B) = \begin{cases}
        (1- w_{j, \tn}) \tQ^{t,j}_h(B) + w_{j, \tn} [r_h(s^t_h,a^t_h) + \uV^{t}_{h+1}(s^t_{h+1})], & (s,a) = (s^t_h, a^t_h) \\
        \tQ^{1,j}_h(B) & \text{otherwise},
    \end{cases}
\]
where $\tn=\tn^t_h(B)$ is the number of visits, $w_{j, \tn}$ is a sequence of i.i.d random variables $w_{j,\tn} \sim \Beta(1/\kappa, (\tn + n_0(k)) / \kappa)$ with $\kappa >0$ some posterior inflation coefficient and $n_0(k)$ a number of pseudo-transitions. The important difference between tabular and metric settings  is the dependence on the pseudo-count $n_0(k)$ on $k$ in the latter case, since here the  prior  is  used to eliminate  the approximation error.

\textbf{Policy Q-values} Next, we define the policy Q-values that are updated at the end of a stage. Let us fix a ball $B$ at step $h$ and suppose that the currents stage ends at time $t$. Then the policy Q-values are given by the maximum of the temporary Q-values $\uQ_h^{t+1}(B) =\max_{j\in[J]}  \tQ^{t+1,j}_h(B)$. The policy Q-values are constant within a stage. The value used to define the targets is computed on-flight using the formula $\uV^{t}_h(s) = \max_{a \in \cA} \uQ^{t}_h(\psi_{\varepsilon}(s,a))$, where $\psi_{\varepsilon} \colon \cS \times \cA \to \cN_{\varepsilon}$ is a quantization map, that assigns each state-action pair $(s,a)$ to a ball $B \ni (s,a)$. The policy used to interact with the environment is greedy with respect to the policy Q-values and also computed on-flight $\pi^{t}_h(s) \in \argmax_{a \in \cA} \uQ^t_h(\psi_{\varepsilon}(s,a))$ (we break ties arbitrarily).

A detail description of \NetStagedRandQL is provided in Algorithm~\ref{alg:NetStagedRandQL} in Appendix~\ref{app:net_staged_randql_description}.

\subsection{Regret Bound}
 We fix $\delta\in(0,1),$ the discretization level $\varepsilon > 0$ and the number of posterior samples 
\begin{equation*}
\label{eq:def_J_metric}
    J \triangleq \lceil \tilde{c}_J \cdot ( \log(2C_NHT/\delta) + d_c \log(1/\varepsilon) ) \rceil,
\end{equation*}
where $\tilde{c}_J = 1/\log(4/(3 + \Phi(1)))$ and $\Phi(\cdot)$ is the  cumulative distribution function (CDF) of a normal distribution. Note that $J$ has a logarithmic dependence on $H,T,1/\varepsilon$ and $1/\delta$. For the regret-optimal discretization level $\varepsilon = T^{-1/(d_c + 2)}$, the number $J$ is almost  independent of $d_{c}$ .
Let us note that the role of prior in metric spaces is much higher than in the tabular setting. Another important  difference is dependence of the prior count on the stage index. In particular, we have
\[
    n_0(k) = \left\lceil \tn_0 + \kappa + \frac{\varepsilon L}{H-1} \cdot (e_k + \tn_0 + \kappa) \right\rceil, \qquad \tn_0 = (c_0 + 1 + \log_{17/16}(T)) \cdot \kappa
\]
where  $c_{0}$ is an absolute constant defined in \eqref{eq:constant_c0} ( see Appendix~\ref{app:optimism_tabular}), $\kappa$ is the posterior inflation coefficient and $L = L_r + (1+L_F)L_V$ is a  constant.
We now state the regret bound of \NetStagedRandQL with a full proof being postponed  to Appendix~\ref{app:randql_metric_proof}.
\begin{theorem}\label{th:regret_randql_metric}
    Suppose that \(N_{\varepsilon} \leq C_N \varepsilon^{-d_c}\) for all $\varepsilon>0$ and some constant $C_N>0.$
    Consider a parameter $\delta \in (0,1)$ and take an optimal level of discretization $\varepsilon = T^{-1/(d_c+ 2)}$.
    Let $\kappa \triangleq 2(\log(8HC_N/\delta) + d_c \log(1/\varepsilon) + 3\log(\rme\pi(2T+1)))$, $\ur \triangleq 2$. Then it holds for \NetStagedRandQL, with probability at least $1-\delta$, 
    \[
        \regret^T = \tcO\biggl( H^{5/2} C_N^{1/2} T^{\frac{d_c+1}{d_c+2}} + H^3 C_N T^{\frac{d_c}{d_c+2}} + L T^{\frac{d_c+1}{d_c+2}}\biggl).
    \]
\end{theorem}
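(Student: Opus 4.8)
The plan is to mirror the analysis of the tabular \StagedRandQL from Theorem~\ref{th:regret_randql_tabular}, but to carefully track the extra discretization error and to exploit the stage-dependent prior $n_0(k)$ to absorb it. First I would set up the usual decomposition of the regret through a per-step error term. Fix the discretization $\cN_\varepsilon$ and the quantization map $\psi_\varepsilon$, and define surrogate optimal quantities on the net, e.g. $\bar Q^\star_h(B) \triangleq \sup_{(s,a)\in B} Q^\star_h(s,a)$ or the value at a representative point. By Lemma~\ref{lem:v_lipschitz_from_reparam}, $Q^\star_h$ and $V^\star_h$ are $L_{V,h}$-Lipschitz, so replacing $(s,a)$ by its ball representative costs at most $\varepsilon L_{V,h}\le \varepsilon L_V$ per step, and the Lipschitz continuity of the kernel in $\cW_1$ (Lemma~\ref{lem:transition_w_from_reparam}) together with Lipschitzness of $\uV$ controls the discretization error in the Bellman backup; this is where the constant $L = L_r + (1+L_F)L_V$ enters.

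Second, I would establish the two structural properties of the ensemble exactly as in the tabular proof: (i) \emph{optimism}, i.e. with high probability $\uQ^t_h(B)\ge \bar Q^\star_h(B)$ for all $B,h,t$, and (ii) an \emph{anti-concentration / upper-bound} control showing $\uQ^t_h(B)$ does not overshoot the empirical Bellman target by more than an explicit stochastic fluctuation. Optimism follows from the weighted-sum representation of $\uQ^{t+1,j}_h(B)$ (unrolling the Beta-randomized update into Dirichlet-type weights, cf. the concept section and Appendix~\ref{app:weights_randql}), combined with a Gaussian anti-concentration bound for the maximum over the $J$ independent ensemble members — this is precisely why $J$ is chosen as $\lceil \tilde c_J(\log(2C_NHT/\delta)+d_c\log(1/\varepsilon))\rceil$, so that a union bound over the $N_\varepsilon\le C_N\varepsilon^{-d_c}$ balls, $H$ steps and $T$ episodes succeeds. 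The key new ingredient here is that the prior count $n_0(k)$ grows with the stage length $e_k$ through the term $\tfrac{\varepsilon L}{H-1}(e_k+\tn_0+\kappa)$: this guarantees that the injected prior bias at stage $k$ is at least of order $\varepsilon L$ times the horizon, enough to dominate the accumulated discretization error inside that stage while still being small enough not to spoil the leading regret term. The posterior inflation $\kappa = 2(\log(8HC_N/\delta)+d_c\log(1/\varepsilon)+3\log(e\pi(2T+1)))$ is tuned, as in the tabular case, so that the variance of the Beta weights is inflated by the right logarithmic factor to make optimism hold uniformly.

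Third, conditioned on these events I would run the standard recursive error-propagation argument for staged Q-learning (as in \citealt{zhang2020advantage} and the tabular proof): writing $\delta^t_h \triangleq \uV^t_h(s^t_h) - V^{\pi^t}_h(s^t_h)$, one unrolls the Bellman equations over $h$, uses the stage structure so that each ball's contribution is a sum over at most $\lceil\log_{1+1/H} T\rceil$ stages, and collects (a) the exploration fluctuation terms, bounded by $\sqrt{H\,\text{(variance)}/\tn}$ per visit and summed via Cauchy–Schwarz over all $(B,h)$ and visit counts, which yields the $H^{5/2}C_N^{1/2}T^{(d_c+1)/(d_c+2)}$ term after plugging $\varepsilon = T^{-1/(d_c+2)}$ so that $N_\varepsilon = C_N T^{d_c/(d_c+2)}$ and $\sum_B \sqrt{n^T_h(B)} \lesssim \sqrt{N_\varepsilon T}$; (b) the lower-order "bias from old targets / number of stages" term, contributing $H^3 C_N T^{d_c/(d_c+2)}$; and (c) the accumulated discretization/prior error, which is $O(\varepsilon L)$ per step per visit, summing to $LT\cdot\varepsilon = L T^{(d_c+1)/(d_c+2)}$. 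Summing (a)--(c) and applying Azuma--Hoeffding to convert the sum of $\delta^t_h$ along the trajectory into the regret gives the stated bound.

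The main obstacle I expect is the optimism proof in the metric setting: unlike the tabular case, the Bellman target for ball $B$ mixes transitions originating from different points inside $B$, so the targets within a stage are no longer i.i.d. draws from a single distribution, and one must show that the Dirichlet/Beta-randomized weighted average, inflated by $\kappa$ and with the enlarged prior $n_0(k)$, still stochastically dominates $\bar Q^\star_h(B)$ with high probability despite this heterogeneity and the approximation slack $\varepsilon L$. This is where the connection to Dirichlet processes alluded to in Remark~\ref{rm:dirichlet_process} does the heavy lifting — one views the empirical-plus-prior measure as an approximation of a Dirichlet process posterior and uses a Wasserstein-continuity argument (via Lemma~\ref{lem:transition_w_from_reparam}) to bound the bias introduced by the discretization, choosing $n_0(k)$ exactly so that this bias is cancelled. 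Everything else is a routine, if lengthy, adaptation of the tabular argument together with the standard $\varepsilon$-net covering bookkeeping of \citet{song2019efficient}.
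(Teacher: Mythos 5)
Your overall architecture matches the paper's proof: concentration events with a union bound over the $N_\varepsilon$ balls, optimism via the Dirichlet representation of the randomized weights with the stage-dependent prior $n_0(k)$ absorbing an $\cO(\varepsilon L)$ bias, and the staged recursive decomposition summed with Cauchy--Schwarz to produce exactly the three terms $H^{5/2}C_N^{1/2}T^{(d_c+1)/(d_c+2)}$, $H^3 C_N T^{d_c/(d_c+2)}$ and $LT^{(d_c+1)/(d_c+2)}$. The role you assign to $\kappa$, $J$, and the choice $\varepsilon = T^{-1/(d_c+2)}$ is also the one the paper uses.

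The one place where your proposed mechanism would not go through as stated is the step you yourself flag as the main obstacle. You propose to handle the heterogeneity of the targets within a ball by a Wasserstein-continuity argument on the kernel (Lemma~\ref{lem:transition_w_from_reparam}) combined with the Dirichlet-process viewpoint of Remark~\ref{rm:dirichlet_process}. But $\cW_1$-Lipschitzness of $p_h$ only controls expectations of Lipschitz test functions; it does not make the samples $\Vstar_{h+1}(s^{\ell^i}_{h+1})$, drawn from the \emph{different} kernels $p_h(s^{\ell^i}_h,a^{\ell^i}_h)$, an i.i.d.\ sequence, which is what the self-normalized deviation inequality for $\Kinf$ (Theorem~\ref{th:max_ineq_kinf}) underlying the anti-concentration argument requires. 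The paper instead exploits Assumption~\ref{ass:reparametrization} pathwise: the event $\cE^\star(\delta,\varepsilon)$ is stated for the genuinely i.i.d.\ re-centered samples $\Vstar_{h+1}(F_h(s,a,\xi^{\ell^i}_{h+1}))$ with $(s,a)=\cent(B)$, and the conversion between these and the actually visited points costs $(L_r + L_V(1+L_F))\varepsilon$ via the Lipschitzness of $F_h$ and $\Vstar_{h+1}$ (Propositions~\ref{prop:anticonc_metric} and~\ref{prop:opt_metric}); the Dirichlet-process connection is purely interpretive, not load-bearing. So the ingredient you need is the coupling through the common noise variables $\xi^{\ell^i}_h$, not Wasserstein continuity of the kernel; with that substitution the rest of your sketch is a faithful outline of the paper's argument.
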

We can restore the regret bound in the tabular setting by letting $d_c = 0$ and $C_N = SA$, where $S$ is the cardinality of the state-space, and $A$ is the cardinality of the action-space. 

\paragraph{Discussion}
From the point of view of instance-independent bounds, our algorithm achieves the same result as \NetQL \citep{song2019efficient} and \AdaptiveQL \citep{sinclair2019adaptive}, that matches the lower bound $\Omega(H T^{\frac{d_c+1}{d_c+2}})$ by \cite{sinclair2022adaptive} in dependence on budget $T$ and covering dimension $d_c$. Notably, as discussed by \cite{sinclair2022adaptive}, the model-based algorithm such as \KernelUCBVI \citep{domingues2021kernel} does not achieves optimal dependence in $T$ due to hardness of the transition estimation problem. 

\vspace{-0.3cm}
\paragraph{Computational complexity}
For a fixed level of discretization $\varepsilon$, our algorithm has a space complexity of order $\tcO(H\cN_{\varepsilon})$. Assuming that the computation of a quantization map $\psi_{\varepsilon}$ has $\tcO(1)$ time complexity, we achieve a per-episode time complexity of $\tcO(HA)$ for a finite action space and $\cO(H N_{\varepsilon})$ for an infinite action space in the worst case due to computation of $\argmax_{a \in \cA} \uQ_h(\psi_{\varepsilon}(s,a))$. However, this can be improved to $\tcO(H)$ if we consider adaptive discretization \citep{sinclair2019adaptive}.

\vspace{-0.3cm}
\paragraph{Adaptive discretization} Additionally, we propose a way to combine \RandQL with adaptive discretization by  \citet{cao2020provably,sinclair2022adaptive}. 
This combination results in two algorithms: \AdaptiveRandQL and \AdaptiveStagedRandQL.
The second one could achieve the instance-dependent regret bound that scales with a \textit{zooming} dimension, the instance-dependent measure of dimension. We will follow \cite{sinclair2022adaptive} in the exposition of the required notation.

\begin{definition}\label{def:value_gap}
    For any $(s,a) \in \cS \times \cA,$ the stage-dependent sub-optimality gap is defined as $\gap_h(s,a) = \Vstar_h(s) - \Qstar_h(s,a)$.
\end{definition}

This quantity is widely used in the theoretical instance-dependent analysis of reinforcement learning  and contextual bandit algorithms.   
\begin{definition}\label{def:near_optimal_set}
    The near-optimal set of $\cS \times \cA$ for a given value $\varepsilon$ defined as
    $
        Z^{\varepsilon}_h = \{ (s,a) \in \cS \times \cA  \mid \gap_h(s,a) \leq (H+1) \varepsilon \}.
    $
\end{definition}
The main insight of this definition is that essentially we are interested in a detailed discretization  of the near-optimal set $Z^{\varepsilon}_h$ for small $\varepsilon$, whereas all other state-action pairs could be discretized in a more rough manner.  Interestingly enough, $Z^\varepsilon_h$ could be a lower dimensional manifold, leading to the following definition.
\begin{definition}\label{def:zooming_dim}
    The step-$h$ zooming dimension $d_{z,h}$ with a constant $C_{N,h}$ and a scaling factor $\rho > 0$ is given by
    \[
        d_{z,h} = \inf\left\{ d>0 : \forall \varepsilon > 0\ N_{\varepsilon}(Z^{\rho \cdot \varepsilon}_h) \leq C_{N,h} \varepsilon^{-d} \right\}.
    \]
\end{definition}

Under some additional structural assumptions on $\Qstar_h$, it is possible to show that the zooming dimension could be significantly smaller  than the covering dimension, see, e.g., Lemma~2.8 in \cite{sinclair2022adaptive}. However, at the same time, it has been shown that $d_{z,h} \geq d_{\cS} - 1$, where $d_{\cS}$ is a covering dimension of the state space. Thus, the zooming dimension allows adaptation to a rich action space but not a rich state space.

Given this definition, it is possible to define define an adaptive algorithm \AdaptiveStagedRandQL that attains the following regret guarantees
\begin{theorem}\label{th:regret_randql_adaptive}
    Consider a parameter $\delta \in (0,1)$. For a value $\kappa$ that depends on $T, d_c$ ad $\delta$, for \AdaptiveStagedRandQL the following holds with probability at least $1-\delta$, 
    \[
        \regret^T = \tcO\biggl(H^3 +  H^{3/2} \sum_{h=1}^H  T^{\frac{d_{z,h}+1}{d_{z,h}+2}}\biggl),
    \]
    where $d_{z,h}$ is the step-$h$ zooming dimension and we ignore all multiplicative factors in the covering dimension $d_c$, $\log(C_N)$, and Lipschitz constants.
\end{theorem}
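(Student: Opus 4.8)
The plan is to combine the optimism-and-concentration machinery already developed for \NetStagedRandQL (Theorem~\ref{th:regret_randql_metric}) with the adaptive-partition bookkeeping of \citet{sinclair2022adaptive}, replacing their bonus-based confidence bounds by the randomized-learning-rate argument used throughout this paper. First I would set up the adaptive discretization: maintain, for each step $h$, a tree of balls of dyadic radii; a ball $B$ of radius $r(B)$ is \emph{split} once its visit count $n^t_h(B)$ reaches a threshold tied to $r(B)^{-2}$ (so that statistical error $\approx 1/\sqrt{n}$ matches discretization error $\approx r(B)$). Inside each active ball run exactly the staged Beta-randomized Q-learning update of \NetStagedRandQL, with the stage-dependent pseudo-count $n_0(k)$ scaled by $r(B)$ rather than the global $\varepsilon$, so that the prior again absorbs the ball-level approximation error of order $r(B)\cdot L$.

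The core of the argument splits into three ingredients, carried out in this order. (i) \emph{Optimism}: show that with probability at least $1-\delta/2$, the policy Q-values dominate $\Qstar_h$ on every active ball, uniformly over episodes. This is the direct analogue of the optimism lemma behind Theorem~\ref{th:regret_randql_metric}: the maximum over the $J$ ensemble members of the Beta-weighted estimator stochastically dominates a quantity that, thanks to the inflation coefficient $\kappa$ and the prior-reinjection, upper-bounds $\Qstar_h$ up to a Lipschitz slack controlled by $r(B)$ via Lemma~\ref{lem:v_lipschitz_from_reparam}; the choice of $J$ (logarithmic) makes the failure probability of the max-trick summable over all balls and episodes. (ii) \emph{Per-ball regret}: on the optimism event, bound $\sum_t \bigl(\uV^t_1(s_1) - V_1^{\pi^t}(s_1)\bigr)$ by a recursive clip/peeling argument over $h$, reducing the regret to $\sum_{h,t}$ of the width of the policy Q-value at the visited ball plus the Lipschitz slack; each visit to a ball of radius $r$ contributes $\tcO(H r + H\sqrt{H/n})$ where $n$ is the within-stage count, exactly as in the net case but now $r$ is the \emph{local} radius. (iii) \emph{Zooming-dimension counting}: because a ball is only refined when it has been visited $\Theta(r^{-2})$ times, and because the greedy policy together with optimism forces every visited ball to lie in the near-optimal set $Z^{\rho r}_h$ for its own radius $r$, the number of active balls of radius $\sim 2^{-i}$ at step $h$ is at most $N_{2^{-i}}(Z^{\rho 2^{-i}}_h) \le C_{N,h} 2^{i d_{z,h}}$. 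Summing the per-ball contributions over this dyadic scale and optimizing the truncation scale against $T$ yields $\sum_h T^{(d_{z,h}+1)/(d_{z,h}+2)}$, with the advertised $H^{3/2}$ prefactor coming from the $H\sqrt{H}$ per-visit width and the $H^3$ additive term from lower-order stage-overhead terms (the cost of the first stage of every ball, exactly as the $H^3 C_N T^{d_c/(d_c+2)}$ term in Theorem~\ref{th:regret_randql_metric}).

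The main obstacle I anticipate is ingredient (iii) intertwined with (i): proving that optimism \emph{still} holds on the adaptively refined partition, where the relevant approximation error at a ball is its \emph{current} radius rather than a fixed $\varepsilon$, and where a ball's history includes visits accrued before its parent was split. One must argue that the value estimates transferred from parent to children (or recomputed from scratch at a child's first stage) do not destroy the stochastic-dominance property, and that the stage-dependent prior count $n_0(k)$, now radius-dependent, is large enough to cover the worst-case slack while still small enough not to inflate the leading regret term — this is the delicate tuning that forces $\kappa$ to depend on $T$ and $d_c$ as stated. The zooming-dimension counting itself is then a fairly standard adaptation of the argument of \citet{sinclair2022adaptive}, once one has established that every visited active ball is genuinely near-optimal at its own scale, which again relies on the optimism event from (i); so the whole proof is somewhat circular in structure and must be organized as a single high-probability event under which all three pieces hold simultaneously.
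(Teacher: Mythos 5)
Your proposal follows essentially the same route as the paper's proof: optimism via the max-over-ensemble anticoncentration argument with a radius-dependent slack absorbed by a stage-dependent prior count (handled through the count--diameter relation of the splitting rule), a clipped regret decomposition that restricts contributions to the near-optimal sets, and a dyadic peeling over ball radii against the zooming dimension with an optimized truncation scale. You also correctly identify the one genuinely delicate point — that a ball's visit history includes its parents, so the approximation error at the $i$-th visit must be bounded by the diameter of the ball active at that time, which the paper controls via $\diam(B^{\ell^i}_h)\leq d_{\max}/\sqrt{n^{\ell^i}_h(B^{\ell^i}_h)}$ and the choice of $n_0(k)$.
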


We refer to Appendix~\ref{app:adaptive_randql_proof} to a formal statement and a proof.

\section{Experiments}\label{sec:experiments}

\vspace{-0.2cm}
In this section we present the experiments we conducted for tabular environments using \texttt{rlberry} library \citep{rlberry}. We also provide experiments in non-tabular environment in  Appendix~\ref{app:experiment_details}.

\vspace{-0.2cm}
\paragraph{Environment} We use a grid-world environment with $100$ states $(i, j) \in [10]\times[10]$ and $4$ actions (left, right, up and down). The horizon is set to $H=50$. When taking an action, the agent moves in the corresponding direction with probability $1-\epsilon$, and moves to a neighbor state at random with probability $\epsilon=0.2$. The agent starts at position $(1, 1)$. The reward equals to $1$ at the state $(10, 10)$ and is zero elsewhere. 

\begin{wrapfigure}{r}{0.5\textwidth}

    \vspace{-1cm}
    
    \centering
    \includegraphics[width=1.0\linewidth]{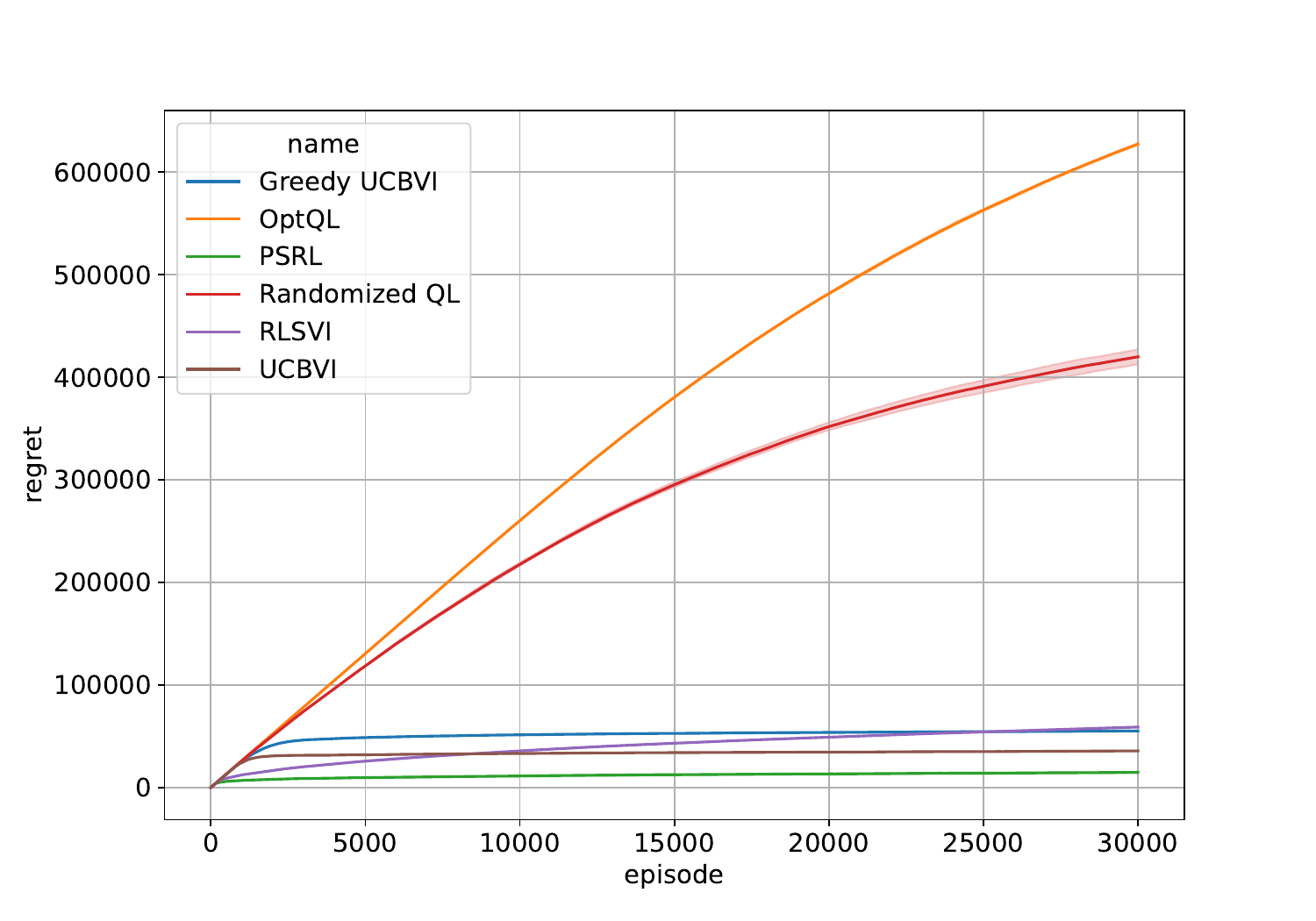}

    \caption{Regret curves of \RandQL and baselines in a grid-world environment for $H=50$
and transition noise $\epsilon = 0.2$. The average is over 4 seeds.}
    \label{fig:main_figure}
    
    \vspace{-1cm}
\end{wrapfigure}

\vspace{-0.2cm}
\paragraph{Variations of randomized Q-learning} For the tabular experiment we use the \RandQL algorithm, described in Appendix~\ref{app:description_randQL} as it is the version of randomized Q-learning that is the closest to the baseline \OptQL. Note that, we compare the different versions of randomized Q-learning in Appendix~\ref{app:description_randQL}.

\vspace{-0.2cm}
\paragraph{Baselines}
We compare \RandQL algorithm to the following baselines: (i) \OptQL \citep{jin2018is} (ii) \UCBVI \citep{azar2017minimax}  (iii) \GreedyUCBVI, a version of \UCBVI using real–
time dynamic programming \citep{efroni2019tight} (iv) \PSRL \citep{osband2013more} and (v) \RLSVI \citep{russo2019worst}.
For the hyper-parameters used for these baselines refer to Appendix~\ref{app:experiment_details}.

\vspace{-0.2cm}
\paragraph{Results} Figure~\ref{fig:main_figure} shows the result of the experiments. Overall, we see that \RandQL outperforms \OptQL algorithm on tabular environment, but still degrades in comparison to model-based approaches, that is usual for model-free algorithms in tabular environments. Indeed, using a model and backward induction allows new information to be more quickly propagated. But as counterpart, \RandQL has a better time-complexity and space-complexity than model-based algorithm, see Table~\ref{tab:time_space_complexity} in Appendix~\ref{app:experiment_details}.

\section{Conclusion}\label{sec:conclusion}

This paper introduced the \RandQL algorithm, a new model-free algorithm that achieves exploration without bonuses. It utilizes a novel idea of learning rate randomization, resulting in provable sample efficiency with regret of order $\tcO(\sqrt{H^5SAT})$ in the tabular case. We also extend \RandQL to the case of metric state-action space by using proper discretization techniques. The proposed algorithms inherit the good empirical performance of model-based Bayesian algorithm such that \PSRL while keeping the small space and time complexity of model-free algorithm. Our result rises following interesting open questions for a further research.

\vspace{-0.3cm}
\paragraph{Optimal rate for \RandQL} We conjecture that \RandQL could get optimal regret in the tabular setting if coupled with variance reductions techniques as used by \citet{zhang2020advantage}.
However, obtaining such improvements is not straightforward due to the intricate statistical dependencies involved in the analysis of \RandQL.

\vspace{-0.3cm}
\paragraph{Beyond one-step learning} We observe a large gap in the experiments between Q-learning type algorithm that do one-step planning and e.g. \UCBVI algorithm that does full planning or \GreedyUCBVI that does one-step planning with full back-up (expectation under transition of the model)  for all actions. Therefore, it would interesting to study also algorithms that range between these two extremes \citep{efroni2018multiple, efroni2019tight}.

\section*{Acknowledgments}

The work of D. Tiapkin, A. Naumov, and D. Belomestny were supported by the grant for research centers in the field of AI provided by the Analytical Center for the Government of the Russian Federation (ACRF) in accordance with the agreement on the provision of subsidies (identifier of the agreement 000000D730321P5Q0002) and the agreement with HSE University No. 70-2021-00139.
E. Moulines received support from the grant ANR-19-CHIA-002 SCAI and parts of his work has been done under the auspices of Lagrange Center for maths and computing. P. M\'enard acknowledges the Chaire SeqALO (ANR-20-CHIA-0020-01). This research was supported in part through computational resources of HPC facilities at HSE University.

\bibliographystyle{plainnat}
\bibliography{ref.bib}

\begin{thebibliography}{66}
\providecommand{\natexlab}[1]{#1}
\providecommand{\url}[1]{\texttt{#1}}
\expandafter\ifx\csname urlstyle\endcsname\relax
  \providecommand{\doi}[1]{doi: #1}\else
  \providecommand{\doi}{doi: \begingroup \urlstyle{rm}\Url}\fi

\bibitem[Agrawal and Goyal(2013)]{agrawal2013further}
Shipra Agrawal and Navin Goyal.
\newblock Further optimal regret bounds for thompson sampling.
\newblock In Carlos~M. Carvalho and Pradeep Ravikumar, editors,
  \emph{Proceedings of the Sixteenth International Conference on Artificial
  Intelligence and Statistics}, volume~31 of \emph{Proceedings of Machine
  Learning Research}, pages 99--107, Scottsdale, Arizona, USA, 29 Apr--01 May
  2013. PMLR.
\newblock URL \url{https://proceedings.mlr.press/v31/agrawal13a.html}.

\bibitem[Agrawal and Jia(2017)]{agrawal2020posterior}
Shipra Agrawal and Randy Jia.
\newblock Optimistic posterior sampling for reinforcement learning: worst-case
  regret bounds.
\newblock In I.~Guyon, U.~V. Luxburg, S.~Bengio, H.~Wallach, R.~Fergus,
  S.~Vishwanathan, and R.~Garnett, editors, \emph{Advances in Neural
  Information Processing Systems}, volume~30. Curran Associates, Inc., 2017.
\newblock URL
  \url{https://proceedings.neurips.cc/paper/2017/file/3621f1454cacf995530ea53652ddf8fb-Paper.pdf}.

\bibitem[Alfers and Dinges(1984)]{alfers1984normal}
Duncan Alfers and Hermann Dinges.
\newblock A normal approximation for beta and gamma tail probabilities.
\newblock \emph{Zeitschrift f{\"u}r Wahrscheinlichkeitstheorie und Verwandte
  Gebiete}, 65:\penalty0 399--420, 1984.
\newblock URL
  \url{https://link.springer.com/content/pdf/10.1007/BF00533744.pdf}.

\bibitem[Azar et~al.(2017)Azar, Osband, and Munos]{azar2017minimax}
Mohammad~Gheshlaghi Azar, Ian Osband, and R{\'{e}}mi Munos.
\newblock {Minimax regret bounds for reinforcement learning}.
\newblock In \emph{International Conference on Machine Learning}, 2017.
\newblock URL \url{https://arxiv.org/pdf/1703.05449.pdf}.

\bibitem[Azizzadenesheli et~al.(2018)Azizzadenesheli, Brunskill, and
  Anandkumar]{azizzadenesheli2018efficient}
Kamyar Azizzadenesheli, Emma Brunskill, and Animashree Anandkumar.
\newblock Efficient exploration through bayesian deep q-networks.
\newblock In \emph{2018 Information Theory and Applications Workshop, {ITA}
  2018, San Diego, CA, USA, February 11-16, 2018}, pages 1--9. {IEEE}, 2018.
\newblock \doi{10.1109/ITA.2018.8503252}.
\newblock URL \url{https://doi.org/10.1109/ITA.2018.8503252}.

\bibitem[Bai et~al.(2019)Bai, Xie, Jiang, and Wang]{bai2019provably}
Yu~Bai, Tengyang Xie, Nan Jiang, and Yu-Xiang Wang.
\newblock Provably efficient q-learning with low switching cost, 2019.
\newblock URL \url{https://arxiv.org/abs/1905.12849}.

\bibitem[Cao and Krishnamurthy(2020)]{cao2020provably}
Tongyi Cao and Akshay Krishnamurthy.
\newblock Provably adaptive reinforcement learning in metric spaces.
\newblock In H.~Larochelle, M.~Ranzato, R.~Hadsell, M.F. Balcan, and H.~Lin,
  editors, \emph{Advances in Neural Information Processing Systems}, volume~33,
  pages 9736--9744. Curran Associates, Inc., 2020.
\newblock URL
  \url{https://proceedings.neurips.cc/paper_files/paper/2020/file/6ef1173b096aa200158bfbc8af3ae8e3-Paper.pdf}.

\bibitem[Dann et~al.(2017)Dann, Lattimore, and Brunskill]{dann2017unifying}
Christoph Dann, Tor Lattimore, and Emma Brunskill.
\newblock {Unifying {PAC} and regret: Uniform {PAC} bounds for episodic
  reinforcement learning}.
\newblock In \emph{Neural Information Processing Systems}, 2017.
\newblock URL \url{https://arxiv.org/pdf/1703.07710.pdf}.

\bibitem[Dann et~al.(2021)Dann, Mohri, Zhang, and Zimmert]{dann2021modelfree}
Christoph Dann, Mehryar Mohri, Tong Zhang, and Julian Zimmert.
\newblock A provably efficient model-free posterior sampling method for
  episodic reinforcement learning.
\newblock In M.~Ranzato, A.~Beygelzimer, Y.~Dauphin, P.S. Liang, and J.~Wortman
  Vaughan, editors, \emph{Advances in Neural Information Processing Systems},
  volume~34, pages 12040--12051. Curran Associates, Inc., 2021.
\newblock URL
  \url{https://proceedings.neurips.cc/paper_files/paper/2021/file/649d45bf179296e31731adfd4df25588-Paper.pdf}.

\bibitem[Deisenroth and Rasmussen(2011)]{deisenroth2011pilco}
Marc~Peter Deisenroth and Carl~Edward Rasmussen.
\newblock Pilco: A model-based and data-efficient approach to policy search.
\newblock In \emph{Proceedings of the 28th International Conference on
  International Conference on Machine Learning}, ICML'11, page 465–472,
  Madison, WI, USA, 2011. Omnipress.
\newblock ISBN 9781450306195.

\bibitem[Diaconis and Freedman(1999)]{diaconis1999iterated}
Persi Diaconis and David Freedman.
\newblock Iterated random functions.
\newblock \emph{SIAM Review}, 41\penalty0 (1):\penalty0 45--76, 1999.
\newblock \doi{10.1137/S0036144598338446}.
\newblock URL \url{https://doi.org/10.1137/S0036144598338446}.

\bibitem[Domingues et~al.(2021{\natexlab{a}})Domingues, Flet-Berliac, Leurent,
  M{\'e}nard, Shang, and Valko]{rlberry}
Omar~Darwiche Domingues, Yannis Flet-Berliac, Edouard Leurent, Pierre
  M{\'e}nard, Xuedong Shang, and Michal Valko.
\newblock {rlberry - A Reinforcement Learning Library for Research and
  Education}, 10 2021{\natexlab{a}}.
\newblock URL \url{https://github.com/rlberry-py/rlberry}.

\bibitem[Domingues et~al.(2021{\natexlab{b}})Domingues, M{\'e}nard, Kaufmann,
  and Valko]{domingues2021episodic}
Omar~Darwiche Domingues, Pierre M{\'e}nard, Emilie Kaufmann, and Michal Valko.
\newblock Episodic reinforcement learning in finite mdps: Minimax lower bounds
  revisited.
\newblock In Vitaly Feldman, Katrina Ligett, and Sivan Sabato, editors,
  \emph{Proceedings of the 32nd International Conference on Algorithmic
  Learning Theory}, volume 132 of \emph{Proceedings of Machine Learning
  Research}, pages 578--598. PMLR, 16--19 Mar 2021{\natexlab{b}}.
\newblock URL \url{https://proceedings.mlr.press/v132/domingues21a.html}.

\bibitem[Domingues et~al.(2021{\natexlab{c}})Domingues, Menard, Pirotta,
  Kaufmann, and Valko]{domingues2021kernel}
Omar~Darwiche Domingues, Pierre Menard, Matteo Pirotta, Emilie Kaufmann, and
  Michal Valko.
\newblock Kernel-based reinforcement learning: A finite-time analysis.
\newblock In Marina Meila and Tong Zhang, editors, \emph{Proceedings of the
  38th International Conference on Machine Learning}, volume 139 of
  \emph{Proceedings of Machine Learning Research}, pages 2783--2792. PMLR,
  18--24 Jul 2021{\natexlab{c}}.
\newblock URL \url{https://proceedings.mlr.press/v139/domingues21a.html}.

\bibitem[Douc et~al.(2018)Douc, Moulines, Priouret, and
  Soulier]{douc2018markov}
Randal Douc, Eric Moulines, Pierre Priouret, and Philippe Soulier.
\newblock \emph{Markov chains}.
\newblock Springer, 2018.

\bibitem[Efroni et~al.(2018)Efroni, Dalal, Scherrer, and
  Mannor]{efroni2018multiple}
Yonathan Efroni, Gal Dalal, Bruno Scherrer, and Shie Mannor.
\newblock Multiple-step greedy policies in approximate and online reinforcement
  learning.
\newblock In S.~Bengio, H.~Wallach, H.~Larochelle, K.~Grauman, N.~Cesa-Bianchi,
  and R.~Garnett, editors, \emph{Advances in Neural Information Processing
  Systems}, volume~31. Curran Associates, Inc., 2018.
\newblock URL
  \url{https://proceedings.neurips.cc/paper_files/paper/2018/file/3f998e713a6e02287c374fd26835d87e-Paper.pdf}.

\bibitem[Efroni et~al.(2019)Efroni, Merlis, Ghavamzadeh, and
  Mannor]{efroni2019tight}
Yonathan Efroni, Nadav Merlis, Mohammad Ghavamzadeh, and Shie Mannor.
\newblock Tight regret bounds for model-based reinforcement learning with
  greedy policies.
\newblock In H.~Wallach, H.~Larochelle, A.~Beygelzimer, F.~d\textquotesingle
  Alch\'{e}-Buc, E.~Fox, and R.~Garnett, editors, \emph{Advances in Neural
  Information Processing Systems}, volume~32. Curran Associates, Inc., 2019.
\newblock URL
  \url{https://proceedings.neurips.cc/paper_files/paper/2019/file/25caef3a545a1fff2ff4055484f0e758-Paper.pdf}.

\bibitem[Ferguson(1973)]{ferguson1973bayesian}
Thomas~S Ferguson.
\newblock A bayesian analysis of some nonparametric problems.
\newblock \emph{The annals of statistics}, pages 209--230, 1973.

\bibitem[Fiechter(1994)]{fiechter1994efficient}
Claude-Nicolas Fiechter.
\newblock {Efficient reinforcement learning}.
\newblock In \emph{Conference on Learning Theory}, 1994.
\newblock URL
  \url{http://citeseerx.ist.psu.edu/viewdoc/download;jsessionid=7F5F8FCD1AA7ED07356410DDD5B384FE?doi=10.1.1.49.8652\&rep=rep1\&type=pdf}.

\bibitem[Fortunato et~al.(2018)Fortunato, Azar, Piot, Menick, Osband, Graves,
  Mnih, Munos, Hassabis, Pietquin, Blundell, and Legg]{fortunato2018noisy}
Meire Fortunato, Mohammad~Gheshlaghi Azar, Bilal Piot, Jacob Menick, Ian
  Osband, Alexander Graves, Vlad Mnih, Remi Munos, Demis Hassabis, Olivier
  Pietquin, Charles Blundell, and Shane Legg.
\newblock Noisy networks for exploration.
\newblock In \emph{Proceedings of the International Conference on
  Representation Learning (ICLR 2018)}, Vancouver (Canada), 2018.

\bibitem[Fruit et~al.(2018)Fruit, Pirotta, Lazaric, and
  Ortner]{fruit2018efficient}
Ronan Fruit, Matteo Pirotta, Alessandro Lazaric, and Ronald Ortner.
\newblock Efficient bias-span-constrained exploration-exploitation in
  reinforcement learning.
\newblock In \emph{International Conference on Machine Learning}, pages
  1578--1586. PMLR, 2018.

\bibitem[Garivier et~al.(2018)Garivier, Hadiji, Menard, and
  Stoltz]{garivier2018kl}
Aur{\'e}lien Garivier, H{\'e}di Hadiji, Pierre Menard, and Gilles Stoltz.
\newblock Kl-ucb-switch: optimal regret bounds for stochastic bandits from both
  a distribution-dependent and a distribution-free viewpoints.
\newblock \emph{arXiv preprint arXiv:1805.05071}, 2018.

\bibitem[Ghosal and Van~der Vaart(2017)]{ghosal2017fundamentals}
Subhashis Ghosal and Aad Van~der Vaart.
\newblock \emph{Fundamentals of nonparametric Bayesian inference}, volume~44.
\newblock Cambridge University Press, 2017.

\bibitem[Ghosh and Marecek(2022)]{ghosh2022iterated}
Ramen Ghosh and Jakub Marecek.
\newblock Iterated function systems: A comprehensive survey, 2022.

\bibitem[Guo et~al.(2007)Guo, Qi, and Srivastava]{guo2007necessary}
Senlin Guo, Feng Qi, and Hari Srivastava.
\newblock Necessary and sufficient conditions for two classes of functions to
  be logarithmically completely monotonic.
\newblock \emph{Integral Transforms and Special Functions}, 18:\penalty0
  819--826, 11 2007.
\newblock \doi{10.1080/10652460701528933}.

\bibitem[Haarnoja et~al.(2018)Haarnoja, Zhou, Abbeel, and
  Levine]{haarnoja2018actorcritic}
Tuomas Haarnoja, Aurick Zhou, Pieter Abbeel, and Sergey Levine.
\newblock Soft actor-critic: Off-policy maximum entropy deep reinforcement
  learning with a stochastic actor.
\newblock In \emph{International conference on machine learning}, pages
  1861--1870. PMLR, 2018.

\bibitem[Heess et~al.(2015)Heess, Wayne, Silver, Lillicrap, Erez, and
  Tassa]{heess2015learning}
Nicolas Heess, Gregory Wayne, David Silver, Timothy Lillicrap, Tom Erez, and
  Yuval Tassa.
\newblock Learning continuous control policies by stochastic value gradients.
\newblock In C.~Cortes, N.~Lawrence, D.~Lee, M.~Sugiyama, and R.~Garnett,
  editors, \emph{Advances in Neural Information Processing Systems}, volume~28.
  Curran Associates, Inc., 2015.
\newblock URL
  \url{https://proceedings.neurips.cc/paper_files/paper/2015/file/148510031349642de5ca0c544f31b2ef-Paper.pdf}.

\bibitem[Hessel et~al.(2021)Hessel, Danihelka, Viola, Guez, Schmitt, Sifre,
  Weber, Silver, and van Hasselt]{hessel2021muesli}
Matteo Hessel, Ivo Danihelka, Fabio Viola, Arthur Guez, Simon Schmitt, Laurent
  Sifre, Theophane Weber, David Silver, and Hado van Hasselt.
\newblock Muesli: Combining improvements in policy optimization.
\newblock In Marina Meila and Tong Zhang, editors, \emph{Proceedings of the
  38th International Conference on Machine Learning, {ICML} 2021, 18-24 July
  2021, Virtual Event}, volume 139 of \emph{Proceedings of Machine Learning
  Research}, pages 4214--4226. {PMLR}, 2021.
\newblock URL \url{http://proceedings.mlr.press/v139/hessel21a.html}.

\bibitem[Honda and Takemura(2010)]{honda2010asymptotically}
Junya Honda and Akimichi Takemura.
\newblock An asymptotically optimal bandit algorithm for bounded support
  models.
\newblock In Adam~Tauman Kalai and Mehryar Mohri, editors, \emph{COLT}, pages
  67--79. Omnipress, 2010.
\newblock ISBN 978-0-9822529-2-5.
\newblock URL
  \url{http://dblp.uni-trier.de/db/conf/colt/colt2010.html#HondaT10}.

\bibitem[Jaksch et~al.(2010)Jaksch, Ortner, and Auer]{jaksch2010near}
Thomas Jaksch, Ronald Ortner, and Peter Auer.
\newblock {Near-optimal regret bounds for reinforcement learning}.
\newblock \emph{Journal of Machine Learning Research}, 99:\penalty0 1563--1600,
  2010.
\newblock URL
  \url{http://www.jmlr.org/papers/volume11/jaksch10a/jaksch10a.pdf}.

\bibitem[Jin et~al.(2018)Jin, Allen-Zhu, Bubeck, and Jordan]{jin2018is}
Chi Jin, Zeyuan Allen-Zhu, S{\'{e}}bastien Bubeck, and Michael~I. Jordan.
\newblock {Is Q-learning provably efficient?}
\newblock In \emph{Neural Information Processing Systems}, 2018.
\newblock URL \url{https://arxiv.org/pdf/1807.03765.pdf}.

\bibitem[Kingma and Welling(2014)]{kingma2014autoencoding}
Diederik~P. Kingma and Max Welling.
\newblock Auto-encoding variational bayes.
\newblock In Yoshua Bengio and Yann LeCun, editors, \emph{2nd International
  Conference on Learning Representations, {ICLR} 2014, Banff, AB, Canada, April
  14-16, 2014, Conference Track Proceedings}, 2014.
\newblock URL \url{http://arxiv.org/abs/1312.6114}.

\bibitem[Li et~al.(2022)Li, Li, Zhang, Zhang, and Luo]{li2022hyperdqn}
Ziniu Li, Yingru Li, Yushun Zhang, Tong Zhang, and Zhi{-}Quan Luo.
\newblock Hyperdqn: {A} randomized exploration method for deep reinforcement
  learning.
\newblock In \emph{The Tenth International Conference on Learning
  Representations, {ICLR} 2022, Virtual Event, April 25-29, 2022}.
  OpenReview.net, 2022.
\newblock URL \url{https://openreview.net/forum?id=X0nrKAXu7g-}.

\bibitem[Lillicrap et~al.(2016)Lillicrap, Hunt, Pritzel, Heess, Erez, Tassa,
  Silver, and Wierstra]{lillicrap2015continuous}
Timothy~P. Lillicrap, Jonathan~J. Hunt, Alexander Pritzel, Nicolas Heess, Tom
  Erez, Yuval Tassa, David Silver, and Daan Wierstra.
\newblock Continuous control with deep reinforcement learning.
\newblock In Yoshua Bengio and Yann LeCun, editors, \emph{4th International
  Conference on Learning Representations, {ICLR} 2016, San Juan, Puerto Rico,
  May 2-4, 2016, Conference Track Proceedings}, 2016.
\newblock URL \url{http://arxiv.org/abs/1509.02971}.

\bibitem[Liu et~al.(2018)Liu, Feng, Mao, Zhou, Peng, and
  Liu]{liu2018action-dependent}
Hao Liu, Yihao Feng, Yi~Mao, Dengyong Zhou, Jian Peng, and Qiang Liu.
\newblock Action-dependent control variates for policy optimization via stein
  identity.
\newblock In \emph{ICLR 2018 Conference}, February 2018.
\newblock URL
  \url{https://www.microsoft.com/en-us/research/publication/action-dependent-control-variates-policy-optimization-via-stein-identity/}.

\bibitem[M{\'e}nard et~al.(2021)M{\'e}nard, Domingues, Shang, and
  Valko]{menard2021ucb}
Pierre M{\'e}nard, Omar~Darwiche Domingues, Xuedong Shang, and Michal Valko.
\newblock Ucb momentum q-learning: Correcting the bias without forgetting.
\newblock In \emph{International Conference on Machine Learning}, pages
  7609--7618. PMLR, 2021.

\bibitem[Mnih et~al.(2013)Mnih, Kavukcuoglu, Silver, Graves, Antonoglou,
  Wierstra, and Riedmiller]{mnih2013playing}
Volodymyr Mnih, Koray Kavukcuoglu, David Silver, Alex Graves, Ioannis
  Antonoglou, Daan Wierstra, and Martin Riedmiller.
\newblock Playing atari with deep reinforcement learning.
\newblock In \emph{NIPS Deep Learning Workshop}. 2013.

\bibitem[Ni et~al.(2019)Ni, Yang, and Wang]{ni2019learning}
Chengzhuo Ni, Lin~F Yang, and Mengdi Wang.
\newblock Learning to control in metric space with optimal regret.
\newblock In \emph{2019 57th Annual Allerton Conference on Communication,
  Control, and Computing (Allerton)}, pages 726--733. IEEE, 2019.

\bibitem[Osband and Van~Roy(2015)]{osband2015bootstrap}
Ian Osband and Benjamin Van~Roy.
\newblock Bootstrapped thompson sampling and deep exploration.
\newblock \emph{CoRR}, abs/1507.00300, 2015.
\newblock URL \url{http://arxiv.org/abs/1507.00300}.

\bibitem[Osband and Van~Roy(2017)]{osband2017why}
Ian Osband and Benjamin Van~Roy.
\newblock Why is posterior sampling better than optimism for reinforcement
  learning?
\newblock In Doina Precup and Yee~Whye Teh, editors, \emph{Proceedings of the
  34th International Conference on Machine Learning}, volume~70 of
  \emph{Proceedings of Machine Learning Research}, pages 2701--2710. PMLR,
  06--11 Aug 2017.
\newblock URL \url{https://proceedings.mlr.press/v70/osband17a.html}.

\bibitem[Osband et~al.(2013)Osband, Russo, and Van~Roy]{osband2013more}
Ian Osband, Daniel Russo, and Benjamin Van~Roy.
\newblock (more) efficient reinforcement learning via posterior sampling.
\newblock \emph{Advances in Neural Information Processing Systems}, 26, 2013.

\bibitem[Osband et~al.(2016)Osband, Blundell, Pritzel, and
  Van~Roy]{osband2016deep}
Ian Osband, Charles Blundell, Alexander Pritzel, and Benjamin Van~Roy.
\newblock Deep exploration via bootstrapped dqn.
\newblock In D.~Lee, M.~Sugiyama, U.~Luxburg, I.~Guyon, and R.~Garnett,
  editors, \emph{Advances in Neural Information Processing Systems}, volume~29.
  Curran Associates, Inc., 2016.
\newblock URL
  \url{https://proceedings.neurips.cc/paper/2016/file/8d8818c8e140c64c743113f563cf750f-Paper.pdf}.

\bibitem[Peyré and Cuturi(2019)]{peyre2019ot}
Gabriel Peyré and Marco Cuturi.
\newblock Computational optimal transport: With applications to data science.
\newblock \emph{Foundations and Trends® in Machine Learning}, 11\penalty0
  (5-6):\penalty0 355--607, 2019.
\newblock ISSN 1935-8237.
\newblock \doi{10.1561/2200000073}.
\newblock URL \url{http://dx.doi.org/10.1561/2200000073}.

\bibitem[Pinelis(1994)]{pinelis1994optimum}
Iosif Pinelis.
\newblock {Optimum Bounds for the Distributions of Martingales in Banach
  Spaces}.
\newblock \emph{The Annals of Probability}, 22\penalty0 (4):\penalty0 1679 --
  1706, 1994.
\newblock \doi{10.1214/aop/1176988477}.
\newblock URL \url{https://doi.org/10.1214/aop/1176988477}.

\bibitem[Russo(2019)]{russo2019worst}
Daniel Russo.
\newblock Worst-case regret bounds for exploration via randomized value
  functions.
\newblock In H.~Wallach, H.~Larochelle, A.~Beygelzimer, F.~d\textquotesingle
  Alch\'{e}-Buc, E.~Fox, and R.~Garnett, editors, \emph{Advances in Neural
  Information Processing Systems}, volume~32. Curran Associates, Inc., 2019.
\newblock URL
  \url{https://proceedings.neurips.cc/paper/2019/file/451ae86722d26a608c2e174b2b2773f1-Paper.pdf}.

\bibitem[Sasso et~al.(2023)Sasso, Conserva, and Rauber]{sasso2023posterior}
Remo Sasso, Michelangelo Conserva, and Paulo~E. Rauber.
\newblock Posterior sampling for deep reinforcement learning.
\newblock \emph{CoRR}, abs/2305.00477, 2023.
\newblock \doi{10.48550/arXiv.2305.00477}.
\newblock URL \url{https://doi.org/10.48550/arXiv.2305.00477}.

\bibitem[Schrittwieser et~al.(2020)Schrittwieser, Antonoglou, Hubert, Simonyan,
  Sifre, Schmitt, Guez, Lockhart, Hassabis, Graepel,
  et~al.]{schrittwieser2019mastering}
Julian Schrittwieser, Ioannis Antonoglou, Thomas Hubert, Karen Simonyan,
  Laurent Sifre, Simon Schmitt, Arthur Guez, Edward Lockhart, Demis Hassabis,
  Thore Graepel, et~al.
\newblock Mastering atari, go, chess and shogi by planning with a learned
  model.
\newblock \emph{Nature}, 588\penalty0 (7839):\penalty0 604--609, 2020.

\bibitem[Schulman et~al.(2015)Schulman, Levine, Abbeel, Jordan, and
  Moritz]{schulman2015trpo}
John Schulman, Sergey Levine, Pieter Abbeel, Michael Jordan, and Philipp
  Moritz.
\newblock Trust region policy optimization.
\newblock In \emph{International conference on machine learning}, pages
  1889--1897. PMLR, 2015.

\bibitem[Schulman et~al.(2017)Schulman, Wolski, Dhariwal, Radford, and
  Klimov]{schulman2017ppo}
John Schulman, Filip Wolski, Prafulla Dhariwal, Alec Radford, and Oleg Klimov.
\newblock Proximal policy optimization algorithms.
\newblock \emph{CoRR}, abs/1707.06347, 2017.
\newblock URL
  \url{http://dblp.uni-trier.de/db/journals/corr/corr1707.html#SchulmanWDRK17}.

\bibitem[Simchowitz and Jamieson(2019)]{simchowitz2019nonasymptotic}
Max Simchowitz and Kevin~G Jamieson.
\newblock Non-asymptotic gap-dependent regret bounds for tabular mdps.
\newblock In H.~Wallach, H.~Larochelle, A.~Beygelzimer, F.~d\textquotesingle
  Alch\'{e}-Buc, E.~Fox, and R.~Garnett, editors, \emph{Advances in Neural
  Information Processing Systems}, volume~32. Curran Associates, Inc., 2019.
\newblock URL
  \url{https://proceedings.neurips.cc/paper_files/paper/2019/file/10a5ab2db37feedfdeaab192ead4ac0e-Paper.pdf}.

\bibitem[Sinclair et~al.(2019)Sinclair, Banerjee, and Yu]{sinclair2019adaptive}
Sean~R. Sinclair, Siddhartha Banerjee, and Christina~Lee Yu.
\newblock Adaptive discretization for episodic reinforcement learning in metric
  spaces.
\newblock \emph{Proceedings of the {ACM} on Measurement and Analysis of
  Computing Systems}, 3\penalty0 (3):\penalty0 1--44, dec 2019.
\newblock \doi{10.1145/3366703}.
\newblock URL \url{https://doi.org/10.1145\%2F3366703}.

\bibitem[Sinclair et~al.(2023)Sinclair, Banerjee, and Yu]{sinclair2022adaptive}
Sean~R. Sinclair, Siddhartha Banerjee, and Christina~Lee Yu.
\newblock Adaptive discretization in online reinforcement learning.
\newblock \emph{Operations Research}, 71\penalty0 (5):\penalty0 1636--1652,
  2023.
\newblock \doi{10.1287/opre.2022.2396}.
\newblock URL \url{https://doi.org/10.1287/opre.2022.2396}.

\bibitem[Skorski(2023)]{skorski2023bernstein}
Maciej Skorski.
\newblock Bernstein-type bounds for beta distribution.
\newblock \emph{Modern Stochastics: Theory and Applications}, 10\penalty0
  (2):\penalty0 211--228, 2023.
\newblock ISSN 2351-6046.
\newblock \doi{10.15559/23-VMSTA223}.

\bibitem[Song and Sun(2019)]{song2019efficient}
Zhao Song and Wen Sun.
\newblock Efficient model-free reinforcement learning in metric spaces, 2019.

\bibitem[Strens(2000)]{strens2000bayesian}
Malcolm J.~A. Strens.
\newblock A bayesian framework for reinforcement learning.
\newblock In \emph{Proceedings of the Seventeenth International Conference on
  Machine Learning}, ICML '00, page 943–950, San Francisco, CA, USA, 2000.
  Morgan Kaufmann Publishers Inc.
\newblock ISBN 1558607072.

\bibitem[Sutton and Barto(1998)]{SuttonBarto98}
R.~Sutton and A.~Barto.
\newblock \emph{Reinforcement Learning: an Introduction}.
\newblock MIT press, 1998.

\bibitem[Tiapkin et~al.(2022{\natexlab{a}})Tiapkin, Belomestny, Calandriello,
  Moulines, Munos, Naumov, Rowland, Valko, and
  M\'{e}nard]{tiapkin2022optimistic}
Daniil Tiapkin, Denis Belomestny, Daniele Calandriello, Eric Moulines, Remi
  Munos, Alexey Naumov, Mark Rowland, Michal Valko, and Pierre M\'{e}nard.
\newblock Optimistic posterior sampling for reinforcement learning with few
  samples and tight guarantees.
\newblock In S.~Koyejo, S.~Mohamed, A.~Agarwal, D.~Belgrave, K.~Cho, and A.~Oh,
  editors, \emph{Advances in Neural Information Processing Systems}, volume~35,
  pages 10737--10751. Curran Associates, Inc., 2022{\natexlab{a}}.
\newblock URL
  \url{https://proceedings.neurips.cc/paper_files/paper/2022/file/45e15bae91a6f213d45e203b8a29be48-Paper-Conference.pdf}.

\bibitem[Tiapkin et~al.(2022{\natexlab{b}})Tiapkin, Belomestny, Moulines,
  Naumov, Samsonov, Tang, Valko, and Menard]{tiapkin2022dirichlet}
Daniil Tiapkin, Denis Belomestny, Eric Moulines, Alexey Naumov, Sergey
  Samsonov, Yunhao Tang, Michal Valko, and Pierre Menard.
\newblock From {D}irichlet to rubin: Optimistic exploration in {RL} without
  bonuses.
\newblock In Kamalika Chaudhuri, Stefanie Jegelka, Le~Song, Csaba Szepesvari,
  Gang Niu, and Sivan Sabato, editors, \emph{Proceedings of the 39th
  International Conference on Machine Learning}, volume 162 of
  \emph{Proceedings of Machine Learning Research}, pages 21380--21431. PMLR,
  17--23 Jul 2022{\natexlab{b}}.
\newblock URL \url{https://proceedings.mlr.press/v162/tiapkin22a.html}.

\bibitem[Van~Hasselt et~al.(2016)Van~Hasselt, Guez, and
  Silver]{vanhasselt2015reinforcement}
Hado Van~Hasselt, Arthur Guez, and David Silver.
\newblock Deep reinforcement learning with double q-learning.
\newblock In \emph{Proceedings of the AAAI conference on artificial
  intelligence}, volume~30, 2016.

\bibitem[Vershynin(2018)]{vershynin2018high}
Roman Vershynin.
\newblock \emph{High-dimensional probability: An introduction with applications
  in data science}, volume~47.
\newblock Cambridge university press, 2018.

\bibitem[Watkins and Dayan(1992)]{watkins1992q}
Chris~J. Watkins and Peter Dayan.
\newblock {Q-learning}.
\newblock \emph{Machine Learning}, 8\penalty0 (3-4):\penalty0 279--292, 1992.
\newblock URL
  \url{https://link.springer.com/content/pdf/10.1007/BF00992698.pdf}.

\bibitem[Wong(1998)]{wong1998generalized}
Tzu-Tsung Wong.
\newblock Generalized dirichlet distribution in bayesian analysis.
\newblock \emph{Applied Mathematics and Computation}, 97\penalty0 (2):\penalty0
  165--181, 1998.
\newblock ISSN 0096-3003.
\newblock \doi{https://doi.org/10.1016/S0096-3003(97)10140-0}.
\newblock URL
  \url{https://www.sciencedirect.com/science/article/pii/S0096300397101400}.

\bibitem[Xiong et~al.(2022)Xiong, Shen, Cui, Fazel, and
  Du]{xiong2021nearoptimal}
Zhihan Xiong, Ruoqi Shen, Qiwen Cui, Maryam Fazel, and Simon~S Du.
\newblock Near-optimal randomized exploration for tabular markov decision
  processes.
\newblock \emph{Advances in Neural Information Processing Systems},
  35:\penalty0 6358--6371, 2022.

\bibitem[Ye and Zhou(2015)]{ye2015information}
Fan Ye and Enlu Zhou.
\newblock Information relaxation and dual formulation of controlled markov
  diffusions.
\newblock \emph{IEEE Transactions on Automatic Control}, 60\penalty0
  (10):\penalty0 2676--2691, 2015.

\bibitem[Zanette and Brunskill(2019)]{zanette2019tighter}
Andrea Zanette and Emma Brunskill.
\newblock {Tighter problem-dependent regret bounds in reinforcement learning
  without domain knowledge using value function bounds}.
\newblock In \emph{International Conference on Machine Learning}, 2019.
\newblock URL \url{https://arxiv.org/pdf/1901.00210.pdf}.

\bibitem[Zhang et~al.(2020)Zhang, Zhou, and Ji]{zhang2020advantage}
Zihan Zhang, Yuan Zhou, and Xiangyang Ji.
\newblock {Almost optimal model-free reinforcement learning via
  reference-advantage decomposition}.
\newblock \emph{arXiv preprint arXiv:2004.10019}, 2020.
\newblock ISSN 23318422.
\newblock URL \url{https://arxiv.org/pdf/2004.10019.pdf}.

\end{thebibliography}

\appendix
\newpage
\part{Appendix}
\parttoc
\newpage

\section{Notation}
\label{app:notations}

\begin{table}[h]
	\centering
	\caption{Table of notation use throughout the paper for the tabular setting}
	\begin{tabular}{@{}l|l@{}}
		\toprule
		\thead{Notation} & \thead{Meaning} \\ \midrule
	$\cS$ & state space of size $S$\\
	$\cA$ & action space of size $A$\\
	$H$ & length of one episode\\
	$T$ & number of episodes\\
	$J$ & number of posterior samples\\
	\hline
	$r_h(s,a)$ & reward \\
	$p_h(s'|s,a)$ & probability transition \\
	$Q^{\pi}_h(s,a)$ & Q-function of a given policy $\pi$ at step $h$\\
	$V^{\pi}_h(s)$ & V-function of a given policy $\pi$ at step $h$\\
	$\Qstar_h(s,a)$ & optimal Q-function at step $h$\\
	$\Vstar_h(s)$ & optimal V-function at step $h$ \\
	$\regret^T $ & regret \\
	\hline
	$n_0$ and $n_0(k)$ & number of pseudo-transitions \\
	$s_0$ & optimistic pseudo-state \\
	$\ur$ & pseudo-reward \\
	$\kappa$ & posterior inflation parameter \\
	\hline
	$s^{\,t}_h$ & state that was visited at $h$ step during $t$ episode \\
	$a^{\,t}_h$ & action that was picked at $h$ step during $t$ episode \\
    $B^{\,t}_h$ & a ball that contains a pair $(s^t_h, a^t_h)$ \\
	$n_h^t(s,a)$ & number of visits of state-action at the beginning of episode $t$ \\
    & $n_h^t(s,a) = \sum_{k = 1}^{t-1}  \ind{\left\{(s_h^k,a_h^k) = (s,a)\right\}}$\\
    $n_h^t(B)$ & number of visits of a ball $B$ at the beginning of episode $t$ \\
    \hline 
    $e_k$ & length of $k$-th stage $e_k = \lfloor (1+1/H)^k H \rfloor$ for $k \geq 0$ and $e_{-1} = 0$ \\
    $k^t_h(s,a)$ & index of stage previous to time $t$ at step $h$ and state-action pair $(s,a)$: \\
    & $k^t_h(s,a) = \max\{ k: n^{t}_h(s,a) \geq \sum_{i=0}^k e_i \}$\\
    $\tn_h^t(s,a)$ & number of visits of state-action during the current stage: \\
    & $\tn^t_h(s,a) = n^t_h(s,a) - \sum_{i=0}^{k^t_h(s,a) }e_{i}$ \\
    $\tn_h^t(B)$ & number of visits of a ball $B$ during the current stage: \\
    \hline 
    $\uV^t_h(s)$ & upper approximation of the optimal V-value \\
    $\uQ^t_h(s,a)$ & upper approximation of the optimal Q-value \\
    $\uQ^t_h(B)$ & upper approximation of the optimal Q-value for all $(s,a) \in B$ \\
    $\tQ^{t,j}_h(s,a)$ & temporary estimate of the optimal Q-value \\
    $\tQ^{t,j}_h(B)$ & temporary estimate of the optimal Q-value for all $(s,a) \in B$ \\
    $w_{n,j}$ & random learning rates \\ 
    \hline 
    $\rho_\cS, \rho_\cA, \rho$ & metrics on $\cS, \cA$ and $\cS \times \cA$ correspondingly \\
    $\cN_\varepsilon$ & minimal $\varepsilon$-cover if $\cS \times \cA$ of size $N_{\varepsilon}$ \\
    $d_c$ & covering dimension of space $\cS \times \cA$: $\forall \varepsilon > 0 : N_\varepsilon \leq C_N \varepsilon^{-d_c}$ \\
    $d_{\max}$ & diameter of $\cS \times \cA$ \\
    $F_h(s,a,\xi_h)$ & reparametrization function $s_{h+1} \sim p_h(s,a) \iff s_{h+1} = F_h(s,a, \xi_h)$ \\
    $L_r, L_F$ & Lipschitz constants of rewards and reparametrization function \\
    $L_V$ & Lipschitz constants of $\Qstar_h$ and $\Vstar_h$ \\
     \bottomrule
	\end{tabular}
\end{table}

Let $(\Xset,\Xsigma)$ be a measurable space and $\Pens(\Xset)$ be the set of all probability measures on this space. For $p \in \Pens(\Xset)$ we denote by $\E_p$ the expectation w.r.t.\,$p$. For random variable $\xi: \Xset \to \R$ notation $\xi \sim p$ means $\operatorname{Law}(\xi) = p$. We also write $\E_{\xi \sim p}$ instead of $\E_{p}$. For independent (resp.\,i.i.d.) random variables $\xi_\ell \mysim p_\ell$ (resp.\,$\xi_\ell \mysimiid p$), $\ell = 1, \ldots, d$, we will write $\E_{\xi_\ell \mysim p_\ell}$ (resp.\,$\E_{\xi_\ell \mysimiid p}$), to denote expectation w.r.t.\,product measure on $(\Xset^d, \Xsigma^{\otimes d})$. For any $x \in \Xset$ we denote $\delta_x$ a Dirac measure supported at point $x$.

For any $p, q \in \Pens(\Xset)$ the Kullback-Leibler divergence $\KL(p, q)$ is given by
$$
\KL(p, q) \triangleq \begin{cases}
\E_{p}\left[\log \frac{\rmd p}{\rmd q}\right], & p \ll q, \\
+ \infty, & \text{otherwise.}
\end{cases} 
$$
For any $p \in \Pens(\Xset)$ and $f: \Xset \to \R$, $p f = \E_p[f]$. In particular, for any $p \in \simplex_d$ and $f: \{0, \ldots, d\}   \to  \R$, $pf =  \sum_{\ell = 0}^d f(\ell) p(\ell)$. Define $\Var_{p}(f) = \E_{s' \sim p} \big[(f(s')-p f)^2\big] = p[f^2] - (pf)^2$. For any $(s,a) \in \cS$, transition kernel $p(s,a) \in \Pens(\cS)$ and $f \colon \cS \to \R$ define $pf(s,a) = \E_{p(s,a)}[f]$ and $\Var_{p}[f](s,a) = \Var_{p(s,a)}[f]$.

Let $(\Xset, \rho)$ be a metric space, then the 1-Wasserstein distance between $p,q \in \Pens(\Xset)$ is defined as $\cW_1(p,q) = \sup_{f \text{ is } 1 \text{-Lipschitz}} \E_p[f] - \E_q[f]$.

We write $f(S,A,H,T) = \cO(g(S,A,H,T,\delta))$ if there exist $ S_0, A_0, H_0, T_0, \delta_0$ and constant $C_{f,g}$ such that for any $S \geq S_0, A \geq A_0, H \geq H_0, T \geq T_0, \delta < \delta_0, f(S,A,H,T,\delta) \leq C_{f,g} \cdot g(S,A,H,T,\delta)$. We write $f(S,A,H,T,\delta) = \tcO(g(S,A,H,T,\delta))$ if $C_{f,g}$ in the previous definition is poly-logarithmic in $S,A,H,T,1/\delta$.

For $\alpha, \beta > 0,$ we define $\Beta(\alpha, \beta)$ as a beta distribution with parameters $\alpha, \beta$. For set $\Xset$ such that $\vert \Xset \vert < \infty$ define $\Unif(\Xset)$ as a uniform distribution over this set. In particular, $\Unif[N]$ is a uniform distribution over a set $[N]$.

For a measure $p \in \Pens([0,b])$ supported on a segment $[0,b]$ (equipped with a Borel $\sigma$-algebra) and a number $\mu \in [0,b]$ we define
 \begin{equation*}
    \Kinf(p, \mu) \triangleq \inf\left\{  \KL(p,q): q \in \Pens([0,b]), p \ll q, \E_{X \sim q}[X] \geq \mu \right\}\,.
 \end{equation*}
As the Kullback-Leibler divergence this quantity admits a variational formula by Lemma~18 of \citet{garivier2018kl} up to rescaling for any $u \in (0, b)$
\begin{equation*}
    \Kinf(p, \mu) = \max_{\lambda \in[0,1/(b-\mu)]} \E_{X\sim p}\left[ \log\left( 1-\lambda (X-\mu)\right)\right] \,.
\end{equation*}
\section{Description of \texorpdfstring{\RandQL}{RandQL}}
\label{app:description_randQL}

In this appendix we describe \RandQL and \SampledRandQL algorithms.

\subsection{\RandQL algorithm}
 We recall that $n^t_h(s,a) = \sum_{i=1}^{t-1} \ind\{ (s^i_h, a^i_h) = (s,a) \}$ is the number of visits of state-action pair $(s,a)$ at step $h$ before episode $t$.

We start by initializing the ensemble of Q-values, the policy Q-values, and values to an optimistic value $\tQ_h^{t,j}(s,a) = \uQ_h^{1}(s,a) = \uV^1_h(s,a) = r_h(s,a)+r_0 (H-h)$ for all $(j,h,s,a)\in[J]\times[H]\times\cS\times\cA$ and $r_0>0$ some pseudo-rewards.

At episode $t$ we update the ensemble of Q-values as follows, denoting by $n=n^t_h(s,a)$ the count, $w_{j,n} \sim \Beta(H, n)$ the independent learning rates, 
\[
\tQ^{t+1,j}_h(s,a) = \begin{cases}
        (1- w_{j,n}) \tQ^{t,j}_h(s,a) + w_{j,n} \rQ_{h}^{t,j}(s,a), & (s,a) = (s^t_h, a^t_h) \\
        \tQ^{t,j}_h(s,a) & \text{otherwise},
    \end{cases}
\]
where we defined the target $\rQ_{h}^{t,j}(s,a)$ as a mixture between the usual target and some prior target with mixture coefficient $\rw_{n,j}\sim\Beta(n, n_0)$ and $n_0$ the number of prior samples,
\[
\rQ_{h}^{t,j}(s,a) = \rw_{j,n} [r_h(s,a) + \uV^{t}_{h+1}(s^t_{h+1})] + (1-\rw_{j,n}) [ r_h(s,a) + r_0 (H-h-1)]\,.
\]
It is important to note that in our approach, we need to re-inject prior targets to avoid forgetting their effects too quickly due to the aggressive learning rate. Indeed, the exponential decay of the prior effect can hurt exploration. We observe that the ensemble Q-value only averages uniformly over the last $1/H$ fraction of the targets, as the expected value of the learning rate is $\E[w_{j,n}] = H/(n+H)$. Since $\E[1-\rw_{j,n}] = n_0(n+n_0)$ the weight put on the prior sample in expectation, when we unfold the definition of $\tQ_h^{t+1,j}$, is of order $H/n \cdot n/H \cdot n_0/(n+n_0) =  n_0/(n+n_0)$,  which is consistent with the usual prior forgetting in Bayesian learning.
In \StagedRandQL, we avoid forgetting the prior too quickly by resetting the temporary Q-value to a prior value at the beginning of each stage. 

The policy Q-values are obtained by taking the maximum among the ensemble of Q-values
\[
\uQ_h^{t+1}(s,a) = \max_{j\in[J]} \tQ_h^{t+1,j}(s,a)\,.
\]
The policy is then greedy with respect to the policy Q-values  $\pi_h^{t+1}(s) \in\argmax_{a\in\cA} \uQ_h^{t+1}(s,a)$ and the value is $\uV^{t+1}_h(s)=\max_{a\in\cA}  \uQ_h^{t+1}(s,a)$. The complete \RandQL procedure is detailed in Algorithm~\ref{alg:RandQL}.

\begin{algorithm}[h!]
\centering
\caption{\RandQL}
\label{alg:RandQL}
\begin{algorithmic}[1]
  \STATE {\bfseries Input:} $J$ ensemble size, number of prior transitions $n_0$, prior reward $r_0$.
  \STATE {\bfseries Initialize: }  $\uV_h(s) = \uQ_h(s,a) = \tQ^j_h(s,a) = r(s,a) + \ur(H-h),$ initialize counters $n_h(s,a) = 0$ for $h,s,a\in[H]\times\cS\times\cA$.
      \FOR{$t \in[T]$}
      \FOR{$h \in [H]$}
        \STATE Play $a_h \in \argmax_a \uQ_h(s_h,a)$.
        \STATE Observe reward and next state $s_{h+1}\sim p_h(s_h,a_h)$.
        \STATE Sample $\rw_j \sim \Beta(n, n_0)$ for $n = n_h(s_h,a_h)$.
        \STATE Build targets for all $j \in [J]$
        \[
        \rQ_{h}^{j} = \rw_{j} [r_h(s_h,a_h) + \uV_{h+1}(s_{h+1})] + (1-\rw_{j}) [ r_h(s_h,a_h) + r_0 (H-h)]
        \,.\]
        \vspace{-0.4cm}
        \STATE Sample learning rates $w_j \sim \Beta(H, n)$.
        \STATE Update ensemble $Q$-functions for all $j \in [J]$
        \[
            \tQ^{j}_h(s_h,a_h) := (1 - w_j) \tQ^{j}_h(s_h,a_h) + w_j \rQ_{h}^{j}\,.
        \]
        \vspace{-0.4cm}
        \STATE Update policy $Q$-function $\uQ_h(s_h,a_h) := \max_{j \in [J]} \tQ^{j}_h(s_h,a_h)$.
        \STATE Update value function $\uV_h(s_h) := \max_{a \in \cA} \uQ_h(s_h,a)$\,.
      \ENDFOR
  \ENDFOR
\end{algorithmic}
\end{algorithm}

\subsection{\SampledRandQL algorithm}

To create an algorithm that is more similar to \PSRL, it is possible to select a Q-value at random from the ensemble of Q-values, rather than using the maximum Q-value
\[
\uQ_h^{t}(s,a) = \tQ_h^{t,j_t}(s,a)\qquad \text{with } j_t \sim\Unif[J].
\]
In this case we also need to update each Q-value in the ensemble with its corresponding target, see \citet{osband2015bootstrap}, 
\[
\rQ_{h}^{t,j}(s,a) = \rw_{j,n} [r_h(s,a) + \tV^{t,j}_{h+1}(s^t_{h+1})] + (1-\rw_{j,n}) [ r_h(s,a) + r_0 (H-h-1)]
\]
where $\tV^{t,j}_h(s)=\max_{a\in\cA}  \tQ_h^{t,j}(s,a)$. We name this new procedure \SampledRandQL and detail it in Algorithm~\ref{alg:SampledRandQL}.

\begin{algorithm}[h!]
\centering
\caption{\SampledRandQL}
\label{alg:SampledRandQL}
\begin{algorithmic}[1]
  \STATE {\bfseries Input:} $J$ ensemble size, number of prior transitions $n_0$, prior reward $r_0$.
  \STATE {\bfseries Initialize: }  $\uV_h(s) = \uQ_h(s,a) = \tQ^j_h(s,a) = r(s,a) + \ur(H-h),$ initialize counters $n_h(s,a) = 0$ for $h,s,a\in[H]\times\cS\times\cA$.
      \FOR{$t \in[T]$}
      \STATE Sample ensemble index $i\sim \Unif[J]$
      \FOR{$h \in [H]$}
        \STATE Play $a_h \in \argmax_a \uQ_h(s_h,a)$.
        \STATE Observe reward and next state $s_{h+1}\sim p_h(s_h,a_h)$.
        \STATE Sample $\rw_j \sim \Beta(n, n_0)$ for $n = n_h(s_h,a_h)$.
        \STATE Build targets for all $j \in [J]$
        \[
        \rQ_{h}^{j} = \rw_{j} [r_h(s_h,a_h) + \tV^j_{h+1}(s_{h+1})] + (1-\rw_{j}) [ r_h(s_h,a_h) + r_0 (H-h)]
        \,.\]
        \vspace{-0.4cm}
        \STATE Sample learning rates $w_j \sim \Beta(H, n)$.
        \STATE Update ensemble $Q$-functions for all $j \in [J]$
        \[
            \tQ^{j}_h(s_h,a_h) := (1 - w_j) \tQ^{j}_h(s_h,a_h) + w_j \rQ^j_h\,.
        \]
        \vspace{-0.4cm}
        \STATE Update value function $\uV_h(s_h) := \max_{a \in \cA} \tQ_h^j(s_h,a)$ for all $j\in[J]$.
        \STATE Update policy $Q$-function $\uQ_h(s_h,a_h) :=\tQ^{i}_h(s_h,a_h)$.

      \ENDFOR
  \ENDFOR
\end{algorithmic}
\end{algorithm}


\newpage
\section{Weight Distribution in \RandQL}
\label{app:weights_randql}

In this section we study the joint distribution of weights over all targets in \RandQL algorithm, described in details in Appendix~\ref{app:description_randQL}. To do it, we describe a very useful distribution, defined by \cite{wong1998generalized}.

\begin{definition}
    We say that a random vector $(X_1,\ldots,X_n,X_{n+1})$ has a \textit{generalized Dirichlet distribution} $\GD(\alpha_1,\ldots,\alpha_n;\beta_1,\ldots,\beta_n) $ if $X_{n+1} = 1 - (X_1 + \ldots+ X_n)$ and $(X_1,\ldots,X_n)$ it has the following density over the simplex $\{x_1,\ldots,x_{n} : x_1 + \ldots + x_{n} \leq 1\}$, 
    \[
        p(x) = \prod_{i=1}^n \frac{1}{B(\alpha_i, \beta_i)} x_i^{\alpha_i-1} (1 - x_1 - \ldots - x_i)^{\gamma_i}
    \]
    for $x_1 + \ldots + x_n \leq 1, x_j \geq 0$ for $j=1,\ldots,n,$ and $\gamma_j = \beta_j - \alpha_{j+1} - \beta_{j+1}$ for $j=1,\ldots,n-1$ and $\gamma_n = \beta_n - 1$. If we set $x_{n+1} = 1-(x_1 + \ldots + x_n)$ then we  obtain a homogeneous formula
    \[
        p(x) = \prod_{i=1}^n \frac{1}{B(\alpha_i, \beta_i)} x_i^{\alpha_i-1} \left( \sum_{j=i+1}^{n+1} x_j \right)^{\gamma_i}
    \]
\end{definition}
Alternative characterization of generalized Dirichlet distribution could be given using independent beta-distributed random variables $Z_1,\ldots,Z_n$ with $Z_i \sim \Beta(\alpha_i,\beta_i)$ as follows
\begin{align*}
    X_1 &= Z_1, \\
    X_j &= Z_j(1 - X_1 - \ldots - X_{j-1}) = Z_j \prod_{i=1}^{j-1}(1-Z_i) \quad \text{for } j = 2,3,\ldots,n \\
    X_{n+1} &= 1 - X_1 - \ldots - X_n = \prod_{i=1}^n (1 - Z_i)
\end{align*}

Therefore, for \RandQL algorithm without prior re-injection we have the following formula
\[
    \tQ^{t,j}_h(s,a) =  \sum_{i=0}^{n^t_h(s,a)} W^{i}_{j, n} \left( r_h(s^{\ell^i}_h,a^{\ell^i}_h) +  \uV^{\ell^i}_{h+1}(s^{\ell^i}_{h+1}) \right),
\]
for $n = n^t_h(s,a)$ and  weights are defined as follows
\[
    W^{0}_{j, n} = \prod_{q=0}^{n-1} ( 1 - w_{j,q}), \quad W^i_{j,n} = w_{j,i-1} \cdot \prod_{q=i}^{n-1} (1 - w_{j,q}),\ i \geq 1.
\]
And, moreover, we have that this vector of weights has the \textit{generalized Dirichlet distribution}
\[
    (W^n_{n,j}, W^{n-1}_{n,j}, \ldots, W^1_{n,j}, W^0_{n,j}) \sim \GD(H, H, \ldots, H; n+n_0,\ldots, n_0+1, n_0).
\]
That is, weights generated by the \RandQL procedure is an inverted generalized Dirichlet random vector, that induces additional similarities with a usual posterior sampling approaches. Notably, that for $H=1$ we recover exactly usual Dirichlet distribution, as in the setting of \StagedRandQL.

In the setting of the analysis, the main feature of this distribution is asymmetry in attitude to the order of components. In particular, the expectation of the prior weight $W^0_{n,j}$ is $\prod_{i=1}^n \left( 1 - \frac{H}{i+H} \right) \sim n^{-H}$ that leads to too rapid forgetting of the prior information.

\newpage
\section{Proofs for Tabular algorithm}\label{app:randql_tabular_proof}

\subsection{Algorithm}
In this section we describe in detail the tabular algorithms and the ways we will analyze them. We also provide some notations that will be used in the sequel. 

Let $n^t_h(s,a)$ be the number of visits of $(s,a,h)$ (i.e., of the state-action pair $(s,a)$ at step $h$) at the beginning of episode $t$: $n^t_h(s,a) = \sum_{i=1}^{t-1} \ind\{ (s^i_h, a^i_h) = (s,a) \}$. In particular, $n^{T+1}_h(s,a)$ is the number of visits of $(s,a,h)$ after all episodes.

Let $e_k = \lfloor (1 + 1/H)^k \cdot H \rfloor$ be the length of each stage for any $k \geq 0$ and, by convention, $e_{-1} = 0$. 
We will say that at the beginning of episode $t$ a triple $(s,a,h)$ is in $k$-th stage if $n^t_h(s,a) \in [\sum_{i=0}^{k-1} e_i, \sum_{i=0}^k e_i )$. 

Let $\tn^t_h(s,a)$ be the number of visits of state-action pair during the current stage at the beginning of episode $t$. Formally, it holds $\tn^t_h(s,a) = n^t_h(s,a) - \sum_{i=0}^{k-1} e_i$, where $k$ is the index of current stage. 


Let $\kappa > 0$ be the posterior inflation coefficient, $n_0$ be the number of prior transitions, and $J$ be the number of temporary $Q$-functions. Let $\tQ^{t,j}_h$ be the $j$-th \textit{temporary} Q-function and $\uQ^t_h$ be the \textit{policy} Q-function at the beginning of episode $t$. We initialize them as follows
\[
    \uQ^1_h(s,a) = r_h(s,a) + \ur (H - h - 1), \quad \tQ^{1,j}_h(s,a) = r_h(s,a) +  \ur (H-h-1),
\]
We can treat this initialization as a setting prior over $n_0$ pseudo-transitions to artificial state $s_0$ with $\ur > 1$ reward for each interaction.

For each transition we perform the following update of temporary Q-functions
\begin{equation}\label{eq:tilde_Q_update}
    \tQ^{t+1/2,j}_h(s,a) = \begin{cases}
        (1- w^k_{j, \tn}) \cdot \tQ^{t,j}_h(s,a) + w^k_{j, \tn} [r_h(s,a) + \uV^{t}_{h+1}(s^t_{h+1})], & (s,a) = (s^t_h, a^t_h) \\
        \tQ^{t,j}_h(s,a) & \text{otherwise},
    \end{cases}
\end{equation}
where $\tn=\tn^t_h(s,a)$ is the number of visits of $(s,a,h)$ during the current stage at the beginning of episode $t$, $k$ is the index of the current stage, and $w^k_{j, \tn}$ is a sequence of independent beta-distribution random variables $w^k_{j,\tn} \sim \Beta(1/\kappa, (\tn + n_0) / \kappa)$. Here we slightly abuse the notation by dropping the dependence of weights $w^k_{j,\tn}$ on the triple $(h,s,a)$ in order to simplify the exposition. In the case that the explicit dependence is required, we will call these weights as $w^{k,h}_{j,\tn}(s,a)$.

Next we define the stage update as follows
\begin{align*}
    \uQ^{t+1}_h(s,a) &= \begin{cases}
        \max_{j\in[J]}  \tQ^{t+1/2,j}_h(s,a) & \tn^t_h(s,a) = \lfloor (1 + 1/H)^k H \rfloor \\
        \uQ^{t}_h(s,a) & \text{otherwise}
    \end{cases} \\
    \tQ^{t+1,j}_h(s,a) &= \begin{cases}
         r_h(s,a) + \ur (H-h+1)  & \tn^t_h(s,a) = \lfloor (1 + 1/H)^k H \rfloor \\
         \tQ^{t+1/2,j}_h(s,a) & \text{otherwise}
    \end{cases} \\
    \uV^{t+1}_h(s) &= \max_{a \in \cA} \uQ^{t+1}_h(s,a) \\
    \pi^{t+1}_h(s) &\in \argmax_{a \in \cA} \uQ^{t+1}_h(s,a),
\end{align*}
where $k$ is the current stage. In other words, we update $\uQ^{t+1}$ with temporary values of $\tQ^{t+1/2,j}$, and then, if the change of stage is triggered, reinitialize $\tQ^{t+1,j}_h(s,a)$ for all $j$. For episode $t$ we will call $k^t_h(s,a)$ the index of stage where $\uQ^t_h(s,a)$ was updated (and $k^t_h(s,a) = -1$ if there was no update). For all $t$ we define $\tau^t_h(s,a) \leq t$ as an episode when the stage update happens. In other words, for any $t$ the following holds
\[
    \uQ^{t+1}_h(s,a) = \max_{j\in[J]}  \tQ^{\tau^t_h(s,a)+1/2,j}_h(s,a),
\]
where $\tau^t_h(s,a) = 0$ and $e_k = 0$ if there was no updates. To simplify the notation we will omit dependence on $(s,a,h)$ where it is deducible from the context.

To simplify the notation, we can extend the state space $\cS$ by an additional state $s_0$ that will be purely technical and used in the proofs. This state has the prescribed value function $\Vstar_h(s_0) = \ur(H-h)$ and could be treated as a absorbing pseudo-state with reward $\ur$.

We notice that in this case we use $e_k$ samples to compute $\tQ^{\tau^t_h(s,a)+1/2,j}$ for $k = k^t_h(s,a)$. For this $k$ we define $\ell^i_{k,h}(s,a)$ as a time of $i$-th visit of state-action pair $(s,a)$ during $k$-th stage. Then we have the following decomposition
\begin{equation}\label{eq:temp_q_rollout_}
    \tQ^{\tau^t+1/2,j}_h(s,a) = r_h(s,a) + \sum_{i=0}^{e_k} W^{i}_{j, e_k, k} \uV^{\ell^i}_{h+1}(s^{\ell^i}_{h+1}),
\end{equation}
where we drop dependence on $k$ and $(s,a,h)$ in $\ell^i$ to simplify notations, and use the convention $s^{\ell^0_{k,h}(s,a)}_{h+1} = s_0$, and the following aggregated weights
\[
    W^{0}_{j, n, k} = \prod_{q=0}^{n-1} ( 1 - w^k_{j,q}), \quad W^i_{j,n,k} = w^k_{j,i-1} \cdot \prod_{q=i}^{n-1} (1 - w^k_{j,q}),\ i \geq 1.
\]
We will omit the dependence on the stage index $k$ when it is not needed for the statement. However, we notice that these vectors, for different stages $k$, will be independent.

By the properties of the generalized Dirichlet distribution, it is possible to show the following result
\begin{lemma}\label{lem:weights_distribution}
    For any fixed $n > 0$, the random vector $(W^{0}_{j,n},W^{1}_{j,n}, \ldots, W^{n}_{j,n})$ has a Dirichlet distribution $\Dir(n_0/\kappa, 1/\kappa, \ldots, 1/\kappa)$.
\end{lemma}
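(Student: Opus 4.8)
The statement is exactly the representation of a Dirichlet random vector in terms of independent Beta ``stick-breaking'' factors, so the plan is to recognize the aggregated weights $W^i_{j,n}$ as a stick-breaking construction and invoke the characterization of the generalized Dirichlet distribution recalled just before the lemma. Concretely, fix $j$ and $n$, and drop $j$ from the notation. Recall that $w_0, w_1, \ldots, w_{n-1}$ are independent with $w_q \sim \Beta(1/\kappa, (q+n_0)/\kappa)$, and that
\[
    W^{0}_{n} = \prod_{q=0}^{n-1} (1 - w_q), \qquad W^i_{n} = w_{i-1} \prod_{q=i}^{n-1} (1 - w_q), \quad i \ge 1.
\]
The first step is to reindex: set $Z_i \triangleq w_{n-i}$ for $i = 1, \ldots, n$, so that $Z_i \sim \Beta(1/\kappa, (n-i+n_0)/\kappa)$ are independent, and observe that $(W^{n}_{n}, W^{n-1}_{n}, \ldots, W^{1}_{n}, W^{0}_{n})$ is precisely the vector $(X_1, \ldots, X_n, X_{n+1})$ produced from $Z_1, \ldots, Z_n$ by the stick-breaking formulas $X_1 = Z_1$, $X_j = Z_j \prod_{i<j}(1-Z_i)$, $X_{n+1} = \prod_{i=1}^n (1-Z_i)$ appearing in the alternative characterization of $\GD$ in Appendix~\ref{app:weights_randql}. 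Hence $(W^{n}_{n}, \ldots, W^{0}_{n}) \sim \GD(\alpha_1, \ldots, \alpha_n; \beta_1, \ldots, \beta_n)$ with $\alpha_i = 1/\kappa$ and $\beta_i = (n-i+n_0)/\kappa$.

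The second step is to check that these parameters collapse the generalized Dirichlet to an ordinary Dirichlet. Using the $\GD$ density in the definition, one needs $\gamma_i = \beta_i - \alpha_{i+1} - \beta_{i+1} = 0$ for $i = 1, \ldots, n-1$ and $\gamma_n = \beta_n - 1$; with our choice, $\beta_i - \beta_{i+1} = 1/\kappa = \alpha_{i+1}$, so indeed $\gamma_i = 0$ for $i<n$, and $\gamma_n = \beta_n - 1 = (n_0)/\kappa - 1$, which matches the exponent of $x_{n+1}$ for the claimed Dirichlet $\Dir(n_0/\kappa, 1/\kappa, \ldots, 1/\kappa)$ with the ordering $(x_1, \ldots, x_n, x_{n+1})$ corresponding to $(W^n_n, \ldots, W^1_n, W^0_n)$. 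Since the Dirichlet distribution is exchangeable (invariant under permutation of its parameters and coordinates jointly), permuting back to the order $(W^0_n, W^1_n, \ldots, W^n_n)$ gives the stated $\Dir(n_0/\kappa, 1/\kappa, \ldots, 1/\kappa)$. Alternatively, and perhaps more cleanly, one can cite directly the known fact (e.g.\ \citealt{wong1998generalized}) that $\GD(\alpha, \ldots, \alpha; \beta_1, \ldots, \beta_n)$ with $\beta_i = \beta_{i+1} + \alpha$ reduces to a Dirichlet, so that the verification of the parameter recursion is the entire content.

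I do not expect a serious obstacle here; the only points requiring care are bookkeeping ones. First, the order reversal between the indexing of the learning-rate factors $w_q$ (which multiply $(1-w_q)$ for $q$ from small to large) and the indexing $X_1, X_2, \ldots$ of the stick-breaking construction must be tracked precisely, since the $\GD$ parameters $\beta_i$ depend on the position. Second, one must confirm that the residual-product definitions of $W^i_n$ literally coincide with the $X_j$ recursion, i.e.\ that $W^i_n = w_{i-1}\prod_{q=i}^{n-1}(1-w_q)$ equals $Z_{n-i+1}\prod_{q'=1}^{n-i}(1-Z_{q'})$ under $Z_{q'} = w_{n-q'}$ — a one-line substitution. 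Once the reindexing is set up correctly, the lemma follows immediately from the cited characterization of $\GD$ and the parameter check $\beta_i - \beta_{i+1} = \alpha_{i+1}$.
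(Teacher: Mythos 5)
Your proposal is correct and follows essentially the same route as the paper: both identify $(W^n_{j,n},\ldots,W^0_{j,n})$ as a stick-breaking construction from the independent Beta factors and then invoke the standard Dirichlet characterization, the paper by proving the recursion $W^{n-i}_{j,n} = (1-W^n_{j,n}-\cdots-W^{n-i+1}_{j,n})\,w_{j,n-i-1}$ by induction and citing Dirichlet variate generation from marginal betas, you by passing through the generalized Dirichlet density and checking $\gamma_i=\beta_i-\alpha_{i+1}-\beta_{i+1}=0$. Your reindexing $Z_i = w_{n-i}$ and the parameter check $\beta_i-\beta_{i+1}=1/\kappa=\alpha_{i+1}$, $\gamma_n = n_0/\kappa - 1$ are accurate, and you make explicit the Beta-parameter verification that the paper's proof leaves implicit.
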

\begin{proof}
Using the Dirichlet random variate generation from marginal beta distributions, it is sufficient to prove that
for all $i\in \{0,\dots,n\}$, $W^{n-i}_{j,n,k} = (1-W^{n}_{j,n,k} -\dots- W^{n-i+1}_{j,n,k}) w_{j,n-i-1}^k,$ with the convention $w_{j,-1}^k = 1$. This is trivial for $i=0$, as $W^{n}_{j,n,k} = w_{j,n-1}^k$.
Now, if this is true for some $i$, then, for $i+1\in \{0,\dots,n\}$, we have
\begin{align*}W^{n-i-1}_{j,n,k} &= w_{j,n-i-2}^k \prod_{q=n-i-1}^{n-1}(1-w_{j,q}^k)\\&=w_{j,n-i-2}^k(1-w_{j,n-i-1}^k)  
(1-W^{n}_{j,n,k} -\dots- W^{n-i+1}_{j,n,k}) 
\\&= w_{j,n-i-2}^k (1-W^{n}_{j,n,k} -\dots- W^{n-i+1}_{j,n,k} - \underbrace{w_{j,n-i-1}^k(1-W^{n}_{j,n,k} -\dots- W^{n-i+1}_{j,n,k})}_{=W^{n-i}_{j,n,k}}),
\end{align*}
which finishes the proof.
\end{proof}

Notably, the expression \eqref{eq:temp_q_rollout_} shows a significant similarity between our method and \OPSRL. It is the reason why we can call this method a model-free posterior sampling, where posterior sampling is performed over the model in a lazy and model-free fashion.

\subsection{Concentration}\label{app:concentration_tabular}

Let $\betastar\colon (0,1) \times \N \to \R_{+}$ and $\beta^{B}, \beta^{\conc}, \beta\colon (0,1) \to \R_{+}$ be some function defined later on in Lemma \ref{lem:proba_master_event}. We define the following favorable events

\begin{align*}
  \cE^\star(\delta) &\triangleq \Bigg\{\forall t \in \N, \forall h \in [H], \forall (s,a)\in\cS\times\cA, k = k^t_h(s,a): \\
  &\qquad
    \Kinf\left( \frac{1}{e_k}\sum_{i=1}^{e_k} \delta_{\Vstar_{h+1}(s^{\ell^i}_{h+1})} ,p_h \Vstar_{h+1}(s,a) \right) \leq  \frac{\betastar(\delta,e_k)}{e_k}\Bigg\}\,,\\
    \cE^{B}(\delta) &\triangleq \Bigg\{ \forall t \in [T], \forall h \in [H], \forall (s,a) \in \cS \times \cA, \forall j \in [J], k = k^t_h(s,a):  \\
    &\qquad \left| \sum_{i=0}^{e_k} \left( W^i_{j, e_k, k} - \E[W^i_{j, e_k,k}] \right) \uV^{\ell^i}_{h+1}(s^{\ell^i}_{h+1}) \right| \leq 60 \rme^2 \sqrt{\frac{\ur^2 H^2 \kappa \beta^B(\delta)}{e_k + n_0}} \\
    &\qquad\qquad\qquad\qquad\qquad\qquad\qquad\qquad\quad + 1200 \rme \frac{\ur H \kappa \log(e_k) (\beta^B(\delta))^2}{e_k + n_0} \bigg\}\,,\\
\cE^{\conc}(\delta) &\triangleq \Bigg\{\forall t \in [T], \forall h \in [H], \forall (s,a)\in\cS\times\cA, k = k^t_h(s,a): \\
&\qquad\left|\frac{1}{e_k} \sum_{i=1}^{e_k} \Vstar_{h+1}(s^{\ell^i_{k,h}(s,a)}_{h+1})  - p_h\Vstar_{h+1}(s,a) \right| \leq \sqrt{\frac{2\ur^2 H^2 \beta^{\conc}(\delta)}{e_k}}\Bigg\}\\
\cE(\delta) &\triangleq \Bigg\{  \sum_{t=1}^T \sum_{h=1}^H (1+1/H)^{H-h}\left| p_h[\Vstar_{h+1} - V^{\pi_t}_{h+1}](s^t_h, a^t_h) - [\Vstar_{h+1} - V^{\pi_t}_{h+1}](s^t_{h+1})\right| \\
&\qquad\qquad\qquad\qquad\qquad\qquad\qquad\qquad\qquad\qquad\qquad\quad\leq 2\rme \ur H\sqrt{2HT \beta(\delta)}.
\Bigg\}.
\end{align*}
We also introduce the intersection of these events, $\cG(\delta) \triangleq \cE^\star(\delta) \cap \cE^{B}(\delta) \cap \cE^{\conc}(\delta) \cap \cE(\delta)$. We  prove that for the right choice of the functions $\betastar,  \beta^{\KL}, \beta^{\conc}, \beta, \beta^{\Var}$ the above events hold with high probability.
\begin{lemma}
\label{lem:proba_master_event}
For any $\delta \in (0,1)$ and for the following choices of functions $\beta,$
\begin{align*}
    \betastar(\delta,n) &\triangleq \log(8SAH/\delta) + 3\log\left(\rme\pi(2n+1)\right)\,,\\
    \beta^B(\delta) &\triangleq  \log(8SAH/\delta) + \log(TJ)\,,\\
    \beta^{\conc}(\delta) &\triangleq \log(8SAH/\delta) + \log(2T) ,\\
    \beta(\delta) &\triangleq \log\left(16/\delta\right),
\end{align*}
it holds that
\begin{align*}
\P[\cE^\star(\delta)]&\geq 1-\delta/8, \qquad \P[\cE^{B}(\delta)]\geq 1-\delta/8,  \\\
\P[\cE^\conc(\delta)] &\geq 1-\delta/8, \qquad \P[\cE(\delta)]\geq 1-\delta/8.
\end{align*}
In particular, $\P[\cG(\delta)] \geq 1-\delta/2$.
\end{lemma}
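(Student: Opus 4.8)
The plan is to prove each of the four high-probability bounds separately and then apply a union bound, since $\cG(\delta) = \cE^\star(\delta) \cap \cE^{B}(\delta) \cap \cE^{\conc}(\delta) \cap \cE(\delta)$ and $4 \cdot (\delta/8) = \delta/2$. The main work is setting up, for each event, the right martingale or self-normalized concentration inequality and then a union bound over the relevant indices. A recurring subtlety throughout is that the number of visits is random and the "stage" structure means we are looking at data collected during a contiguous block of visits; the standard trick (as in \citet{jin2018is,zhang2020advantage}) is to fix the \emph{number of visits} $n$ (here $n = e_k$, a deterministic quantity once the stage index $k$ is fixed) and the identity of the stage, so that conditionally the relevant increments form a martingale difference sequence with respect to the natural filtration, and only afterwards union bound over $t$, $h$, $(s,a)$, $j$, and $k$. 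Since $e_k$ only takes $O(\log_{1+1/H} T) = \tcO(H)$ distinct values up to time $T$, and $k^t_h(s,a)$ is determined by the data, the union bounds cost only logarithmic factors, which is why the $\beta$ functions have the stated logarithmic form.

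For $\cE^\star(\delta)$: the quantity $\tfrac{1}{e_k}\sum_{i=1}^{e_k}\delta_{\Vstar_{h+1}(s^{\ell^i}_{h+1})}$ is an empirical distribution of $e_k$ i.i.d.\ (conditionally, given the stage started) samples from $p_h(\cdot|s,a)$ pushed through $\Vstar_{h+1}$, so this is exactly a $\Kinf$ deviation bound for an empirical measure on a bounded support $[0, \ur H]$. I would invoke the standard $\Kinf$ concentration result (e.g.\ the one behind the $\OPSRL$ analysis of \citet{tiapkin2022optimistic}, which gives $\Kinf(\hat p_n, p\Vstar) \le \beta/n$ with $\beta$ of the form $\log(1/\delta) + \text{const}\cdot\log(n)$), apply it for each fixed $(h,s,a,k)$ at confidence $\delta/(8SAH \cdot (\text{\# stages}))$, and absorb the stage count into the $3\log(\rme\pi(2n+1))$ term. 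For $\cE^{\conc}(\delta)$: this is a weaker, scalar statement — just a Hoeffding bound on $\tfrac{1}{e_k}\sum_{i=1}^{e_k}\Vstar_{h+1}(s^{\ell^i}_{h+1})$ around its mean $p_h\Vstar_{h+1}(s,a)$, with the summands bounded in $[0,\ur H]$ — so a standard Hoeffding/Azuma bound plus union bound over $(h,s,a)$ and the $\le 2T$ stages gives the claimed $\sqrt{2\ur^2 H^2 \beta^{\conc}(\delta)/e_k}$.

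For $\cE^{B}(\delta)$: this is the hardest one and the place I expect the main obstacle. Here one must control $\sum_{i=0}^{e_k}(W^i_{j,e_k,k} - \E[W^i_{j,e_k,k}])\uV^{\ell^i}_{h+1}(s^{\ell^i}_{h+1})$, i.e.\ a weighted sum where the \emph{weights} $W^i$ are the random (generalized-)Dirichlet weights from Lemma~\ref{lem:weights_distribution}, independent of the data $\uV^{\ell^i}_{h+1}(\cdot)$ which are bounded in $[0,\ur H]$, but the $\uV$ values themselves are adapted to the history and not independent. The strategy is to condition on the entire trajectory (so the $\uV^{\ell^i}$ become fixed constants in $[0,\ur H]$) and then use concentration of linear functionals of a Dirichlet vector around its mean: a Dirichlet$(n_0/\kappa, 1/\kappa,\dots,1/\kappa)$ vector has sub-exponential-type tails for linear functionals, with variance proxy of order $\kappa H^2 \ur^2/(e_k + n_0\kappa)$ (the extra $\kappa$ and the $1/(e_k+n_0\kappa)$ come from the shape parameters summing to $(e_k + n_0)/\kappa$), which accounts for the leading $\sqrt{\ur^2 H^2\kappa\beta^B/(e_k+n_0\kappa)}$ term; the heavier $\log(e_k+n_0\kappa)(\beta^B)^2/(e_k+n_0\kappa)$ term is the higher-order correction from the sub-exponential (not sub-Gaussian) regime. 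Then union bound over $t,h,s,a,j$ — this is where the $\log(TJ)$ in $\beta^B$ comes from. Finally, for $\cE(\delta)$: the sum $\sum_{t,h}(1+1/H)^{H-h}\big(p_h[\Vstar_{h+1}-V^{\pi_t}_{h+1}](s^t_h,a^t_h) - [\Vstar_{h+1}-V^{\pi_t}_{h+1}](s^t_{h+1})\big)$ is a sum of $HT$ bounded martingale differences (each term is a centered one-step transition noise, bounded by $\ur H$ in absolute value times the weight $(1+1/H)^{H-h} \le \rme$), so Azuma–Hoeffding with total quadratic variation $\le \rme^2 \ur^2 H^2 \cdot HT$ gives exactly $2\rme\ur H\sqrt{2HT\beta(\delta)}$ with $\beta(\delta) = \log(16/\delta)$; this is the cleanest of the four and needs no indexing union bound. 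Combining the four via the union bound yields $\P[\cG(\delta)] \ge 1 - \delta/2$, completing the proof.
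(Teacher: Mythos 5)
Your overall structure --- four separate concentration bounds, each at level $\delta/8$, followed by a union bound --- is exactly the paper's, and your treatments of $\cE^{\conc}(\delta)$ (Hoeffding plus a union bound over $(s,a,h)$ and the stage length) and of $\cE(\delta)$ (Azuma--Hoeffding on the $HT$ bounded, $\rme$-weighted increments) coincide with the paper's proof. Two points differ in route. For $\cE^\star(\delta)$ the paper does not union bound over the stage index at all: it invokes the self-normalized deviation inequality for $\Kinf$ (Theorem~\ref{th:max_ineq_kinf}), which holds uniformly over all sample sizes $n$, so the only union bound is over $\cS\times\cA\times[H]$ and the $3\log(\rme\pi(2n+1))$ term is produced by the inequality itself rather than absorbed from a stage count; your variant with a fixed-$n$ inequality and a union bound over the $O(H\log T)$ stages would need a separate check that $3\log(\rme\pi(2e_k+1))$ really dominates $\log$ of the number of stages for the smallest stages, which is not automatic when $H$ is small and $T$ is huge. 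For $\cE^{B}(\delta)$ the paper sets up a stopped filtration $\widetilde{\cF}_{i-1}$ in which $\uV^{\ell^i}_{h+1}(s^{\ell^i}_{h+1})$ is predictable and $\E[W^i_{j,e_k,k}\mid\widetilde{\cF}_{i-1}]=\E[W^i_{j,e_k,k}]$ (the aggregated weight only involves learning rates drawn at visit $i$ and later), then applies a Rosenthal-type inequality to the martingale differences $(W^i-\E[W^i])\,\uV^{\ell^i}_{h+1}(s^{\ell^i}_{h+1})$ via Proposition~\ref{prop:beta_martingale_bound}, using the Beta marginals of the Dirichlet vector for the variance and sub-exponential max terms. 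Your plan to ``condition on the entire trajectory'' and then concentrate a linear functional of a Dirichlet vector captures the same underlying independence, but as stated it is slightly off: conditioning on the full trajectory up to time $T$ is not legitimate, because once the stage ends the policy Q-value at $(s,a,h)$ --- and hence the subsequent trajectory --- is a function of exactly these weights, so their conditional law given the whole trajectory is no longer the prescribed Dirichlet. You must either condition only on the history up to the stage's completion (and argue that the stage-$k$ weights at $(s,a,h)$ influence nothing observable before the policy update) or adopt the paper's martingale formulation, which sidesteps the issue. With that repair your argument goes through and yields the same bound.
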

\begin{proof}
From the fact that $s^{\ell^i}_{h+1}$ are i.i.d. generated from $p_h(s,a)$, Theorem \ref{th:max_ineq_kinf}, and union bound $\cS \times \cA \times [H]$ it holds $\P[\cE^\star(\delta)]\geq 1-\delta/8$. 

Next we fix all $t,h,s,a,j$, and denote $n = e_{k^t_h(s,a)}$. First, we define a filtration of $\sigma$-algebras $\cF_{\tau}$ that is sigma-algebra generated by all random variables appeared untill the update \eqref{eq:tilde_Q_update} in the episode $t$ and step $h$, before newly generated random weights but after receiving new state $s^t_{h+1}$. Formally, we can define it as follows
\begin{align*}
    \cF_{t,h} &= \sigma\bigg( \left\{ (s^\tau_{h'}, a^\tau_{h'}, w^{k^{\tau}_{h'}+1, h'}_{j, \tn^\tau_{h'}}(s^\tau_{h'}, a^\tau_{h'}) ), \forall \tau < t, (h',j) \in [H] \times [J] \right\} \\
    &\qquad \cup \{ (s^t_{h'}, a^t_{h'}, s^t_{h'+1}), \forall h' \leq h   \} \cup \{ w^{k^t_{h'}+1, h'}_{j, \tn^{t}_{h'}}(s^t_{h'}, a^t_{h'}), \forall h' < h, j \in [J] \}\bigg),
\end{align*}
where we drop dependence on state-action pairs everywhere where it is deducible from the context.

Consider a sequence $\ell^1 < \ldots < \ell^n$ be an excursion of the state-action pair $(s,a)$ at the step $h$. Each $\ell^i$ is a stopping time w.r.t $\cF_{t,h}$, so we can consider a stopped filtration (with a shift by 1 in indices) $\widetilde{\cF}_{i-1} = \cF_{\ell^i,h}$.  In other words, this filtration at time-steps $i-1$ contains all the information that is available just before the generation of random weights for the $i$-th update of temporary Q-functions inside the last stage. We notice that under this definition, we have
\begin{align*}
    \E[\uV^{\ell_i}_{h+1}(s^{\ell_i}_{h+1}) | \widetilde{\cF}_{i-1}] &= \uV^{\ell_i}_{h+1}(s^{\ell_i}_{h+1})\,, \qquad
    \E[w_{j,n}^i | \widetilde{\cF}_{i-1}] =  \E[w_{j,n}^i]\,
\end{align*}
and, moreover, $w_{j,n}^i$ is $\widetilde{\cF}_i$-measurable. To simplify the notation, we define $Y_i = \uV^{\ell_i}_{h+1}(s^{\ell_i}_{h+1}) / (r_0 \cdot H)$, and then we notice that we can apply the recursion on the aggregated weights back: define two Q-value-style sequences
\[
    X_i = (1-w_{j,n}^i) X_{i-1} + w_{j,n}^i Y_i\,, \quad \bar{X}_i = (1-\E[w_{j,n}^i]) \bar{X}_{i-1} + \E[w_{j,n}^i] Y_i\,.
\]
Then, by the aggregation property, we have
\[
    (r_0 \cdot H) \cdot (X_n - \bar{X}_n) = \sum_{i=0}^{n} \left( W^i_{j, n,k} - \E[W^i_{j, n,k}] \right) \uV^{\ell^i}_{h+1}(s^{\ell^i}_{h+1})\,.
\]
Given this reformulation, Proposition~\ref{prop:beta_martingale_bound} and union bound implies $\P[\cE^B(\delta)] \geq 1 -\delta/8$.

To show that $\PP{\cE^{\conc}(\delta)} > 1-\delta/8$, it is enough to apply Hoeffding inequality for a fixed number of samples $e_k$ used in empirical mean, and then use union bound of all possible values of $(s,a,h) \in \cS \times \cA \times [H]$ and $e_k \in [T]$.

Next, define the following sequence
\begin{align*}
Z_{t,h} &\triangleq (1+1/H)^{H-h}\left([\Vstar_{h+1}-V^{\pi^t}_{h+1}](s_{h+1}^t)-p_h [\Vstar_{h+1}- V^{\pi^t}_{h+1}](s^t_h,a^t_h)\right),& t\in [T], h \in [H],
\end{align*}
It is easy to see that these sequences form a martingale-difference w.r.t filtration $\cF_{t,h} = \sigma\left\{ \{ (s^{\ell}_{h'}, a^{\ell}_{h'}, \pi^{\ell}), \ell < t, h' \in [H] \} \cup \{ (s^{t}_{h'}, a^t_{h'}, \pi^{t}), h' \leq h \} \right\}$. Moreover,  \(|Z_{t,h}|\leq 2\rme \ur H\) for all \(t\in [T]\) and \(h\in [H].\)   Hence, the Azuma-Hoeffding inequality implies
 \begin{align*}
    \P\Bigl(\Bigl|\sum_{t=1}^T \sum_{h=1}^H Z_{t,h}\Bigr|> 2\rme \ur H\sqrt{2 t H \cdot \beta(\delta)}\Bigr)&\leq 2\exp(-\beta(\delta))=\delta/8,
\end{align*}
therefore $\P[\cE(\delta)] \geq 1 - \delta/8$.
\end{proof}

\subsection{Optimism}\label{app:optimism_tabular}

 In this section we prove that our estimate of $Q$-function $\uQ^{\,t}_h(s,a)$ is optimistic, that is the event
\begin{equation}\label{eq:opt_event}
    \cE_{\opt} \triangleq \left\{ \forall t \in [T], h \in [H], (s,a) \in \cS \times \cA:  \uQ^t_h(s,a) \geq \Qstar_h(s,a) \right\}.
\end{equation}
holds with high probability on the event $\cE^\star(\delta)$.

Define constants
\begin{equation}\label{eq:constant_c0}
    c_0 \triangleq \frac{8}{\pi} \left( \frac{4}{\sqrt{\log(17/16)}} + 8 + \frac{49\cdot 4\sqrt{6}}{9} \right)^2 + 1.
\end{equation}
and
\begin{equation}\label{eq:constant_cJ}
    c_J \triangleq \frac{1}{\log\left( \frac{2}{1 + \Phi(1)} \right)},
\end{equation}
where $\Phi(\cdot)$ is a CDF of a normal distribution.

\begin{proposition}\label{prop:anticonc}
    Assume that $J = \lceil  c_J \cdot \log(2SAHT/\delta)  \rceil$, $\kappa = 2\beta^\star(\delta, T)$, $\ur = 2$, and $n_0 = \lceil (c_0 + 1 + \log_{17/16}(T))  \cdot \kappa \rceil$. Then conditionally on $\cE^\star(\delta)$ the  event
    \begin{align*}
        \cE_{\anticonc} \triangleq \biggl\{ &\forall t \in [T], \ \forall h \in [H], \ \forall (s,a) \in \cS \times \cA: \\
        &\max_{j \in [J]} \left\{ \sum_{i=0}^{e_k} W^i_{j,e_k,k} \Vstar_{h+1}(s^{\ell^i_{t,h}(s,a)}_{h+1})\right\} \geq p_h \Vstar_{h+1}(s,a), k = k^t_h(s,a) \biggl\}
    \end{align*}
    holds with probability at least $1-\delta/2$.
\end{proposition}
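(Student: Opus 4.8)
The plan is to reduce the statement to a \emph{single-sample} Dirichlet anticoncentration estimate and then amplify it with the ensemble and a union bound. First I would fix a tuple $(t,h,s,a)\in[T]\times[H]\times\cS\times\cA$, set $k=k^t_h(s,a)$, $n=e_k$, $\mu\triangleq p_h\Vstar_{h+1}(s,a)$, $v_0\triangleq\Vstar_{h+1}(s_0)\ge 0$ (the value of the optimistic pseudo-state), and $v_i\triangleq\Vstar_{h+1}(s^{\ell^i}_{h+1})$ for $i\in[n]$. I condition on the history of all states and actions observed up to the closing of the $k$-th stage of $(s,a,h)$; since the learning rates are exogenous noise independent of this history, the $J$ weight vectors $(W^0_{j,n,k},\dots,W^n_{j,n,k})$, $j\in[J]$, are, by Lemma~\ref{lem:weights_distribution}, i.i.d.\ draws from $\Dir(n_0/\kappa,1/\kappa,\dots,1/\kappa)$ and independent of that history; moreover the reals $v_0,\dots,v_n$ are then fixed and the local restriction of $\cE^\star(\delta)$ at this tuple, namely $\Kinf\bigl(\tfrac1n\sum_{i=1}^n\delta_{v_i},\mu\bigr)\le\betastar(\delta,n)/n$, is history-measurable.

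Next I would prove the core estimate: on this $\Kinf$ event, for one draw $W\sim\Dir(n_0/\kappa,1/\kappa,\dots,1/\kappa)$,
\[
    \P_W\Bigl[\sum_{i=0}^{n} W^i v_i \ge \mu\Bigr] \ge \tfrac12\bigl(1-\Phi(1)\bigr).
\]
The route: use Dirichlet aggregation to write $X\triangleq\sum_{i=0}^n W^iv_i = W^0 v_0 + (1-W^0)Y$ with $W^0\sim\Beta(n_0/\kappa,n/\kappa)$ independent of $Y=\sum_{i=1}^n U_i v_i$, $(U_1,\dots,U_n)\sim\Dir(1/\kappa,\dots,1/\kappa)$; compute $\E[X]=\mu_0\triangleq\frac{n_0 v_0 + n\bar v}{n_0+n}$ with $\bar v=\frac1n\sum_i v_i$, and bound $\Var(X)\lesssim\frac{\kappa}{n_0+n}\widehat{\sigma}^2 + \frac{n_0 n\kappa}{(n_0+n)^3}(\ur H)^2$, where $\widehat{\sigma}^2$ is the empirical variance of $(v_i)_{i\le n}$; extract from the $\Kinf$ bound a one-sided Bernstein estimate $\bar v \ge \mu - c\bigl(\sqrt{\widehat{\sigma}^2\betastar(\delta,n)/n} + \ur H\betastar(\delta,n)/n\bigr)$. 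Since $\kappa=2\betastar(\delta,T)\ge 2\betastar(\delta,n)$, this makes the ``data'' contribution to $\mu_0-\mu$ at least $-O(\sqrt{\Var(X)})$, while the ``prior'' contribution equals $\frac{n_0(v_0-\mu)}{n_0+n}$ with $v_0-\mu=\Omega(H-h)$. A short algebraic check — this is exactly what the calibration $n_0=\lceil(c_0+1+\log_{17/16}(T))\kappa\rceil$ and $\ur=2$ buys us, the residual regime $H-h=O(1)$ being handled directly since then all quantities are $O(1)$ — gives $\mu_0-\mu\ge\sqrt{\Var(X)}$, so $\P[X\ge\mu]\ge\P[X\ge\E[X]-\sqrt{\Var(X)}]$. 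Applying a Berry--Esseen-type anticoncentration inequality for Dirichlet-weighted sums of bounded reals (valid uniformly over the concentration-parameter regime), the right-hand side is $\ge\Phi(-1)-\mathrm{err}$, and the constant $c_0$ in \eqref{eq:constant_c0} is chosen precisely so that $\mathrm{err}\le\tfrac12(1-\Phi(1))$.

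Finally I would boost and take a union bound. Conditionally on the history and the local $\Kinf$ event the $J$ draws are independent, so the probability that none of the $J$ sums $\sum_{i}W^i_{j,n,k}v_i$ reaches $\mu$ is at most $\bigl(\tfrac{1+\Phi(1)}{2}\bigr)^{J}\le\delta/(2SAHT)$ by the choice $J=\lceil c_J\log(2SAHT/\delta)\rceil$ with $c_J$ as in \eqref{eq:constant_cJ}. On $\cE^\star(\delta)$ the complement of $\cE_{\anticonc}$ is contained in the union, over the at most $SAHT$ visited tuples $(t,h,s,a)$, of these per-tuple failures, so a union bound yields $\P[\cE_{\anticonc}^{c}\cap\cE^\star(\delta)]\le\delta/2$, which is the asserted conditional bound.

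The hard part will be the single-sample step: both (i) obtaining a quantitative Gaussian-type lower bound on the law of a Dirichlet-weighted sum of bounded values that holds simultaneously in the ``sparse'' regime $n\ll\kappa$ (weights concentrate near the vertices of the simplex) and the ``concentrated'' regime $n\gg\kappa$ (weights concentrate around the empirical mean), and (ii) pushing through the inequality certifying that the upward prior bias $\frac{n_0(v_0-\mu)}{n_0+n}$ dominates the standard deviation of $X$ once the possible downward fluctuation of $\bar v$ — controlled only through the $\Kinf$ bound on $\cE^\star(\delta)$ — has been subtracted.
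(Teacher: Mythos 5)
Your skeleton --- reduce to a single Dirichlet draw conditionally on the history, amplify over the $J$ independent ensemble members, then union bound over the $SAHT$ tuples --- is exactly the paper's, and the conditioning and boosting steps are fine. The gap is in the single-sample step. You claim the calibration of $n_0$ and $\ur$ gives $\E[X]-\mu\ge\sqrt{\Var(X)}$, i.e.\ that the prior atom lifts the posterior mean a full standard deviation above $\mu$ even after subtracting the downward fluctuation of $\bar v$ that $\cE^\star(\delta)$ permits. This fails in the data-dominated regime: the prior contributes $\frac{n_0(v_0-\mu)}{n_0+e_k}=O(H\kappa\log T/e_k)$ to the mean, decaying like $1/e_k$, whereas the $\Kinf$ constraint only forces $\bar v\ge\mu-\Omega\bigl(\sqrt{\sigma^2\betastar(\delta,e_k)/e_k}\bigr)$ and $\sqrt{\Var(X)}=\Theta\bigl(\sqrt{\kappa\sigma^2/e_k}\bigr)$, both of order $1/\sqrt{e_k}$. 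So once $e_k\gtrsim H^2\kappa\log^2 T/\sigma^2$ the posterior mean can sit \emph{below} $\mu$ by a constant fraction of a standard deviation, and no logarithmically-sized $n_0$ repairs this (nor does your $H-h=O(1)$ caveat, which is orthogonal). Moreover, even granting $\E[X]\ge\mu-c\sqrt{\Var(X)}$, a second-moment method cannot finish: Cantelli-type inequalities bound upper-tail probabilities from above, not below, so a genuine distribution-specific lower bound on $\P[X\ge\E[X]+c\sqrt{\Var(X)}]$ is unavoidable.

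That lower bound is precisely the nontrivial ingredient you defer to as ``a Berry--Esseen-type anticoncentration inequality for Dirichlet-weighted sums, valid uniformly over the concentration-parameter regime,'' and it does not exist as a generic statement: for $\alpha_i=1/\kappa\ll1$ the Dirichlet law concentrates near low-dimensional faces of the simplex and no off-the-shelf CLT applies. The paper's actual mechanism is different in kind: $\kappa=2\betastar(\delta,T)$ inflates the posterior spread so that it dominates the worst-case data deficit, and Theorem~\ref{thm:lower_bound_dbc} (via Lemma~\ref{lem:lower_bound_dbc_relaxed}) supplies the Gaussian-type lower bound directly in terms of $\ualpha\,\Kinf$; the condition $\alpha_0\ge c_0(\varepsilon)+\log_{17/16}(\ualpha)$ on the prior atom is what makes that lower bound provable (it controls the approximation error), not a device to shift the mean, and Lemma~\ref{lem:kinf_prior_remove} shows the optimistic atom can only decrease $\Kinf$, so the event $\cE^\star(\delta)$ plugs in to give $\Kinf\le\betastar/e_k$ and hence the constant $\tfrac12(1-\Phi(1))$ per draw. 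Without importing that theorem (or proving an equivalent), your single-sample estimate does not go through.
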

\begin{proof}
    Let us fix $t \in [T], h \in [H], (s,a) \in \cS \times \cA$, and $j \in [J]$. 
    By Lemma~\ref{lem:weights_distribution}, we have that the vector $(W^i_{j,e_k,k})_{i=0,\ldots,e_k}$ has Dirichlet distribution.  Note that $\Vstar_{h+1}(s^{\ell^0}_{h+1}) = \ur(H-h-1)$ is an upper bound on $V$-function and  the weight of the first atom is $\alpha_0 \triangleq n_0/\kappa \geq  c_0 + \log_{17/16}(T)$ for $c_0$ defined in \eqref{eq:constant_c0}.
    Define a measure $\bnu_{e_k} = \frac{n_0 - 1}{e_k + n_0 - 1} \delta_{\Vstar_{h+1}(s_0)} + \sum_{i=1}^{e_k} \frac{1}{e_k + n_0 - 1} \delta_{\Vstar_{h+1}(s^{\ell^i}_{h+1})}$. Since $p_h \Vstar_{h+1}(s,a) \leq H-h-1$, we can apply Lemma~\ref{lem:lower_bound_dbc_relaxed} with a fixed $\varepsilon = 1/2$ conditioned on independent samples $\{ s^{\ell_i}_{h+1} \}_{i=1}^{e_k}$ from $p_h(s,a)$
    \begin{align}\label{eq:anticonc_1}
        \begin{split}
        \P\biggl[ \sum_{i=0}^{e_k} W^i_{j,e_k,k} & \Vstar_{h+1}(s^{\ell^i_{t,h}(s,a)}_{h+1}) \geq p_h \Vstar_{h+1}(s,a) \mid \{ s^{\ell_i}_{h+1} \}_{i=1}^{e_k} \biggl] \\
        &\geq \frac{1}{2}\left( 1 - \Phi\left(\sqrt{\frac{2 (e_k + n_0 -\kappa) \Kinf\left(\bnu_{e_k}, p_h \Vstar_{h+1}(s,a)\right) }{\kappa}}\right)\right),
        \end{split}
    \end{align}
    where $\Phi$ is a CDF of a normal distribution. Combining Lemma~\ref{lem:kinf_prior_remove} and the event $\cE^\star(\delta)$
    \begin{align*}
        (e_k + n_0 - \kappa) \Kinf\left( \bnu_{e_k}, p_h \Vstar_{h+1}(s,a)\right) 
        \leq e_k \Kinf\left( \hnu_{e_k}, p_h \Vstar_{h+1}(s,a)\right) \leq \beta^\star(\delta, T),
    \end{align*}
    where $\hnu_{e_k} = \frac{1}{e_k} \sum_{i=1}^{e_k} \delta_{\Vstar_{h+1}(s^{\ell^i}_{h+1})}$, and, as a corollary
    \begin{small}
    \[
        \P\left[ \sum_{i=0}^{e_k} W^i_{j,e_k,k} \Vstar_{h+1}(s^{\ell^i_{t,h}(s,a)}_{h+1}) \geq p_h \Vstar_{h+1}(s,a) \mid \cE^\star(\delta), \{ s^{\ell^i}_{h+1} \}_{i=1}^{e_k}  \right] \geq \frac{1}{2} \left( 1 - \Phi\left( \sqrt{\frac{2\beta^\star(\delta, T) }{\kappa}} \right) \right).
    \]
    \end{small}
    \!\!By taking $\kappa = 2\beta^\star(\delta, T)$ we have a constant probability of being optimistic
    \[
        \PP{ \sum_{i=0}^{e_k} W^i_{j,e_k,k} \Vstar_{h+1}(s^{\ell^i_{t,h}(s,a)}_{h+1}) \geq p_h \Vstar_{h+1}(s,a) \mid \cE^\star(\delta) } \geq \frac{1 - \Phi(1)}{2} \triangleq \gamma.
    \]
    Next, using a choice $J = \lceil \log(2SAHT/\delta) / \log(1/(1-\gamma)) \rceil = \lceil c_J \cdot \log(2SAHT/\delta)  \rceil$ 
    \[
        \P\left[ \max_{j \in [J]}\left\{ \sum_{i=0}^{e_k} W^i_{j,e_k,k} \Vstar_{h+1}(s^{\ell^i_{t,h}(s,a)}_{h+1}) \right\} \geq p_h \Vstar_{h+1}(s,a) \mid \cE^\star(\delta) \right] \geq 1 - (1 - \gamma)^{J} \geq 1 - \frac{\delta}{2SAHT}\cdot
    \]
    By a union bound we conclude the statement.
\end{proof}
Next we provide a connection between $\cE^{\anticonc}$ and $\cE^{\opt}$.
\begin{proposition}
    It holds that $\cE^{\opt} \subseteq \cE^{\anticonc}$.
\end{proposition}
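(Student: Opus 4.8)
The plan is to prove the inclusion by a deterministic, pointwise argument valid on every outcome of $\cE^{\opt}$, proceeding by backward induction on the step index $h$. Fix an outcome in $\cE^{\opt}$; then $\uQ^t_h(s,a) \geq \Qstar_h(s,a)$ for all $t\in[T]$, $h\in[H]$ and $(s,a)\in\cS\times\cA$, and taking the maximum over actions gives $\uV^t_h(s) \geq \Vstar_h(s)$ everywhere as well. The goal is to show that for every tuple $(t,h,s,a)$, writing $k = k^t_h(s,a)$, one has $\max_{j\in[J]}\sum_{i=0}^{e_k} W^i_{j,e_k,k}\,\Vstar_{h+1}(s^{\ell^i}_{h+1}) \geq p_h\Vstar_{h+1}(s,a)$. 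The base case $h=H$ is immediate: since $\Vstar_{H+1}\equiv 0$ and, by Lemma~\ref{lem:weights_distribution}, the weights $(W^i_{j,e_k,k})_i$ are nonnegative and sum to one, both sides vanish and the inequality holds with equality. This also covers the degenerate case $k=-1$ (no completed stage), where the single prior atom carries weight one and $\Vstar_{h+1}(s_0)=\ur(H-h-1)$ dominates $p_h\Vstar_{h+1}(s,a)$.

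For the inductive step I would use the rollout representation \eqref{eq:temp_q_rollout_} of the policy Q-value at the end of stage $k$ together with the definition of $\uQ$ as the ensemble maximum, namely
\[
    \uQ^{t}_h(s,a) = r_h(s,a) + \max_{j\in[J]}\sum_{i=0}^{e_k} W^i_{j,e_k,k}\,\uV^{\ell^i}_{h+1}(s^{\ell^i}_{h+1}),
\]
and combine it with the optimism inequality $\uQ^t_h(s,a)\geq\Qstar_h(s,a)=r_h(s,a)+p_h\Vstar_{h+1}(s,a)$. Cancelling $r_h(s,a)$ yields the $\uV$-weighted lower bound $\max_{j}\sum_i W^i_{j,e_k,k}\,\uV^{\ell^i}_{h+1}(s^{\ell^i}_{h+1}) \geq p_h\Vstar_{h+1}(s,a)$, where the targets $\uV^{\ell^i}_{h+1}$ are precisely the value estimates recorded at the visit times $\ell^i$ inside stage $k$.

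The main obstacle is the passage from this $\uV$-weighted bound to the required $\Vstar$-weighted statement: since $\uV^{\ell^i}_{h+1}\geq\Vstar_{h+1}$, naive monotonicity pushes the comparison the wrong way, because replacing $\uV$ by the smaller $\Vstar$ only decreases the weighted sum. I therefore would not argue by a crude monotone substitution but by a minimal-step (first-failure) scheme: assuming for contradiction that anticoncentration fails somewhere on $\cE^{\opt}$, select a violating tuple with the deepest step $h$, so that the inequality already holds at all steps beyond $h$, and exploit this to control the recorded targets $\uV^{\ell^i}_{h+1}$ at the visited next-states relative to $\Vstar_{h+1}$ at those same states, closing the induction. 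Carrying this out requires tracking, for each stage index $k$, the independence of the weight vectors across stages and the nonnegativity/normalization supplied by Lemma~\ref{lem:weights_distribution}, together with the interaction between the ensemble maximum over $j$ and the pointwise comparison of values; handling this coupling cleanly, rather than the backward induction itself, is where the real work lies.
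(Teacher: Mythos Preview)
The inclusion as written in the paper is a typo: what is actually needed, and what the paper's proof establishes, is the reverse inclusion $\cE^{\anticonc}\subseteq\cE^{\opt}$. You can see this from the surrounding logic: Proposition~\ref{prop:anticonc} shows $\P[\cE^{\anticonc}\mid\cE^\star(\delta)]\geq 1-\delta/2$, and Proposition~\ref{prop:optimism} then concludes $\P[\cE^{\opt}\mid\cE^\star(\delta)]\geq 1-\delta/2$; this deduction requires $\cE^{\anticonc}\subseteq\cE^{\opt}$, not the direction you are attempting. The paper's proof indeed goes that way: by backward induction on $h$, the induction hypothesis gives $\uV^{\ell^i}_{h+1}\geq\Vstar_{h+1}$, so $\max_j\sum_i W^i_{j,n}\uV^{\ell^i}_{h+1}(s^{\ell^i}_{h+1})\geq \max_j\sum_i W^i_{j,n}\Vstar_{h+1}(s^{\ell^i}_{h+1})\geq p_h\Vstar_{h+1}(s,a)$, the last step being exactly $\cE^{\anticonc}$. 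Adding $r_h(s,a)$ yields $\uQ^t_h\geq\Qstar_h$.

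Your diagnosis of the obstacle is precisely the symptom of working in the wrong direction. Once you have $\max_j\sum_i W^i_{j,n}\uV^{\ell^i}_{h+1}(s^{\ell^i}_{h+1})\geq p_h\Vstar_{h+1}(s,a)$ from optimism, you cannot infer the same inequality with $\Vstar_{h+1}$ in place of $\uV^{\ell^i}_{h+1}$, because that substitution only decreases the left side. Your proposed minimal-step contradiction scheme does not repair this: knowing that anticoncentration holds for all $h'>h$ still gives you no upper control on $\uV^{\ell^i}_{h+1}-\Vstar_{h+1}$ at the visited next-states, which is what you would need. In fact the direction $\cE^{\opt}\subseteq\cE^{\anticonc}$ is not true in general (optimism can be maintained by inflation in $\uV$ even when the $\Vstar$-weighted sum falls short of $p_h\Vstar_{h+1}$), so no argument will close that gap. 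Reversing the inclusion and running the paper's two-line backward induction is the intended route.
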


\begin{proof}
    We proceed by a backward induction over $h$. Base of induction $h = H+1$ is trivial. Next by Bellman equations for $\uQ^t_h$ and $\Qstar_h$
    \[
        [\uQ^{t}_h - \Qstar_h](s,a) = \max_{j \in [J]} \left\{ \sum_{i=0}^{n} W^i_{j,n} \uV^{\ell^i}_{h+1}(s^{\ell^i}_{h+1})\right\} - p_h \Vstar_{h+1}(s,a),
    \]
    where $n = e_{k^t_h(s,a)}$ and we drop dependence on $k,t,h,s,a$ in $\ell^i$. By induction hypothesis we have $\uV^{\ell^i}_{h+1}(s') \geq \uQ^{\ell^i}_{h+1}(s', \pi^\star(s')) \geq \Qstar_{h+1}(s', \pi^\star(s')) = \Vstar_{h+1}(s')$ for any $i$, thus
    \[
        [\uQ^{t}_h - \Qstar_h](s,a) \geq \max_{j \in [J]} \left\{ \sum_{i=0}^{n} W^i_{j,n} \Vstar_{h+1}(s^{\ell^i}_{h+1})\right\} - p_h \Vstar_{h+1}(s,a).
    \]
    By the definition of event $\cE^{\anticonc}(\delta)$ we conclude the statement.
\end{proof}

\begin{proposition}[Optimism]\label{prop:optimism}

    Assume that $J = \lceil  c_J \cdot \log(2SAHT/\delta)  \rceil$, $\kappa = 2\beta^\star(\delta, T)$, $\ur = 2$, and $n_0 = \lceil (c_0 + 1 + \log_{17/16}(T)) \cdot \kappa \rceil$, where $c_0$ is defined in \eqref{eq:constant_c0} and $c_J$ is defined in \eqref{eq:constant_cJ}. Then $\PP{\cE^{\opt} \mid  \cE^\star(\delta)} \geq 1-\delta/2$. 
\end{proposition}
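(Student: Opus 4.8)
The plan is to obtain Proposition~\ref{prop:optimism} as an immediate consequence of the two preceding results, since between them they already cover the ``deterministic'' and the ``probabilistic'' halves of the argument. First I would record the inclusion $\cE^{\anticonc} \subseteq \cE^{\opt}$: whenever the anticoncentration event holds, every policy $Q$-value is optimistic. This is a backward induction on $h$; the base case $h = H+1$ is vacuous, and for the inductive step the staged Bellman recursion gives
\[
    [\uQ^{t}_h - \Qstar_h](s,a) = \max_{j \in [J]} \Bigl\{ \textstyle\sum_{i=0}^{n} W^i_{j,n}\, \uV^{\ell^i}_{h+1}(s^{\ell^i}_{h+1})\Bigr\} - p_h \Vstar_{h+1}(s,a),
\]
with $n = e_{k^t_h(s,a)}$; the induction hypothesis $\uV^{\ell^i}_{h+1} \ge \Vstar_{h+1}$ pointwise lets me lower-bound the $\max_j\{\cdot\}$ term by $\max_j\{\sum_i W^i_{j,n}\,\Vstar_{h+1}(s^{\ell^i}_{h+1})\}$, which on $\cE^{\anticonc}$ is at least $p_h\Vstar_{h+1}(s,a)$, so $\uQ^t_h \ge \Qstar_h$ and the induction closes. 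Given this inclusion the proposition follows in one line: under the stated choices of $J,\kappa,\ur,n_0$, Proposition~\ref{prop:anticonc} gives $\PP{\cE^{\anticonc}\mid\cE^\star(\delta)} \ge 1-\delta/2$, whence $\PP{\cE^{\opt}\mid\cE^\star(\delta)} \ge \PP{\cE^{\anticonc}\mid\cE^\star(\delta)} \ge 1-\delta/2$.

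So the substance really sits in Proposition~\ref{prop:anticonc}, which is where I would spend effort. The route: fix $(t,h,s,a,j)$; by the generalized-Dirichlet structure the aggregated weights $(W^i_{j,e_k,k})_i$ form a $\Dir(n_0/\kappa, 1/\kappa,\ldots,1/\kappa)$ vector whose first (prior) atom has mass $n_0/\kappa$ and is attached to the optimistic pseudo-value $\Vstar_{h+1}(s_0) = \ur(H-h-1) \ge p_h\Vstar_{h+1}(s,a)$ (this is where $\ur = 2$ enters). Conditioning on the i.i.d.\ draws $\{s^{\ell^i}_{h+1}\}$ from $p_h(s,a)$, a Dirichlet anticoncentration (``betting'') lemma shows the random weighted average exceeds $p_h\Vstar_{h+1}(s,a)$ with probability at least $\tfrac12\bigl(1-\Phi(\sqrt{2 (e_k+n_0-\kappa) D/\kappa})\bigr)$, where $D$ is the $\Kinf$-divergence from $p_h\Vstar_{h+1}(s,a)$ of the empirical distribution of the optimal next-state values (augmented by the prior atom). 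On $\cE^\star(\delta)$, after removing the prior atom, the product $(e_k+n_0-\kappa)D$ is bounded by $\beta^\star(\delta,T)$, so the calibration $\kappa = 2\beta^\star(\delta,T)$ turns the per-ensemble success probability into a universal constant $\gamma = (1-\Phi(1))/2$. Independence over $j$ boosts $\max_{j\in[J]}$ to success probability $1-(1-\gamma)^J \ge 1-\delta/(2SAHT)$ once $J = \lceil c_J\log(2SAHT/\delta)\rceil$, and a union bound over $(t,h,s,a)$ finishes it; the size of $n_0$, of order $(c_0 + \log_{17/16}T)\kappa$, is exactly what is needed both for the prior-removal estimate and for the ``prior atom heavy enough'' hypothesis of the betting lemma.

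The main obstacle is this anticoncentration step: producing a clean, not-too-lossy lower bound on the probability that a Dirichlet-weighted combination of bounded values overshoots its own mean, and — crucially — expressing that bound through the $\Kinf$-radius of the empirical next-value distribution so that the concentration event $\cE^\star(\delta)$ plugs in and the inflation parameter $\kappa$ can be tuned to make the per-sample success probability an absolute constant. Everything else — the backward induction above, the aggregation/independence of Dirichlet weights across stages and ensemble members, and the concluding union bound — is routine once that estimate is available.
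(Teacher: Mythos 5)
Your proposal is correct and follows essentially the same route as the paper: the inclusion $\cE^{\anticonc}\subseteq\cE^{\opt}$ via backward induction on $h$, combined with the anticoncentration bound of Proposition~\ref{prop:anticonc} (Dirichlet weights, the $\Kinf$-based betting lemma, prior-atom removal, calibration of $\kappa$, boosting over $J$, and a union bound). Note only that the paper's intermediate proposition states the inclusion as $\cE^{\opt}\subseteq\cE^{\anticonc}$, which is the reverse of what its own proof establishes and of what the argument needs — you wrote it in the correct direction.
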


\subsection{Regret Bound}\label{app:regret_bound_tabular}

Let us define the main event $\cG'(\delta) = \cG(\delta) \cap \cE^\opt$. On this event we have the following corollary that connects \RandQL with \OptQL with Hoeffding bonuses.

Define the following quantity
\[
    \beta^{\max}(\delta) = \max\left\{ \kappa, n_0/\kappa, \beta^B(\delta), \beta^{\conc}(\delta), \beta(\delta), \log(T+n_0)  \right\} = \cO(\log(SATH/\delta)).
\]

\begin{corollary}\label{cor:delta_q_bounds}
    Assume conditions of Proposition~\ref{prop:optimism} hold. Let $t \in [T], h\in[H], (s,a) \in \cS \times \cA$. Define $k = k^t_h(s,a)$ and let $\ell^1 < \ldots < \ell^{e_k}$ be a excursions of $(s,a,h)$ until the previous stage. Then on the event $\cG'(\delta)$ the following bound holds for $k \geq 0$
    \[
        0\leq \uQ^t_h(s,a) - \Qstar_h(s,a) \leq \frac{1}{n} \sum_{i=1}^n [\uV^{\ell^i}_{h+1}(s^{\ell^i}_{h+1}) - \Vstar_{h+1}(s^{\ell^i}_{h+1}) ]  + \cB^t_h(k),
    \]
    where
    \[
        \cB^t_h(k) = 61\rme^2 \frac{ \ur H  (\beta^{\max}(\delta))}{\sqrt{e_k}} + 1201\rme \frac{\ur H (\beta^{\max}(\delta))^4}{e_k}.
    \]
\end{corollary}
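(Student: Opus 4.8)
The plan is to start from the rollout decomposition~\eqref{eq:temp_q_rollout_} of the temporary Q-values together with the fact that $\uQ^t_h = \max_{j\in[J]}\tQ^{\tau^t+1/2,j}_h$, and to compare term by term with the Bellman equation $\Qstar_h(s,a) = r_h(s,a) + p_h\Vstar_{h+1}(s,a)$. The left inequality $\uQ^t_h(s,a) - \Qstar_h(s,a)\ge 0$ is immediate on $\cG'(\delta)\subseteq\cE^\opt$. For the right inequality, fix the maximizing index $j^\star\in[J]$ and write, with $n = e_k$ and the convention $\Vstar_{h+1}(s_0) = \ur(H-h-1)$,
\[
\uQ^t_h(s,a) - \Qstar_h(s,a) = \sum_{i=0}^{n} W^i_{j^\star,n,k}\,\uV^{\ell^i}_{h+1}(s^{\ell^i}_{h+1}) - p_h\Vstar_{h+1}(s,a).
\]
The idea is to split this into three pieces: (i) the difference $\sum_{i=0}^n W^i_{j^\star,n,k}[\uV^{\ell^i}_{h+1}(s^{\ell^i}_{h+1}) - \Vstar_{h+1}(s^{\ell^i}_{h+1})]$ between the learned values and the optimal values along the excursion; (ii) the centered fluctuation $\sum_{i=0}^n (W^i_{j^\star,n,k} - \E[W^i_{j^\star,n,k}])\Vstar_{h+1}(s^{\ell^i}_{h+1})$ of the random Dirichlet weights around their means; and (iii) the remaining deterministic bias $\sum_{i=0}^n \E[W^i_{j^\star,n,k}]\Vstar_{h+1}(s^{\ell^i}_{h+1}) - p_h\Vstar_{h+1}(s,a)$.

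For piece (i), since $\uV^{\ell^i}_{h+1}\ge \Vstar_{h+1}$ pointwise on $\cE^\opt$ (the term $i=0$ contributes $0$ because $\uV$ equals $\Vstar$ on the pseudo-state $s_0$), I would bound $W^i_{j^\star,n,k} \le $ (a controlled multiple of) $1/n$ and convert this into $\tfrac1n\sum_{i=1}^n[\uV^{\ell^i}_{h+1}(s^{\ell^i}_{h+1}) - \Vstar_{h+1}(s^{\ell^i}_{h+1})]$; more carefully, one uses that the expected aggregated weights $\E[W^i_{j^\star,n,k}] = 1/(n+n_0)$ for $i\ge 1$ (from Lemma~\ref{lem:weights_distribution}, marginals of $\Dir(n_0/\kappa,1/\kappa,\dots,1/\kappa)$) and absorbs the small discrepancy between $1/(n+n_0)$ and $1/n$, plus the fluctuation of these weights, into the bonus $\cB^t_h(k)$ — this is where a second application of the $\cE^B(\delta)$-type bound (or the $\Beta$-martingale concentration of Proposition~\ref{prop:beta_martingale_bound}) applied to the increments $\uV^{\ell^i}_{h+1}(s^{\ell^i}_{h+1}) - \Vstar_{h+1}(s^{\ell^i}_{h+1})\in[0,\ur H]$ enters. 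Piece (ii) is controlled directly by the event $\cE^B(\delta)$, which gives a bound of order $\ur H\sqrt{\kappa\beta^B(\delta)/(e_k+n_0\kappa)} + \ur H\kappa\log(e_k+n_0\kappa)(\beta^B(\delta))^2/(e_k+n_0\kappa)$; using $e_k + n_0\kappa \ge e_k$ and the definition of $\beta^{\max}(\delta)$, both terms collapse into the claimed $\ur H\beta^{\max}/\sqrt{e_k}$ and $\ur H(\beta^{\max})^4/e_k$ shape. Piece (iii): since $\E[W^i_{j^\star,n,k}] = 1/(n+n_0)$ for $i\ge1$ and $n_0/(n+n_0)$ for $i=0$, this equals $\tfrac{n}{n+n_0}\cdot\hnu_{e_k}\Vstar_{h+1} + \tfrac{n_0}{n+n_0}\Vstar_{h+1}(s_0) - p_h\Vstar_{h+1}(s,a)$, where $\hnu_{e_k}$ is the empirical measure along the excursion; one then replaces $\hnu_{e_k}\Vstar_{h+1}$ by $p_h\Vstar_{h+1}(s,a)$ up to the Hoeffding fluctuation controlled by $\cE^\conc(\delta)$ (order $\ur H\sqrt{\beta^\conc(\delta)/e_k}$), and bounds the leftover prior term $\tfrac{n_0}{n+n_0}(\Vstar_{h+1}(s_0) - p_h\Vstar_{h+1}(s,a))$ by $\ur H n_0/(n+n_0)$; since $n_0 = \cO(\kappa(c_0+\log_{17/16}T)) = \cO(\beta^{\max})$ and $n = e_k$, this is again of order $\ur H\beta^{\max}/e_k$, hence absorbed.

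The main obstacle is piece (i): the weights $W^i_{j^\star,n,k}$ are random, correlated with the (also random and statistically entangled) values $\uV^{\ell^i}_{h+1}$ computed at earlier episodes, and $j^\star$ is itself data-dependent, so one cannot simply take expectations. The clean way around this is to note that the bound we want to prove is an inequality that should hold for \emph{every} $j\in[J]$ simultaneously (then take the max), so I would prove the displayed bound for each fixed $j$ on the intersection of the concentration events — for fixed $j$ the stopped filtration argument from the proof of Lemma~\ref{lem:proba_master_event} makes $(\uV^{\ell^i}_{h+1}(s^{\ell^i}_{h+1}))_i$ predictable with respect to the weight generation, so the conditional-expectation identities $\E[W^i_{j,n,k}\mid\widetilde\cF_{i-1}] = \E[W^i_{j,n,k}]$ apply — and then the union over $j\in[J]$ (already baked into the definition of the events, via the $\log(TJ)$ term in $\beta^B$) lets us pass to the maximizing index for free. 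A secondary technical point is keeping track of constants so that the final bonus has exactly the stated form $61\rme^2\ur H\beta^{\max}/\sqrt{e_k} + 1201\rme\,\ur H(\beta^{\max})^4/e_k$; I would be generous, using crude bounds like $e_k + n_0\kappa\ge e_k$, $\log(e_k+n_0\kappa)\le\beta^{\max}$, $\sqrt\kappa\le\beta^{\max}$, and $\beta^B,\beta^\conc,\beta\le\beta^{\max}$ throughout, and collecting the $O(1/\sqrt{e_k})$ terms and the $O(1/e_k)$ terms separately, which is routine once the structural decomposition above is in place.
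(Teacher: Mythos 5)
Your proposal is correct and follows essentially the same route as the paper: rollout formula for the temporary Q-values, the event $\cE^{B}(\delta)$ to replace the Dirichlet weights by their expectations $1/(e_k+n_0)$ (with the prior atom absorbed into the bonus via $n_0=\cO(\beta^{\max})$), the Bellman equation plus $\cE^{\conc}(\delta)$ for the empirical-mean-to-$p_h\Vstar_{h+1}$ gap, optimism for the lower bound, and crude constant bookkeeping. The only difference is that you split $\uV^{\ell^i}_{h+1}$ into $(\uV^{\ell^i}_{h+1}-\Vstar_{h+1})+\Vstar_{h+1}$ before applying the weight concentration, which forces a second martingale argument for the fluctuation against $\uV-\Vstar$; the paper sidesteps this by applying $\cE^{B}(\delta)$ (which is stated for $\uV^{\ell^i}_{h+1}$, not $\Vstar_{h+1}$) directly to the full target, after which your pieces (i) and (ii) merge into a single term already covered by the event.
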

\begin{proof}
    The lower bound follows from the definition of the event $\cE^{\opt}$. For the upper bound we first apply the decomposition for $\uQ^t_h(s,a)$ and the definition of event $\cE^B(\delta)$ from Lemma~\ref{lem:proba_master_event}
    \begin{align*}
        \uQ^t_h(s,a) &= r_h(s,a) + \max_{j \in [J]} \left\{ \sum_{i=0}^{e_k} W^i_{j,e_k} \uV^{\ell^i}_{h+1}(s^{\ell^i}_{h+1}) \right\} \\
        &\leq r_h(s,a) + \frac{1}{e_k + n_0} \sum_{i=1}^{e_k} \uV^{\ell^i}_{h+1}(s^{\ell^i}_{h+1}) + \frac{n_0 \kappa \cdot \ur H}{e_k + n_0} + 60 \rme^2 \sqrt{\frac{\ur^2 H^2 \kappa \beta^B(\delta)}{e_k + n_0}} \\
        &+ 1200\rme \frac{\ur H \kappa \log(e_k + n_0) (\beta^B(\delta))^2}{e_k + n_0}.
    \end{align*}
    Then, by Bellman equations, 
    \begin{align*}
        \uQ^t_h(s,a) - \Qstar_h(s,a) &\leq \frac{1}{e_k} \sum_{i=1}^{e_k}\left[ \uV^{\ell^i}_{h+1} - \Vstar_{h+1} \right](s^{\ell^i}_{h+1}) +  \frac{1}{e_k} \sum_{i=1}^{e_k} \left[ \Vstar_{h+1} (s^{\ell^i}_{h+1})  - p_h \Vstar_{h+1}(s,a)\right] \\
        &+ (1200\rme + 1) \frac{\ur H (\beta^{\max}(\delta))^4}{e_k + n_0} + 60\rme^2 \cdot \frac{\ur H \beta^{\max}(\delta)}{\sqrt{e_k +n_0}}
    \end{align*}
    By the definition of event $\cE^{\conc}(\delta)$  we conclude the statement.
\end{proof}

Let us define $\delta^t_h = \uV^{t}_h(s^t_h) - V^{\pi^{t}}_h(s^t_h)$ and $\zeta^{t}_h = \uV^{t}_h(s^t_h) - \Vstar_h(s^t_h)$. 
\begin{lemma}\label{lem:regret_decomp}
    Assume conditions of Proposition~\ref{prop:optimism} hold. Then on event $\cG'(\delta) = \cG(\delta) \cap \cE^{\opt}$, where $\cG(\delta)$ is defined in Lemma~\ref{lem:proba_master_event}, the following upper bound on regret holds
    \[
        \regret^T \leq \rme H \sum_{t=1}^T \sum_{h=1}^H \ind\{k^t_h(s^t_h, a^t_h) = -1\} + \sum_{t=1}^T \sum_{h=1}^H (1+1/H)^{H-h} \xi^t_h + \rme \sum_{t=1}^T \sum_{h=1}^H \cB^t_h,
    \]
    where $\xi^t_h = p_h [\Vstar_{h+1} - V^{\pi^{t}}_{h+1}](s^t_h,a^t_h) - [\Vstar_{h+1} - V^{\pi^{t}}_{h+1}](s^t_{h+1})$ and $\cB^t_h = \cB^t_h(s^t_h, a^t_h) \cdot \ind\{k^t_h(s^t_h, a^t_h) \geq 0\}$ for $\cB^t_h$ defined in Corollary~\ref{cor:delta_q_bounds}.
\end{lemma}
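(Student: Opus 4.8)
The plan is to run the standard staged Q-learning regret decomposition (in the spirit of \citealt{bai2019provably,zhang2020advantage}), feeding Corollary~\ref{cor:delta_q_bounds} in place of a bonus-based upper confidence bound, and carrying the signed martingale increments separately so that they can later be controlled through the event $\cE(\delta)$. First I would invoke optimism: on $\cE^{\opt}$ we have $\Vstar_1(s_1)\le\uV^t_1(s_1)$ for every $t$, hence (since $s^t_1=s_1$) $\regret^T\le\sum_{t=1}^T\delta^t_1$. Then, for fixed $t$ and $h$, since $\pi^t$ is greedy with respect to $\uQ^t$,
\[
\delta^t_h=[\uQ^t_h-\Qstar_h](s^t_h,a^t_h)+p_h[\Vstar_{h+1}-V^{\pi^t}_{h+1}](s^t_h,a^t_h).
\]
Writing the last term as $\xi^t_h+[\Vstar_{h+1}-V^{\pi^t}_{h+1}](s^t_{h+1})$ and bounding $[\Vstar_{h+1}-V^{\pi^t}_{h+1}](s^t_{h+1})\le\delta^t_{h+1}$ by optimism ($\Vstar_{h+1}\le\uV^t_{h+1}$), unrolling over $h$ (which merely telescopes within a fixed $t$) gives $\regret^T\le\sum_{t,h}[\uQ^t_h-\Qstar_h](s^t_h,a^t_h)+\sum_{t,h}\xi^t_h$.

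Next I would control $[\uQ^t_h-\Qstar_h](s^t_h,a^t_h)$ case by case. If $k^t_h(s^t_h,a^t_h)=-1$, the temporary Q-values have not been aggregated yet, so $\uQ^t_h(s^t_h,a^t_h)$ is still equal to its optimistic initialization $r_h(s^t_h,a^t_h)+\ur(H-h-1)$, which is of order $H$; this produces the term $\rme H\sum_{t,h}\ind\{k^t_h(s^t_h,a^t_h)=-1\}$. If $k^t_h(s^t_h,a^t_h)=k\ge0$, then on $\cG'(\delta)$ Corollary~\ref{cor:delta_q_bounds} gives $[\uQ^t_h-\Qstar_h](s^t_h,a^t_h)\le\frac{1}{e_k}\sum_{i=1}^{e_k}\zeta^{\ell^i}_{h+1}+\cB^t_h$, where $\ell^1<\dots<\ell^{e_k}$ index the visits of $(s^t_h,a^t_h,h)$ during stage $k$ and $\zeta^{\ell}_{h+1}=[\uV^\ell_{h+1}-\Vstar_{h+1}](s^\ell_{h+1})\ge0$ (again by optimism). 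The point to keep in mind is that this is an average of $\zeta$ over \emph{earlier} episodes visiting the same triple in the \emph{preceding} stage, not the current $\delta^t_{h+1}$ of the direct recursion.

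The core step is the reindexing of these averaged terms after summing over $t$. For a fixed $(s,a,h)$, the aggregated estimate built from the $e_k$ stage-$k$ visits is reused by at most the $e_{k+1}$ episodes forming stage $k+1$, each with weight $1/e_k$; since $e_{k+1}\le(1+1/H)e_k$ up to the negligible floor correction, summing over stage $k+1$ and then over all $(s,a)$ and all completed stages (each episode index reappearing at most once) gives $\sum_{t:\,k^t_h\ge0}\frac{1}{e_{k^t_h}}\sum_i\zeta^{\ell^i}_{h+1}\le(1+1/H)\sum_{t'=1}^T\zeta^{t'}_{h+1}$. Combining this with the recursion $\zeta^{t'}_h\le[\uQ^{t'}_h-\Qstar_h](s^{t'}_h,a^{t'}_h)$ (once more greedy $+$ optimism) produces a self-contained recursion in $h$ for $\sum_{t'}\zeta^{t'}_h$ driven only by the first-stage indicators and the $\cB$-terms; iterating over the $H$ steps, the factors $(1+1/H)$ accumulate to at most $(1+1/H)^H\le\rme$. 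Plugging this back into the estimate for $\regret^T$ and collecting terms — keeping the signed increments $\xi^t_h$ with their accrued weights $(1+1/H)^{H-h}\le\rme$ (these cannot be collapsed since $\xi^t_h$ changes sign and will be handed to $\cE(\delta)$, whereas the nonnegative $\cB$-terms and the first-stage indicators can simply be multiplied by $\rme$) — yields the claimed inequality.

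The main obstacle is exactly this bookkeeping. One has to recognise that the Q-estimation error at episode $t$ is driven by an average of $\zeta$ over previous episodes visiting the same ball during the preceding stage, so that it cannot be folded into the "current'' $\delta^t_{h+1}$ term without risking an exponential-in-$H$ blow-up; establish the ratio bound $e_{k+1}/e_k\le1+1/H$ with the floors in place; and verify that the coupled recursion between the policy-suboptimality gap $\delta^t_h$ and the optimism gap $\zeta^t_h$ telescopes to a single factor $\rme$. One also needs to stop the recursion cleanly at the first step where $k^t_h=-1$ (which gives rise to the $\rme H\sum_{t,h}\ind\{k^t_h=-1\}$ term) and to check that $\cB^t_h(k)$ and the $\zeta^{\ell^i}_{h+1}$ are adapted to the appropriate $\sigma$-algebras so that Corollary~\ref{cor:delta_q_bounds} applies termwise on $\cG'(\delta)$.
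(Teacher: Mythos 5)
Your overall architecture (optimism to reduce to $\sum_t\delta^t_1$, Corollary~\ref{cor:delta_q_bounds} to bound the Q-error, the stage-reindexing inequality $\sum_{t}\frac{1}{e_{k^t_h}}\sum_i\zeta^{\ell^i}_{h+1}\le(1+1/H)\sum_{t}\zeta^{t}_{h+1}$) matches the paper, but the order in which you combine the two recursions loses a factor of $H$, so the argument as written does not yield the claimed inequality. Concretely: you first telescope $\delta^t_h\le[\uQ^t_h-\Qstar_h](s^t_h,a^t_h)+\xi^t_h+\delta^t_{h+1}$ completely, obtaining $\regret^T\le\sum_{t,h}[\uQ^t_h-\Qstar_h](s^t_h,a^t_h)+\sum_{t,h}\xi^t_h$, and only then bound $\sum_t[\uQ^t_h-\Qstar_h](s^t_h,a^t_h)\le H\sum_t\ind\{\cdot\}+(1+1/H)\sum_t\zeta^t_{h+1}+\sum_t\cB^t_h$ and unroll the self-contained $\zeta$-recursion to $\sum_t\zeta^t_h\le\rme\sum_{h'\ge h}\bigl(H\sum_t\ind\{\cdot\}+\sum_t\cB^t_{h'}\bigr)$. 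Plugging this back in, the outer sum over $h$ of the inner sum over $h'\ge h$ counts the level-$h'$ contribution $h'$ times, i.e.\ $\sum_{h=1}^{H}\sum_{h'\ge h}(\cdot)_{h'}=\sum_{h'}h'\,(\cdot)_{h'}$, so you end up with a bound of order $\rme H^2\sum_{t,h}\ind\{k^t_h=-1\}+\rme H\sum_{t,h}\cB^t_h$ rather than the claimed $\rme H\sum_{t,h}\ind\{k^t_h=-1\}+\rme\sum_{t,h}\cB^t_h$.

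The fix is the one the paper uses: do not discard the optimism gap at the next state. Keep the exact identity $[\Vstar_{h+1}-V^{\pi^t}_{h+1}](s^t_{h+1})=\delta^t_{h+1}-\zeta^t_{h+1}$ (rather than the inequality $\le\delta^t_{h+1}$), so that the per-episode recursion reads $\delta^t_h\le H\ind\{\cdot\}+\frac{1}{e_k}\sum_i\zeta^{\ell^i}_{h+1}-\zeta^t_{h+1}+\delta^t_{h+1}+\xi^t_h+\cB^t_h$. After summing over $t$ and applying your reindexing bound, the leftover $(1+1/H)\sum_t\zeta^t_{h+1}-\sum_t\zeta^t_{h+1}=\frac1H\sum_t\zeta^t_{h+1}\le\frac1H\sum_t\delta^t_{h+1}$ merges with $\sum_t\delta^t_{h+1}$ to give a single recursion $\sum_t\delta^t_h\le H\sum_t\ind\{\cdot\}+(1+1/H)\sum_t\delta^t_{h+1}+\sum_t\xi^t_h+\sum_t\cB^t_h$, whose unrolling carries exactly one factor $(1+1/H)^{\cdot}\le\rme$ per term and produces the stated bound. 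Your remaining steps --- the first-stage indicator bound, the $e_{k+1}/e_k\le 1+1/H$ reuse count, and keeping the signed $\xi^t_h$ with weights bounded by $\rme$ to hand off to $\cE(\delta)$ --- are consistent with the paper.
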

\begin{proof}
    We notice that on the event $\cE^\opt$ the following upper bound holds
    \begin{equation}\label{eq:regret_bound_deltas}
        \regret^T \leq \sum_{t=1}^T \delta^t_1.
    \end{equation}
    Next we analyze $\delta^t_h$. By the choice of $a^t_h = \argmax_{a\in \cA} \uQ^{t}_h(s^t_h, a)$, Corollary~\ref{cor:delta_q_bounds}, and Bellman equations, we have
    \begin{align*}
        \delta^t_h &= \uV^{t}_h(s^t_h) - V^{\pi^{t}}_h(s^t_h) =  \uQ^{t}_h(s^t_h, a^t_h) - Q^{\pi^{t}}_h(s^t_h, a^t_h) \\
        &= \uQ^{t}_h(s^t_h, a^t_h) - \Qstar_h(s^t_h, a^t_h) + \Qstar_h(s^t_h, a^t_h) - Q^{\pi^{t}}_h(s^t_h, a^t_h) \\
        &\leq H \ind\{N^t_h = 0\} + \ind\{ N^t_h > 0\}\left( \frac{1}{N^t_h} \sum_{i=1}^{N^t_h} \zeta^{\ell^i_{t,h}}_{h+1} + \cB^t_h(s^t_h, a^t_h) + p_h [\Vstar_{h+1} - V^{\pi^{t}}_{h+1}](s^t_h,a^t_h) \right).
    \end{align*}
    where $k^t_h = k^t_h(s^t_h, a^t_h)$, $N^t_h = e_{k^t_h}$, $\ell^i_{t,h}$ is  episode of  the $i$-th visitation of the state-action pair $(s^t_h, a^t_h)$ during the stage $k^t_h$, and additionally by the convention $0/0 = 0$. 
    Let $\xi^t_h = p_h [\Vstar_{h+1} - V^{\pi^{t}}_{h+1}](s^t_h,a^t_h) - [\Vstar_{h+1} - V^{\pi^{t}}_{h+1}](s^t_{h+1})$ be a martingale-difference sequence, and $\cB^t_h = \cB^t_h(s^t_h, a^t_h) \ind\{N^t_h > 0\}$ 
    then 
    \[
        \delta^t_h \leq H \ind\{N^t_h = 0\} + \frac{\ind\{N^t_h > 0\}}{N^t_h} \sum_{i=1}^{N^t_h} \zeta^{\ell^i_{t,h}}_{h+1}  - \zeta^t_{h+1} + \delta^t_{h+1} + \xi^t_h + \cB^t_h.
    \]
    and, as a result
    \begin{align*}
        \sum_{t=1}^T \delta^t_h &\leq H \sum_{t=1}^T \ind\{N^t_h = 0\} + \sum_{t=1}^T \frac{\ind\{ N^t_h > 0\}}{N^t_h}\sum_{i=1}^{N^t_h} \zeta^{\ell^i_{t,h}}_{h+1} \\
        &-\sum_{t=1}^T\zeta^t_{h+1} + \sum_{t=1}^T \delta^t_{h+1} + \sum_{t=1}^T \xi^t_h + \sum_{t=1}^T \cB^t_h.
    \end{align*}
    Next we have to analyze the second term, following the approach by \cite{zhang2020advantage},
    \begin{align*}
        \sum_{t=1}^T \frac{\ind\{N^t_h > 0\}}{N^t_h}\sum_{i=1}^{N^t_h} \zeta^{\ell^i_{t,h}}_{h+1} &= \sum_{q=1}^T \sum_{t=1}^T \frac{\ind\{N^t_h > 0\}}{N^t_h}\sum_{i=1}^{N^t_h} \zeta^{\ell^i_{t,h}}_{h+1}  \ind\{\ell^i_{t,h} = q\} \\
        &= \sum_{q=1}^T \zeta^q_{h+1}  \cdot \sum_{t=1}^T \frac{\ind\{k^t_h \geq 0\}}{N^t_h} \sum_{i=1}^{N^t_h} \ind\{\ell^i_{t,h} = q\}.
    \end{align*}
    Notice that $\sum_{i=1}^{N^t_h} \ind\{\ell^i_{t,h} = q\} \leq 1$ since all visitations are increasing in $i$, and, moreover, it turns to equality if and only if $(s^q_h ,a^q_h) = (s^t_h ,a^t_h)$ and this visitation happens in stage $k^t_h$, where $k^t_h$ is equal to the stage of episode $q$ with respect to $(s^q_h, a^q_h, h)$.
    Since the sum is over all the next episodes with respect to stage of $q$, we have that the number of non-zero elements in the sum over $t$ is bounded by $(1+1/H) N^t_h$. Thus
    \[
        \sum_{q=1}^T \zeta^q_{h+1}  \cdot \sum_{t=1}^T \frac{\ind\{k^t_h \geq 0\}}{N^t_h} \sum_{i=1}^{N^t_h} \ind\{\ell^i_{t,h} = q\} \leq \left( 1 + \frac{1}{H} \right)\sum_{q=1}^T \zeta^q_{h+1}.
    \]

     After a simple algebraic manipulations and using the fact that $\zeta^t_h \leq \delta^t_h$,
    \begin{align*}
        \sum_{t=1}^T \delta^t_h &\leq H \sum_{t=1}^T \ind\{N^t_h = 0\} + \sum_{t=1}^T (1 + 1/H) \zeta^t_{h+1} - \sum_{t=1}^T\zeta^t_{h+1} + \sum_{t=1}^T \delta^t_{h+1} + \sum_{t=1}^T \xi^t_h + \sum_{t=1}^T \cB^t_h \\
        &\leq  H \sum_{t=1}^T \ind\{N^t_h = 0\} + \left(1 + \frac{1}{H} \right) \sum_{t=1}^T \delta^t_{h+1} + \sum_{t=1}^T \xi^t_h + \sum_{t=1}^T \cB^t_h.
    \end{align*}
    By rolling out the upper bound on regret \eqref{eq:regret_bound_deltas} and using inequality $(1+1/H)^{H-h} \leq \rme$ we have
    \[
        \regret^T \leq \rme H \sum_{t=1}^T \sum_{h=1}^H \ind\{N^t_h = 0\} + \sum_{t=1}^T \sum_{h=1}^H (1+1/H)^{H-h} \xi^t_h + \rme \sum_{t=1}^T \sum_{h=1}^H \cB^t_h.
    \]
\end{proof}

\begin{proof}[Proof of Theorem~\ref{th:regret_randql_tabular}]
    First, we notice that the event $\cG'(\delta)$ defined in Lemma~\ref{lem:regret_decomp}, holds with probability at least $1-\delta$ by Lemma~\ref{lem:proba_master_event} and Proposition~\ref{prop:optimism}. Thus, we may assume that $\cG'(\delta)$ holds.

    We start from the decomposition given by Lemma~\ref{lem:regret_decomp}
    \[
        \regret^T \leq \rme H \sum_{t=1}^T \sum_{h=1}^H \ind\{k^t_h(s^t_h, a^t_h) = -1\} + \sum_{t=1}^T \sum_{h=1}^H (1+1/H)^{H-h} \xi^t_h + \rme \sum_{t=1}^T \sum_{h=1}^H \cB^t_h.
    \]
    The first term is upper bounded by $\rme SAH^3$, since there is no more than $H$ visits of each state-action-step triple before the update for the first stage. The second term is bounded by $\tcO(\sqrt{H^3 T})$ by a definition of the event $\cE(\delta)$ in Lemma~\ref{lem:proba_master_event}. To upper bound the last term we have to analyze the following sum
    \[
        \sum_{t=1}^T \sum_{h=1}^H \frac{\ind\{ e_{k^t_h(s^t_h,a^t_h)} > 0 \}}{\sqrt{e_{k^t_h(s^t_h, a^t_h)}}} \leq \sum_{(s,a,h) \in \cS \times \cA \times [H]} \sum_{k=0}^{k^{T+1}_h(s,a)} \frac{e_{k+1}}{\sqrt{e_k}},
    \]
    where
    \[
        e_{k} = \left\lfloor \left(1+\frac{1}{H} \right)^k H \right\rfloor \Rightarrow \frac{e_{k+1}}{\sqrt{e_k}} \leq 2 \sqrt{e_k},
    \]
    therefore by Cauchy inequality
    \begin{small}
    \[
        \sum_{k=0}^{k^{T+1}_h(s,a)} \frac{e_{k+1}}{\sqrt{e_k}} \leq 2 \sum_{k=0}^{k^{T+1}_h(s,a) } \sqrt{e_k} \leq 2 \sqrt{k^{T+1}_h(s,a)} \sqrt{\sum_{k=0}^{k^{T+1}_h(s,a)} e_k} \leq 2 \sqrt{\frac{\log(T)}{\log(1+1/H)}} \sqrt{n^{T+1}_h(s,a)},
    \]
    \end{small}
    \!\!where we used the definition of the previous stage $k^{T+1}_h(s,a)$
    \[
        N^{T+1}_h(s,a) \geq \sum_{k=0}^{k^{T+1}_h(s,a)} e^{k}, 
    \]
    thus by Cauchy inequality and inequality $\log(1+1/H) \geq 1/(4H)$ for $H \geq 1$
    \begin{align*}
        \sum_{t=1}^T \sum_{h=1}^H \frac{\ind\{ e_{k^t_h(s^t_h,a^t_h) > 0} \}}{\sqrt{e_{k^t_h(s^t_h, a^t_h)}}} &\leq 2\sqrt{H \log(T)} \sum_{(s,a,h) \in \cS \times \cA \times [H]} \sqrt{N^{T+1}_h(s,a) + 1} \\
        &\leq 4\sqrt{SAH^2 \log(T)} \sqrt{\sum_{(s,a,h)} (N^{T+1}_h(s,a) + 1)} \\
        &\leq 4 \sqrt{SAH^3T \log(T)} + 4 SAH^2 \log(T).
    \end{align*}
    Using this upper bound, we have
    \[
        \sum_{t=1}^T \sum_{h=1}^H \cB^t_h = \tcO\left( H \sum_{t=1}^T \sum_{h=1}^H \frac{\ind\{ e_{k^t_h(s^t_h,a^t_h)} > 0 \}}{\sqrt{e_{k^t_h(s^t_h, a^t_h)}}}  \right) = \tcO\left( \sqrt{H^5 SA T} + SAH^3\right).
    \]
    Combining this upper bound with the previous ones, we conclude the statement.
\end{proof}
\newpage
\section{Proofs for Metric algorithm}\label{app:randql_metric_proof}

\subsection{Assumptions}

In this section we proof Lemma~\ref{lem:v_lipschitz_from_reparam} and Lemma~\ref{lem:transition_w_from_reparam}.

\begin{proof}[Proof of Lemma~\ref{lem:transition_w_from_reparam}]
    By the dual formula for 1-Wasserstein distance (see e.g. Section~6 of \citet{peyre2019ot}) we have
    \[
        \cW_1(p_h(s,a), p_h(s',a')) = \sup_{f \text{ is } 1-\text{Lipchitz}} \left\{ p_h f(s,a) - p_h f(s',a') \right\}.
    \]
    By Assumption~\ref{ass:reparametrization} we have
    \[
        p_h f(s,a) - p_h f(s',a') = \E_{\xi_h} \left[ f(F_h(s,a,\xi_h)) - f(F_h(s',a',\xi_h)) \right] \leq L_F \rho((s,a),(s',a')).
    \]
\end{proof}

\begin{proof}[Proof of Lemma~\ref{lem:v_lipschitz_from_reparam}]
    Let us proceed by a backward induction over $h$. For $h=H+1$ we have $\Qstar_{H+1}(s,a) = \Vstar_{H+1}(s) = 0$, therefore they are $0$-Lipchitz. 

    Next we assume that have for any $h' > h$ the statement of Lemma~\ref{lem:v_lipschitz_from_reparam} holds. Then by Bellman equations
    \[
        \vert \Qstar_h(s,a) - \Qstar_h(s',a')\vert \leq \vert r_h(s,a) + r_h(s',a') \vert + \vert p_h \Vstar_{h+1}(s,a) - p_h \Vstar_{h+1}(s',a') \vert .
    \]
    By Assumption~\ref{ass:reparametrization} we can represent the action of the transition kernel as follows
    \[
        p_h \Vstar_{h+1}(s,a) - p_h \Vstar_{h+1}(s',a') = \E_{\xi_h}\left[ \Vstar_{h+1}(F_h(s,a, \xi_h)) - \Vstar_{h+1}(F_h(s', a', \xi_h) \right].
    \]
    Since by induction hypothesis $\Vstar_{h+1}$ is $\sum_{h'=h+1}^H L_F^{h'-h} L_r $-Lipschitz and $F_h(\cdot, \xi_h)$ is $L_F$-Lipschitz, therefore
    \begin{align*}
        \vert \Qstar_h(s,a) - \Qstar_h(s',a')\vert &\leq \left( L_r  +  L_F \cdot \sum_{h'=h+1}^H L_F^{h'-h} L_r\right) \rho((s,a), (s',a')) \\
        &\leq \left( \sum_{h'=h}^H L_F^{h'-h} L_r \right) \rho((s,a), (s',a'))
    \end{align*}
    To show that $\Vstar_h$ is also Lipchitz, we have that there is some action $a^\star$ equal to $\pi^\star(s)$ or $\pi^\star(s')$, such that
    \[
        \vert \Vstar_h(s) - \Vstar_h(s') \vert \leq \vert \Qstar_h(s,a^\star) - \Qstar_h(s', a^\star) \vert \leq L_{V,h} \cdot \rho((s,a^\star),(s',a^\star)) \leq L_{V,h} \cdot \rho_{\cS}(s,s'),
    \]
    where in the end we used the sub-additivity assumption on metric over joint space (see Assumption~\ref{ass:metric}).
\end{proof}

\subsection{Algorithm}\label{app:net_staged_randql_description}

\begin{algorithm}
\centering
\caption{Metric \NetStagedRandQL}
\label{alg:NetStagedRandQL}
\begin{algorithmic}[1]
  \STATE {\bfseries Input:}  inflation coefficient $\kappa$, $J$ ensemble size, number of prior transitions $n_0(k)$, prior reward $r_0$, dicretization level $\varepsilon$.
  \STATE {\bfseries Initialize: }  $\varepsilon$-net $\cN_{\varepsilon}$, $\uQ_h(B) = \tQ^j_h(B) = \ur H,$ initialize counters $\tn_h(B) = 0$ for $j,h,B\in[J]\times[H]\times\cN_{\varepsilon}$, stage $q_h(B) = 0$, quantization map $\psi_{\varepsilon} \colon \cS \times \cA \to \cN_{\varepsilon}$.
      \FOR{$t \in[T]$}
      \FOR{$h \in [H]$}
        \STATE Play $a_h \in \argmax_a \uQ_h(\psi_{\varepsilon}(s_h,a))$ and define $B_h = \psi_{\varepsilon}(s_h, a_h)$.
        \STATE Observe reward and next state $s_{h+1}\sim p_h(s_h,a_h)$.
        \STATE Sample learning rates $w_j \sim \Beta(1/\kappa, (\tn+n_0(q_h(B_h))/\kappa)$ for $\tn = \tn_h(B_h)$.
        \STATE Compute value function $\uV_{h+1}(s_{h+1}) = \max_{a \in \cA} \uQ_{h+1}(\psi_{\varepsilon}(s_{h+1},a))$.
        \STATE Update temporary $Q$-values for all $j \in [J]$
        \[
            \tQ^{j}_h(B) := (1 - w_j) \tQ^{j}_h(B) + w_j \left( r_h(s_h,a_h) + \uV_{h+1}(s_{h+1})\right)\,.
        \]
        \vspace{-0.4cm}
        \STATE Update counter $\tn_h(B_h) := \tn_h(B_h) + 1$
        \IF{$\tn_h(B_h) = \lfloor (1 + 1/H)^{q} H \rfloor$ for $q=q_h(B_h)$ is the current stage}
            \STATE Update policy $Q$-values $\uQ_h(B_h) := \max_{j \in [J]} \tQ^{j}_h(B_h)$.
            \STATE Reset temporary  $Q$-values $\tQ^j_h(B_h) := \ur H$.
            \STATE Reset counter $\tn_h(B_h) := 0$ and change stage $k_h(B_h) := k_h(B_h) + 1$.
        \ENDIF
      \ENDFOR
  \ENDFOR
\end{algorithmic}
\end{algorithm}

Next we describe a simple non-adaptive version of our algorithm that works with metric spaces. We assume that for any $\varepsilon > 0$ we can compute a minimal $\varepsilon$-cover of state-action space $\cN_\varepsilon$.\footnote{Remark that the greedy algorithm can easily generate $\varepsilon$-cover of size $N_{\varepsilon/2}$, that will not affect the asymptotic behavior of regret bounds, see \cite{song2019efficient}.}

Next we will use the same notation but with state-action pairs replaces with balls from a fixed cover $\cN_\varepsilon$. To unify the notation, we define $\psi_\varepsilon \colon \cS \times \cA \to \cN_\varepsilon$ that maps any point $(s,a)$ to any ball from $\varepsilon$-cover that contains it.

For any $t,h$ we define $B^t_h = \psi_\varepsilon(s^t_h, a^t_h)$. Next, let $n^t_h(B)$ be a number of visits of ball $B$ before the episode $t$: $n^t_h(B) = \sum_{k=1}^{t-1} \ind\{ B^k_h = B \}$.

Let $e_k = \lfloor (1 + 1/H)^k \cdot H \rfloor$ be length of each stage for any $k \geq 0$ and, by convention, $e_{-1} = 0$. 
We will call that in the beginning of episode $t$ a pair $(B,h)$ is in $k$-th stage if $n^t_h(B) \in [\sum_{i=0}^{k-1} e_i, \sum_{i=0}^k e_i )$. 

Let $\tn^t_h(B)$ be a number of visits of state-action pair during the current stage in the beginning of episode $t$. Formally, $\tn^t_h(B) = n^t_h(B) - \sum_{i=0}^{k-1} e_i$, where $k$ is an index of current stage. 


Define $\kappa > 0$ be a posterior inflation coefficient, $n_0$ is a number of pseudo-transitions, and $J$ as a number of temporary $Q$-functions. Let $\tQ^{t,j}_h$ be a $j$-th \textit{temporary} Q-value and $\uQ^t_h$ be a \textit{policy} Q-value at the beginning of episode $t$, defined over the $\varepsilon$-cover. We initialize them as follows
\[
    \uQ^1_h(B) = \ur H, \quad \tQ^{1,j}_h(s,a) = \ur H.
\]
Additionally, we define to the value function as follows
\[
    \uV^t_h(s) = \max_{a \in \cA} \uQ^t_h(\psi_{\varepsilon}(s,a)).
\]
Notice that we cannot precomute it as in the tabular setting, however, it is possible to use its values in lazy fashion.

For each transition we preform the following update of temporary Q-values over balls $B \in \cN_{\varepsilon}$
\[
    \tQ^{t+1/2,j}_h(B) = \begin{cases}
        (1- w_{j, \tn}) \cdot \tQ^{t,j}_h(B) + w_{j, \tn} [ r_h(s^t_h, a^t_h) + \uV^{t}_{h+1}(s^t_{h+1})], & B = B^t_h \\
        \tQ^{t,j}_h(B) & \text{otherwise},
    \end{cases}
\]
where  $\tn=\tn^t_h(B)$ is the number of visits of $(B,h)$ in the beginning of episode $t$, and $w_{j, \tn}$ is a sequence of independent beta-distribution random variables $w_{j,\tn} \sim \Beta(1/\kappa, (\tn + n_0) / \kappa)$.

Next we define the stage update as follows
\begin{align*}
    \uQ^{t+1}_h(B) &= \begin{cases}
        \max_{j\in[J]}  \tQ^{t+1/2,j}_h(B) & \tn^t_h(B) = \lfloor (1 + 1/H)^k H \rfloor \\
        \uQ^{t}_h(B) & \text{otherwise}
    \end{cases} \\
    \tQ^{t+1,j}_h(B) &= \begin{cases}
         \ur H  & n^t_h(B) \in \tn^t_h(B) = \lfloor (1 + 1/H)^k H \rfloor \\ 
         \tQ^{t+1/2,j}_h(B) & \text{otherwise}
    \end{cases} \\
    \uV^{t+1}_h(s) &= \min\{ \ur(H-h), \max_{a \in \cA} \uQ^{t+1}_h(\psi_\varepsilon(s,a)) \};\\
    \pi^{t+1}_h(s) &\in \argmax_{a \in \cA} \uQ^{t+1}_h(\psi_\varepsilon(s,a)),
\end{align*}
where $k$ is the current stage.
A detailed description of the algorithm is presented in Algorithm~\ref{alg:NetStagedRandQL}.

For episode $t$ we will call $k^t_h(B)$ the index of stage where $\uQ^t_h(B)$ were updated (and $k^t_h(B) = -1$ if there was no update). For all $t$ we define $\tau^t_h(B) \leq t$ as the episode when the stage update happens. In other words, for any $t$ the following holds
\[
    \uQ^{t+1}_h(B) = \max_{j\in[J]}  \tQ^{\tau^t_h(B)+1/2,j}_h(B),
\]
where $\tau^t_h(B) = 0$ and $e_k = 0$ if there was no updates. To simplify the notation we will omit dependence on $(s,a,h)$ where it is deducible from the context.

We notice that in this case we use $e_k$ samples to compute $\tQ^{\tau^t_h(B)+1/2,j}$ for $k = k^t_h(s,a)$. For this $k$ we define $\ell^i_{k,h}(s,a)$ as the time of $i$-th visit of state-action pair $(s,a)$ during $k$-th stage. Then we have the following decomposition
\begin{equation}\label{eq:temp_q_rollout}
    \tQ^{\tau^t+1/2,j}_h(B) =  \sum_{i=0}^{e_k} W^{i}_{j, e_k} \left( r_h(s^{\ell^i}_h,a^{\ell^i}_h) +  \uV^{\ell^i}_{h+1}(s^{\ell^i}_{h+1}) \right),
\end{equation}
where we drop dependence on $k$ and $(B,h)$ in $\ell^i$ to simplify notations, using the convention $r_h(s^{\ell^0}_h,a^{\ell^0}_h) = \ur$ , $\uV^{\ell^0}_{h+1}(s^{\ell^0}_{h+1}) = \ur (H-1)$ and the following aggregated weights
\[
    W^{0}_{j, n} = \prod_{q=0}^{n-1} ( 1 - w_{j,q}), \quad W^i_{j,n} = w_{j,i-1} \cdot \prod_{q=i}^{n-1} (1 - w_{j,q}),\ i \geq 1.
\]


\subsection{Concentration}
\label{app:concentration_metric}

Let $\betastar\colon (0,1) \times \N \times (0, d_{\max}) \to \R_{+}$ and $\beta^{B}, \beta^{\conc}, \beta\colon (0,1) \times (0, d_{\max}) \to \R_{+}$ be some function defined later on in Lemma \ref{lem:proba_master_event_metric}. We define the following favorable events

\begin{align*}
  \cE^\star(\delta, \varepsilon) &\triangleq \Bigg\{\forall t \in \N, \forall h \in [H], \forall B\in \cN_{\varepsilon}, k = k^t_h(B), (s,a) = \cent(B): \\
  &\qquad
    \Kinf\left( \frac{1}{e_k}\sum_{i=1}^{e_k} \delta_{\Vstar_{h+1}(F_h(s,a,\xi^{\ell^i}_{h+1}))}, p_h \Vstar_{h+1}(s,a) \right) \leq  \frac{\betastar(\delta,e_k,\varepsilon)}{e_k}\Bigg\}\,,\\
    \cE^{B}(\delta, \varepsilon) &\triangleq \Bigg\{ \forall t \in [T], \forall h \in [H], \forall B \in \cN_{\varepsilon}, \forall j \in [J], k = k^t_h(B):  \\
    &\qquad \left| \sum_{i=0}^{e_k} \left( W^i_{j, e_k, k} - \E[W^i_{j, e_k,k}] \right) \left( r_{h}(s^{\ell^i}_h, a^{\ell^i}_h) +  \uV^{\ell^i}_{h+1}(s^{\ell^i}_{h+1}) \right) \right| \\
    &\qquad\qquad\leq 60 \rme^2 \sqrt{\frac{\ur^2 H^2 \kappa \beta^B(\delta, \varepsilon)}{e_k + n_0(k)}} + 1200 \rme \frac{\ur H \kappa \log(e_k + n_0(k)) (\beta^B(\delta, \varepsilon))^2}{e_k + n_0(k)} \bigg\}\,,\\
    \cE^{\conc}(\delta, \varepsilon) &\triangleq \Bigg\{\forall t \in [T], \forall h \in [H], \forall B \in \cN_{\varepsilon}, k = k^t_h(B): \\
    &\qquad\left|\frac{1}{e_k} \sum_{i=1}^{e_k} \Vstar_{h+1}(s^{\ell^i_{k,h}(B)}_{h+1})  - p_h\Vstar_{h+1}(s^{\ell^i_{k,h}(B)}_h, a^{\ell^i_{k,h}(B)}_h) \right| \leq \sqrt{\frac{2\ur^2 H^2 \beta^{\conc}(\delta, \varepsilon)}{e_k}}\Bigg\}\\
    \cE(\delta) &\triangleq \Bigg\{  \sum_{t=1}^T \sum_{h=1}^H (1+1/H)^{H-h}\left| p_h[\Vstar_{h+1} - V^{\pi_t}_{h+1}](s^t_h, a^t_h) - [\Vstar_{h+1} - V^{\pi_t}_{h+1}](s^t_{h+1})\right| \\
    &\qquad\qquad\qquad\qquad\qquad\qquad\qquad\qquad\qquad\qquad\qquad\quad\leq 2\rme \ur H\sqrt{2HT \beta(\delta)}.
\Bigg\}.
\end{align*}
We also introduce the intersection of these events, $\cG(\delta) \triangleq \cE^\star(\delta) \cap \cE^{B}(\delta) \cap \cE^{\conc}(\delta) \cap \cE(\delta)$. We  prove that for the right choice of the functions $\betastar,  \beta^{\KL}, \beta^{\conc}, \beta, \beta^{\Var}$ the above events hold with high probability.
\begin{lemma}
\label{lem:proba_master_event_metric}
For any $\delta \in (0,1)$ and $\varepsilon \in (0, d_{\max})$ and for the following choices of functions $\beta,$
\begin{align*}
    \betastar(\delta,n, \varepsilon) &\triangleq \log(8H/\delta) + \log(N_{\varepsilon}) + 3\log\left(\rme\pi(2n+1)\right)\,,\\
    \beta^B(\delta, \varepsilon) &\triangleq  \log(8H/\delta) + \log(N_{\varepsilon}) +  \log(TJ)\,,\\
    \beta^{\conc}(\delta, \varepsilon) &\triangleq \log(8H/\delta) + \log(N_{\varepsilon}) +  \log(2T) ,\\
    \beta(\delta) &\triangleq \log\left(16/\delta\right),
\end{align*}
it holds that
\begin{align*}
\P[\cE^\star(\delta, \varepsilon)]&\geq 1-\delta/8, \qquad \P[\cE^{B}(\delta, \varepsilon)]\geq 1-\delta/8,  \\\
\P[\cE^\conc(\delta, \varepsilon)] &\geq 1-\delta/8, \qquad \P[\cE(\delta)]\geq 1-\delta/8.
\end{align*}
In particular, $\P[\cG(\delta)] \geq 1-\delta/2$.
\end{lemma}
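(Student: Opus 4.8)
The plan is to mirror the proof of Lemma~\ref{lem:proba_master_event}, with the union bounds now taken over the $\varepsilon$-cover $\cN_\varepsilon$ instead of over $\cS\times\cA$ (this is exactly what produces the extra $\log(N_\varepsilon)$ terms in $\betastar$, $\beta^B$ and $\beta^\conc$), and with the stage-dependent prior count $n_0(k)$ carried through in place of the constant $n_0$. I would treat the four events in turn and conclude by a union bound.

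For $\cE^\star(\delta,\varepsilon)$: fix $t,h,B$, set $(s,a)=\cent(B)$ and $n=e_{k^t_h(B)}$. The key observation is that, by Assumption~\ref{ass:reparametrization}, the counterfactual next states $F_h(s,a,\xi^{\ell^i}_{h+1})$ for $i=1,\ldots,n$ are i.i.d.\ draws from $p_h(s,a)$: the noises $\{\xi^t_h\}$ are independent of the past, so although the visit times $\ell^i$ are stopping times for the trajectory, the future noises pushed through $F_h(s,a,\cdot)$ remain i.i.d.\ $p_h(s,a)$-distributed. Then I would apply the maximal $\Kinf$-inequality (Theorem~\ref{th:max_ineq_kinf}) uniformly in $n$ for each fixed $(B,h)$ and union bound over $\cN_\varepsilon\times[H]$ to get $\P[\cE^\star(\delta,\varepsilon)]\ge 1-\delta/8$.

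For $\cE^B(\delta,\varepsilon)$: fix $t,h,B,j$, set $n=e_{k^t_h(B)}$, and build the filtration $(\widetilde\cF_{i-1})$ by stopping the natural filtration at the successive visits of $B$ at step $h$ within the current stage, so that the target $r_h(s^{\ell^i}_h,a^{\ell^i}_h)+\uV^{\ell^i}_{h+1}(s^{\ell^i}_{h+1})$ is $\widetilde\cF_{i-1}$-measurable (the next state is observed before the learning rate is drawn) while $\E[W^i_{j,n,k}\mid\widetilde\cF_{i-1}]=\E[W^i_{j,n,k}]$ since the weights are sampled independently of the past. By Lemma~\ref{lem:weights_distribution} the weight vector is $\Dir(n_0(k)/\kappa,1/\kappa,\ldots,1/\kappa)$; aggregating the prior mass into $n_0(k)$ fictitious atoms (with the prior-target conventions $r=\ur$, $\uV=\ur(H-1)$) rewrites the centered weighted sum over $n+n_0(k)$ summands with Beta marginals, so Proposition~\ref{prop:beta_martingale_bound} yields the deviation bound; a union bound over $[T]\times[H]\times\cN_\varepsilon\times[J]$ finishes it. For $\cE^\conc(\delta,\varepsilon)$: each summand $\Vstar_{h+1}(s^{\ell^i}_{h+1})-p_h\Vstar_{h+1}(s^{\ell^i}_h,a^{\ell^i}_h)$ is a bounded martingale difference, so Azuma--Hoeffding for the fixed number of summands $e_k$ followed by a union bound over $\cN_\varepsilon\times[H]$ and $e_k\in[T]$ suffices. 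The event $\cE(\delta)$ does not depend on $\varepsilon$ and is handled verbatim as in the tabular proof: the sequence $Z_{t,h}=(1+1/H)^{H-h}([\Vstar_{h+1}-V^{\pi^t}_{h+1}](s^t_{h+1})-p_h[\Vstar_{h+1}-V^{\pi^t}_{h+1}](s^t_h,a^t_h))$ is a martingale difference with $|Z_{t,h}|\le 2\rme\ur H$, and Azuma--Hoeffding with $\beta(\delta)=\log(16/\delta)$ gives $\P[\cE(\delta)]\ge 1-\delta/8$. A final union bound gives $\P[\cG(\delta)]\ge 1-\delta/2$.

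The main obstacle is the $\cE^B$ step: I need to verify that the stopped filtration genuinely decouples the generalized-Dirichlet learning-rate weights from the adaptively chosen targets, and that the Dirichlet aggregation property legitimately reduces the analysis to a sum with Beta marginals to which Proposition~\ref{prop:beta_martingale_bound} applies — here propagating the stage-dependent $n_0(k)$ through the denominators $e_k+n_0(k)$ consistently is the book-keeping that is easy to get wrong. The $\cE^\star$ step also relies essentially on the reparametrization trick (rather than direct concentration on $s^{\ell^i}_{h+1}$) so that the samples are genuinely i.i.d.\ at the ball center; the mismatch between the center $\cent(B)$ and the actually visited points inside $B$ is not addressed here but deferred to the optimism analysis.
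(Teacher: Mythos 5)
Your proposal is correct and follows essentially the same route as the paper: reparametrization plus Theorem~\ref{th:max_ineq_kinf} with a union bound over $\cN_\varepsilon\times[H]$ for $\cE^\star$, the stopped-filtration/Dirichlet-aggregation argument feeding Proposition~\ref{prop:beta_martingale_bound} for $\cE^B$, Azuma--Hoeffding on the martingale differences $\Vstar_{h+1}(s^{\ell^i}_{h+1})-p_h\Vstar_{h+1}(s^{\ell^i}_h,a^{\ell^i}_h)$ for $\cE^\conc$, and the unchanged tabular argument for $\cE(\delta)$. Your closing remarks — that the center-versus-visited-point mismatch is deferred to the optimism analysis and that the stage-dependent $n_0(k)$ only changes book-keeping in $\cE^B$ — match how the paper handles these points.
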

\begin{proof}
    Let us describe the changes from the similar statement in Lemma~\ref{lem:proba_master_event}.

    Regarding event $\cE^\star(\delta, \varepsilon)$, for any fixed ball $B$ we have exactly the same structure of the problem thanks to Assumption~\ref{ass:reparametrization} and a sequence of i.i.d. random variables $\xi^{\ell^i}_h$. Thus, Theorem~\ref{th:max_ineq_kinf} combined with a union bound over $B \in \cN_{\varepsilon}$ and $H \in [H]$ concludes $\PP{\cE^{\star}(\delta, \varepsilon)} \geq 1 - \delta/8$.

    The proof for the event $\cE^B(\delta,\varepsilon)$ remains the almost the same, with two differences: the predictable weights slightly changed but the upper bound for them remain the same,  and we have  take a union bound not over all state-action pairs $(s,a) \in \cS \times \cA$ but all over balls $B \in \cN_{\varepsilon}$.

    To show that $\PP{\cE^{\conc}(\delta, \varepsilon)} \geq 1 - \delta/8$, let us fix $B \in \cN_{\varepsilon}, h \in [H]$ and $e_k \in [T]$. Then we can define a filtration $\cF_{t,h} = \sigma\left\{ \{ (s^{\ell}_{h'}, a^{\ell}_{h'}, \pi^{\ell}), \ell < t, h' \in [H] \} \cup \{ (s^{t}_{h'}, a^t_{h'}, \pi^{t}), h' \leq h \} \right\}$ and, since $\ell^i_{k,h}(B)$ are stopping times for all $i = 1, \ldots, e_k$, we can define the stopped filtration $\widetilde{\cF_{i}} = \cF_{\ell^i, h}$. Then we notice that $X_i = \Vstar_{h+1}(s^{\ell^i_{k,h}(B)}_{h+1})  - p_h\Vstar_{h+1}(s^{\ell^i_{k,h}(B)}_h, a^{\ell^i_{k,h}(B)}_h)$ forms a martingale-difference sequence with respect to $\widetilde{\cF_{i,h}}$. Thus, by Azuma-Hoeffding inequality and a union bound we have $\PP{\cE^{\conc}(\delta, \varepsilon)} \geq 1 - \delta/8$.

    The proof of $\PP{\cE(\delta)} \geq 1-\delta/8$ remains exactly the same as in Lemma~\ref{lem:proba_master_event}.
\end{proof}

\subsection{Optimism}\label{app:optimism_metric}

 In this section we prove that our estimate of $Q$-function $\uQ^{\,t}_h(s,a)$ is optimistic that is the event
\begin{equation}\label{eq:opt_event_metric}
    \cE_{\opt}(\varepsilon) \triangleq \left\{ \forall t \in [T], h \in [H], (s,a) \in \cS \times \cA:  \uQ^t_h(\psi_{\varepsilon}(s,a)) \geq \Qstar_h(s,a) \right\}.
\end{equation}
holds with high probability on the event $\cE^\star(\delta, \varepsilon)$.

Define constants
\begin{equation}\label{eq:constant_c0_metric}
    c_0 \triangleq \frac{8}{\pi} \left( \frac{4}{\sqrt{\log(17/16)}} + 8 + \frac{49\cdot 4\sqrt{6}}{9} \right)^2 + 1.
\end{equation}
and slightly another constant
\begin{equation}\label{eq:constant_cJ_metric}
    \tilde c_J \triangleq \frac{1}{\log\left( \frac{4}{3 + \Phi(1)} \right)},
\end{equation}
where $\Phi(\cdot)$ is a CDF of a normal distribution.

\begin{proposition}\label{prop:anticonc_metric}
    Define a constant $L = L_r + L_V(1+L_F)$.    
    Assume that $J = \lceil  \tilde{c}_J \cdot (\log(2HT/\delta)  + \log(N_\varepsilon) \rceil$, $\kappa = 2\beta^\star(\delta, T, \varepsilon)$, $\ur = 2$, and a prior count $n_0(k) = \ceil{\tn_0 + \kappa + \frac{\varepsilon L}{H-1} \cdot (e_k + \tn_0 + \kappa)}$ dependent on the stage $k$, where $\tn_0 = (c_0 + 1 + \log_{17/16}(T)) \cdot \kappa$ .
    
    Then on event $\cE^\star(\delta, \varepsilon)$ the following event
    \begin{align*}
        \cE_{\anticonc} &\triangleq \Bigg\{ \forall t \in [T] \ \forall h \in [H] \ \forall B \in \cN_{\varepsilon}: \text{for } k = k^t_h(B), (s,a) = \cent(B):  \\
        &\max_{j \in [J]}\biggl\{ 
            W^0_{j,e_k,k} \ur (H-1) + \sum_{i=1}^{e_k} W^i_{j,e_k,k} \Vstar_{h+1}(F_h(s, a, \xi^{\ell^i}_h))
        \biggr\} \geq p_h \Vstar_{h+1}(s,a) + L \varepsilon \Bigg\}
    \end{align*}
    holds with probability at least $1-\delta/2$.
\end{proposition}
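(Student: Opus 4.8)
The plan is to run the argument of the proof of Proposition~\ref{prop:anticonc} essentially verbatim, the single new point being that the extra margin $L\varepsilon$ in the event is paid for by the stage-dependent pseudo-count $n_0(k)$. First I would fix $t\in[T]$, $h\in[H]$, $B\in\cN_\varepsilon$, set $k=k^t_h(B)$ and $(s,a)=\cent(B)$, and fix $j\in[J]$; the case $k=-1$ (hence $e_k=0$) is immediate since then $W^0_{j,0,k}=1$ and $\ur(H-1)=2(H-1)\ge p_h\Vstar_{h+1}(s,a)+L\varepsilon$. By Lemma~\ref{lem:weights_distribution} applied with $n_0$ replaced by $n_0(k)$, the vector $(W^0_{j,e_k,k},\dots,W^{e_k}_{j,e_k,k})$ has law $\Dir(n_0(k)/\kappa,1/\kappa,\dots,1/\kappa)$. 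Since $\ur=2$ and rewards lie in $[0,1]$, the prior value $v_0\triangleq\ur(H-1)$ satisfies $v_0\ge p_h\Vstar_{h+1}(s,a)+L\varepsilon$ whenever $L\varepsilon\le H-1$, which holds for $\varepsilon=T^{-1/(d_c+2)}$ and $T$ large — the only regime relevant for the $\tcO$ bound. Finally, Assumption~\ref{ass:reparametrization} makes $v_i\triangleq\Vstar_{h+1}(F_h(s,a,\xi^{\ell^i}_h))$, $i\in[e_k]$, i.i.d.\ push-forwards of $p_h(s,a)$, so on $\cE^\star(\delta,\varepsilon)$ the empirical measure $\hnu_{e_k}\triangleq\frac1{e_k}\sum_{i=1}^{e_k}\delta_{v_i}$ obeys $e_k\Kinf(\hnu_{e_k},p_h\Vstar_{h+1}(s,a))\le\betastar(\delta,e_k,\varepsilon)\le\betastar(\delta,T,\varepsilon)$.

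Next I would write $\mu\triangleq p_h\Vstar_{h+1}(s,a)+L\varepsilon$, introduce the mixture $\bnu_{e_k}\triangleq\frac{n_0(k)-1}{e_k+n_0(k)-1}\delta_{v_0}+\sum_{i=1}^{e_k}\frac{1}{e_k+n_0(k)-1}\delta_{v_i}$, and apply Lemma~\ref{lem:lower_bound_dbc_relaxed} (at a suitable fixed relaxation level), conditionally on $\{\xi^{\ell^i}_h\}_{i\in[e_k]}$, to the Dirichlet-weighted sum with atom values $v_0,\dots,v_{e_k}$ and threshold $\mu$. This yields a bound of the form
\[
\P\Bigl[\,W^0_{j,e_k,k}v_0+\sum_{i=1}^{e_k}W^i_{j,e_k,k}v_i\ \ge\ \mu\ \Bigm|\ \{\xi^{\ell^i}_h\}\,\Bigr]\ \ge\ c\Bigl(1-\Phi\Bigl(\sqrt{\frac{2(e_k+n_0(k)-\kappa)\,\Kinf(\bnu_{e_k},\mu)}{\kappa}}\Bigr)\Bigr)
\]
for an absolute constant $c>0$.

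The heart of the proof — and what I expect to be the main obstacle — is the estimate $(e_k+n_0(k)-\kappa)\,\Kinf(\bnu_{e_k},\mu)\le\betastar(\delta,T,\varepsilon)$ on $\cE^\star(\delta,\varepsilon)$, which is precisely where the shape $n_0(k)=\lceil\tn_0+\kappa+\tfrac{\varepsilon L}{H-1}(e_k+\tn_0+\kappa)\rceil$, $\tn_0=(c_0+1+\log_{17/16}(T))\kappa$, is used. I would split the $n_0(k)-1$ prior atoms sitting at $v_0$ into a block of about $\tfrac{\varepsilon L}{H-1}(e_k+\tn_0+\kappa)$ atoms, whose only role is to raise the admissible target of the mixture from $p_h\Vstar_{h+1}(s,a)$ up to $\mu=p_h\Vstar_{h+1}(s,a)+L\varepsilon$ — feasible because each such atom sits at $v_0=2(H-1)$, at distance $v_0-\mu\gtrsim H$ above the target, so $\Theta(\varepsilon L e_k/H)$ of them compensate the $L\varepsilon$ shift while the $\Kinf$-mass stays controlled — and a residual block of about $\tn_0+\kappa$ atoms that plays exactly the Bayesian-prior role of the tabular argument. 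Combining the elementary monotonicity that adjoining atoms located above the target never increases $n\,\Kinf(\cdot,\mu)$ with the quantitative version of this fact, packaged as the metric counterpart of Lemma~\ref{lem:kinf_prior_remove}, I would then obtain $(e_k+n_0(k)-\kappa)\,\Kinf(\bnu_{e_k},\mu)\le e_k\Kinf(\hnu_{e_k},p_h\Vstar_{h+1}(s,a))\le\betastar(\delta,T,\varepsilon)$. The delicate parts are keeping the denominators $1-\lambda(v_i-\mu)$ in the variational formula $\Kinf(p,\mu)=\max_\lambda\E_{X\sim p}[\log(1-\lambda(X-\mu))]$ bounded away from $0$ (which holds thanks to $v_0-\mu\gtrsim H$), carrying the $1/\kappa$-inflation consistently, and checking that the ceilings and the ``$-\kappa$'' offset do not swallow the gain.

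It then remains to collect constants. Taking $\kappa=2\betastar(\delta,T,\varepsilon)$ makes the argument of $\Phi$ above at most an absolute constant, so each $j$ succeeds with a fixed probability $\gamma>0$; calibrating $\gamma$ to the value $\tilde{c}_J=1/\log\bigl(4/(3+\Phi(1))\bigr)$ is just the precise choice of the relaxation level and of $c$. Since the $J$ ensemble weight vectors are drawn independently, for the fixed triple $(t,h,B)$ the maximum over $j\in[J]$ exceeds $\mu$ with probability at least $1-(1-\gamma)^J\ge1-\delta/(2HTN_\varepsilon)$ by the choice $J=\lceil\tilde{c}_J(\log(2HT/\delta)+\log N_\varepsilon)\rceil$, and this bound is uniform in the samples $\{v_i\}_{i\in[e_k]}$ given $\cE^\star(\delta,\varepsilon)$, hence survives integrating them out. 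A union bound over the at most $THN_\varepsilon$ triples $(t,h,B)$ gives $\P[\cE_{\anticonc}\mid\cE^\star(\delta,\varepsilon)]\ge1-\delta/2$, as claimed. (The companion step deducing optimism, $\uQ^t_h(\psi_\varepsilon(s,a))\ge\Qstar_h(s,a)$, from $\cE_{\anticonc}$ — which additionally absorbs the reward- and quantization-induced $O(\varepsilon)$ terms into the $L\varepsilon$ slack, so that the split $L=L_r+L_V(1+L_F)$ is exactly accounted for — is then carried out by backward induction on $h$ as in the tabular case, and is where the Dirichlet-process viewpoint of Remark~\ref{rm:dirichlet_process} enters.)
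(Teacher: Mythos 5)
Your overall skeleton (per-$j$ anti-concentration conditional on the samples, constant success probability $\gamma$ after setting $\kappa=2\betastar(\delta,T,\varepsilon)$, then $1-(1-\gamma)^J$ and a union bound over $(t,h,B)$) matches the paper. The divergence — and the gap — is in how the extra margin $L\varepsilon$ is paid for. The paper does \emph{not} run a single application of Lemma~\ref{lem:lower_bound_dbc_relaxed} at the shifted threshold $\mu=p_h\Vstar_{h+1}(s,a)+L\varepsilon$. Instead it uses the aggregation property of the Dirichlet distribution to split the prior weight $n_0(k)/\kappa$ into $(n_0(k)-\tn_0)/\kappa$ and $\tn_0/\kappa$, and then stick-breaking to write the weighted sum as $\xi_j\,\ur(H-1)+(1-\xi_j)\sum_i \widehat{W}^i_jX_i$ with $\xi_j\sim\Beta((n_0(k)-\tn_0)/\kappa,(e_k+\tn_0)/\kappa)$ \emph{independent} of $\widehat W_j\sim\Dir(\tn_0/\kappa,1/\kappa,\dots,1/\kappa)$. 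The success probability then factorizes: $\P[T_{\mathrm{approx}}\ge 0]=\P[\xi_j\ge \varepsilon L/(H-1)]\ge 1/2$ by the Alfers--Dinges normal approximation (this is exactly what the formula for $n_0(k)$ is calibrated to), while $\P[T_{\mathrm{stoch}}\ge 0]$ is the unshifted tabular argument via Lemma~\ref{lem:lower_bound_dbc_relaxed} and Lemma~\ref{lem:kinf_prior_remove}, giving $\gamma=(1-\Phi(1))/4$ and hence $\tilde c_J=1/\log(4/(3+\Phi(1)))$.

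Your route instead requires the inequality $(e_k+n_0(k)-\kappa)\,\Kinf(\bnu_{e_k},\,p_h\Vstar_{h+1}(s,a)+L\varepsilon)\le\betastar(\delta,T,\varepsilon)$, i.e.\ a version of Lemma~\ref{lem:kinf_prior_remove} in which adjoining atoms at $\ur(H-1)$ not only rescales $\Kinf$ but also \emph{raises the target}. You correctly identify this as the crux, but you do not prove it, and it is not an "elementary monotonicity" plus a packaging step: working through the variational formula, the shift $\Delta=L\varepsilon$ produces a correction term involving $\E_\nu[1/(1-\lambda(X-u))]$ uniformly over $\lambda$, and the resulting condition is that the extra prior mass times $(\ur(H-1)-\mu)$ dominates a constant multiple of $\Delta$ — which the stated $n_0(k)$ (tuned to the Beta-median condition $\P[\xi_j\ge\varepsilon L/(H-1)]\ge 1/2$, not to a $\Kinf$ budget) only barely supplies, with no slack. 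There is also an edge case your single-shot application of Lemma~\ref{lem:lower_bound_dbc_relaxed} does not cover: that lemma requires $\mu\in(0,b)$ with $b\le\ub/2$, and when $p_h\Vstar_{h+1}(s,a)$ is within $L\varepsilon$ of its maximum the shifted threshold exceeds $b$; the paper's factorization avoids this because its stochastic term is evaluated at the unshifted target. So the proposal is not wrong in spirit, but the one step that is genuinely new relative to Proposition~\ref{prop:anticonc} is exactly the step left unproved, and the clean way to close it is the independence/factorization argument above rather than a shifted-$\Kinf$ lemma.
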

\begin{remark}\label{rm:dirichlet_process}
    We notice that the obtained result is connected to the theory of Dirichlet processes.

    First, let us define the Dirichlet process, following \cite{ferguson1973bayesian}. The stochastic process $G$, indexed by elements $B$ of $\Xset$, is a Dirichlet Process with parameter $\nu$ ($G\sim\mathrm{DP}(\nu)$) if
    \begin{equation*}
    	G\left(B_1\right),\ldots,G\left(B_d\right)\sim\mathrm{Dir}\left(\nu\left(B_1\right), \ldots, \nu\left(B_d\right)\right),
    \end{equation*}
    for any measurable partition $\left(B_1, \ldots, B_d\right)$ of $\Xset$. 

    Let $\hP_n = \frac{1}{n} \sum_{i=1}^n \delta_{Z_i}$ be an empirical measure of an i.i.d. sample $Z_1,\ldots,Z_n \sim P$. Let $\nu$ be a finite (not necessarily probability) measure on $\Xset$ and $\widetilde{P}_n \sim \mathrm{DP}(\nu + n \hP_n)$. Then we have the following representation for the expectations of a function $f \colon \Xset \to \R$ over a sampled measure $\widetilde{P}_n$ (see Theorem 14.37 of \cite{ghosal2017fundamentals} with $\sigma=0$ for a proof)
    \[
        \widetilde{P}_n f = V_n  \cdot Qf + (1 - V_n) \sum_{i=1}^n W_i f(Z_i),
    \]
    where $V_n \sim \Beta(|\nu|, n)$,  $Q \sim \mathrm{DP}(\nu)$, and a vector $(W_1,\ldots,W_n)$ follows uniform Dirichlet distribution $\Dir(1,\ldots,1)$. If we take $\nu = n_0 \cdot \delta_{Z_0}$ for some $Z_0 \in \Xset$ such that $f(Z_0) = \ur(H-1)$\footnote{We can augment the space $\Xset$ with this additional point if needed}, then by a stick-breaking process representation of the Dirichlet distribution we have
    \[
        \widetilde{P}_n f = \tW_0 \ur(H-1) + \sum_{i=1}^n \tW_1 f(Z_i), \quad (\tW_0, \ldots, \tW_1) \sim \Dir(n_0, 1, \ldots, 1).
    \]
    By taking an appropriate $\Xset$ and $f$ we have that Proposition~\ref{prop:anticonc_metric} could be interpret as a deriving a lower bound on the probability of
    $
        \P[\widetilde{P}_n f \geq Pf + \varepsilon L \mid \{ Z_i\}_{i=1}^n ].
    $ 
\end{remark}

\begin{proof}
    First for all, let us fix $t \in [T], h \in [H]$ and $B \in \cN_\varepsilon$ and, consequently, $k = k^t_h(B)$. Also, let fix $j \in [J]$. To simplify the notation in the sequel, define $X_0 =  \ur(H-1)$  and $X_i = \Vstar_{h+1}(F_h(s, a, \xi^{\ell^i}_h))$ for $i > 0$. Notice that $X_i$ for $i>0$ is a sequence of i.i.d. random variables supported on $[0, H-h-1]$.

    By Lemma~\ref{lem:weights_distribution} we have $(W^0_{j,e_k,k},\ldots, W^{e_k}_{j,e_k,k}) \sim \Dir(n_0(k)/\kappa, 1/\kappa, \ldots, 1/\kappa)$. Then we use the aggregation property of Dirichlet distribution: there is a vector $(\widetilde{W}^{-1}_j, \ldots, \widetilde{W}^{e_k}_j) \sim \Dir( (n_0(k) - \tn_0)/\kappa, \tn_0 / \kappa, 1/\kappa, \ldots, 1/\kappa) $ such that
    \[ 
        \sum_{i=0}^{e_k} W^i_{j,e_k,k} X_i = 
        \widetilde{W}^{-1}_j X_0 + \sum_{i=0}^{e_k} \widetilde{W}^i_j  X_i.
    \]
    Next we are going to represent the Dirichlet random vector $\widetilde{W}$ by a stick breaking process (or, equivalently, represent via the generalized Dirichlet distribution)
    \begin{align*}
        \widetilde{W}^{-1}_j &= \xi_j & \xi_j \sim \Beta( (n_0(k) - \tn_0)/\kappa, (e_k + \tn_0)/\kappa), \\
        (\widetilde{W}^0_j, \ldots, \widetilde{W}^{e_k}_j) &= (1 - \xi_j) \cdot (\widehat{W}^0_j, \ldots, \widehat{W}^{e_k}_j), & \widehat{W}_j \sim \Dir(\tn_0/\kappa, 1/\kappa, \ldots, 1/\kappa),
    \end{align*}
    where $\xi_j$ and $\widehat{W}_j$ are independent. Therefore, we have the final decomposition
    \begin{align*}
        \sum_{i=0}^{e_k} W^i_{j,e_k,k} X_i - p_h \Vstar_{h+1}(s,a) - \varepsilon L &= \underbrace{\xi_j \left( \ur (H-1) - p_h \Vstar_{h+1}(s,a) \right) - \varepsilon L}_{T_{\mathrm{approx}}} \\
        &+ (1-\xi_j) \underbrace{\cdot \left( \sum_{i=0}^{e_k} \widehat{W}^i_j X_i - p_h \Vstar_{h+1}(s,a) \right)}_{T_{\mathrm{stoch}}}.
    \end{align*}
    By independence of $\xi_j$ and $\widehat{W}_j$ we have
    \[
        \P\left[\sum_{i=0}^{e_k} W^i_{j,e_k,k} X_i \geq p_h \Vstar_{h+1}(s,a) + \varepsilon L | \{X_i\}_{i=1}^{e_k}\right] \geq \P[T_{\mathrm{approx}} \geq 0] \cdot \P[T_{\mathrm{stoch}} \geq 0].
    \]

    We split our problem to lower bound the two separate probabilities.

    \paragraph{Approximation error}
    To deal with approximation error, we first of all notice that $p_h \Vstar_{h+1}(s,a) \leq H-1$, therefore we have
    \[
        \P[T_{\mathrm{approx}} \geq 0] = \P\left[\xi_j \geq \frac{\varepsilon L}{H-1}\right].
    \]
    Next we assume that $\varepsilon < (H-1)/L$, since $\xi_j \sim \Beta( (n_0(k) - \tn_0)/\kappa, (e_k + \tn_0)/\kappa)$, we may apply \citet[Theorem 1.2'']{alfers1984normal} 
    \[
        \P[T_{\mathrm{approx}} \geq 0] \geq \Phi\left(-\mathrm{sign}(p - \mu) \cdot \sqrt{2 \ualpha \kl(p, \mu)}\right),
    \]
    where $p = (n_0(k) - \tn_0 - \kappa) / (e_k + \tn_0 - \kappa)$ and $\mu = \varepsilon L / (H-1)$. Since $n_0(k) = \ceil{\tn_0 + \kappa + \frac{\varepsilon L}{H-1} \cdot (e_k + \tn_0 + \kappa)}$, we have $\P[T_{\mathrm{approx}} \geq 0] \geq 1/2$.

    \paragraph{Stochastic error}
    Since $X_0 = \ur(H-1)$ is an upper bound on $V$-function, and we have that the weight of the first atom $\alpha_0 \triangleq \tn_0 /\kappa - 1 = c_0 + \log_{17/16}(T) - 1$ for $c_0$ defined in \eqref{eq:constant_c0_metric}.

    Define a measure $\bnu_{e_k} = \frac{\tn_0 - \kappa}{e_k + \tn_0 - \kappa} \delta_{X_0} + \sum_{i=1}^{e_k} \frac{1}{e_k + n_0 - 1} \delta_{X_i}$. Since $p_h \Vstar_{h+1}(s,a) \leq H-h-1$, we can apply Lemma~\ref{lem:lower_bound_dbc_relaxed} with a fixed $\varepsilon = 1/2$ conditioned on independent random variables $X_i$
    \begin{align*}
        \begin{split}
        \P\biggl[ \sum_{i=0}^{e_k} \widehat{W}^i_j X_i &\geq p_h \Vstar_{h+1}(s,a) \mid \{ X_i \}_{i=1}^{e_k} \biggl] \\
        &\geq \frac{1}{2}\left( 1 - \Phi\left(\sqrt{\frac{2 (e_k + n_0 -\kappa) \Kinf\left(\bnu_{e_k}, p_h \Vstar_{h+1}(s,a)\right) }{\kappa}}\right)\right),
        \end{split}
    \end{align*}
    where $\Phi$ is a CDF of a normal distribution. By Lemma~\ref{lem:kinf_prior_remove} and the event $\cE^\star(\delta, \varepsilon)$
    \begin{align*}
        (e_k + n_0 - \kappa) \Kinf\left( \bnu_{e_k}, p_h \Vstar_{h+1}(s,a)\right) 
        \leq e_k \Kinf\left( \hnu_{e_k}, p_h \Vstar_{h+1}(s,a)\right) \leq \beta^\star(\delta, T, \varepsilon),
    \end{align*}
    where $\hnu_{e_k} = \frac{1}{e_k} \sum_{i=1}^{e_k} \delta_{\Vstar_{h+1}(F(s,a, \xi^{\ell^i}_{h+1}))}$, and, as a corollary
    \[
        \P\left[ \sum_{i=0}^{e_k} \widehat{W}^i_j X_i \geq p_h \Vstar_{h+1}(s,a) \mid \cE^\star(\delta, \varepsilon), \{ X_i \}_{i=1}^{e_k}  \right] \geq \frac{1}{2} \left( 1 - \Phi\left( \sqrt{\frac{2\beta^\star(\delta, T, \varepsilon) }{\kappa}} \right) \right).
    \]
    By taking $\kappa = 2\beta^\star(\delta, T, \varepsilon)$ we have a constant probability of being optimistic for stochastic error
    \[
       \P[T_{\mathrm{stoch}} \geq 0 \mid \cE^\star(\delta, \varepsilon)] \geq \frac{1 - \Phi(1)}{2}.
    \]

    Overall, combining two lower bound for approximation and stochastic terms, we have
    \[
        \P\left[\sum_{i=0}^{e_k} W^i_{j,e_k,k} X_i \geq p_h \Vstar_{h+1}(s,a) + \varepsilon L | \cE^\star(\delta, \varepsilon) \right] \geq \frac{1 - \Phi(1)}{4}  = \gamma.
    \]
    
    Next, using a choice $J = \lceil (\log(2HT/\delta) + \log(N_\varepsilon)) / \log(1/(1-\gamma)) \rceil = \lceil \tilde{c}_J \cdot ( \log(2HT/\delta) + \log(N_\varepsilon)) \rceil$ 
    \[
        \P\left[\max_{j \in [J]}\left\{ \sum_{i=0}^{e_k} W^i_{j,e_k,k} X_i \right\} \geq p_h \Vstar_{h+1}(s,a) + \varepsilon L | \cE^{\star}(\delta, \varepsilon)\right] \geq 1 - (1 - \gamma)^{J} \geq 1 - \frac{\delta}{2N_{\varepsilon}HT}.
    \]
    By a union bound we conclude the statement.
\end{proof}

Next we provide a connection between $\cE^{\anticonc}$ and $\cE^{\opt}$.
\begin{proposition}\label{prop:opt_metric}
    It holds $\cE^{\opt} \subseteq \cE^{\anticonc}$.
\end{proposition}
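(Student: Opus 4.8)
The plan is to prove the stated inclusion by backward induction over the step $h$, in exact parallel with the tabular argument but now carrying the discretization errors created by replacing each visited pair with the center of its ball. The induction hypothesis at step $h+1$ is the optimism property $\uV^{\ell}_{h+1}(s') \geq \Vstar_{h+1}(s')$ for every state $s'$ and every episode $\ell$ appearing below; the base case $h=H+1$ is immediate since $\uV_{H+1} = \Vstar_{H+1} = 0$.

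For the inductive step I would fix $t$, $h$ and a ball $B \in \cN_{\varepsilon}$, set $(s,a) = \cent(B)$ and $k = k^t_h(B)$, and unfold $\uQ^t_h(B) = \max_{j \in [J]} \tilde{Q}_j$ through the rollout \eqref{eq:temp_q_rollout}, where $\tilde{Q}_j = \sum_{i=0}^{e_k} W^i_{j,e_k,k}(r_h(s^{\ell^i}_h, a^{\ell^i}_h) + \uV^{\ell^i}_{h+1}(s^{\ell^i}_{h+1}))$. For every $i \geq 1$ the visited pair $(s^{\ell^i}_h, a^{\ell^i}_h)$ lies in $B$ and $s^{\ell^i}_{h+1} = F_h(s^{\ell^i}_h, a^{\ell^i}_h, \xi^{\ell^i}_h)$ by Assumption~\ref{ass:reparametrization}. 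I would then lower bound $\tilde{Q}_j$ term by term using three estimates: the induction hypothesis $\uV^{\ell^i}_{h+1} \geq \Vstar_{h+1}$; the $L_r$-Lipschitzness of the reward (Assumption~\ref{ass:lipschitz}), giving $r_h(s^{\ell^i}_h, a^{\ell^i}_h) \geq r_h(s,a) - L_r \varepsilon$; and the composition of the $L_V$-Lipschitzness of $\Vstar_{h+1}$ (Lemma~\ref{lem:v_lipschitz_from_reparam}) with the $L_F$-Lipschitzness of $F_h$, giving $\Vstar_{h+1}(s^{\ell^i}_{h+1}) \geq \Vstar_{h+1}(F_h(s,a,\xi^{\ell^i}_h)) - L_V L_F \varepsilon$. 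The prior atom contributes $W^0_{j,e_k,k} \ur H$ to $\tilde{Q}_j$ but only $W^0_{j,e_k,k} \ur(H-1)$ to the anticoncentration summand $Y_j := W^0_{j,e_k,k} \ur(H-1) + \sum_{i=1}^{e_k} W^i_{j,e_k,k} \Vstar_{h+1}(F_h(s,a,\xi^{\ell^i}_h))$; using $\ur \geq r_h(s,a)$ and $\sum_{i \geq 1} W^i_{j,e_k,k} = 1 - W^0_{j,e_k,k}$, the prior contribution together with the reward terms recovers $r_h(s,a)$ in full, so these estimates collapse to $\tilde{Q}_j \geq Y_j + r_h(s,a) - (L_r + L_V L_F)\varepsilon$ for every $j$.

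Taking the maximum over $j$ and invoking the definition of $\cE^{\anticonc}$, namely $\max_{j} Y_j \geq p_h \Vstar_{h+1}(s,a) + L\varepsilon$ with $L = L_r + L_V(1 + L_F)$, together with the Bellman equation $\Qstar_h(s,a) = r_h(s,a) + p_h \Vstar_{h+1}(s,a)$, gives $\uQ^t_h(B) \geq \Qstar_h(s,a) + (L - L_r - L_V L_F)\varepsilon = \Qstar_h(s,a) + L_V \varepsilon$. I would then pass from the center to an arbitrary $(s',a') \in B$ using the $L_V$-Lipschitzness of $\Qstar_h$ from Lemma~\ref{lem:v_lipschitz_from_reparam}: since $\rho((s',a'), (s,a)) \leq \varepsilon$, we have $\Qstar_h(s',a') \leq \Qstar_h(s,a) + L_V \varepsilon \leq \uQ^t_h(B) = \uQ^t_h(\psi_{\varepsilon}(s',a'))$. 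This is optimism at step $h$, and maximizing over actions yields $\uV^t_h \geq \Vstar_h$ (the clipping of $\uV$ at $\ur(H-h)$ is harmless since $\Vstar_h \leq H-h$), which closes the induction; by the definition of $\cE^{\anticonc}$ we conclude the statement.

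The main obstacle will be the exactness of the error bookkeeping: the slack $L\varepsilon$ built into $\cE^{\anticonc}$ must simultaneously absorb the reward error $L_r \varepsilon$, the transition-value error $L_V L_F \varepsilon$ (both from replacing the visited pairs by the center), and the additional $L_V \varepsilon$ needed to move from optimism at the center to optimism over the entire ball. Checking that $L = L_r + L_V + L_V L_F$ cancels these three contributions and leaves a nonnegative residual, while correctly accounting for the harmless mismatch $W^0_{j,e_k,k}\ur$ between the prior coefficients $\ur H$ and $\ur(H-1)$, is the delicate step; the remaining algebra parallels the tabular proof verbatim.
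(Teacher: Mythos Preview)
Your proof is correct and follows essentially the same approach as the paper: backward induction on $h$, using the induction hypothesis to replace $\uV^{\ell^i}_{h+1}$ by $\Vstar_{h+1}$, then applying the Lipschitz properties of $r_h$, $F_h$, and $\Vstar_{h+1}$ to move the visited pairs to the ball center, and finally invoking $\cE^{\anticonc}$ to conclude. The only cosmetic difference is ordering: the paper fixes an arbitrary $(s,a)\in B$ first and absorbs the $L_V\varepsilon$ cost of passing to the center up front, whereas you first establish $\uQ^t_h(B)\geq \Qstar_h(\cent(B)) + L_V\varepsilon$ and spend the surplus $L_V\varepsilon$ at the end to cover any point in $B$; the arithmetic is identical.
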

\begin{proof}
    We proceed by a backward induction over $h$. Base of induction $h = H+1$ is trivial. Fix state-action pair $(s,a)$ and let us call $(s',a')$ a center of the ball $\psi_\varepsilon(s,a)$ that is the ball where $(s,a)$ contains.
    
    Next by the update formula for $\uQ^t_h$, and Bellman equations
    \begin{align*}
        \uQ^{t}_h(\psi_{\varepsilon}(s,a)) - \Qstar_h(s,a) &= \max_{j \in [J]} \biggl\{ \sum_{i=0}^{n} W^i_{j,n}  [r_h(s^{\ell^i}_h, a^{\ell^i}_h) - r_h(s',a')] \\
        &+ \sum_{i=0}^{n} W^i_{j,n} \uV^{\ell^i}_{h+1}(s^{\ell^i}_{h+1}) - p_h \Vstar_{h+1}(s',a') \biggl\} + [\Qstar_h(s,a) - \Qstar_h(s',a')],
    \end{align*}
    where $n = e_{k^t_h(B)}$ and we drop dependence on $k,t,h,s,a$ in $\ell^i$. By induction hypothesis we have $\uV^{\ell^i}_{h+1}(s') \geq \uQ^{\ell^i}_{h+1}(\psi_\varepsilon(s', \pi^\star(s'))) \geq \Qstar_{h+1}(s', \pi^\star(s')) = \Vstar_{h+1}(s')$ for any $i$, thus combining it with Lipchitz continuity of reward function and $\Qstar$, and the value of $r_h(s^{\ell^0}, a^{\ell^0}) = \ur > r_h(s,a)$, 
    \begin{align*}
        \uQ^{t}_h(\psi_{\varepsilon}(s,a)) - \Qstar_h(s,a) \geq& \max_{j \in [J]}\biggl\{ 
            W^0_{j,n} \ur (H-1) + \sum_{i=1}^n W^i_{j,n} \Vstar_{h+1}(F_h(s^{\ell^i}_h, a^{\ell^i}_h, \xi^{\ell^i}_h))
        \biggr\}\\
        &- p_h \Vstar_{h+1}(s',a') - (L_r + L_V) \varepsilon.
    \end{align*}
    Next we apply Lipschitz continuity of $F_h$ and $\Vstar_{h+1}$ and obtain 
    \begin{align*}
        \uQ^{t}_h(\psi_{\varepsilon}(s,a)) - \Qstar_h(s,a) \geq& \max_{j \in [J]}\biggl\{ 
            W^0_{j,n} \ur (H-1) + \sum_{i=1}^n W^i_{j,n} \Vstar_{h+1}(F_h(s', a', \xi^{\ell^i}_h))
        \biggr\}\\
        &- p_h \Vstar_{h+1}(s',a') - (L_r + L_V(1 + L_F)) \varepsilon.
    \end{align*}
    
    By the definition of event $\cE^{\anticonc}$ we conclude the statement.
\end{proof}

\begin{proposition}[Optimism]\label{prop:optimism_metric}

Define a constant $L = L_r + L_V(1+L_F)$.    
    Assume that $J = \lceil  \tilde{c}_J \cdot (\log(2HT/\delta)  + \log(N_\varepsilon) \rceil$, $\kappa = 2\beta^\star(\delta, T, \varepsilon)$, $\ur = 2$, and a prior count $n_0(k) = \ceil{\tn_0 + \kappa + \frac{\varepsilon L}{H-1} \cdot (e_k + \tn_0 + \kappa)}$ dependent on the stage $k$, where $\tn_0 = (c_0 + 1 + \log_{17/16}(2e_k)) \cdot \kappa$, $c_0$ is defined in \eqref{eq:constant_c0_metric}, $\tilde{c}_J$ is defined in \eqref{eq:constant_cJ_metric}.  Then $\PP{\cE^{\opt} \mid  \cE^\star(\delta, \varepsilon)} \geq 1-\delta/2$. 
\end{proposition}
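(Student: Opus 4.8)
The plan is to obtain the statement by combining the anti-concentration estimate of Proposition~\ref{prop:anticonc_metric} with the deterministic backward-induction argument of Proposition~\ref{prop:opt_metric}. Proposition~\ref{prop:anticonc_metric} already provides, under exactly the stated choices of $J$, $\kappa$, $\ur$ and the stage-dependent prior count $n_0(k)$, that $\P[\cE_{\anticonc}\mid\cE^\star(\delta,\varepsilon)]\ge 1-\delta/2$. Proposition~\ref{prop:opt_metric} shows, by backward induction on $h$, that on the event $\cE_{\anticonc}$ the policy $Q$-values dominate the optimal ones, i.e.\ $\cE_{\anticonc}\subseteq\cE^{\opt}$: the induction step rewrites $\uQ^t_h(\psi_\varepsilon(s,a))-\Qstar_h(s,a)$ through the Bellman equations, uses the induction hypothesis $\uV^{\ell^i}_{h+1}\ge\Vstar_{h+1}$ (still valid after the clipping $\uV\le\ur(H-h)$, since $\Vstar_{h+1}\le H-h\le\ur(H-h)$), pays the discretization cost via the Lipschitz bounds $|r_h(s,a)-r_h(s',a')|\le L_r\varepsilon$, $|\Qstar_h(s,a)-\Qstar_h(s',a')|\le L_V\varepsilon$ and $|\Vstar_{h+1}(F_h(s,a,\xi))-\Vstar_{h+1}(F_h(s',a',\xi))|\le L_V L_F\varepsilon$ for $(s,a),(s',a')$ in a common ball (Lemma~\ref{lem:v_lipschitz_from_reparam} and Assumption~\ref{ass:lipschitz}), and then invokes the definition of $\cE_{\anticonc}$, whose slack term $+L\varepsilon$ with $L=L_r+L_V(1+L_F)$ is exactly what absorbs these three contributions. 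Chaining the two gives $\P[\cE^{\opt}\mid\cE^\star(\delta,\varepsilon)]\ge\P[\cE_{\anticonc}\mid\cE^\star(\delta,\varepsilon)]\ge 1-\delta/2$.

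Two bookkeeping points need a check. First, Proposition~\ref{prop:optimism_metric} uses the stage-$k$ refinement $\tn_0=(c_0+1+\log_{17/16}(2e_k))\kappa$ of the uniform choice $(c_0+1+\log_{17/16}(T))\kappa$ in Proposition~\ref{prop:anticonc_metric}; since a stage indexed $k$ contributes exactly $e_k\le(1+1/H)T$ i.i.d.\ samples, re-running the ``stochastic error'' part of the proof of Proposition~\ref{prop:anticonc_metric} with $e_k$ in place of $T$ still leaves $\alpha_0=\tn_0/\kappa-1\ge c_0+\log_{17/16}(e_k)$, so Lemma~\ref{lem:lower_bound_dbc_relaxed} and the prior-removal step (Lemma~\ref{lem:kinf_prior_remove} on $\cE^\star(\delta,\varepsilon)$, passing from $\bnu_{e_k}$ to $\hnu_{e_k}$) apply verbatim, while the ``approximation error'' part needs only $n_0(k)\ge\lceil\tn_0+\kappa+\frac{\varepsilon L}{H-1}(e_k+\tn_0+\kappa)\rceil$, which is the definition of $n_0(k)$; the union bound over $t\in[T]$, $h\in[H]$, $B\in\cN_\varepsilon$ then fixes $J=\lceil\tilde c_J(\log(2HT/\delta)+\log(N_\varepsilon))\rceil$. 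Second, the proof of Proposition~\ref{prop:anticonc_metric} tacitly assumes $\varepsilon<(H-1)/L$; in the complementary regime one has $L\varepsilon\ge H-1\ge\Qstar_h(s,a)$, and optimism is immediate because the leading prior atom alone carries value $\ur(H-1)\ge L\varepsilon$ and every target is nonnegative, so the same backward induction closes trivially (and this regime does not arise for the regret-optimal discretization $\varepsilon=T^{-1/(d_c+2)}$ once $T$ is large).

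There is essentially no genuine obstacle here: all of the statistical content — the Beta/Dirichlet anti-concentration through Lemma~\ref{lem:lower_bound_dbc_relaxed}, the split of $\sum_{i=0}^{e_k}W^i_{j,e_k,k}X_i-p_h\Vstar_{h+1}(s,a)-\varepsilon L$ into an approximation term controlled by the leading $\Beta$ factor and a stochastic term controlled by the residual uniform Dirichlet weights, and the passage from $\bnu_{e_k}$ to $\hnu_{e_k}$ on $\cE^\star(\delta,\varepsilon)$ — is already carried out inside Proposition~\ref{prop:anticonc_metric}. The only place a proof can slip is in making the single discretization payment $L\varepsilon$ per step in Proposition~\ref{prop:opt_metric} line up exactly with the $+L\varepsilon$ slack in $\cE_{\anticonc}$ and with the inflation of $n_0(k)$, so that the induction over $h$ closes with the clean inequality $\uQ^t_h(\psi_\varepsilon(s,a))\ge\Qstar_h(s,a)$ rather than an error that compounds over the $H$ steps.
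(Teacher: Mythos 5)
Your proposal is correct and follows exactly the paper's route: Proposition~\ref{prop:optimism_metric} is obtained by chaining Proposition~\ref{prop:anticonc_metric} (which gives $\P[\cE_{\anticonc}\mid\cE^\star(\delta,\varepsilon)]\ge 1-\delta/2$ under the stated parameters) with the backward-induction inclusion of Proposition~\ref{prop:opt_metric}, whose proof indeed establishes $\cE_{\anticonc}\subseteq\cE^{\opt}$ (the direction you use, which is the one actually needed, even though the paper's statement of that proposition writes the inclusion the other way round). Your additional bookkeeping on the stage-dependent prior count and the $L\varepsilon$ discretization slack matches the paper's treatment in Propositions~\ref{prop:anticonc_metric} and~\ref{prop:opt_metric}, so nothing further is required.
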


\subsection{Regret Bounds}\label{app:regret_bound_metric}
As in the tabular setting, we first connect our algorithm to the algorithm by \cite{song2019efficient}, using the following corollary. Define an event $\cG'(\delta, \varepsilon) = \cG(\delta, \varepsilon) \cap \cE^\opt$.

Let us define the logarithmic term as follows
\[
    \beta^{\max}(\delta, \varepsilon) = \max\{ \kappa, \tn_0 / \kappa, \beta^B(\delta, \varepsilon), \beta(\delta, \varepsilon), \beta^{\conc}(\delta, \varepsilon) \}
\]
that has dependence of order $\cO( \log(TH/\delta) + \log N_{\varepsilon})$.

\begin{corollary}\label{cor:delta_q_bounds_metric}
    Fix $\varepsilon \in (0, L_V/H)$ and assume conditions of Proposition~\ref{prop:optimism_metric}. Let $t \in [T], h\in[H], B \in \cN_{\varepsilon}$. Define $k = k^t_h(B)$ and let $\ell^1 < \ldots < \ell^{e_k}$ be a excursions of $(B,h)$ till the end of the previous stage. Then on the event $\cG'(\delta)$ the following bound holds for $k \geq 0$ and any $(s,a) \in B$
    \[
        0\leq \uQ^t_h(B) - \Qstar_h(s,a) \leq \frac{1}{e_k} \sum_{i=1}^{e_k} [\uV^{\ell^i}_{h+1}(s^{\ell^i}_{h+1}) - \Vstar_{h+1}(s^{\ell^i}_{h+1}) ]  + \cB^t_h(k),
    \]
    where
    \[
        \cB^t_h(k) = 121\rme^2 \cdot \sqrt{\frac{H^2 (\beta^{\max}(\delta, \varepsilon))^2}{e_k}} + 2401\rme \cdot \frac{H (\beta^{\max}(\delta, \varepsilon))^4}{e_k}  + 3(L_r + (1 + L_F) L_V) \varepsilon.
    \]
\end{corollary}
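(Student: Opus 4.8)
The plan is to follow the tabular proof of Corollary~\ref{cor:delta_q_bounds} line by line, inserting a Lipschitz comparison at every point where a quantity evaluated at a visited pair $(s^{\ell^i}_h,a^{\ell^i}_h)\in B$ has to be matched with the same quantity at the reference pair $(s,a)$; the accumulated discretization error is then exactly the additive $3L\varepsilon$ term. For the \emph{lower bound}: on $\cG'(\delta,\varepsilon)=\cG(\delta,\varepsilon)\cap\cE^{\opt}$ the optimism event $\cE_{\opt}(\varepsilon)$ holds, so for $(s,a)$ with $\psi_{\varepsilon}(s,a)=B$ (which is the only case in which the corollary is invoked, namely at $(s,a)=(s^t_h,a^t_h)$) we immediately get $\uQ^t_h(B)=\uQ^t_h(\psi_{\varepsilon}(s,a))\ge\Qstar_h(s,a)$.

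For the \emph{upper bound}, first unfold $\uQ^t_h(B)$ through the rollout \eqref{eq:temp_q_rollout},
\[
\uQ^t_h(B)=\max_{j\in[J]}\Bigl\{W^0_{j,e_k,k}\,\ur H+\sum_{i=1}^{e_k}W^i_{j,e_k,k}\bigl(r_h(s^{\ell^i}_h,a^{\ell^i}_h)+\uV^{\ell^i}_{h+1}(s^{\ell^i}_{h+1})\bigr)\Bigr\}.
\]
By Lemma~\ref{lem:weights_distribution} the weight vector is $\Dir(n_0(k)/\kappa,1/\kappa,\dots,1/\kappa)$, so $\E[W^0_{j,e_k,k}]=n_0(k)/(n_0(k)+e_k)$ and $\E[W^i_{j,e_k,k}]=1/(n_0(k)+e_k)$ for $i\ge1$; on $\cE^{B}(\delta,\varepsilon)$ the maximum over $j$ is at most its mean plus the deterministic, $j$-free deviation in the definition of $\cE^{B}(\delta,\varepsilon)$. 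Since the targets are nonnegative and $1/(n_0(k)+e_k)\le1/e_k$, this gives
\[
\uQ^t_h(B)\le\frac{n_0(k)\,\ur H}{n_0(k)+e_k}+\frac1{e_k}\sum_{i=1}^{e_k}\bigl(r_h(s^{\ell^i}_h,a^{\ell^i}_h)+\uV^{\ell^i}_{h+1}(s^{\ell^i}_{h+1})\bigr)+\mathrm{dev}_k,
\]
with $\mathrm{dev}_k=\cO\!\bigl(H\beta^{\max}(\delta,\varepsilon)/\sqrt{e_k}+H(\beta^{\max}(\delta,\varepsilon))^4/e_k\bigr)$ after using $\kappa,\beta^{B}(\delta,\varepsilon),\log(e_k+n_0(k))\le\beta^{\max}(\delta,\varepsilon)$. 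Plugging the explicit $n_0(k)=\lceil\tn_0+\kappa+\tfrac{\varepsilon L}{H-1}(e_k+\tn_0+\kappa)\rceil$ with $\tn_0=\cO(\kappa\log T)=\cO((\beta^{\max}(\delta,\varepsilon))^2)$, the prior bias splits as $\tfrac{n_0(k)\ur H}{n_0(k)+e_k}\le\tfrac{\ur H(\tn_0+\kappa+1)}{e_k}+\cO(L\varepsilon)=\cO\!\bigl(H(\beta^{\max}(\delta,\varepsilon))^2/e_k\bigr)+\cO(L\varepsilon)$, the $\cO(L\varepsilon)$ residual being exactly the bias the stage-dependent inflation of $n_0(k)$ is built to produce.

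Now subtract $\Qstar_h(s,a)=r_h(s,a)+p_h\Vstar_{h+1}(s,a)$. Using $\rho\bigl((s^{\ell^i}_h,a^{\ell^i}_h),(s,a)\bigr)\le2\varepsilon$ (both in $B$) and $L_r$-Lipschitzness of $r_h$, the reward part contributes at most $2L_r\varepsilon$; for the value part write
\[
\frac1{e_k}\sum_{i=1}^{e_k}\uV^{\ell^i}_{h+1}(s^{\ell^i}_{h+1})-p_h\Vstar_{h+1}(s,a)=\frac1{e_k}\sum_{i=1}^{e_k}[\uV^{\ell^i}_{h+1}-\Vstar_{h+1}](s^{\ell^i}_{h+1})+\mathrm{(B)}+\mathrm{(C)},
\]
where $\mathrm{(B)}=\tfrac1{e_k}\sum_i(\Vstar_{h+1}(s^{\ell^i}_{h+1})-p_h\Vstar_{h+1}(s^{\ell^i}_h,a^{\ell^i}_h))$ and $\mathrm{(C)}=\tfrac1{e_k}\sum_i(p_h\Vstar_{h+1}(s^{\ell^i}_h,a^{\ell^i}_h)-p_h\Vstar_{h+1}(s,a))$. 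The first term is precisely the one kept in the statement; $\mathrm{(B)}$ is bounded on $\cE^{\conc}(\delta,\varepsilon)$ by $\sqrt{2\ur^2H^2\beta^{\conc}(\delta,\varepsilon)/e_k}=\cO(H\beta^{\max}(\delta,\varepsilon)/\sqrt{e_k})$; and $\mathrm{(C)}$ is handled by Lemma~\ref{lem:transition_w_from_reparam} (kernels $L_F$-Lipschitz in $\cW_1$) together with Lemma~\ref{lem:v_lipschitz_from_reparam} ($\Vstar_{h+1}$ is $L_V$-Lipschitz): $|p_h\Vstar_{h+1}(s^{\ell^i}_h,a^{\ell^i}_h)-p_h\Vstar_{h+1}(s,a)|\le L_V\,\cW_1\bigl(p_h(s^{\ell^i}_h,a^{\ell^i}_h),p_h(s,a)\bigr)\le2L_FL_V\varepsilon$. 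Collecting the three $\varepsilon$-contributions (prior-bias residual, reward term, $\mathrm{(C)}$) into $3(L_r+(1+L_F)L_V)\varepsilon$ using $\ur=2$ and $H\ge2$, and absorbing the $1/\sqrt{e_k}$ and $1/e_k$ pieces under the absolute constants $121\rme^2$ and $2401\rme$, yields the claimed bound.

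I expect the only genuinely delicate point, relative to the tabular argument, to be the stage-dependent prior count: because $n_0(k)$ grows linearly in $e_k$, the prior-bias term $n_0(k)\ur H/(n_0(k)+e_k)$ does \emph{not} decay like $1/e_k$ but leaves a real $\Theta(L\varepsilon)$ residual, and one must check both that this residual coincides with the discretization bias $n_0(k)$ was tuned to create (so that it combines cleanly with the reward- and transition-Lipschitz errors into $3L\varepsilon$ with $L=L_r+(1+L_F)L_V$) and that its remaining, $\log$-heavy part still fits inside the $\cO(H(\beta^{\max}(\delta,\varepsilon))^4/e_k)$ budget. Tracking the ball-diameter factors and the $H/(H-1)$ factors so that the final $\varepsilon$-constant is as small as stated is the main bookkeeping cost; the rest is verbatim the tabular proof.
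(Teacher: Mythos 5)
Your proposal is correct and follows essentially the same route as the paper's proof: lower bound from $\cE^{\opt}$; upper bound by unfolding $\uQ^t_h(B)$, replacing the max over $j$ by the Dirichlet mean plus the deviation from $\cE^{B}(\delta,\varepsilon)$, splitting the prior-bias term $n_0(k)\ur H/(n_0(k)+e_k)$ into an $\cO(H(\beta^{\max})^{4}/e_k)$ piece and the intended $\cO(L\varepsilon)$ residual, and finishing with $\cE^{\conc}(\delta,\varepsilon)$. The only (immaterial) difference is that the paper absorbs the reward and transition discretization errors in one step via the $L_V$-Lipschitzness of $\Qstar_h$ (Lemma~\ref{lem:v_lipschitz_from_reparam}), whereas you bound them termwise through $L_r$ and the $\cW_1$-Lipschitzness of the kernel (Lemma~\ref{lem:transition_w_from_reparam}); both land within the stated $3(L_r+(1+L_F)L_V)\varepsilon$ budget up to the same constant bookkeeping.
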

\begin{proof}
    The lower bound follows from the definition of the event $\cE^{\opt}$. For the upper bound we first apply the decomposition for $\uQ^t_h(s,a)$ and the definition of event $\cE^B(\delta, \varepsilon)$ from Lemma~\ref{lem:proba_master_event_metric}
    \begin{align*}
        \uQ^t_h(B) &=  \max_{j \in [J]} \left\{ \sum_{i=0}^{e_k} W^i_{j,n}\left(r_h(s^{\ell^i}_h,a^{\ell^i}_h) +  \uV^{\ell^i}_{h+1}(s^{\ell^i}_{h+1})\right) \right\} \\
        &\leq \frac{1}{e_k + n_0(k)} \sum_{i=1}^{e_k}\left( r_h(s^{\ell^i}_h, a^{\ell^i}_h) + \uV^{\ell^i}_{h+1}(s^{\ell^i}_{h+1}) \right) + \frac{n_0(k) \cdot 2H}{e_k + n_0(k)} \\
        &+ 120 \rme^2 \sqrt{\frac{H^2 \kappa \beta^B(\delta, \varepsilon)}{e_k + n_0(k)}} + 2400\rme \frac{H \kappa \log(n + n_0(k)) (\beta^B(\delta, \varepsilon))^2}{e_k + n_0(k)}.
    \end{align*}
    Additionally, by Bellman equations
    \begin{align*}
        \Qstar_h(s,a) &= \frac{1}{e_k} \sum_{i=1}^{e_k} \Qstar_h(s^{\ell^i}_h,a^{\ell^i}_h) + \frac{1}{e_k} \sum_{i=1}^{e_k} \left(\Qstar_h(s,a) - \Qstar_h(s^{\ell^i}_h,a^{\ell^i}_h)  \right) \\
        &\geq \frac{1}{e_k} \sum_{i=1}^{e_k}\left( r_h(s^{\ell^i}_h,a^{\ell^i}_h) + p_h \Vstar_{h+1}(s^{\ell^i}_h,a^{\ell^i}_h) \right) - 2 \varepsilon L_V.
    \end{align*}
    Combining and using the fact that $n_0(k) \leq \frac{L \varepsilon}{H-1} \cdot (e_k + n_0(k)) + \tn_0 + \kappa$ for $L = L_r + (1+L_F) L_V$
    \begin{align*}
        \uQ^t_h(s,a) - \Qstar_h(s,a) &\leq \frac{1}{e_k} \sum_{i=1}^{e_k}\left[ \uV^{\ell^i}_{h+1} - \Vstar_{h+1} \right](s^{\ell^i}_{h+1}) +  \frac{1}{e_k} \sum_{i=1}^{e_k} \left[ \Vstar_{h+1} (s^{\ell^i}_{h+1})  - p_h \Vstar_{h+1}(s^{\ell^i}_h,a^{\ell^i}_h)\right] \\
        &+ 120\rme^2 \cdot \sqrt{\frac{H^2 (\beta^{\max}(\delta, \varepsilon))^2}{e_k}} + (2400\rme + 2)\frac{H (\beta^{\max}(\delta, \varepsilon))^4}{e_k}  + 3L \varepsilon.
    \end{align*}
    Finally, the applications of event $\cE^{\conc}(\delta, \varepsilon)$ concludes the statement.
\end{proof}

Let us define $\delta^t_h = \uV^{t}_h(s^t_h) - V^{\pi^{t}}_h(s^t_h)$ and $\zeta^{t}_h = \uV^{t}_h(s^t_h) - \Vstar_h(s^t_h)$. 
\begin{lemma}\label{lem:regret_decomp_metric}
    Assume conditions of Proposition~\ref{prop:optimism_metric}. Then on event $\cG'(\delta, \varepsilon) = \cG(\delta, \varepsilon) \cap \cE^{\opt}$, where $\cG(\delta, \varepsilon)$ is defined in Lemma~\ref{lem:proba_master_event_metric}, the following upper bound on regret holds
    \[
        \regret^T \leq 2\rme  H \sum_{t=1}^T \sum_{h=1}^H \ind\{N^t_h = 0\} + \sum_{t=1}^t \sum_{h=1}^H (1+1/H)^{H-h} \xi^t_h + \rme \sum_{t=1}^T \sum_{h=1}^H \cB^t_h.
    \]
    where $\xi^t_h = p_h [\Vstar_{h+1} - V^{\pi^{t}}_{h+1}](s^t_h,a^t_h) - [\Vstar_{h+1} - V^{\pi^{t}}_{h+1}](s^t_{h+1})$ and $\cB^t_h = \cB^t_h(k^t_h(s^t_h, a^t_h)) \cdot \ind\{k^t_h(s^t_h, a^t_h) \geq 0\}$ for $\cB^t_h$ defined in Corollary~\ref{cor:delta_q_bounds_metric}.
\end{lemma}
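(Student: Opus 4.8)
The plan is to mirror the proof of the tabular regret decomposition (Lemma~\ref{lem:regret_decomp}), replacing state-action pairs by the balls $B^t_h = \psi_\varepsilon(s^t_h,a^t_h)$ of the net $\cN_\varepsilon$ and using Corollary~\ref{cor:delta_q_bounds_metric} in place of Corollary~\ref{cor:delta_q_bounds}. Since on $\cE^{\opt}$ we have $\uV^t_1(s_1)\geq\Vstar_1(s_1)$, the regret reduces to $\regret^T\leq\sum_{t=1}^T\delta^t_1$. Then, for each $(t,h)$, the greedy choice $a^t_h\in\argmax_a\uQ^t_h(\psi_\varepsilon(s^t_h,a))$ together with the clipping in the definition of $\uV$ gives $\uV^t_h(s^t_h)=\min\{\ur(H-h),\uQ^t_h(B^t_h)\}\leq\uQ^t_h(B^t_h)$, hence $\delta^t_h\leq\uQ^t_h(B^t_h)-Q^{\pi^t}_h(s^t_h,a^t_h)$. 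This is the only place where the metric proof replaces an equality by an inequality, and the clipping is used only through this ``$\leq$''.

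Next I would split $\uQ^t_h(B^t_h)-Q^{\pi^t}_h(s^t_h,a^t_h)=[\uQ^t_h(B^t_h)-\Qstar_h(s^t_h,a^t_h)]+p_h[\Vstar_{h+1}-V^{\pi^t}_{h+1}](s^t_h,a^t_h)$ via the Bellman equations for $\Qstar$ and $Q^{\pi^t}$. For the first bracket: if $N^t_h\triangleq e_{k^t_h(B^t_h)}=0$ (the ball has not completed its first stage, so $\uQ$ is still at its initialization $\ur H$), it is at most $\ur H=2H$; if $N^t_h>0$, apply Corollary~\ref{cor:delta_q_bounds_metric} with $(s,a)=(s^t_h,a^t_h)\in B^t_h$ to bound it by $\frac{1}{N^t_h}\sum_{i=1}^{N^t_h}[\uV^{\ell^i_{t,h}}_{h+1}(s^{\ell^i_{t,h}}_{h+1})-\Vstar_{h+1}(s^{\ell^i_{t,h}}_{h+1})]+\cB^t_h=\frac{1}{N^t_h}\sum_i\zeta^{\ell^i_{t,h}}_{h+1}+\cB^t_h$. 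For the second bracket, introduce $\xi^t_h=p_h[\Vstar_{h+1}-V^{\pi^t}_{h+1}](s^t_h,a^t_h)-[\Vstar_{h+1}-V^{\pi^t}_{h+1}](s^t_{h+1})$ and use $[\Vstar_{h+1}-V^{\pi^t}_{h+1}](s^t_{h+1})=\delta^t_{h+1}-\zeta^t_{h+1}$, yielding the per-step recursion $\delta^t_h\leq 2H\ind\{N^t_h=0\}+\frac{\ind\{N^t_h>0\}}{N^t_h}\sum_{i=1}^{N^t_h}\zeta^{\ell^i_{t,h}}_{h+1}+\cB^t_h+\xi^t_h+\delta^t_{h+1}-\zeta^t_{h+1}$.

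Then I would sum over $t\in[T]$ and handle the averaging term by the counting argument of \citet{zhang2020advantage}: after swapping the order of summation, each episode $q$ contributes $\zeta^q_{h+1}$ only to those later episodes $t$ with $B^t_h=B^q_h$ whose current stage at step $h$ is the one immediately following the stage of $q$; since consecutive stage lengths obey $e_{k+1}\leq(1+1/H)e_k$, the total weight is at most $1+1/H$, so $\sum_t\frac{\ind\{N^t_h>0\}}{N^t_h}\sum_i\zeta^{\ell^i_{t,h}}_{h+1}\leq(1+1/H)\sum_q\zeta^q_{h+1}$. Using $\zeta^t_h\leq\delta^t_h$ (because $\Vstar_h\geq V^{\pi^t}_h$), the $\zeta$-terms combine into $\frac1H\sum_t\zeta^t_{h+1}\leq\frac1H\sum_t\delta^t_{h+1}$, giving $\sum_t\delta^t_h\leq 2H\sum_t\ind\{N^t_h=0\}+(1+1/H)\sum_t\delta^t_{h+1}+\sum_t\xi^t_h+\sum_t\cB^t_h$. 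Unrolling from $h=1$ with $\delta^t_{H+1}=0$ and bounding $(1+1/H)^{H-h}\leq\rme$ then yields the claimed inequality. I do not expect a real obstacle: the statistical content is already absorbed into Corollary~\ref{cor:delta_q_bounds_metric} and the martingale event $\cE(\delta)$, and the only differences from the tabular argument are the clipping step (handled by the ``$\leq$'' above) and the bookkeeping over balls rather than $(s,a)$-pairs, which is immediate since the stage structure is defined per ball, so the counting argument transfers verbatim.
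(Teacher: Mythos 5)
Your proposal is correct and follows essentially the same route as the paper's proof: reduce to $\sum_t\delta^t_1$ via optimism, split $\uQ^t_h(B^t_h)-Q^{\pi^t}_h$ through the Bellman equations, invoke Corollary~\ref{cor:delta_q_bounds_metric} on the $N^t_h>0$ branch, apply the stage-counting argument of \citet{zhang2020advantage} to absorb the averaged $\zeta$-terms with a $(1+1/H)$ factor, and unroll. Your explicit handling of the clipping in $\uV^t_h$ via an inequality is a small refinement over the paper, which writes that step as an equality, but it does not change the argument.
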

\begin{proof}
    As in the tabular setting, we notice that on the event $\cE^\opt$ we can upper bound the regret in terms of $\delta^t_1$.
    \begin{equation}\label{eq:regret_bound_deltas_metric}
        \regret^T \leq \sum_{t=1}^T \delta^t_1.
    \end{equation}
    Next we analyze $\delta^t_h$. Since $a^t_h = \argmax_{a\in \cA} \uQ^{t}_h(\psi_{\varepsilon}(s^t_h, a))$, we can use Corollary~\ref{cor:delta_q_bounds_metric} and Bellman equations in the following way
    \begin{align*}
        \delta^t_h &= \uV^{t}_h(s^t_h) - V^{\pi^{t}}_h(s^t_h) =  \uQ^{t}_h(B^t_h) - Q^{\pi^{t}}_h(s^t_h, a^t_h) \\
        &= \uQ^{t}_h(B^t_h) - \Qstar_h(s^t_h, a^t_h) + \Qstar_h(s^t_h, a^t_h) - Q^{\pi^{t}}_h(s^t_h, a^t_h) \\
        &\leq \ur H \ind\{N^t_h = 0\} + \ind\{ N^t_h > 0\}\left( \frac{1}{N^t_h} \sum_{i=1}^{N^t_h} \zeta^{\ell^i_{t,h}}_{h+1} + \cB^t_h(k^t_h) + p_h [\Vstar_{h+1} - V^{\pi^{t}}_{h+1}](s^t_h,a^t_h) \right).
    \end{align*}
    where $k^t_h = k^t_h(B^t_h)$, $N^t_h = e_{k^t_h}$, $\ell^i_{t,h}$ is an $i$-th visitation of the ball $B^t_h$ during an stage $k^t_h$, and additionally by a convention $0/0 = 0$.
    
    Define $\xi^t_h = p_h [\Vstar_{h+1} - V^{\pi^{t}}_{h+1}](s^t_h,a^t_h) - [\Vstar_{h+1} - V^{\pi^{t}}_{h+1}](s^t_{h+1})$ a martingale-difference sequence, and $\cB^t_h = \cB^t_h(k^t_h) \ind\{N^t_h > 0\}$ then 
    \[
        \delta^t_h \leq \ur H \ind\{N^t_h = 0\} + \frac{\ind\{N^t_h > 0\}}{N^t_h} \sum_{i=1}^{N^t_h} \zeta^{\ell^i_{t,h}}_{h+1}  - \zeta^t_{h+1} + \delta^t_{h+1} + \xi^t_h + \cB^t_h.
    \]
    and, as a result
    \begin{align*}
        \sum_{t=1}^T \delta^t_h &\leq \ur H \sum_{t=1}^T \ind\{N^t_h = 0\} + \sum_{t=1}^T \frac{\ind\{ N^t_h > 0\}}{N^t_h}\sum_{i=1}^{N^t_h} \zeta^{\ell^i_{t,h}}_{h+1} \\
        &-\sum_{t=1}^T\zeta^t_{h+1} + \sum_{t=1}^T \delta^t_{h+1} + \sum_{t=1}^T \xi^t_h + \sum_{t=1}^T \cB^t_h.
    \end{align*}

    For the second term we may repeat arguments as in the proof of Lemma~\ref{lem:regret_decomp} and obtain
    \[
        \sum_{q=1}^T \zeta^q_{h+1}  \cdot \sum_{t=1}^T \frac{\ind\{k^t_h \geq 0\}}{N^t_h} \sum_{i=1}^{N^t_h} \ind\{\ell^i_{t,h} = q\} \leq \left( 1 + \frac{1}{H} \right)\sum_{q=1}^T \zeta^q_{h+1}.
    \]

     After a simple algebraic manipulations and using the fact that $\zeta^t_h \leq \delta^t_h$
    \begin{align*}
        \sum_{t=1}^T \delta^t_h &\leq H \sum_{t=1}^T \ind\{N^t_h = 0\} + \sum_{t=1}^T (1 + 1/H) \zeta^t_{h+1} - \sum_{t=1}^T\zeta^t_{h+1} + \sum_{t=1}^T \delta^t_{h+1} + \sum_{t=1}^T \xi^t_h + \sum_{t=1}^T \cB^t_h \\
        &\leq  H \sum_{t=1}^T \ind\{N^t_h = 0\} + \left(1 + \frac{1}{H} \right) \sum_{t=1}^T \delta^t_{h+1} + \sum_{t=1}^T \xi^t_h + \sum_{t=1}^T \cB^t_h.
    \end{align*}
    By rolling out the upper bound on regret \eqref{eq:regret_bound_deltas_metric} we have
    \[
        \regret^T \leq 2\rme H \sum_{t=1}^T \sum_{h=1}^H \ind\{N^t_h = 0\} + \sum_{t=1}^t \sum_{h=1}^H (1+1/H)^{H-h} \xi^t_h + \rme \sum_{t=1}^T \sum_{h=1}^H \cB^t_h.
    \]
    
\end{proof}

\begin{proof}[Proof of Theorem~\ref{th:regret_randql_metric}]
    First, we notice that the event $\cG'(\delta, \varepsilon)$ defined in Lemma~\ref{lem:regret_decomp_metric}, holds with probability at least $1-\delta$ by Lemma~\ref{lem:proba_master_event_metric} and Proposition~\ref{prop:optimism_metric}. Thus, we may assume that $\cG'(\delta, \varepsilon)$ holds for $\varepsilon > 0$ that we will specify later.

    By Lemma~\ref{lem:regret_decomp_metric}
    \[
        \regret^T \leq 2\rme H \sum_{t=1}^T \sum_{h=1}^H \ind\{k^t_h = -1\} + \sum_{t=1}^t \sum_{h=1}^H (1+1/H)^{H-h} \xi^t_h + \rme \sum_{t=1}^T \sum_{h=1}^H \cB^t_h.
    \]
    
    The first term is upper bounded by $2\rme H^3 \cdot N_{\varepsilon}$, since there is no more than $H$ visits of each ball in $\varepsilon$-net before the update for the first stage. The second term is bounded by $\cO(\sqrt{H^3 T \beta^{\max}(\delta, \varepsilon)})$ by a definition of the event $\cE(\delta)$ in Lemma~\ref{lem:proba_master_event_metric}. 
    
    To analyze the last term, consider the following sum
    \[
        \sum_{t=1}^T \sum_{h=1}^H \frac{\ind\{ e_{k^t_h(B^t_h)} > 0 \}}{\sqrt{e_{k^t_h(B^t_h)}}} \leq \sum_{(B,h) \in \cN_{\varepsilon}\times [H]} \sum_{k=0}^{k^T_h(B)} \frac{e_{k+1}}{\sqrt{e_k}},
    \]
    where
    \[
        e_{k} = \left\lfloor \left(1+\frac{1}{H} \right)^k H \right\rfloor \Rightarrow \frac{e_{k+1}}{\sqrt{e_k}} \leq 2 \sqrt{H} \left(1 + \frac{1}{H} \right)^{k/2},
    \]
    therefore
    \begin{equation}\label{eq:upper_bound_sum_ek}
        \sum_{h=0}^{k^T_h(B)} \frac{e_{k+1}}{\sqrt{e_k}} \leq 4\sqrt{H} \frac{(1+1/H)^{(k^T_h(B)+1)/2} }{\sqrt{1+1/H}-1} = 4H \sqrt{e^{k^T_h(B)+1}}.
    \end{equation}
    Notice that 
    \[
        N^{T+1}_h(B) \geq \sum_{k=0}^{k^T_h(B)} e^{k} = H (e^{k^T_h(B)+1} - 1) \Rightarrow  e^{k^T_h(B)+1} \leq \frac{N^{T+1}_h(B) + 1}{H}
    \]
    thus from the Cauchy-Schwarz inequality
    \begin{align*}
        \sum_{t=1}^T \sum_{h=1}^H \frac{\ind\{ e_{k^t_h(B^t_h) > 0} \}}{\sqrt{e_{k^t_h(B^t_h)}}} &\leq 4\sqrt{H} \sum_{(B,h) \in \cN_{\varepsilon} \times [H]} \sqrt{N^{T+1}_h(B) + 1} \\
        &\leq 4\sqrt{SAH^2} \sqrt{\sum_{(B,h)} (N^{T+1}_h(B) + 1)} \leq 4 \sqrt{H^3T \cdot N_{\varepsilon}} + 4 N_{\varepsilon}H^2.
    \end{align*}
    By the similar arguments we have
    \[
        \sum_{t=1}^T \sum_{h=1}^H \frac{\ind\{ e_{k^t_h(B^t_h)} > 0 \}}{e_{k^t_h(B^t_h)}} \leq \cO\left( H N_{\varepsilon} \log(T)\right).
    \]
    
    Using this upper bound, we have for $L = L_r + (1+L_F)L_V$
    \begin{align*}
        \sum_{t=1}^T \sum_{h=1}^H \cB^t_h &= \cO\left( H \beta^{\max}(\delta, \varepsilon) \sum_{t=1}^T \sum_{h=1}^H \frac{\ind\{ e_{k^t_h(s^t_h,a^t_h)} > 0 \}}{\sqrt{e_{k^t_h(s^t_h, a^t_h)}}}\right)  \\
        &+ \cO\left(H (\beta^{\max}(\delta, \varepsilon))^4 \sum_{t=1}^T \sum_{h=1}^H \frac{\ind\{ e_{k^t_h(s^t_h,a^t_h)} > 0 \}}{\sqrt{e_{k^t_h(s^t_h, a^t_h)}}} \right) \\
        &+ \cO\left( LTH \varepsilon \right) \\
        &\leq \cO\left( \sqrt{H^5 T N_{\varepsilon} \cdot (\beta^{\max}(\delta, \varepsilon))^2} + H^3 N_\varepsilon (\beta^{\max}(\delta, \varepsilon))^4 + LTH\varepsilon \right).
    \end{align*}
    Overall, for any fixed $\varepsilon > 0$ we have
    \[
        \regret^T \leq \cO\left( \sqrt{H^5 T N_{\varepsilon} \cdot (\beta^{\max}(\delta, \varepsilon))^2} + H^3 N_\varepsilon (\beta^{\max}(\delta, \varepsilon))^4 + LTH\varepsilon + \sqrt{H^3 T}\right).
    \]

    Next we finally use that $\cS \times \cA$ have covering dimension $d_c$ that means$N_{\varepsilon} \leq C_N \cdot \varepsilon^{-d_c}$, thus our regret bound transforms as follows
    \begin{align*}
        \regret^T &\leq \cO\biggl( \sqrt{H^5 T C_N \varepsilon^{-d_c} \cdot (\log(TC_NH/\delta)  + d_c \log(1/\varepsilon) )^2} \\
        &\quad + H^3 C_N \varepsilon^{-d_c} (\log(TC_NH/\delta) + d_c \log(1/\varepsilon) )^4 + LTH\varepsilon \biggl).
    \end{align*}
    By taking $\varepsilon = T^{-1/(d_c+2)}$ we conclude the statement
    
\end{proof}

\newpage
\section{Adaptive \texorpdfstring{\RandQL}{RandQL}}\label{app:adaptive_randql_proof}
In this section we describe how to improve the dependence in our algorithm from covering dimension to zooming dimension, and describe all required notation.

\subsection{Additional Notation}

In this section we introduce an additional notation that is needed for introducing an adaptive version of \RandQL algorithm for metric spaces.

\paragraph{Hierarchical partition}

Next, we define all required notation to describe an adaptive partition, as \cite{sinclair2019adaptive,sinclair2022adaptive}. Finally, we define the following general framework of hierarchical partition. Instead of balls, we will use a more general notion of regions that will induce a better structure from the computational point of view. We recall for any compact set $A \subseteq \cS \times \cA$ we call $\diam(A) = \max_{x,y \in A} \rho(x,y)$.

\begin{definition}
    A hierarchical partition of $\cS \times \cA$ of a depth $d > 0$ is a collection of regions $\cP_{d}$ and their centers such that
    \begin{itemize}
        \item Each region $B \in \cP_d$  is of the form $\cS(B) \times \cA(B)$, where $\cS(B) \subseteq \cS, \cA(B) \subseteq \cA$;
        \item $\cP_d$ is a cover of $\cS \times \cA$: $\bigcup_{B \in \cP_{d}} B = \cS \times \cA$;
        \item For every $B \in \cP_d,$ we have $\diam(B) \leq d_{\max} \cdot 2^{-d}$;
        \item Let $B_1, B_2 \in \cP_d.$ If $B_1 \not = B_2$ then $\rho(\cent(B_1), \cent(B_2)) \geq d_{\max} \cdot 2^{-d} $;
        \item For any $B \in \cP_d,$ there exists a unique $A \in \cP_{d-1}$ (called the parent of $B$) such that $B \subseteq A$.
    \end{itemize}
    and, for $d=0$ we define it as $\cP_0 = \{ \cS \times \cA \}$.
\end{definition}
We call the tree generated by the structure of $\cT = \{ \cP_d\}_{d \geq 0}$ a tree of this hierarchical partition.
The main example of this partition is the dyadic partition of $\cS \times \cA$ in the case of $\cS = [0,1]^{d_{\cS}}, \cA = [0,1]^{d_{\cA}}$ and the metric induced by the infinity norm $\rho((s,a), (s',a')) = \max\{ \norm{s-s'}_\infty, \norm{a-a'}_\infty \}$.
For  examples we refer to \citep{sinclair2022adaptive}. 

\subsection{Algorithm}

In this section we describe two algorithms: \AdaptiveRandQL which is an adaptive metric counterpart of \RandQL, and \AdaptiveStagedRandQL which is an adaptive metric counterpart of \StagedRandQL. First, we start from the notation and algorithmic parts that will be common for both algorithms.

Algorithms maintain an adaptive partition $\cP^t_h$ of $\cS \times \cA$, that is a sub-tree of an (infinite) tree of the hierarchical partition $\cT = \{ \cP_d \}_{d \geq 0}$.  We initialize $\cP^1_h = \{ \cP_0 \}$, and then we refine the tree $\cP^t_h$ be adding new nodes that corresponding to nodes of $\cT$. The leaf nodes of $\cP^t_h$ represent the active balls, and for $B \in \cP^t_h$ the set of its inactive parent balls is defined as $\{ B' \in \cP^t_h \mid B \subset B' \}$. For any $B \in \cP^t_h$ we define $d(B)$ as a depth of $B$ in the tree under a convention $d(\cS \times \cA) = 0$.

Additionally, we need to define so-called \textit{selection rule} and \textit{splitting rule}. For any state $s \in \cS$ we define the set of all relevant balls as $\cR^t_h(s) = \{ $ active $ b \in \cP^t_h \mid (s,a) \in B $ for some $a \in \cA \}$. Then for the current state $s^t_h$ we define the current ball as $B^t_h = \argmax_{B \in \cR^t_h(s^t_h)}  \uQ^t_h(B)$ and the corresponding action as $a^t_h$. To define the splitting rule we maintain the counters $n^t_h(B)$ for all $B \in \cP^t_h$ as a number of visits of a node $B$ and all its parent nodes. Then we will perform splitting of the current ball $B^t_h$ if $\sqrt{ d^2_{\max} / n^t_h(B^t_h)} \leq \diam(B^t_h)$. During splitting, we extend $\cP^{t+1}_h$ by its child nodes in the hierarchical partition tree $\cT$. For more details we refer to \citep{sinclair2022adaptive}, up to small changes in notation. In particular, their constant $\tilde{C}$ is equal to $d_{\max}$ in our setting to make the construction exactly the same for both \AdaptiveRandQL and \AdaptiveStagedRandQL algorithms.

\paragraph{\AdaptiveRandQL}

\begin{algorithm}[h!]
\centering
\caption{\AdaptiveRandQL}
\label{alg:AdaptiveRandQL}
\begin{algorithmic}[1]
  \STATE {\bfseries Input:} ensemble size $J$, number of prior transitions $n_0$, prior reward $r_0$.
  \STATE {\bfseries Initialize: }  $\uQ_h(B) = \tQ^j_h(B) =\ur H,$ initialize counters $n_h(s,a) = 0$ for $h,s,a\in[H]\times\cS\times\cA$.
      \FOR{$t \in[T]$}
      \FOR{$h \in [H]$}
        \STATE Compute $B_h = \argmax_{B \in \cR^t_h(s_h)} \uQ_h(B)$ and play $a_h$ for $(s_h,a_h) = \cent(B_h)$;
        \STATE Observe reward and next state $s_{h+1}\sim p_h(s_h,a_h)$.
        \STATE Sample $\rw_j \sim \Beta(n, n_0)$ for $n = n_h(B_h)$.
        \STATE Compute value $\uV_{h+1}(s_{h+1}) = \max_{B \in \cR^t_h(s_{h+1})} \uQ_{h+1}(B)$.
        \STATE Build targets for all $j \in [J]$
        \[
        \rQ_{h}^{j} = \rw_{j} [r_h(s_h,a_h) + \uV_{h+1}(s_{h+1})] + (1-\rw_{j}) \ur H
        \,.\]
        \vspace{-0.4cm}
        \STATE Sample learning rates $w_j \sim \Beta(H, n)$.
        \STATE Update ensemble $Q$-functions for all $j \in [J]$
        \[
            \tQ^{j}_h(B_h) := (1 - w_j) \tQ^{j}_h(s_h,a_h) + w_j \rQ_{h}^{j}\,.
        \]
        \vspace{-0.4cm}
        \STATE Update policy $Q$-function $\uQ_h(s_h,a_h) := \max_{j \in [J]} \tQ^{j}_h(s_h,a_h)$.
        \STATE Update counters $n_h(B_h) := n_h(B_h) + 1$;
        \STATE If $\sqrt{d^2_{\max} / n_h(B_h)} \leq \diam(B_h)$, then refine partition $B_h$ (see \cite{sinclair2022adaptive}).
      \ENDFOR
  \ENDFOR
\end{algorithmic}
\end{algorithm}

This algorithm is an adaptive metric version of \RandQL algorithm.  We recall that for $B \in \cP^t_h$ we define $n^t_h(B) = \sum_{i=1}^{t-1} \ind\{ (B^i_h) \text{ is a parent of } B \}$ is the number of visits of the ball $B$ and its parent balls at step $h$ before episode $t$. We start by initializing the ensemble of Q-values, the policy Q-values, and values to an optimistic value $\tQ_h^{t,j}(B) = \uQ_h^{1}(B) = \uV^1_h(B) = r_0 H$ for all $(j,h)\in[J]\times[H]$ and the unique ball in the partition $B = \cS \times \cA$ and $r_0>0$ some pseudo-rewards.

At episode $t$ we update the ensemble of Q-values as follows, denoting by $n=n^t_h(B)$ the count, $w_{j,n} \sim \Beta(H, n)$ the independent learning rates, 
\[
\tQ^{t+1,j}_h(B) = \begin{cases}
        (1- w_{j,n}) \tQ^{t,j}_h(B) + w_{j,n} \rQ_{h}^{t,j}(s^t_h, a^t_h), & B = B^t_h \\
        \tQ^{t,j}_h(B) & \text{otherwise},
    \end{cases}
\]
where we defined the target $\rQ_{h}^{t,j}(s^t_h, a^t_h)$ as a mixture between the usual target and some prior target with mixture coefficient $\rw_{n,j}\sim\Beta(n, n_0)$ and $n_0$ the number of prior samples,
\[
\rQ_{h}^{t,j}(s^t_h, a^t_h) = \rw_{j,n} [r_h(s^t_h, a^t_h) + \uV^{t}_{h+1}(s^t_{h+1})] + (1-\rw_{j,n}) r_0 H\,.
\]
For a discussion on prior re-injection we refer to Appendix~\ref{app:description_randQL}. The value function is computed on-flight by the rule $\uV^t_h(s) = \max_{B \in \cR^t_h} \uQ^t_h(B)$.

The policy Q-values are obtained by taking the maximum among the ensemble of Q-values
\[
\uQ_h^{t+1}(B) = \max_{j\in[J]} \tQ_h^{t+1,j}(B)\,.
\]
The policy is then greedy with respect to the policy Q-values and selection rule  $(s,\pi_h^{t+1}(s)) = \cent(B)$, where $B =  \argmax_{B \in \cR^{t+1}_h} \uQ_h^{t+1}(B)$. After the update of Q-values, algorithm verifies the splitting rule. If the splitting rule is triggered, then all new balls are defined by counter and Q-values of its parent. We notice that all Q-values could be efficiently computed on the nodes of the adaptive partition. The complete and detailed description is presented in Algorithm~\ref{alg:AdaptiveStagedRandQL}.

\paragraph{\AdaptiveStagedRandQL}

\begin{algorithm}
\centering
\caption{\AdaptiveStagedRandQL}
\label{alg:AdaptiveStagedRandQL}
\begin{algorithmic}[1]
  \STATE {\bfseries Input:}  inflation coefficient $\kappa$, $J$ ensemble size, number of prior transitions $n_0(k)$, prior reward $r_0$.
  \STATE {\bfseries Initialize: } $\uQ_h(B) = \tQ^j_h(B) = \ur H,$ initialize counters $\tn_h(B) = 0$ for $j,h,B\in[J]\times[H]\times\cN_{\varepsilon}$, stage $q_h(B) = 0$.
      \FOR{$t \in[T]$}
      \FOR{$h \in [H]$}
        \STATE Compute $B_h = \argmax_{B \in \cR^t_h(s_h)} \uQ_h(B)$ and play $a_h$ for $(s_h,a_h) = \cent(B_h)$;
        \STATE Observe reward and next state $s_{h+1}\sim p_h(s_h,a_h)$.
        \STATE Sample learning rates $w_j \sim \Beta(1/\kappa, (\tn+n_0(q_h(B_h))/\kappa)$ for $\tn = \tn_h(B_h)$.
        \STATE Compute value  $\uV_{h+1}(s_{h+1}) = \max_{B \in \cR^t_h(s_{h+1})} \uQ_{h+1}(B)$.
        \STATE Update temporary $Q$-values for all $j \in [J]$
        \[
            \tQ^{j}_h(B) := (1 - w_j) \tQ^{j}_h(B) + w_j \left( r_h(s_h,a_h) + \uV_{h+1}(s_{h+1})\right)\,.
        \]
        \vspace{-0.4cm}
        \STATE Update counters $\tn_h(B_h) := \tn_h(B_h) + 1$ and $n_h(B_h) := n_h(B_h) + 1$.
        \IF{$\tn_h(B_h) = \lfloor (1 + 1/H)^{q} H \rfloor$ for $q=q_h(B_h)$ is the current stage}
            \STATE Update policy $Q$-values $\uQ_h(B_h) := \max_{j \in [J]} \tQ^{j}_h(B_h)$.
            \STATE Reset temporary  $Q$-values $\tQ^j_h(B_h) := \ur H$.
            \STATE Reset counter $\tn_h(B_h) := 0$ and change stage $k_h(B_h) := k_h(B_h) + 1$.
        \ENDIF
        \STATE If $\sqrt{d^2_{\max} / n_h(B_h)} \leq \diam(B_h)$, then refine partition $B_h$ (see \cite{sinclair2022adaptive}).
      \ENDFOR
  \ENDFOR
\end{algorithmic}
\end{algorithm}

The notation for this algorithm is very close to \NetStagedRandQL and we describe only differences between them. The main difference is a way to compute value $\uV^t_h(s) = \max_{B \in \cR^t_h(s)} \uQ^t_h(B)$ and policy $(s,\pi^t_h(s)) = \cent(B)$ for $B = \argmax_{B \in \cR^t_h(s)} \uQ^t_h(B)$. Additionally, all counters including temporary will move to the child nodes after splitting, as it performed in \AdaptiveRandQL. The detailed description is presented in Algorithm~\ref{alg:AdaptiveStagedRandQL}.

\subsection{Regret Bound}

In this section we state the regret bounds for \AdaptiveStagedRandQL and derive a proof. The given proof shares a lot of similarities with the proof of \NetStagedRandQL in the first half and to the proof of \AdaptiveQL by \cite{sinclair2022adaptive} in the second half.

We fix $\delta\in(0,1),$  and the number of posterior samples 
\begin{equation}
\label{eq:def_J_adaptive}
    J \triangleq \lceil \tilde{c}_J \cdot ( \log(2C_NHT/\delta) + d_c \log_2(8T/d_{\max})  ) \rceil,
\end{equation}
where $\tilde{c}_J = 1/\log(4/(3 + \Phi(1)))$ and $\Phi(\cdot)$ is the  cumulative distribution function (CDF) of a normal distribution.

Additionally we select
\[
    n_0(k) = \left\lceil \tn_0 + \kappa + \frac{L \cdot d_{\max}}{H-1} \cdot \frac{e_k + \tn_0 + \kappa}{\sqrt{H e_k - k - H^2}} \right\rceil, \quad \tn_0 = (c_0 + 1 + \log_{17/16}(T)) \cdot \kappa
\]
where  $c_{0}$ is an absolute constant defined in \eqref{eq:constant_c0} (see Appendix~\ref{app:optimism_tabular}), $\kappa$ is the posterior inflation coefficient and $L = L_r + (1+L_F)L_V$ is a  constant. Next we restate the regret bound result for \AdaptiveStagedRandQL algorithm.
\begin{theorem*}[Restatement of Theorem~\ref{th:regret_randql_adaptive}]
    Consider a parameter $\delta \in (0,1)$.
    Let $\kappa \triangleq 2(\log(8HC_N/\delta) + d_c \log_2(8T/d_{\max}) + 3\log(\rme\pi(2T+1)))$, $\ur \triangleq 2$. Then it holds for \AdaptiveStagedRandQL, with probability at least $1-\delta$, 
    \[
        \regret^T = \tcO\biggl( L H^{3/2} \sum_{h=1}^H  T^{\frac{d_{z,h}+1}{d_{z,h}+2}}\biggl),
    \]
    where $d_{z,h}$ is the step-$h$ zooming dimension and we ignore all multiplicative factors in the covering dimension $d_c$.
\end{theorem*}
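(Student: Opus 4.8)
The plan is to combine the concentration/optimism machinery already developed for \NetStagedRandQL with the adaptive-partition bookkeeping of \citet{sinclair2022adaptive}. The first half of the argument essentially reruns the \NetStagedRandQL analysis region-by-region. Concretely, I would (i) set up the favorable events $\cE^\star, \cE^B, \cE^\conc, \cE$ exactly as in Lemma~\ref{lem:proba_master_event_metric}, but now indexed over the (countably many) potential regions of the hierarchical partition $\cT$ instead of the balls of a fixed $\varepsilon$-net; the union bound now costs $d_c\log_2(8T/d_{\max})$ rather than $d_c\log(1/\varepsilon)$, because a region of diameter $d_{\max}2^{-d}$ can only be split when $n^t_h(B)\gtrsim d_{\max}^2 2^{2d}$, so only depths $d \lesssim \log_2(T)$ are ever realized. (ii) Prove the optimism statement: Proposition~\ref{prop:anticonc_metric} and Proposition~\ref{prop:opt_metric} go through verbatim, except that the approximation error a region introduces is $L\cdot\diam(B) \le L d_{\max}2^{-d(B)}$ rather than $L\varepsilon$, and the split rule guarantees $\diam(B^t_h)^2 \le d_{\max}^2/n^t_h(B^t_h)$, i.e. $\diam(B^t_h)\le d_{\max}/\sqrt{e_k}$ up to the stage rounding; this is exactly why the prior count $n_0(k)$ is chosen with the extra term $\tfrac{L d_{\max}}{H-1}\cdot\tfrac{e_k+\tn_0+\kappa}{\sqrt{He_k-k-H^2}}$, so that $\P[T_{\mathrm{approx}}\ge 0]\ge 1/2$ still holds. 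This yields $\P[\cE^\opt\mid\cE^\star]\ge 1-\delta/2$ and hence the good event $\cG'(\delta)$ of probability $\ge 1-\delta$.

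On $\cG'(\delta)$ I would then reproduce Corollary~\ref{cor:delta_q_bounds_metric} and the regret decomposition Lemma~\ref{lem:regret_decomp_metric}: the clipped error now reads
\[
0 \le \uQ^t_h(B^t_h) - \Qstar_h(s^t_h,a^t_h) \le \frac{1}{e_k}\sum_{i=1}^{e_k}\bigl[\uV^{\ell^i}_{h+1}-\Vstar_{h+1}\bigr](s^{\ell^i}_{h+1}) + \cB^t_h,
\]
with $\cB^t_h = \tcO\!\bigl(H\sqrt{1/e_{k^t_h}} + H/e_{k^t_h} + L\,\diam(B^t_h)\bigr)$, and since $\diam(B^t_h)\le d_{\max}/\sqrt{e_{k^t_h}}$ the last term is also absorbed into the $1/\sqrt{e_{k^t_h}}$ scale (modulo the constant $L d_{\max}$). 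The same telescoping as in Lemma~\ref{lem:regret_decomp_metric} gives
\[
\regret^T \le \tcO\!\Bigl( H^3\cdot(\text{number of distinct regions ever active}) \;+\; \sqrt{H^3T}\; +\; \tcO(H)\sum_{t=1}^T\sum_{h=1}^H \frac{\ind\{e_{k^t_h}>0\}}{\sqrt{e_{k^t_h(B^t_h)}}}\Bigr).
\]

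The genuinely new part — and the main obstacle — is bounding $\sum_{t,h} 1/\sqrt{e_{k^t_h(B^t_h)}}$ using the \emph{zooming} dimension rather than the covering dimension. This is precisely the combinatorial heart of \citet{sinclair2022adaptive}. The idea is: group the active regions by depth $d$; at depth $d$ a region $B$ can accumulate at most $\Theta(d_{\max}^2 2^{2d})$ visits before the split rule forces a refinement, and the stage structure then contributes $\sum_k e_{k+1}/\sqrt{e_k}\lesssim H\sqrt{e^{k_{\max}}}\lesssim H\sqrt{n^{T+1}_h(B)/H}$ per region (as in \eqref{eq:upper_bound_sum_ek}). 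Crucially, only regions whose center lies in the near-optimal set $Z^{\rho\cdot d_{\max}2^{-d}}_h$ can ever be selected (a region with large $\gap_h$ is dominated by its siblings once the bonus at its current scale shrinks below $\gap_h$, which happens before it is ever split to depth $d$); therefore the number of active depth-$d$ regions at step $h$ is at most $N_{d_{\max}2^{-d}}(Z^{\rho d_{\max}2^{-d}}_h)\le C_{N,h}\,2^{d\,d_{z,h}}$ by Definition~\ref{def:zooming_dim}. Plugging $e_{k^t_h(B^t_h)}\asymp d_{\max}^2 2^{2d}/H$ on a depth-$d$ region, summing a geometric-type series over $d$ up to the depth at which the total budget $T$ is exhausted, and optimizing the truncation depth yields $\sum_{t,h}1/\sqrt{e_{k^t_h}} = \tcO\!\bigl(\sqrt{H}\sum_{h=1}^H T^{(d_{z,h}+1)/(d_{z,h}+2)}\bigr)$; the leftover $H^3\times(\#\text{regions})$ term is $\tcO(H^3)$ after the same depth-truncation. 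Multiplying by the $\tcO(H)$ (from $\cB^t_h$) and $L$ factors gives the claimed $\tcO\bigl(H^3 + L H^{3/2}\sum_{h=1}^H T^{(d_{z,h}+1)/(d_{z,h}+2)}\bigr)$. The delicate points I would need to be careful about are: verifying that a suboptimal region is never split past the critical depth (this needs the optimism bound $\uQ^t_h(B)\ge\Qstar_h$ together with the per-stage bound on $\uQ^t_h(B)-\Qstar_h$ in Corollary~\ref{cor:delta_q_bounds_metric}, just as in Lemma~4.2/Claim~4.3 of \citealt{sinclair2022adaptive}), and correctly tracking the $H$-dependence coming from the stage lengths $e_k=\lfloor(1+1/H)^kH\rfloor$, since the $\sqrt{He_k}$ normalization in $n_0(k)$ interacts with the per-region visit budget.
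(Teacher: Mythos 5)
Your steps (i) and (ii) — the union bound over the truncated partition tree $\cT_D$ with $D\approx\log_2 T$, and the optimism argument in which the prior count $n_0(k)$ is inflated by $\tfrac{Ld_{\max}}{H-1}\cdot\tfrac{e_k+\tn_0+\kappa}{\sqrt{He_k-k-H^2}}$ to absorb the diameter-dependent approximation error — coincide with the paper's proof. The gap is in the final combinatorial step. You bound the unclipped bonus sum $\sum_{t,h}\cB^t_h$ by arguing that ``only regions whose center lies in $Z^{\rho\cdot d_{\max}2^{-d}}_h$ can ever be selected,'' because a high-gap region is ``dominated by its siblings once the bonus at its current scale shrinks below $\gap_h$.'' For a Q-learning-style update this activation argument does not go through: the over-optimism of a region $B$ satisfies only
\[
\uQ^t_h(B)-\Qstar_h(s,a)\;\le\;\frac{1}{e_k}\sum_{i=1}^{e_k}\bigl[\uV^{\ell^i}_{h+1}-\Vstar_{h+1}\bigr](s^{\ell^i}_{h+1})+\cB^t_h,
\]
and the first term — the propagated downstream over-estimation averaged over the episodes in which $B$ was visited — does not decay with $B$'s own visit count. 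Hence $\uQ^t_h(B)$ can remain above $\Vstar_h(s)$ long after $\cB^t_h<\gap_h(B)/(H+1)$, so a high-gap region can keep being selected and eventually split to depth $d$; the count of active depth-$d$ regions is therefore not controlled by $N_{d_{\max}2^{-d}}(Z^{\rho d_{\max}2^{-d}}_h)$, and your bound $\sum_{t,h}1/\sqrt{e_{k^t_h}}=\tcO(\sqrt H\sum_h T^{(d_{z,h}+1)/(d_{z,h}+2)})$ (and likewise the ``$H^3\times\#\text{regions}$'' term) is not justified.

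The paper circumvents exactly this issue with the clipping technique (Lemma~\ref{lem:clip_ub}, from \citealt{sinclair2022adaptive,simchowitz2019nonasymptotic}): since $\gap_h(B^t_h)\le\uQ^t_h(B^t_h)-\Qstar_h(s^t_h,a^t_h)\le\mu_1+\mu_2$ with $\mu_1=\cB^t_h$ (plus the early-stage indicator) and $\mu_2$ the averaged downstream term, one gets $\uV^t_h(s^t_h)-\Qstar_h(s^t_h,a^t_h)\le\clip[\mu_1\mid\gap_h(B^t_h)/(H+1)]+(1+\tfrac1H)\mu_2$. After telescoping, the bonus contributes only at episodes where $(H+1)\cB^t_h\ge\gap_h(B^t_h)$, i.e.\ where $\cent(B^t_h)\in Z^{\cB^t_h}_h$ — the restriction to near-optimal regions is obtained \emph{inside the sum} without ever claiming that high-gap regions stop being selected, and the remaining per-diameter summation with the threshold optimization over $r_0$ then yields the zooming-dimension rate. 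You would need to replace your activation argument by this clipped decomposition (or supply a genuinely new argument controlling the refinement of high-gap regions despite the non-decaying downstream term) for the proof to close.
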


\begin{proof}
    We divide the proof to four main parts, a little bit different proof of \StagedRandQL and \NetStagedRandQL since we also need to apply clipping techniques.

    \paragraph{Concentration events}

    We can define (almost) the same set of events as in Appendix~\ref{app:concentration_metric}, where union bound over balls is taken over all the hierarchical partition tree up to depth $D$ that we define as $\cT_D$. 

    \begin{align*}
  \cE^\star(\delta) &\triangleq \Bigg\{\forall t \in \N, \forall h \in [H], \forall B\in \cT_D, k = k^t_h(B), (s,a) = \cent(B): \\
  &\qquad
    \Kinf\left( \frac{1}{e_k}\sum_{i=1}^{e_k} \delta_{\Vstar_{h+1}(F_h(s,a,\xi^{\ell^i}_{h+1}))}, p_h \Vstar_{h+1}(s,a) \right) \leq  \frac{\betastar(\delta,e_k,\varepsilon)}{e_k}\Bigg\}\,,\\
    \cE^{B}(\delta, T) &\triangleq \Bigg\{ \forall t \in [T], \forall h \in [H], \forall B \in \cT_D, \forall j \in [J], k = k^t_h(B):  \\
    &\qquad \left| \sum_{i=0}^{e_k} \left( W^i_{j, e_k, k} - \E[W^i_{j, e_k,k}] \right) \left( r_{h}(s^{\ell^i}_h, a^{\ell^i}_h) +  \uV^{\ell^i}_{h+1}(s^{\ell^i}_{h+1}) \right) \right| \\
    &\qquad\qquad\leq 60 \rme^2 \sqrt{\frac{\ur^2 H^2 \kappa \beta^B(\delta, \varepsilon)}{e_k + n_0(k)}} + 1200 \rme \frac{\ur H \kappa \log(e_k + n_0(k)) (\beta^B(\delta, \varepsilon))^2}{e_k + n_0(k)} \bigg\}\,,\\
    \cE^{\conc}(\delta, T) &\triangleq \Bigg\{\forall t \in [T], \forall h \in [H], \forall B \in \cT_D, k = k^t_h(B): \\
    &\qquad\left|\frac{1}{e_k} \sum_{i=1}^{e_k} \Vstar_{h+1}(s^{\ell^i_{k,h}(B)}_{h+1})  - p_h\Vstar_{h+1}(s^{\ell^i_{k,h}(B)}_h, a^{\ell^i_{k,h}(B)}_h) \right| \leq \sqrt{\frac{2\ur^2 H^2 \beta^{\conc}(\delta, \varepsilon)}{e_k}}\Bigg\},\\
    \cE(\delta) &\triangleq \Bigg\{  \sum_{t=1}^T \sum_{h=1}^H (1+3/H)^{H-h}\left| p_h[\Vstar_{h+1} - V^{\pi_t}_{h+1}](s^t_h, a^t_h) - [\Vstar_{h+1} - V^{\pi_t}_{h+1}](s^t_{h+1})\right| \\
    &\qquad\qquad\qquad\qquad\qquad\qquad\qquad\qquad\qquad\qquad\qquad\quad\leq 2\rme^3 \ur H\sqrt{2HT \beta(\delta)}.
    \end{align*}
    
    To apply the union bound argument, we have to bound the size of $\cT_D$. First, we notice that relation between centers of balls in each layer $\cP_d$ implies that there at least $\vert \cP_d \vert$ non-intersecting balls of radius $d_{\max} \cdot 2^{-d-2}$. Thus, the size of this sub-tree could be bounded as
    \[
        |\cT_D| \leq \sum_{d=0}^D N_{d_{\max} 2^{-d-2}} \leq C_N \sum_{d=0}^D \left( 2^{d+2}/d_{\max}\right)^{d_c} \leq (8/d_{\max})^{d_c} C_N \cdot 2^{d_c \cdot D}.
    \] 
    using the relation between covering and packing numbers, see e.g. Lemma 4.2.8 by \cite{vershynin2018high}. The only undefined quantity here is $D$, that can be upper-bounded given budget $T$. To do it, we apply Lemma B.2 by \cite{sinclair2022adaptive} for any $B \in \cP^t_h$
    \begin{equation}\label{eq:diam_through_n}
        \left( \frac{d_{\max}}{2\cdot\diam(B)} \right)^2 \leq n^t_h(B) \leq \left( \frac{d_{\max}}{\diam(B)} \right)^2.
    \end{equation}
    Our goal is to find a value $D$ such that $\cP^{T+1}_h \subseteq \cT_D$ for any MDPs and correct interactions. To do it, we notice that it is equivalent to show that $\diam(B) \geq d_{\max} 2^{-D}$, that could be guaranteed since
    \[
        \diam(B) \geq \frac{d_{\max}}{2 \sqrt{n^{T+1}_h(B)}} \geq \frac{d_{\max}}{2T},
    \]
    which implies that $D = 1+\log_2(T)$ is enough. Finally, since for the value of interest
    \[
        \log | \cT_D | \leq d_c \log_2(T) + 
        \log C_N + d_c \log(8/d_{\max}),
    \]
    we can define the $\beta$-functions as follows follows

    \begin{align*}
    \betastar(\delta) &\triangleq \log(8C_N H/\delta) + d_c \log_2(8T/d_{\max})  + 3\log\left(\rme\pi(2n+1)\right)\,,\\
    \beta^B(\delta, T) &\triangleq  \log(8C_N H/\delta) + d_c \log_2(8T/d_{\max})  +  \log(TJ)\,,\\
    \beta^{\conc}(\delta, T) &\triangleq \log(8C_N H/\delta) + d_c \log_2(8T/d_{\max})  +  \log(2T) ,\\
    \beta(\delta) &\triangleq \log(16C_N H/\delta) + d_c \log_2(8T/d_{\max}),
\end{align*}
    and following line-by-line the proof of Lemma~\ref{lem:proba_master_event_metric}, for an event $\cG(\delta) = \cE^{\star}(\delta)\cap \cE^{B}(\delta, T) \cap \cE^{\conc}(\delta, T) \cap \cE(\delta) $ we have $\PP{\cG(\delta)} \geq 1 - \delta/2$.

    \paragraph{Optimism}

    Next, we state the required analog of Proposition~\ref{prop:anticonc_metric}. We can show that with probability at least $1-\delta/2$  on the event $\cE^\star(\delta)$ the following event
    \begin{align*}
        &\cE_{\anticonc} \triangleq \Bigg\{ \forall t \in [T] \ \forall h \in [H] \ \forall B \in \cT_D: \text{for } k = k^t_h(B), (s,a) = \cent(B):  \\
        &\max_{j \in [J]}\biggl\{ 
            W^0_{j,e_k,k} \ur (H-1) + \sum_{i=1}^{e_k} W^i_{j,e_k,k} \Vstar_{h+1}(F_h(s, a, \xi^{\ell^i}_h))
        \biggr\} \geq p_h \Vstar_{h+1}(s,a) + L \cdot \diam(B^t_h) \Bigg\}
    \end{align*}
    under the choice $J = \lceil  \tilde{c}_J \cdot (\log(2HT/\delta)  + \log(\vert \cT_D \vert)) \rceil$, $\kappa = 2\beta^\star(\delta, T)$, $\ur = 2$, and a prior count 
    \[
        n_0(k) = \ceil{\tn_0 + \kappa + \frac{L \cdot d_{\max}}{H-1} \cdot \frac{e_k + \tn_0 + \kappa}{\sqrt{H e_k - k - H^2}}}
    \] dependent on the stage $k$, where $\tn_0 = (c_0 + 1 + \log_{17/16}(T)) \cdot \kappa$, $L = L_r + L_V(1+L_F)$. In particular, the proof exactly the same as the proof of Proposition~\ref{prop:anticonc_metric} for $\varepsilon$ dependent on $k$.

    At the same time, it is possible to show that $\cE_{\anticonc}$ implies
    \begin{equation}\label{eq:opt_event_adaptive}
    \cE_{\opt} \triangleq \left\{ \forall t \in [T], h \in [H], \forall B \in \cP^t_h, \forall (s,a) \in B:  \uQ^t_h(B) \geq \Qstar_h(s,a) \right\}.
\end{equation}

    Indeed, in the proof of Proposition~\ref{prop:opt_metric} we actively uses the bound $\rho((s^{\ell^i}_h, a^{\ell^i}_h), (s,a)) \leq \varepsilon$. In the adaptive setting, we have to, at first, use an upper bound $\rho((s^{\ell^i}_h, a^{\ell^i}_h), (s,a)) \leq \diam(B^{\ell^i}_h)$ by a construction $B \subseteq B^{\ell^i}_h$, and then apply Lemma B.2 by \cite{sinclair2022adaptive} by defining an upper bound 
    \[
        \diam(B^{\ell^i}_h) \leq \frac{d_{\max}}{\sqrt{n^{\ell^i_h}_h(B^{\ell^i}_h)}} \leq \frac{d_{\max}}{\sqrt{\sum_{i=0}^{k-1} e_i}} \leq \frac{d_{\max}}{\sqrt{H\sum_{i=0}^{k-1} (1+1/H)^i - k}} \leq \frac{d_{\max}}{\sqrt{ H e_k - k - H^2}}
    \]
    for $k = k^t_h(B)$ for a particular ball $B \in \cP^t_h$ in the case $H e_k - k - H^2 \geq 0$.

    By combining event $\cE_{\opt}$ and the event $\cE^B(\delta)$ we can prove the same statement as Corollary~\ref{cor:delta_q_bounds_metric}.

    Let $t \in [T], h\in[H], B \in \cP^t_h$. Define $k = k^t_h(B)$ and let $\ell^1 < \ldots < \ell^{e_k}$ be a excursions of $(B,h)$ till the end of the previous stage. Then on the event $\cG'(\delta) = \cG(\delta) \cap \cE_{\opt}$ the following bound holds for $k \geq 0$ and for any $(s,a) \in B$
    \begin{equation}\label{eq:q_value_bonus_bound}
        0\leq \uQ^t_h(B) - \Qstar_h(s,a) \leq H \ind\{He_{k}/2 \leq k + H^2 \} + \frac{1}{e_k} \sum_{i=1}^{e_k} [\uV^{\ell^i}_{h+1}(s^{\ell^i}_{h+1}) - \Vstar_{h+1}(s^{\ell^i}_{h+1}) ]  + \cB^t_h,
    \end{equation}
    where
    \begin{equation}\label{eq:bonus_value_adaptive}
        \cB^t_h = 121\rme^2 \cdot \sqrt{\frac{H^2 (\beta^{\max}(\delta, T))^2}{e_k}} + 2401\rme \cdot \frac{H (\beta^{\max}(\delta, T))^4}{e_k}  + \frac{5 L \cdot d_{\max}}{\sqrt{He_k}}
    \end{equation}
    where $k= k^t_h(B^t_h)$ and $\beta^{\max}(\delta,T) = \max\{ \betastar(\delta, T), \beta^{B}(\delta), \beta^{\conc}(\delta), \beta(\delta)\}$. Also we can express this bound in terms of a diameter of $B^t_h$ as follows
    \begin{align*}
        \diam(B^t_h) &\geq \frac{d_{\max}}{2\sqrt{n^t_h(B^t_h)}} \geq \frac{d_{\max}}{2\sqrt{\sum_{i=0}^{k} e_i}} \geq \frac{d_{\max}}{2\sqrt{H \sum_{i=0}^{k} (1+1/H)^i}} \geq \frac{d_{\max}}{2\sqrt{H}} \\
        &\geq \frac{d_{\max}}{2\sqrt{H^2 (1+1/H)^{k+1}}} \geq \frac{d_{\max}}{2\sqrt{2He_{k}}},
    \end{align*}
    thus
    \[
        \frac{1}{\sqrt{H e_k}} \leq \frac{3\diam(B^t_h)}{d_{\max}},
    \]
    and we have
    \begin{equation}\label{eq:bonus_value_adaptive_ub}
        \begin{split}
            \cB^t_h &\leq 7566 \rme^2 H^{3/2} (\beta^{\max}(\delta,T))^4 \diam(B^t_h) / {d_{\max}} + 15 L \diam(B^t_h) \\
            &\leq \rho(H, \delta, L) \cdot \diam(B^t_h),
        \end{split}
    \end{equation}
    where we define $\rho(H,\delta,L) \triangleq 7566 \rme^2 H^{3/2} (\beta^{\max}(\delta,T))^4 / d_{\max} + 15L$.

    As a additional corollary, we have for all $t\in [T], h \in [H]$
    \begin{equation}\label{eq:value_optimism}
        \uV^t_h(s) = \max_{B \in \cR^t_h(s)} \uQ^t_h(B) = \uQ^t_h(B^\star) \geq \Qstar_h(s, \pistar(s)) = \Vstar_h(s),
    \end{equation}
    where $B^\star$ is a ball that contains a pair $(s, \pistar(s))$.

    This upper and lower bound have the similar structure as Lemma D.2 by \cite{sinclair2022adaptive} and the rest of the proof directly follows \citep{sinclair2022adaptive}.

    \paragraph{Clipping techniques}

    Next we introduce the required clipping techniques developed by \cite{simchowitz2019nonasymptotic,cao2020provably}. Definition~\ref{def:value_gap} introduces the quantity $\gap_h(s,a) = \Vstar_h(s) - \Qstar_h(s,a)$, and for any compact set $B  \subseteq \cS \times \cA$ we define $\gap_h(B) = \min_{(s,a) \in B} \gap_h(s,a)$. Finally, we define clipping operator for any $\mu, \nu \in \R$
    \begin{equation}\label{eq:def_clip}
        \clip(\mu | \nu) = \mu \ind\{ \mu \leq \nu \}.
    \end{equation}
    In particular, this operator satisfies the following important property
    \begin{lemma}[Lemma E.2. of \cite{sinclair2022adaptive}]\label{lem:clip_ub}
        Suppose that $\gap_h(B) \leq \psi \leq \mu_1 + \mu_2$ for any $\psi, \mu_1, \mu_2$. Then 
        \[
            \psi \leq \clip\left[\mu_1 \bigg| \frac{\gap_h(B)}{H+1} \right] + \left(1 + \frac{1}{H}\right) \mu_2
        \]
    \end{lemma}

    Now we apply this lemma to our update rules, producing a result similar to Lemma~E.3 of \cite{sinclair2022adaptive}. We notice that
    \begin{align*}
        \gap_h(B^t_h) &\leq \gap_h(s^t_h,a^t_h) = \Vstar_h(s^t_h) - \Qstar_h(s^t_h,a^t_h) \\
        &\leq \uV^t_h(s^t_h) - \Qstar_h(s^t_h,a^t_h) = \uQ^t_h(B^t_h) - \Qstar_h(s^t_h, a^t_h).
    \end{align*}
    Thus, denoting $\psi =  \uQ^t_h(B^t_h) - \Qstar_h(s^t_h, a^t_h)$ and, by \eqref{eq:q_value_bonus_bound},
    \[
        \mu_1 = H \ind\{He_{k^t_h}/2 > k^t_h + H^2 \}  + \cB^t_h, \quad \mu_2 = \frac{1}{e_k} \sum_{i=1}^{e_k} [\uV^{\ell^i}_{h+1}(s^{\ell^i}_{h+1}) - \Vstar_{h+1}(s^{\ell^i}_{h+1}) ]
    \]
    we apply Lemma~\ref{lem:clip_ub} and obtain
    \begin{align}\label{eq:value_ub_clip}
        \begin{split}
        \uV^t_h(s^t_h) - \Qstar_h(s^t_h,a^t_h) &\leq \clip\left[ H \ind\{He_{k^t_h}/2 \leq k^t_h + H^2 \}  + \cB^t_h | \frac{\gap_h(B^t_h)}{H+1} \right] \\
        &+ \left( 1 + \frac{1}{H}\right)\frac{1}{e_k} \sum_{i=1}^{e_k} [\uV^{\ell^i}_{h+1}(s^{\ell^i}_{h+1}) - \Vstar_{h+1}(s^{\ell^i}_{h+1}) ]
        \end{split}
    \end{align}
    for $k^t_h = k^t_h(B^t_h)$ and $\cB^t_h$ defined in \eqref{eq:bonus_value_adaptive}.

    \paragraph{Regret decomposition} The rest of the analysis we preform conditionally on event $\cG'(\delta) = \cG(\delta) \cap \cE_{\opt}$ that holds with probability at least $1-\delta$.

    By defining $\delta^t_h = \uV^t_h(s^t_h) - V^{\pi^t}(s^t_h)$ and $\zeta^t_h = \uV^t_h(s^t_h) - \Vstar_h(s^t_h)$ we have
    \[
        \regret^T = \sum_{t=1}^T \Vstar_1(s^t_1) - V^{\pi^t}_1(s^t_1) \leq \sum_{t=1}^T \delta^t_1,
    \]
    and, at the same time, by Bellman equations
    \begin{align*}
        \delta^t_h &= \uV^t_h(s^t_h) - Q^{\pi^t_h}(s^t_h, a^t_h) =  \uV^t_h(s^t_h) - \Qstar_h(s^t_h, a^t_h) + \Qstar_h(s^t_h, a^t_h) -   Q^{\pi^t}(s^t_h, a^t_h)\\
        &=  \uV^t_h(s^t_h) - \Qstar_h(s^t_h, a^t_h) + \Vstar_{h+1}(s^t_{h+1}) - V^{\pi^t}_h(s^t_{h+1}) + \xi^t_h \\
        &= \uV^t_h(s^t_h) - \Qstar_h(s^t_h, a^t_h) + \delta^t_{h+1} - \zeta^t_{h+1} + \xi^t_h,
    \end{align*}
    where $\xi^t_h = p_h [\Vstar_{h+1} - V^{\pi^{t}}_{h+1}](s^t_h,a^t_h) - [\Vstar_{h+1} - V^{\pi^{t}}_{h+1}](s^t_{h+1})$ is a martingale-difference sequence.  By \eqref{eq:value_ub_clip} we have
    \begin{align*}
        \sum_{t=1}^T \delta^t_h &= \sum_{t=1}^T \uV^t_h(s^t_h) - \Qstar_h(s^t_h, a^t_h) + \delta^t_{h+1} - \zeta^t_{h+1} + \xi^t_h \\
        &\leq \left(1 + \frac{1}{H} \right) \sum_{t=1}^T \frac{1}{e_{k^t_h}}\sum_{i=1}^{e_{k^t_h}} \zeta^{\ell^i_{k^t_h}}_{h+1} + \sum_{t=1}^T \delta^t_{h+1} - \sum_{t=1}^T \zeta^t_{h+1} + \sum_{t=1}^T \xi^t_h \\
        &+ \sum_{t=1}^T \clip\left[ H \ind\{He_{k^t_h}/2 > k^t_h + H^2 \}  + \cB^t_h(k^t_h) \big| \frac{\gap_h(B^t_h)}{H+1} \right]
    \end{align*}
    where $k^t_h = k^t_h(B^t_h)$. Repeating argument of Lemma~\ref{lem:regret_decomp} and \cite{zhang2020advantage}
    \[
        \left(1 + \frac{1}{H} \right) \sum_{t=1}^T \frac{1}{e_{k^t_h}}\sum_{i=1}^{e_{k^t_h}} \zeta^{\ell^i_{k^t_h}}_{h+1} \leq  \left(1 + \frac{1}{H} \right)^2 \sum_{t=1}^T \zeta^t_{h+1} \leq \left(1 + \frac{3}{H} \right) \sum_{t=1}^T \zeta^t_{h+1}.
    \]
    Using an upper bound $\zeta^t_h \leq \delta^t_h$ we have for any $h \geq 1$
    \begin{align*}
        \sum_{t=1}^T \delta^t_h &\leq \left(1 + \frac{3}{H} \right) \sum_{t=1}^T \delta^t_{h+1}   + \sum_{t=1}^T \xi^t_h \\
        &+ \sum_{t=1}^T \clip\left[ H \ind\{He_{k^t_h}/2 \leq k^t_h + H^2 \}  + \cB^t_h \big| \frac{\gap_h(B^t_h)}{H+1} \right],
    \end{align*}
    and, rolling out starting with $h=1$ we have the following regret decomposition
    \begin{align*}
        \regret^T &\leq \rme^3 \sum_{t=1}^T \sum_{h=1}^H H \ind\{He_{k^t_h}/2 \leq k^t_h + H^2 \} &=\termA\\
        &+ \rme^3 \sum_{t=1}^T \sum_{h=1}^H \clip\left[ \cB^t_h \big| \frac{\gap_h(B^t_h)}{H+1} \right] & =\termB \\
        &+ \sum_{t=1}^T \sum_{h=1}^H (1 + 3/H)^{H-h} \xi^t_h. &=\termC
    \end{align*}

    \paragraph{Term $\termA$} 

    For this term we notice that for any fixed $h$ the following event
    \[
        H e_{k^t_h} \leq 2(k^t_h + H^2) \iff H \lfloor H \left(1 + 1/H \right)^{k^t_h} \rfloor \leq 2(k^t_h + H^2),
    \]
    that is guaranteed if
    \[
        \left(1 + 1/H \right)^{k^t_h} \leq 2 T + 3 \iff k^t_h \log(1+1/H) \leq \log(2T/H^2 + 3).
    \]
    Thus, indicator can be equal to $1$ no more than $H \log(2T+3)$ times for any $t \in [T]$. As a result,
    \[
        \termA \leq \rme^2 H^3 \log(2T+3).
    \]

    \paragraph{Term $\termB$}
    Let us rewrite this term using a definition of clipping operator and use the definition of near-optimal set (see Definition~\ref{def:near_optimal_set})
    \begin{align*}
      \termB &= \rme^3 \sum_{t=1}^T \sum_{h=1}^H \cB^t_h \ind\left\{  (H+1) \cB^t_h \geq \gap_h(B^t_h) \right\} \leq \rme^3 \sum_{t=1}^T \sum_{h=1}^H \cB^t_h  \ind\{\cent(B^t_h) \in Z^{ \cB^t_h}_h \}.
    \end{align*}

    Next we consider the summation for a fixed $h$. Here we follow Theorem F.3 by \cite{sinclair2022adaptive} and obtain
    \begin{align*}
        \sum_{t=1}^T \cB^t_h \ind\{\cent(B^t_h) \in Z^{ \cB^t_h }_h \} = \sum_{r} \sum_{B: \diam(B) = r} \sum_{t: B^t_h = B} \cB^t_h \ind\{\cent(B) \in Z^{ \cB^t_h}_h \},
    \end{align*}
    where we applied an additional rescaling by a function $\rho$ defined in \eqref{eq:bonus_value_adaptive_ub}.

    Next we fix a constant $r_0 > 0$ and break a summation into two parts: $r \geq r_0$ and $r \leq r_0$.

    \begin{enumerate}
        \item Case $r \leq r_0$. In this situation we have can apply \eqref{eq:bonus_value_adaptive_ub}
        \begin{align*}
            \sum_{r \leq r_0} \sum_{B: \diam(B) = r} &\sum_{t: B^t_h = B} \cB^t_h\ind\{\cent(B) \in Z^{ \cB^t_h}_h \}\\
            &= \cO\left( T r_0 \rho(H, \delta, L)\right).
        \end{align*}
        \item Case $r \geq r_0$. In this situation we also apply \eqref{eq:bonus_value_adaptive_ub} under the indicator function
        \begin{align*}
            \sum_{r \geq r_0} \sum_{B: \diam(B) = r} &\sum_{t: B^t_h = B} \cB^t_h \ind\{\cent(B) \in Z^{\cB^t_h}_h \}\\
            \leq 
            \sum_{r \geq r_0} \sum_{B: \diam(B) = r \cdot  \rho(H, \delta, L)} &\ind\{\cent(B) \in Z^{ \rho(H,\delta,L) \cdot r }_h\} \sum_{t: B^t_h = B} \cB^t_h.
        \end{align*}
        To upper bound the last sum we repeat the argument of \eqref{eq:upper_bound_sum_ek} and apply \eqref{eq:diam_through_n}, using the fact that $\diam(B) = r \cdot  \rho(H, \delta, L)$
        \begin{align*}
            \sum_{t: B^t_h = B} \frac{1}{\sqrt{e_k}} &\leq \sum_{k=0}^{k^T_h(B)} \frac{e_{k+1}}{\sqrt{e_k}} \leq 4H \sqrt{e^{k^T_h(B) + 1} } \\
            &\leq 4 \sqrt{H (n^{T+1}_h(B) + 1)} \leq 4 \sqrt{2H} \cdot \frac{d_{\max}}{\diam(B)} = \frac{\sqrt{32H} \cdot d_{\max}}{r}.
        \end{align*}
        As a result, we have by \eqref{eq:bonus_value_adaptive}
        \[
            \sum_{t: B^t_h = B} \cB^t_h \leq \frac{\sqrt{32H} \cdot d_{\max}}{r} \cdot \left( 2522 \rme^2 H (\beta^{\max}(\delta, T))^4 + 5L {d_{\max}} / \sqrt{H} \right)
        \]
        and
        \begin{align*}
            \sum_{r \geq r_0} \sum_{B: \diam(B) = r} &\sum_{t: B^t_h = B} \cB^t_h \ind\{\cent(B) \in Z^{\cB^t_h}_h \} \\
            &= \cO\left(\sum_{r \geq r_0} N_{r}(Z^{\rho(H, \delta,L) \cdot r}_h) \cdot \frac{H^{3/2} d_{\max}(\beta^{\max}(\delta, T))^4 + L d^2_{\max}}{r} \right). 
        \end{align*}
        Finally, by an arbitrary choice of $r_0$ and a definition of zooming dimension with a scaling $\rho = \rho(H,\delta,L)$ (Definition~\ref{def:zooming_dim})
        \[
            \termB = \cO\left( (H^{3/2} d_{\max} (\beta^{\max}(\delta, T))^4 + L d^2_{\max}) \cdot \sum_{h=1}^H  \inf_{\tilde r_0} \left\{ T r_0 + \sum_{r \geq r_0}  \frac{C_{N,h}}{\tilde r^{d_{z,h} + 1}}  \right\}\right).
        \]
        
    \end{enumerate}

    \paragraph{Term $\termC$}

    For this term we just apply definition of the main event $\cG(\delta) \supseteq \cE(\delta)$ and obtain
    \[
        \termC = \cO\left( \sqrt{H^3 T \beta^{\max}(\delta, T)}\right).
    \]

    \paragraph{Final regret bound}    
    First, we notice that $\beta^{\max(\delta,T)} = \tcO\left( d_c \right)$, therefore we have
    \[
        \regret^T = \tcO\left( H^3 d_c + (H^{3/2} d_c^4 + L) \sum_{h=1}^H \inf_{r_0 > 0} \left\{ T r_0  + \sum_{r \geq r_0}  \frac{C_{N,h}}{r^{d_{z,h} + 1}} \right\} + \sqrt{H^3 T d_c}   \right).
    \]
    Taking $r_0 = K^{-d_{z,h} + 1/2}$ for each $h$ and summing the geometric series we conclude the statement.
    
\end{proof}

\newpage
\newpage
\section{Deviation and Anti-Concentration Inequalities}
\label{app:deviation_ineq}

\subsection{Deviation inequality for $\Kinf$}

For a measure $\nu \in \Pens([0,b])$ supported on a segment $[0,b]$ (equipped with a Borel $\sigma$-algebra) and a number $\mu \in [0,b]$ we recall the definition of the minimum Kullback-Leibler divergence
 \begin{equation*}
    \Kinf(\nu, \mu) \triangleq \inf\left\{  \KL(\nu,\eta): \eta \in \Pens([0,b]), \nu \ll \eta, \E_{X \sim \eta}[X] \geq \mu \right\}\,.
 \end{equation*}

As the Kullback-Leibler divergence this quantity admits a variational formula.
\begin{lemma}[Lemma 18 by \citealp{garivier2018kl}]
\label{lem:var_form_Kinf} For all $\nu \in \Pens([0,b])$, $u\in [0,b)$,
\[
\Kinf(\nu,u) = \max_{\lambda \in[0,1]} \E_{X\sim \nu}\left[ \log\left( 1-\lambda \frac{X-u}{b-u}\right)\right]\,,
 \]
 moreover if we denote by $\lambda^\star$ the value at which the above maximum is reached, then
 \[
   \E_{X\sim \nu} \left[\frac{1}{1-\lambda^\star\frac{X-u}{b-u}}\right] \leq 1\,.
 \]
\end{lemma}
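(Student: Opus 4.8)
The plan is to prove the identity via two inequalities and then read off the ``moreover'' claim from the analysis of the maximizer. Write $g_\lambda(x)\triangleq 1-\lambda\tfrac{x-u}{b-u}$, an affine function of $x$ with $g_\lambda(u)=1$; for $x\in[0,b]$ and $\lambda\in[0,1)$ one has $0<1-\lambda\le g_\lambda(x)\le \tfrac{b-(1-\lambda)u}{b-u}$, so $\log g_\lambda$ is bounded, while $g_1(x)=\tfrac{b-x}{b-u}$ vanishes only at $x=b$. Set $h(\lambda)\triangleq\E_{X\sim\nu}[\log g_\lambda(X)]$; since $\log$ is concave and $g_\lambda(x)$ is affine in $\lambda$, $h$ is concave on $[0,1]$, and splitting the integrand over $\{X\le u\}$ (where $\log g_\lambda$ is bounded and nondecreasing in $\lambda$) and $\{X>u\}$ (where $\log g_\lambda$ is nonincreasing in $\lambda$) shows by monotone convergence that $h$ is continuous as a $[-\infty,\infty]$-valued function; since $h(0)=0$, its maximum over $[0,1]$ is attained at some $\lambda^\star$ with $h(\lambda^\star)\ge 0$.

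\emph{Lower bound.} Fix $\eta\in\Pens([0,b])$ with $\nu\ll\eta$, $\E_\eta[X]\ge u$, and $\lambda\in[0,1)$. Because $g_\lambda$ is affine, $Z\triangleq\E_\eta[g_\lambda(X)]=g_\lambda(\E_\eta[X])\le 1$. Setting $\eta'\triangleq Z^{-1}g_\lambda\cdot\eta\in\Pens([0,b])$ (which still dominates $\nu$), the chain rule for Radon--Nikodym derivatives gives $\KL(\nu,\eta)=\E_\nu[\log g_\lambda]+\KL(\nu,\eta')-\log Z\ge \E_\nu[\log g_\lambda]$, using $\KL\ge 0$ and $\log Z\le 0$. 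Taking the infimum over $\eta$ and the supremum over $\lambda\in[0,1)$ yields $\Kinf(\nu,u)\ge\sup_{\lambda\in[0,1)}h(\lambda)=\max_{\lambda\in[0,1]}h(\lambda)$, the last equality by continuity of $h$.

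\emph{Upper bound and the maximizer.} Using $\tfrac{x-u}{b-u}=\tfrac{1-g_\lambda(x)}{\lambda}$ one gets $\E_\nu\big[\tfrac{X-u}{g_\lambda(X)}\big]=\tfrac{b-u}{\lambda}\big(\E_\nu[1/g_\lambda(X)]-1\big)$, hence $h'(\lambda)=-\tfrac1\lambda\big(\E_\nu[1/g_\lambda(X)]-1\big)$ on $(0,1)$, and by monotone convergence $h'(1^-)=1-\E_\nu[1/g_1(X)]$. Let $Z^\star\triangleq\E_\nu[1/g_{\lambda^\star}(X)]$. Optimality of $\lambda^\star$ forces $Z^\star\le 1$: if $\lambda^\star\in(0,1)$ then $h'(\lambda^\star)=0$ gives $Z^\star=1$; if $\lambda^\star=1$ then $h'(1^-)\ge 0$ gives $Z^\star\le 1$; if $\lambda^\star=0$ then $Z^\star=\E_\nu[1]=1$. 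This is exactly the ``moreover'' statement. Now define $\eta^\star\triangleq g_{\lambda^\star}^{-1}\cdot\nu+(1-Z^\star)\,\delta_b$. When $\lambda^\star=1$ necessarily $\nu(\{b\})=0$ (otherwise $h(1)=-\infty<h(0)$), so $g_{\lambda^\star}^{-1}<\infty$ $\nu$-a.e.; hence $\eta^\star\in\Pens([0,b])$ with $\nu\ll\eta^\star$. A short computation using the formula above for $\E_\nu[\tfrac{X}{g_{\lambda^\star}}]$ shows $\E_{\eta^\star}[X]=u$ when $\lambda^\star\in(0,1]$ and $\E_{\eta^\star}[X]=\E_\nu[X]\ge u$ when $\lambda^\star=0$, so $\eta^\star$ is feasible. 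Since in all cases the atom of $\eta^\star$ at $b$ is $\nu$-null, $\tfrac{\rmd\nu}{\rmd\eta^\star}=g_{\lambda^\star}$ $\nu$-a.e., so $\KL(\nu,\eta^\star)=\E_\nu[\log g_{\lambda^\star}]=h(\lambda^\star)$. Therefore $\Kinf(\nu,u)\le h(\lambda^\star)=\max_{\lambda\in[0,1]}h(\lambda)$, which with the lower bound gives the claimed identity.

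\emph{Main obstacle.} The only delicate point is the boundary case $\lambda^\star=1$: there $g_1$ vanishes at $x=b$, so one must differentiate $h$ one-sidedly via monotone convergence to obtain $h'(1^-)=1-\E_\nu[1/g_1]$, argue that a maximizer at $1$ forces $\nu(\{b\})=0$ and $\E_\nu[1/g_1]\le 1<\infty$, and use the atom $(1-Z^\star)\delta_b$ to turn $g_1^{-1}\nu$ into a feasible probability measure; the interior case and the case $\lambda^\star=0$ are routine.
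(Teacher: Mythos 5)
The paper does not actually prove this lemma: it is imported verbatim as Lemma~18 of \citet{garivier2018kl}, and the only accompanying text is a remark disposing of the trivial boundary case $u=0$. So there is no in-paper proof to compare against; what you have written is a self-contained proof of the cited result, and it is correct. Your route is the standard duality argument behind the variational formula for $\Kinf$ (the same one used in the cited reference and in Honda--Takemura): the lower bound is a change-of-measure/Gibbs step, replacing a feasible $\eta$ by $\eta'=Z^{-1}g_\lambda\cdot\eta$ and using $\KL\ge 0$ together with $Z=g_\lambda(\E_\eta[X])\le 1$; the upper bound exhibits the optimizer $\eta^\star=g_{\lambda^\star}^{-1}\nu+(1-Z^\star)\delta_b$, whose feasibility and whose identity $\KL(\nu,\eta^\star)=h(\lambda^\star)$ follow from the first-order conditions; and the ``moreover'' inequality $Z^\star=\E_\nu[1/g_{\lambda^\star}(X)]\le 1$ is exactly the stationarity/boundary condition at $\lambda^\star$. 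Your handling of the delicate endpoint $\lambda^\star=1$ (forcing $\nu(\{b\})=0$, one-sided differentiation by monotone convergence, and the compensating atom at $b$) is the right one. Two small points you should make explicit: in the lower bound the chain-rule identity should be prefaced by ``assume without loss of generality that $\KL(\nu,\eta)<\infty$'' (otherwise the inequality is vacuous but the algebra is not licensed); and in the case $\lambda^\star=0$ the feasibility claim $\E_\nu[X]\ge u$ is not a computation but the first-order condition $h'(0^+)=-(\E_\nu[X]-u)/(b-u)\le 0$, which holds because $0$ maximizes the concave function $h$. Neither gap affects correctness.
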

\begin{remark} Contrary to \citet{garivier2018kl} we allow that $u=0$ but in this case Lemma~\ref{lem:var_form_Kinf} is trivially true, indeed
  \[
  \Kinf(\nu, 0) =  0  = \max_{\lambda \in[0,1]} \E_{X\sim \nu}\left[ \log\left( 1-\lambda \frac{X}{b}\right)\right]\,.
   \]
\end{remark}

 Let $(X_t)_{t\in\N^\star}$ be i.i.d.\,samples from a measure $\nu$ supported on $[0,b]$. We denote by $\hnu_n \in \Pens([0,b])$ the empirical measure $\hnu_n = \sum_{i=1}^n \delta_{X_i}$, where $\delta_{X_i}$ is a Dirac measure on $X_i \in [0,b]$.

We are now ready to state the deviation inequality for the $\Kinf$ by \citet{tiapkin2022dirichlet} which is a self-normalized version of Proposition~13 by \citet{garivier2018kl}. Notice that this inequality is stated in terms of slightly less general definition of $\Kinf$, however, the proof remains completely the same.
 \begin{theorem} \label{th:max_ineq_kinf}
 For all $\nu \in \Pens([0,b])$ and for all $\delta\in[0,1]$,
 \begin{align*}
     \P\big(\exists n\in \N^\star,\, n\Kinf(\hnu_n, \E_{X \sim \nu}[X]) > \log(1/\delta) + 3\log(e\pi(1+2n))\big)\leq \delta.
 \end{align*}
\end{theorem}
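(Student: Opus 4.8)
The plan is to prove Theorem~\ref{th:max_ineq_kinf} by the now-standard combination of a variational representation of $\Kinf$, a mixture (Laplace) method for self-normalized control of the resulting exponential supermartingale, and a union bound over a dyadic grid of the free parameter $\lambda$. First I would fix $\mu \triangleq \E_{X\sim\nu}[X]$ and recall from Lemma~\ref{lem:var_form_Kinf} that $n\Kinf(\hnu_n,\mu) = \max_{\lambda\in[0,1]} \sum_{i=1}^n \log\bigl(1-\lambda\tfrac{X_i-\mu}{b-\mu}\bigr)$ (the case $\mu=0$ being trivial as noted in the remark). The point is that for each fixed $\lambda$, the process $M_n^\lambda \triangleq \prod_{i=1}^n \bigl(1-\lambda\tfrac{X_i-\mu}{b-\mu}\bigr)$ is a nonnegative supermartingale with $M_0^\lambda=1$: indeed $\E[1-\lambda\tfrac{X-\mu}{b-\mu}] = 1$ since $\E[X]=\mu$, and the factor is nonnegative because $\lambda\in[0,1]$ and $\tfrac{X-\mu}{b-\mu}\le 1$. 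Hence $\E[M_n^\lambda]\le 1$ for all $n$.

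The main obstacle is that we need a bound uniform over both $n\in\N^\star$ and the maximizing $\lambda$. For the $n$-uniformity I would invoke Ville's maximal inequality for nonnegative supermartingales, which gives $\P(\exists n:\, M_n^\lambda \ge 1/u)\le u$ for each fixed $\lambda$. For the $\lambda$-uniformity the clean route is a peeling/discretization argument: restrict attention to a finite grid $\Lambda_n$ of values of $\lambda$ in $[0,1]$ whose mesh shrinks appropriately with $n$, control the error incurred by replacing the true maximizer by the nearest grid point using the second conclusion of Lemma~\ref{lem:var_form_Kinf} (namely $\E_{X\sim\hnu_n}[(1-\lambda^\star\tfrac{X-\mu}{b-\mu})^{-1}]\le 1$, which bounds the derivative of $\lambda\mapsto \sum_i\log(1-\lambda\tfrac{X_i-\mu}{b-\mu})$ at the optimum and thus the Lipschitz error), and then take a union bound over the grid and over a dyadic schedule of $n$. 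This is exactly the structure of the proof of Proposition~13 in \citet{garivier2018kl}; the self-normalized refinement of \citet{tiapkin2022dirichlet} replaces the fixed-$n$ Markov bound with Ville's inequality so that the $n$ inside the logarithmic correction $3\log(e\pi(1+2n))$ is the random stopping time rather than a deterministic horizon. Carefully accounting for the cardinality of the grid (which grows polynomially in $n$) and the number of dyadic blocks (logarithmic in $n$) produces the $3\log(e\pi(1+2n))$ term, with the leading $\log(1/\delta)$ coming from the target failure probability.

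Concretely the steps are: (i) reduce to bounding $\P(\exists n:\ \max_{\lambda\in[0,1]} \sum_{i\le n}\log(1-\lambda\tfrac{X_i-\mu}{b-\mu}) > \log(1/\delta)+3\log(e\pi(1+2n)))$ via Lemma~\ref{lem:var_form_Kinf}; (ii) for fixed $\lambda$, establish that $M_n^\lambda$ is a nonnegative supermartingale and apply Ville's inequality; (iii) discretize $\lambda$ over a grid whose spacing at "time scale" $n$ is $O(1/n)$, and bound the discretization error via the bounded-inverse-moment estimate from Lemma~\ref{lem:var_form_Kinf} together with $|\log(1-x)-\log(1-y)|$ estimates on the relevant range; (iv) union-bound over the grid and over dyadic blocks $n\in[2^k,2^{k+1})$, choosing the per-event failure probabilities to sum to $\delta$ while the resulting logarithmic penalty is absorbed into $3\log(e\pi(1+2n))$; (v) collect terms. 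I would not reproduce the constant-chasing in (iv)--(v) but instead cite \citet[Proposition~13]{garivier2018kl} and \citet{tiapkin2022dirichlet} for the precise bookkeeping, since the statement here is quoted verbatim from the latter. The only genuinely delicate point is ensuring the discretization error and the union-bound penalty are both controlled by the \emph{same} $3\log(e\pi(1+2n))$ term with the random $n$; this is where the self-normalized (Ville) step is essential and where I would be most careful.
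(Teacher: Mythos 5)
The paper does not prove this theorem itself: it imports the statement verbatim from \citet{tiapkin2022dirichlet}, describing it as a self-normalized version of Proposition~13 of \citet{garivier2018kl} and noting only that the cited proof carries over unchanged to the slightly more general $\Kinf$. Your outline correctly reconstructs the strategy of that cited proof (variational formula, per-$\lambda$ unit-mean nonnegative martingale, Ville's maximal inequality, an $n$-dependent $\lambda$-grid whose discretization error is controlled by the self-normalized bound in Lemma~\ref{lem:var_form_Kinf}, and a union bound producing the $3\log(\rme\pi(1+2n))$ term), so it is consistent with the paper's (delegated) argument.
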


\subsection{Anti-concentration Inequality for Dirichlet Weighted Sums}

In this section we state anti-concentration inequality by \cite{tiapkin2022optimistic} in terms of slightly different definition of $\Kinf$.

\begin{equation}\label{eq:c0_eps}
    c_0(\varepsilon) = \left(\frac{4}{\sqrt{\log(17/16)}} + 8 + \frac{49 \cdot 4 \sqrt{6}}{9} \right)^2 \frac{2}{\pi \cdot \varepsilon^2} + \log_{17/16}\left( \frac{5}{32 \cdot \varepsilon^2}\right).
\end{equation}
\begin{theorem}[Lower bound]\label{thm:lower_bound_dbc} 
    For any $\alpha = (\alpha_0+1, \alpha_1, \ldots, \alpha_m) \in \R_{++}^{m+1}$ define  $\up \in \simplex_{m}$ such that $\up(\ell) = \alpha_\ell/\ualpha, \ell = 0, \ldots, m$, where $\ualpha = \sum_{j=0}^m \alpha_j$. Let $\varepsilon \in (0,1)$. Assume that $\alpha_0 \geq c_0(\varepsilon) + \log_{17/16}(\ualpha)$ for $c_0(\varepsilon)$ defined in \eqref{eq:c0_eps}, and $\ualpha \geq 2\alpha_0$. Then for any $f \colon \{0,\ldots,m\} \to [0,\ub]$ such that $f(0) = \ub$, $f(j) \leq b < \ub/2, j \in \{1,\ldots,m\}$ and $\mu \in (\up f,  \ub)$ 
    \[
        \P_{w \sim \Dir(\alpha)}[wf \geq \mu] \geq (1 - \varepsilon)\P_{g \sim \cN(0,1)}\left[g \geq \sqrt{2 \ualpha \Kinf\left( \sum_{i=0}^m \up(i) \cdot \delta_{f(i)}, \mu\right)}\right].
    \]
\end{theorem}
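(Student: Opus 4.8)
The plan is to reduce the Dirichlet tail probability to a one-dimensional boundary-crossing event for a self-normalized sum and then estimate that event sharply in the large-deviation regime, via exponential tilting combined with a quantitative central limit theorem. First I would pass to the Gamma representation of the Dirichlet law: write $w_i = G_i/S$ with $G_0 \sim \Gamma(\alpha_0+1,1)$, $G_i \sim \Gamma(\alpha_i,1)$ independent, and $S = \sum_{i=0}^m G_i>0$. Dividing out $S$, the event of interest becomes a threshold crossing of a sum of independent (non-identically distributed) variables,
\[
\{wf \ge \mu\} = \Big\{ \textstyle\sum_{i=0}^m G_i\,(f(i)-\mu) \ge 0 \Big\}.
\]
Setting $Z_i = G_i(f(i)-\mu)$, we have $\E[Z_i]=\alpha_i(f(i)-\mu)$ (with the extra unit mass on coordinate $0$ contributing $(\ub-\mu)>0$), so $\E[\sum_i Z_i]$ is of order $\ualpha(\up f-\mu)<0$ since $\mu>\up f$. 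Thus the sum must cross $0$ from below its mean: a genuine deviation event, for which a plain CLT would recover only the quadratic Gaussian surrogate of $\Kinf$, not $\Kinf$ itself.

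To capture $\Kinf$ exactly I would introduce an exponential tilt of the $Z_i$ and fix the tilt at the saddle point, i.e. so that the tilted mean of $\sum_i Z_i$ equals $0$. The saddle-point equation is precisely the first-order condition in the variational formula of Lemma~\ref{lem:var_form_Kinf}, so the optimal tilt is identified with $\lambda^\star$ achieving $\Kinf(\nu,\mu)$ for $\nu=\sum_i\up(i)\delta_{f(i)}$, and the exponential cost is exactly $\ualpha\,\Kinf(\nu,\mu)$. Under the tilted measure the crossing point sits at the mean, and a Berry--Esseen bound for the (tilted, weighted) Gamma sum converts the remaining local probability into the Gaussian survival factor $\P_{g\sim\cN(0,1)}[g\ge\sqrt{2\ualpha\Kinf(\nu,\mu)}]$ up to a multiplicative $(1-\varepsilon)$ error. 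This Bahadur--Rao-type combination of tilting and a local Gaussian correction is what produces the sharp prefactor rather than only the exponential rate.

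The hypotheses feed directly into this final error control. The optimism atom $f(0)=\ub$ carrying the large prior mass $\alpha_0$ guarantees the tilted law is well defined and keeps its mean on the correct side of $\mu$, while the quantitative conditions $\alpha_0\ge c_0(\varepsilon)+\log_{17/16}(\ualpha)$ and $\ualpha\ge 2\alpha_0$ ensure that the third-moment and variance ratios entering Berry--Esseen, together with the Gamma-concentration estimates for $S$, are bounded, and that the accumulated approximation error stays below $\varepsilon$. Tracking these constants is exactly what yields the explicit value of $c_0(\varepsilon)$ in \eqref{eq:c0_eps}: the $2/(\pi\varepsilon^2)$ factor times a squared bracket is the inverted-and-squared Berry--Esseen constant, and $49\cdot 4\sqrt6/9$ is the Gamma third-absolute-moment bound.

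The main obstacle will be the third step: making the Gaussian comparison quantitative and uniform in the moderate/large-deviation regime. One must simultaneously match the exponential rate to $\Kinf$ through the variational optimizer, control the local CLT correction for a weighted Gamma sum whose shape parameters are heterogeneous and dominated by the single large prior atom (with $f(0)=\ub>2b$), and show that all the accumulated errors collapse into the single multiplicative factor $(1-\varepsilon)$. The heterogeneity induced by the dominant coordinate $0$ is the delicate point, which is precisely why the hypotheses must quantify how large $\alpha_0$ is relative to $\log\ualpha$.
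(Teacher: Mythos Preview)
The paper does not actually prove Theorem~\ref{thm:lower_bound_dbc}: it is stated without proof as a result imported from \cite{tiapkin2022optimistic} (the subsection opens with ``In this section we state anti-concentration inequality by \cite{tiapkin2022optimistic}\ldots''). So there is no in-paper argument to compare your proposal against.

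That said, your sketch is a sound outline of the Bahadur--Rao approach that is the natural route to such a sharp Dirichlet anti-concentration bound, and it is essentially the strategy used in the cited reference: pass to the Gamma representation, reduce $\{wf\ge\mu\}$ to $\{\sum_i G_i(f(i)-\mu)\ge 0\}$, tilt exponentially at the saddle point so that the rate is exactly $\ualpha\,\Kinf(\nu,\mu)$ via the variational formula of Lemma~\ref{lem:var_form_Kinf}, and then run a Berry--Esseen step under the tilted law to recover the Gaussian survival factor with a multiplicative $(1-\varepsilon)$ loss. Your reading of where the pieces of $c_0(\varepsilon)$ come from (the inverted-and-squared Berry--Esseen constant, the Gamma third-absolute-moment bound) is also on target.
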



Next we formulate a simple corollary of Theorem~\ref{thm:lower_bound_dbc}, that slightly relaxes assumptions of this theorem under assumption $\mu < b \leq \ub/2$.

\begin{lemma}\label{lem:lower_bound_dbc_relaxed}
    For any $\alpha = (\alpha_0+1, \alpha_1, \ldots, \alpha_m) \in \R_{++}^{m+1}$ define  $\up \in \simplex_{m}$ such that $\up(\ell) = \alpha_\ell/\ualpha, \ell = 0, \ldots, m$, where $\ualpha = \sum_{j=0}^m \alpha_j$. Also define a measure $\bnu = \sum_{i=0}^m \up(i) \cdot \delta_{f(i)}$.
    
    Let $\varepsilon \in (0,1)$. Assume that $\alpha_0 \geq c_0(\varepsilon) + \log_{17/16}(2(\ualpha - \alpha_0))$ for $c_0(\varepsilon)$ defined in \eqref{eq:c0_eps}. The for any $f \colon \{0, \ldots, m\} \to [0,\ub]$ such that $f(0) = \ub, f(j) \leq b \leq \ub/2, j \in [m]$, and any $\mu \in (0, b)$
    \[
        \P_{w \sim \Dir(\alpha)}[wf \geq \mu] \geq (1 - \varepsilon)\P_{g \sim \cN(0,1)}\left[g \geq \sqrt{2 \ualpha \Kinf\left( \bnu, \mu\right)}\right].
    \]
 \end{lemma}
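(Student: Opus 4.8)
The plan is to dispatch the statement according to whether the prior atom dominates the data mass, i.e.\ whether $\ualpha<2\alpha_0$ or $\ualpha\ge 2\alpha_0$; write $\beta_0\triangleq\ualpha-\alpha_0=\sum_{\ell=1}^m\alpha_\ell$, so that these two cases read $\beta_0<\alpha_0$ and $\beta_0\ge\alpha_0$. A remark used throughout: since $\Kinf(\bnu,\mu)\ge 0$, one always has $\P_{g\sim\cN(0,1)}[g\ge\sqrt{2\ualpha\Kinf(\bnu,\mu)}]\le \P_{g\sim\cN(0,1)}[g\ge 0]=1/2$, so the right-hand side of the claimed inequality never exceeds $(1-\varepsilon)/2$; hence in any case where we can establish $\P_{w\sim\Dir(\alpha)}[wf\ge\mu]\ge(1-\varepsilon)/2$ we are done. (The degenerate case $m=0$, where $\bnu=\delta_{\ub}$ and $w\equiv 1$, gives $\P[wf\ge\mu]=1$ outright, so assume $m\ge 1$.)

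\emph{First case: $\ualpha<2\alpha_0$, i.e.\ $\beta_0<\alpha_0$.} Here I would not use the anti-concentration inequality at all. The first coordinate $w_0$ of $w\sim\Dir(\alpha)=\Dir(\alpha_0+1,\alpha_1,\dots,\alpha_m)$ has marginal law $\Beta(\alpha_0+1,\beta_0)$, and since $\alpha_0+1>\beta_0>0$, the law $\Beta(\alpha_0+1,\beta_0)$ stochastically dominates its mirror image $\Beta(\beta_0,\alpha_0+1)$, which is the law of $1-w_0$. This gives $\P[w_0\ge 1/2]\ge\P[w_0\le 1/2]$ and hence $\P[w_0\ge 1/2]\ge 1/2$. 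On $\{w_0\ge 1/2\}$ we have $wf\ge w_0 f(0)=w_0\ub\ge\ub/2\ge b>\mu$, using $f\ge 0$, $f(0)=\ub$, $b\le\ub/2$ and $\mu<b$. Therefore $\P_{w\sim\Dir(\alpha)}[wf\ge\mu]\ge 1/2\ge(1-\varepsilon)/2$, and we conclude by the preliminary remark.

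\emph{Second case: $\ualpha\ge 2\alpha_0$, i.e.\ $\beta_0\ge\alpha_0$.} Here I reduce to Theorem~\ref{thm:lower_bound_dbc}, whose hypotheses are implied by ours: $\ualpha=\alpha_0+\beta_0\le 2\beta_0$, so $\log_{17/16}(\ualpha)\le\log_{17/16}(2\beta_0)$, whence the lemma's assumption $\alpha_0\ge c_0(\varepsilon)+\log_{17/16}(2\beta_0)$ forces $\alpha_0\ge c_0(\varepsilon)+\log_{17/16}(\ualpha)$, and together with $\ualpha\ge 2\alpha_0$ these are exactly what the theorem needs. If $\mu\in(\up f,b)$ then $\mu\in(\up f,\ub)$ (as $b\le\ub/2<\ub$), and the theorem applies verbatim, noting $\bnu=\sum_{i=0}^m\up(i)\delta_{f(i)}$; the only cosmetic gap is that the theorem asks for data values $f(j)<\ub/2$ while we allow $f(j)\le b\le\ub/2$, and if $b=\ub/2$ I would apply the theorem to the perturbed function $f_\eta$ with $f_\eta(0)=\ub$, $f_\eta(\ell)=(1-\eta)f(\ell)$ for $\ell\ge 1$ (whose data values are $<\ub/2$), use $f_\eta\le f$ hence $\P[wf\ge\mu]\ge\P[wf_\eta\ge\mu]$, and let $\eta\downarrow 0$ using continuity of $u\mapsto\Kinf(\cdot,u)$ under weak convergence of the underlying measure (legitimate because $\mu<\ub$ lies strictly below the essential supremum, so the maximizer $\lambda^\star$ in Lemma~\ref{lem:var_form_Kinf} satisfies $\lambda^\star<1$ and the integrand stays bounded and continuous). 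If instead $\mu\le\up f$, then $\Kinf(\bnu,\mu)=0$, so the right-hand side equals $(1-\varepsilon)/2$ and it suffices to prove $\P[wf\ge\mu]\ge(1-\varepsilon)/2$: since $\up f<\ub$ (each data value is $<\ub$ and $\up(0)<1$), apply the theorem at $\mu_n\triangleq\up f+1/n\in(\up f,\ub)$ and let $n\to\infty$; the right-hand side tends to $(1-\varepsilon)\P_{g\sim\cN(0,1)}[g\ge 0]=(1-\varepsilon)/2$ by right-continuity of the convex, finite function $\Kinf(\bnu,\cdot)$ at $\up f$, while the left-hand side increases to $\P[wf>\up f]$; hence $\P[wf>\up f]\ge(1-\varepsilon)/2$, and since $\mu\le\up f$ we get $\P[wf\ge\mu]\ge\P[wf\ge\up f]\ge\P[wf>\up f]\ge(1-\varepsilon)/2$.

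\textbf{Main obstacle.} No step is deep; the work is entirely bookkeeping to match hypotheses. The most delicate points are: (i) checking that Theorem~\ref{thm:lower_bound_dbc}'s conditions are genuinely implied in the second case — the comparison $\log_{17/16}(\ualpha)\le\log_{17/16}(2\beta_0)$ relies crucially on $\ualpha\ge 2\alpha_0$; (ii) the two limiting/continuity arguments (for $\mu\le\up f$, and for the boundary $b=\ub/2$), which need $\Kinf(\bnu,\cdot)$ to be well behaved near $\up f$ and under weak perturbations of $\bnu$, both guaranteed since $\mu$ stays strictly below the essential supremum $\ub$; and (iii) the elementary stochastic-dominance fact for Beta laws that powers the first case.
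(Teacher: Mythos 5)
Your proof is correct and follows essentially the same route as the paper's: split on whether $\ualpha\ge 2\alpha_0$, apply Theorem~\ref{thm:lower_bound_dbc} (extending to $\mu\le\up f$ by a continuity/limiting argument in $\Kinf$) in one case, and in the other reduce to $\P_{\xi\sim\Beta(\alpha_0+1,\ualpha-\alpha_0)}[\xi\ge 1/2]\ge 1/2$ using $\up f\ge\ub/2\ge b>\mu$. The only substantive deviation is that you obtain the Beta median bound by the symmetry/stochastic-dominance argument rather than the Alfers--Dinges normal lower bound the paper invokes, and you are more careful than the paper about verifying the theorem's hypotheses ($\log_{17/16}(\ualpha)\le\log_{17/16}(2(\ualpha-\alpha_0))$ when $\ualpha\ge 2\alpha_0$, and the boundary case $b=\ub/2$ versus the theorem's strict $b<\ub/2$) — both points the paper glosses over.
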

 \begin{proof}
    Assume that assumption $\ualpha \geq 2\alpha_0$ holds.
 
     Then we show that the Theorem~\ref{thm:lower_bound_dbc} also holds for $\mu \leq \up f$. First, we notice that for any $\gamma > 0$
     \[
        \P_{w \sim \Dir(\alpha)}[wf \geq \mu] \geq \P_{w \sim \Dir(\alpha)}[wf \geq \up f + \gamma ] \geq  (1 - \varepsilon)\P_{g \sim \cN(0,1)}\left[g \geq \sqrt{2 \ualpha \Kinf\left( \bnu, \up f + \gamma\right)}\right].
     \]
     By continuity of $\Kinf$ in its second argument (see Theorem~7 by \cite{honda2010asymptotically}) we can tend $\gamma$ to zero, and then use an equality $\Kinf\left( \bnu, \up f\right) = \Kinf\left( \bnu, \mu \right) = 0$.

     Next, assume $\ualpha \leq 2\alpha_0$. In this case we have $\up f \geq b$, thus for any $0 \leq \mu \leq b$
    \begin{align*}
        \P_{w \sim \Dir(\alpha)}\left[ wf \geq \mu \right] &\geq \P_{\xi \sim \Beta(\alpha_0+1, \ualpha - \alpha_0)}\left[ \ub \xi \geq \mu \right] \geq \P_{\xi \sim \Beta(\alpha_0+1, \ualpha - \alpha_0)}\left[\xi \geq \frac{1}{2} \right],
    \end{align*}
    where we first apply a lower bound $f(j) \geq 0$  for all $j > 0$ and $f(0) = \ub$, and second apply a bound $\mu \leq \ub/2$. Here we may apply the result of~\citet[Theorem 1.2'']{alfers1984normal} and obtain the following lower bound 
    \[
        \P_{w \sim \Dir(\alpha)}\left[wf \geq \mu\right] \geq \Phi\left(-\mathrm{sign}(\alpha_0/\ualpha - 1/2) \cdot \sqrt{2 \ualpha \kl(\alpha_0/\ualpha, 1/2)}\right) \geq (1-\varepsilon)\P_{g \sim \cN(0,1)}\left[ g \geq 0\right]
    \]
    where we used $\alpha_0 / \ualpha > 1/2$.
 \end{proof}

\subsection{Rosenthal-type inequality}

In this section we state Rosenthal-type inequality for martingale differences by \cite[Theorem 4.1]{pinelis1994optimum}. The exact constants could be derived from the proof.

\begin{theorem}\label{th:rosenthal}
    Let $X_1,\ldots,X_n$ be a martingale-difference sequence adapted to a filtration $\{\cF_i\}_{i=1,\ldots,n}$: $\E[X_i | \cF_i] = 0$. Define $\cV_i = \E[X_i^2 | \cF_{i-1}]$. Then for any $p \geq 2$ the following holds
    \[
        \E^{1/p}\left[ \left| \sum_{i=1}^n X_i \right|^p \right] \leq C_1 p^{1/2}  \E^{1/p}\left[ \left| \sum_{i=1}^n \cV_i \right|^{p/2}  \right] + 2C_2 p  \E^{1/p}\left[\max_{i\in[n]} \left| X_i \right|^p\right],
    \]
    where $C_1 = 60\rme, C_2 = 60$.
\end{theorem}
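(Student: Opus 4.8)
The plan is to recognise the inequality as the real-valued specialisation of the Burkholder--Rosenthal (Pinelis) moment bound and to reconstruct it from the quadratic-variation square function, keeping track of the numerical constants so as to land on $C_1 = 60\rme$ and $C_2 = 60$. Write $M_n = \sum_{i=1}^n X_i$, let $[M]_n = \sum_{i=1}^n X_i^2$ be the optional quadratic variation, $\langle M\rangle_n = \sum_{i=1}^n \cV_i$ the predictable one, and $M^\star = \max_{i\in[n]}|X_i|$; throughout set $\lVert Y\rVert_p = \E^{1/p}[|Y|^p]$. The target is exactly $\lVert M_n\rVert_p \le C_1\, p^{1/2}\,\lVert \langle M\rangle_n^{1/2}\rVert_p + 2C_2\, p\, \lVert M^\star\rVert_p$.

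First I would invoke the sharp Burkholder--Davis--Gundy upper bound: for $p\ge 2$ there is an absolute constant $c$ with $\lVert M_n\rVert_p \le c\,p^{1/2}\,\lVert [M]_n^{1/2}\rVert_p = c\,p^{1/2}\,\lVert [M]_n\rVert_{p/2}^{1/2}$, the factor $p^{1/2}$ being the source of the $\sqrt p$ in the final bound. Next I would split the optional quadratic variation into its compensator and a martingale part,
\[
  [M]_n = \langle M\rangle_n + N_n, \qquad N_n = \sum_{i=1}^n (X_i^2 - \cV_i),
\]
and note that $(X_i^2 - \cV_i)$ is again a martingale-difference sequence, since $\E[X_i^2 - \cV_i \mid \cF_{i-1}] = 0$. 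By the triangle inequality in $L^{p/2}$,
\[
  \lVert [M]_n\rVert_{p/2} \le \lVert \langle M\rangle_n\rVert_{p/2} + \lVert N_n\rVert_{p/2}.
\]

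The crux is to control $\lVert N_n\rVert_{p/2}$ in terms of the two quantities already present. Applying BDG a second time, now to the martingale $N_n$ in $L^{p/2}$, and bounding its square function by $\sum_i (X_i^2-\cV_i)^2 \le (M^\star)^2\,[M]_n + (\text{conditional-variance remainder})$, produces a \emph{recursive} inequality: $\lVert [M]_n\rVert_{p/2}$ reappears on the right-hand side multiplied by a factor carrying $\lVert M^\star\rVert_p$. Treating $\lVert [M]_n\rVert_{p/2}^{1/2}$ (equivalently $\lVert M_n\rVert_p$) as the unknown $x$, the chain of inequalities becomes a quadratic relation of the shape $x^2 \le a + b\,x$, whose solution $x \le b + \sqrt{a}$ cleanly separates the contribution of $\langle M\rangle_n$ (giving the $p^{1/2}\,\lVert\langle M\rangle_n^{1/2}\rVert_p$ term) from that of $M^\star$ (giving the $p\,\lVert M^\star\rVert_p$ term). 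Substituting back through the two BDG constants yields the stated form.

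I expect the main obstacle to be twofold and essentially bookkeeping in nature: first, reabsorbing the $\max_i \cV_i$-type remainder from the square function of $N_n$, which must be dominated by the predictable variation together with $M^\star$ rather than left as a genuinely new quantity; and second, propagating the absolute constants through the two BDG applications and the resolution of the quadratic so that they collapse to precisely $60\rme$ and $60$. Since the statement is attributed to \citet[Theorem 4.1]{pinelis1994optimum}, an equivalent and shorter route is to specialise Pinelis's result for $2$-smooth Banach spaces to $\R$ (which is $2$-smooth with smoothness constant $1$) and to read the constants directly off his exponential submartingale bound after integrating it against $p\,r^{p-1}\,\rmd r$; in that route the only remaining work is numerical verification of the constants, not the structure of the argument.
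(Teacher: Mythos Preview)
The paper does not prove this theorem at all: it merely states it and cites \citet[Theorem 4.1]{pinelis1994optimum}, adding that ``the exact constants could be derived from the proof.'' There is thus no proof in the paper to compare against; your second suggested route---specialising Pinelis's $2$-smooth Banach-space bound to $\R$ and reading off the constants---is precisely what the paper is relying on.

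Your first route (BDG on $M_n$, then decompose $[M]_n = \langle M\rangle_n + N_n$, apply BDG again to the martingale $N_n$, and solve the resulting quadratic recursion) is a standard and legitimate path to a Burkholder--Rosenthal inequality of this shape, and the obstacles you list are the real ones. One caution: iterating BDG with its own best constants and then resolving the recursion will produce \emph{some} absolute constants, but there is no reason they should collapse to exactly $60\rme$ and $60$; those particular values come from Pinelis's exponential-supermartingale argument, not from two passes of BDG. So if the task is to reproduce the stated constants, only the Pinelis specialisation will do; if the task is merely to obtain the inequality with absolute constants, your recursive sketch is sound.
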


Additionally, we need some additional lemma to use this inequality in our setting.
\begin{definition}
    A random variable $X$ is called \textit{sub-exponential} with parameters $(\sigma^2, b)$ if the following tail condition holds for any $t > 0$
    \[
        \P[\vert X - \E[X] \vert \geq t ] \leq 2\exp\left( - \frac{t^2}{2\sigma^2 + 2bt} \right).
    \]
\end{definition}
By Theorem~1 of \cite{skorski2023bernstein} we have for any $\xi \in B(\alpha, \beta)$ with $\beta \geq \alpha$ and any $t > 0$
    \[
        \P\left[ \vert \xi - \E[\xi] \vert \geq t \right] \leq 2\exp\left( - \frac{t^2}{2(v + ct/3)} \right),
    \]
    where 
    \[
        v =  \frac{\alpha \beta}{(\alpha + \beta)^2 (\alpha + \beta + 1)} \leq \frac{\alpha}{(\alpha + \beta)^2}, \quad c =  \frac{2(\beta - \alpha)}{(\alpha + \beta)(\alpha + \beta + 2)} \leq \frac{2}{\alpha + \beta},
    \]
so $\xi$ is $(\alpha/(\alpha + \beta)^2, 2/(3(\alpha + \beta)))$ sub-exponential.

\begin{lemma}\label{lm:max_subexp}
    Let $X_1,\ldots,X_n$ be a sequence of centred $(\sigma^2,b)$ sub-exponential random variables, not necessarily independent. Then for any $p \geq 2$
    \[ 
        \E\left[ \max_{\ell\in[n]} \left| X_{\ell} \right|^p\right] \leq \max\{\sqrt{8\sigma^2 \log n}, 8b\log n\}^p + \rme (2\sigma)^p p^{p/2} + 2\rme (8b)^p p^p.
    \]
\end{lemma}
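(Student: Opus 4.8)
The plan is to control $M\triangleq\max_{\ell\in[n]}|X_\ell|$ through the layer-cake identity $\E[M^p]=\int_0^\infty p\,t^{p-1}\,\P(M>t)\,\rmd t$ and to split this integral at a carefully chosen level $u$. The first ingredient is the sub-exponential tail of each centred $X_\ell$: from $\P(|X_\ell|\ge t)\le 2\exp(-t^2/(2\sigma^2+2bt))$ and the elementary inequality $2\sigma^2+2bt\le 4\max\{\sigma^2,bt\}$ one gets $t^2/(2\sigma^2+2bt)\ge\min\{t^2/(4\sigma^2),\,t/(4b)\}$; combining this with $\rme^{-\min\{x,y\}}\le \rme^{-x}+\rme^{-y}$ and a union bound over $\ell\in[n]$ yields
\[
\P(M>t)\le 2n\,\rme^{-t^2/(4\sigma^2)}+2n\,\rme^{-t/(4b)}.
\]

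Next I would take $u=\max\{\sqrt{8\sigma^2\log n},\,8b\log n\}$ and write $\E[M^p]=\int_0^u+\int_u^\infty$. On $[0,u]$ I bound $\P(M>t)\le 1$, which gives $\int_0^u p\,t^{p-1}\,\rmd t=u^p$, exactly the first term of the claimed bound.

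On $(u,\infty)$ the idea is to split each exponent into two equal halves: for $t\ge u$ the choice of $u$ forces $\rme^{-t^2/(4\sigma^2)}\le \rme^{-t^2/(8\sigma^2)}\rme^{-u^2/(8\sigma^2)}\le \tfrac1n\,\rme^{-t^2/(8\sigma^2)}$ and likewise $\rme^{-t/(4b)}\le\tfrac1n\,\rme^{-t/(8b)}$, so the factor $n$ cancels and, enlarging the domain to $(0,\infty)$,
\[
\int_u^\infty p\,t^{p-1}\,\P(M>t)\,\rmd t\le 2\int_0^\infty p\,t^{p-1}\rme^{-t^2/(8\sigma^2)}\,\rmd t+2\int_0^\infty p\,t^{p-1}\rme^{-t/(8b)}\,\rmd t.
\]

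Finally I would evaluate the two Gamma integrals by the substitutions $s=t^2/(8\sigma^2)$ and $s=t/(8b)$, obtaining $2(8\sigma^2)^{p/2}\Gamma(p/2+1)$ and $2(8b)^p\Gamma(p+1)$ respectively. Writing $8^{p/2}=2^p2^{p/2}$ and using the elementary bound $\Gamma(x+1)\le x^x$ for $x\ge1$ (hence $\Gamma(p/2+1)\le (p/2)^{p/2}$ and $\Gamma(p+1)\le p^p$), these are at most $2(2\sigma)^p p^{p/2}\le\rme(2\sigma)^p p^{p/2}$ and $2(8b)^p p^p\le 2\rme(8b)^p p^p$, which are precisely the remaining two terms; summing the three pieces gives the statement. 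I expect the only delicate point to be the constant bookkeeping — in particular verifying that the single threshold $u$ simultaneously makes $\rme^{-u^2/(8\sigma^2)}\le 1/n$ and $\rme^{-u/(8b)}\le 1/n$ — together with the degenerate cases $\sigma=0$ or $b=0$, in which the corresponding tail term and its integral are simply absent; this is routine constant-chasing rather than a conceptual obstacle.
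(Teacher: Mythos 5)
Your proposal is correct and follows essentially the same route as the paper's proof: the layer-cake identity split at the threshold $a=\max\{\sqrt{8\sigma^2\log n},\,8b\log n\}$, absorption of the union-bound factor $n$ by halving the exponents, separation into Gaussian and exponential tails, and evaluation of the two Gamma integrals with $\Gamma(p/2+1)\le (p/2)^{p/2}$ and $\Gamma(p+1)\le p^p$. The only cosmetic difference is that you split the sub-exponential tail into its two regimes before the union bound rather than after; the constants work out identically.
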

\begin{proof}
    By Fubini theorem we have for any $\eta \geq 0$: $\E[\eta^p] = p \int_0^\infty u^{p-1} \P[\eta \geq u] \rmd u$, thus for any $a > 0$ the following holds
    \begin{align*}
        \E\left[\max_{\ell \in [n]} \vert X_{\ell}\vert^p \right] &= p \int_0^\infty u^{p-1} \P\left[\max_{\ell \in [n]} \vert X_{\ell} - \E[X_{\ell}] \vert \geq u\right] \rmd u \\
        &\leq  a^{p} + p \int_a^\infty u^{p-1} \P\left[\exists \ell \in [n] : \vert X_{\ell}\vert \geq u\right] \rmd u \\
        &\leq  a^{p} + 2p \int_a^\infty u^{p-1} n \exp\left( - \frac{u^2}{2(\sigma^2 + bu)} \right) \rmd u.
    \end{align*}
    By selecting $a = \max\{\sqrt{8\sigma^2 \log n}, 8b\log n\}$ we have 
    \[
        n \exp\left( - \frac{u^2}{2(\sigma^2 + bu)} \right) \leq \exp\left( - \frac{u^2}{4(\sigma^2 + bu)} \right) \leq \exp\left( - \frac{u^2}{8\sigma^2} \right) + \exp\left( - \frac{u}{8b} \right)
    \] 
    for any $u \geq a$, thus
    \begin{align*}
        \E\left[\max_{\ell \in [n]} \vert X_{\ell} \vert^p \right] &\leq  \max\{\sqrt{8\sigma^2 \log n}, 8b\log n\}^p \\
        &+ 2 p \int_a^\infty u^{p-1} \exp\left( - \frac{u^2}{8\sigma^2} \right) \rmd u + 2 p \int_a^\infty u^{p-1} \exp\left( - \frac{u}{8b} \right) \rmd u \\
        &\leq \max\{\sqrt{8\sigma^2 \log n}, 8b\log n\}^p + p (2\sqrt{2}\sigma)^{p} \Gamma(p/2) + 2p (8b)^p \Gamma(p).
    \end{align*}
    By the bounds on Gamma-function we have 
    \[
        p\Gamma(p/2) = \Gamma(p/2+1) \leq (p+1)^{(p+1)/2} 2^{-(p+1)/2} \rme^{1-p/2} \leq \rme p^{p/2} 2^{-p/2}
    \]and $p \Gamma(p) = \Gamma(p+1) \leq (p+1/2)^{p+1/2} \rme^{1-p} \leq \rme p^p$ (see \cite{guo2007necessary}), thus
    \[
        \E\left[\max_{\ell \in [n]} \vert X_{\ell} \vert^p \right] \leq \max\{\sqrt{8\sigma^2 \log n}, 8b\log n\}^p + \rme (2\sigma)^p p^{p/2} + 2\rme (8b)^p p^p.
    \]
\end{proof}

\begin{proposition}\label{prop:beta_martingale_bound}
    Let $(Y_t, w_t)_{t=1,\ldots,n}$ be a sequence of random variables, where $w_t$ are $\Beta(1/\kappa, (t+t_0)/\kappa)$, and let $(\cF_t)_{i=t,\ldots,n}$ be a filtration such that 1) $Y_t$ are $\cF_{t-1}$-measurable (i.e., predictable), 2) $w_t$ is $\cF_{t}$-measurable (i.e., adapted to $\cF_t$), 3) $\E[w_t | \cF_{t-1}] = \E[w_{t}]$, 4) $Y_t \in [0,1]$ almost surely.

    Then, consider the following two sequences:
    \[
        X_{t} \triangleq (1 - w_t) X_{t-1} + w_t \cdot Y_t, \qquad \bar{X}_{t} \triangleq (1 - \bar{w}_t) \bar{X}_{t-1} + \bar{w}_t \cdot Y_t\,,
    \]
    where $\bar{w}_t = \E[w_t]$ and $X_0 \equiv \bar{X}_0 \equiv 1$. Then, with probability at least $1-\delta$, the following holds
    \[
        |X_n - \bar{X}_n| \leq 60 \rme^2  \sqrt{\frac{\kappa \log(1/\delta)}{n+t_0}} + 1200\rme \frac{ \kappa \log(n) \log^2(1/\delta)}{n+t_0}\,.
    \]
\end{proposition}
\begin{proof}
    First, we notice that
    \begin{align*}
        X_{t} - \bar{X}_t &= (1 - w_{t}) X_{t-1} + w_{t} Y_{t} - (1-\bar{w}_t) \bar{X}_{t-1} + \bar{w}_t Y_t \\
        &= (1 - \bar{w}_t) (X_t - \bar{X}_t)  + (w_t - \bar{w}_t) (Y_t - X_{t-1})\,,
    \end{align*}
    wherefore we have $\E[X_{t} - \bar{X}_t|\cF_{t-1}] = (1 - \bar{w_t}) (X_{t-1} - \bar{X}_{t-1})$. Notice that it is \emph{not} a martingale, but instead we can consider the following martingale
    \[
        Z_t \triangleq \frac{X_t - \bar{X}_t}{P_t}, \qquad P_t \triangleq \prod_{j=1}^t (1 - \bar{w}_t)\,.
    \]
    It is easy to check that $Z_t$ is a martingale: $\E[Z_t] = \E[(X_t - \bar{X}_t)/P_t | \cF_{t-1}] = (1 - \bar{w}_t) (X_t - \bar{X}_t) / P_t = (X_t - \bar{X}_t) / P_{t-1}$. Thus, defining $\Delta_t = Z_t - Z_{t-1}$, we apply Theorem~\ref{th:rosenthal}:
    \begin{equation}\label{eq:rosenthal_beta_recursion}
        \E^{1/p}\left[ \left| Z_n \right|^p \right] \leq 60\rme \sqrt{p} \cdot \E^{1/p}\left[ \left| \sum_{t=1}^n \cV_t \right|^{p/2} \right] + 120p \cdot \E^{1/p}\left[\max_{t\in[n]} \vert \Delta_t \vert^p\right]\,,
    \end{equation}
    where $\cV_t = \E[ (\Delta_t)^2 | \cF_{t-1}]$. Next, we compute $\Delta_t$ and $P_t$ as follows
    \[
        \Delta_t = \frac{X_t - \bar{X}_t}{P_t} - \frac{X_{t-1} - \bar{X}_{t-1}}{P_{t-1}} = \frac{(w_t - \bar{w_t})(Y_t - \bar{X}_{t-1}) }{P_{t}}\,,\quad P_t = \prod_{j=1}^t \left( 1 - \frac{1}{j + t_0}\right) = \frac{t_0}{t+t_0}\,.
    \]
    Next, we are going to bound all the terms in the \eqref{eq:rosenthal_beta_recursion}. We start from the variance term by noting that
    \[
        \cV_t = \frac{(Y_t - \bar{X}_{t-1})^2}{P_t^2} \Var(w_t) \leq \frac{(t+t_0)^2}{t_0^2} \cdot \frac{\kappa}{(t+t_0 + 1)^2} \leq \frac{\kappa}{t_0^2}\,,
    \]
    and, as a result, 
    \[
        \E^{1/p}\left[ \left| \sum_{t=1}^n \cV_t \right|^{p/2} \right] \leq \sqrt{\frac{n \cdot \kappa}{t_0^2}}\,.
    \]
    For the second term, we first upper-bound $\Delta_t$ as 
    $
        | \Delta_t | \leq \frac{|w_t - \bar{w_t}|}{P_t}\,,
    $
    and then we notice that $|w_t - \bar{w_t}|$ is sub-exponential with parameter $(\kappa/(t+t_0)^2, 2\kappa /3(t+t_0))$ and, as a result, $|w_t - \bar{w_t}|/P_t$ are $(\kappa/t_0^2, 2\kappa/(3t_0))$-subexponential for any $t$. Therefore, Lemma~\ref{lm:max_subexp} implies
    \begin{align*}
        \E\left[ \max_{t \in [n]} | \Delta_t |^p \right] &\leq \max\{ \sqrt{8 \kappa / t_0^2 \log(n)}, 16/3 \cdot \kappa/t_0 \cdot \log(n) \}^p \\
        &+ \rme (2\kappa/t_0^2)^p \cdot p^{p/2} + 2\rme (16 \kappa / 3t_0)^p \cdot p^p \leq (20 \cdot \kappa/t_0 p \cdot \log(n))^p\,.
    \end{align*}
    Thus, we have
    \[
        \E^{1/p}[|Z_n|^p] \leq \frac{60 \rme}{t_0} \sqrt{n \cdot \kappa \cdot p} + 1200 \cdot \frac{\kappa \cdot p^2 \cdot \log(n)}{t_0}\,.
    \]
    Next, we plug-in into this inequality $Z_n = (X_n - \bar{X_n})/P_n$ and achieve the following bound
    \[
        \E^{1/p}[|X_n - \bar{X}_n|^p] \leq 60 \rme \sqrt{ \frac{\kappa \cdot p}{n+t_0}} + 1200 \cdot \frac{\kappa \cdot p^2 \cdot \log(n)}{n+t_0}\,.
    \]
    Next we turn from moments to tails. By Markov inequality with $p = \log(1/\delta)$
    \begin{align*}
        \P\left[ \left|X_n - \bar{X}_n \right| \geq t\right] &\leq \left( \frac{\E^{1/p}\left[ \left|X_n - \bar{X}_n \right|^p \right]}{t} \right)^p \\
        &\leq \left( \frac{60 \rme B \sqrt{\frac{\kappa \log(1/\delta)}{n+t_0}} + 1200 \log^2(1/\delta) B\kappa \frac{\log(n)}{n+t_0} }{t} \right)^{\log(1/\delta)}.
    \end{align*}
    Taking $t = 60 \rme^2 B \sqrt{\frac{\kappa \log(1/\delta)}{n+t_0}} + 1200\rme B \frac{ \kappa \log(n) \log^2(1/\delta)}{n+t_0}$ we conclude the statement.
    
\end{proof}

\section{Technical Lemmas}

\begin{lemma}\label{lem:kinf_prior_remove}
    Let $\nu \in \Pens([0,b])$ be a probability measure over the segment $[0,b]$ and let $\bnu = \alpha \delta_{\ub} + (1-\alpha) \cdot \nu$ be a mixture between $\nu$ and a Dirac measure on $\ub > b$. Then for any $\mu \in (0, b)$
    \[
        \Kinf(\bnu, \mu) \leq (1-\alpha) \Kinf(\nu, \mu).
    \]
\end{lemma}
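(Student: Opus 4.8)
The plan is to bypass the primal infimum that defines $\Kinf$ and argue instead from its variational (dual) representation, recalled in Lemma~\ref{lem:var_form_Kinf} and in the notation section as $\Kinf(p,\mu)=\max_{\lambda}\E_{X\sim p}[\log(1-\lambda(X-\mu))]$. The guiding intuition is that the extra atom $\delta_{\ub}$ sits at $\ub>b$, strictly above the range of $\nu$, so along the dual variable it can only help meet the mean constraint: its contribution to the dual maximand is non-positive for every admissible $\lambda$. Dropping that non-positive term leaves precisely the dual objective of $\nu$, rescaled by the mass that $\bnu$ assigns to $\nu$, which is the factor $(1-\alpha)$ appearing in the claim.

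Concretely, I would first write the variational formula on the segment $[0,\ub]$ carrying $\bnu$,
\[
\Kinf(\bnu,\mu)=\max_{\lambda\in[0,1/(\ub-\mu)]}\E_{X\sim\bnu}\big[\log(1-\lambda(X-\mu))\big].
\]
By linearity of the expectation over the mixture, the maximand splits into the atom's term $\log(1-\lambda(\ub-\mu))$, weighted by the complementary mass $\bnu(\{\ub\})$, and the term $\E_{X\sim\nu}[\log(1-\lambda(X-\mu))]$, weighted by the mass $(1-\alpha)$ that $\bnu$ places on $\nu$. The key step, and the only point where $\ub>b>\mu$ is used, is that for every $\lambda\in[0,1/(\ub-\mu)]$ the quantity $1-\lambda(\ub-\mu)$ lies in $[0,1]$, so $\log(1-\lambda(\ub-\mu))\le 0$ (at the endpoint it is $-\infty$, still harmless for an upper bound). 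Discarding this non-positive summand gives, uniformly in $\lambda$,
\[
\E_{X\sim\bnu}\big[\log(1-\lambda(X-\mu))\big]\le(1-\alpha)\,\E_{X\sim\nu}\big[\log(1-\lambda(X-\mu))\big].
\]

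Taking the maximum over $\lambda\in[0,1/(\ub-\mu)]$ on both sides, I would then enlarge the feasible interval of the dual variable to $[0,1/(b-\mu)]$; this is legitimate because $\ub>b$ forces $1/(\ub-\mu)<1/(b-\mu)$, and enlarging the domain of a maximum can only increase it, so the bound moves in the correct direction. The enlarged maximum is exactly $\Kinf(\nu,\mu)$ by the variational formula on $[0,b]$, yielding $\Kinf(\bnu,\mu)\le(1-\alpha)\Kinf(\nu,\mu)$. As a cross-check I would also note a primal derivation: taking an optimal competitor $\eta^\star$ for $\Kinf(\nu,\mu)$ and feeding the mixture $\bnu(\{\ub\})\,\delta_{\ub}+(1-\alpha)\eta^\star$ into the definition of $\Kinf(\bnu,\mu)$ (whose mean is at least $\mu$ since $\ub>\mu$), joint convexity of the Kullback--Leibler divergence for mixtures sharing the common component $\delta_{\ub}$ again produces the factor $(1-\alpha)$.

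The main obstacle is the bookkeeping of the two distinct support endpoints $\ub$ and $b$: the two $\Kinf$ quantities are dual problems over different admissible ranges $[0,1/(\ub-\mu)]$ and $[0,1/(b-\mu)]$, so one must be careful that restricting $\nu$'s dual problem to the smaller range only decreases its value, so that the monotonicity step $\max_{[0,1/(\ub-\mu)]}\le\max_{[0,1/(b-\mu)]}$ points the inequality the right way. A secondary check is simply that $\mu\in(0,b)$ keeps both $\ub-\mu$ and $b-\mu$ strictly positive, so both variational formulas apply and the bound is not vacuous.
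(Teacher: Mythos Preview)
Your proposal is correct and follows essentially the same route as the paper: apply the variational formula for $\Kinf$ on $[0,\ub]$, split the expectation over the mixture, drop the non-positive Dirac contribution $\log(1-\lambda(\ub-\mu))\le 0$, and then enlarge the dual domain from $[0,1/(\ub-\mu)]$ to $[0,1/(b-\mu)]$ to recover $\Kinf(\nu,\mu)$. Your primal cross-check via joint convexity is a nice addition not present in the paper. One caveat you implicitly corrected (as does the paper's own proof): the lemma statement writes $\bnu=(1-\alpha)\delta_{\ub}+\alpha\nu$, but for the bound $\Kinf(\bnu,\mu)\le(1-\alpha)\Kinf(\nu,\mu)$ to hold the factor must be the mass that $\bnu$ places on $\nu$, so the roles of $\alpha$ and $1-\alpha$ are swapped somewhere; both you and the paper proceed with $(1-\alpha)$ as the weight of $\nu$.
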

\begin{proof}
    By a variational formula for $\Kinf$ (see Lemma~\ref{lem:var_form_Kinf})
    \[
        \Kinf(\bnu, \mu) = \max_{\lambda \in [0, 1/(\ub-\mu)]} \E_{X \sim \bnu}\left[ \log\left( 1 - \lambda (X-\mu) \right) \right].
    \]
    Since $\bnu$ is a mixture, we have for any $\lambda \in [0, 1/(\ub- \mu)]$
    \[
        \E_{X \sim \bnu}\left[ \log\left( 1 - \lambda (X-\mu) \right) \right] = (1-\alpha) \E_{X \sim \bnu}\left[ \log\left( 1 - \lambda (X-\mu) \right) \right] + \alpha  \log\left( 1 - \lambda (\ub-\mu) \right).
    \]
    Notice that $\max_{\lambda > 0} \log(1-\lambda(\ub-\mu)) = 0$. Thus, maximizing each term separately over $\lambda$, we have
    \begin{align*}
        \Kinf(\bnu, \mu) &\leq (1-\alpha) \max_{\lambda \in [0, 1/(\ub-\mu)]} \E_{X \sim \bnu}\left[ \log\left( 1 - \lambda (X-\mu) \right) \right] \\
        &\leq (1-\alpha) \max_{\lambda \in [0, 1/(b-\mu)]} \E_{X \sim \bnu}\left[ \log\left( 1 - \lambda (X-\mu) \right) \right]  = (1-\alpha) \Kinf(\nu, \mu).
    \end{align*}
\end{proof}

\section{Experimental details}
\label{app:experiment_details}
In this section, we detail the experiments we conducted for tabular and non-tabular environments. For all experiments, we used 2 CPUs (Intel Xeon CPU $2.20$GHz), and no GPU was used. Each experiment took approximately one hour.

\subsection{Tabular experiments}

In our initial experiment, we investigated a simple grid-world environment.

\paragraph{Environments} For tabular experiments we use two environments.

The first one is a grid-world environment with $100$ states $(i, j) \in [10]\times[10]$ and $4$ actions (left, right, up and down). The horizon is set to $H=50$. When taking an action, the agent moves in the corresponding direction with probability $1-\epsilon$, and moves to a neighbor state at random with probability $\epsilon=0.2$. The agent starts at position $(1, 1)$. The reward equals to $1$ at the state $(10, 10)$ and is zero elsewhere. 

The second one is a chain environment described by \cite{osband2016deep} with $L=15$ states and $2$ actions (left or right). The horizon is equal to $30$, and the probability of moving in the wrong direction is equal to $0.1$. The agent starts in the leftmost state with reward $0.05$, and the largest reward is equal to $1$ in the rightmost state.

\begin{figure}
    
    \centering
    \includegraphics[width=0.6\linewidth]{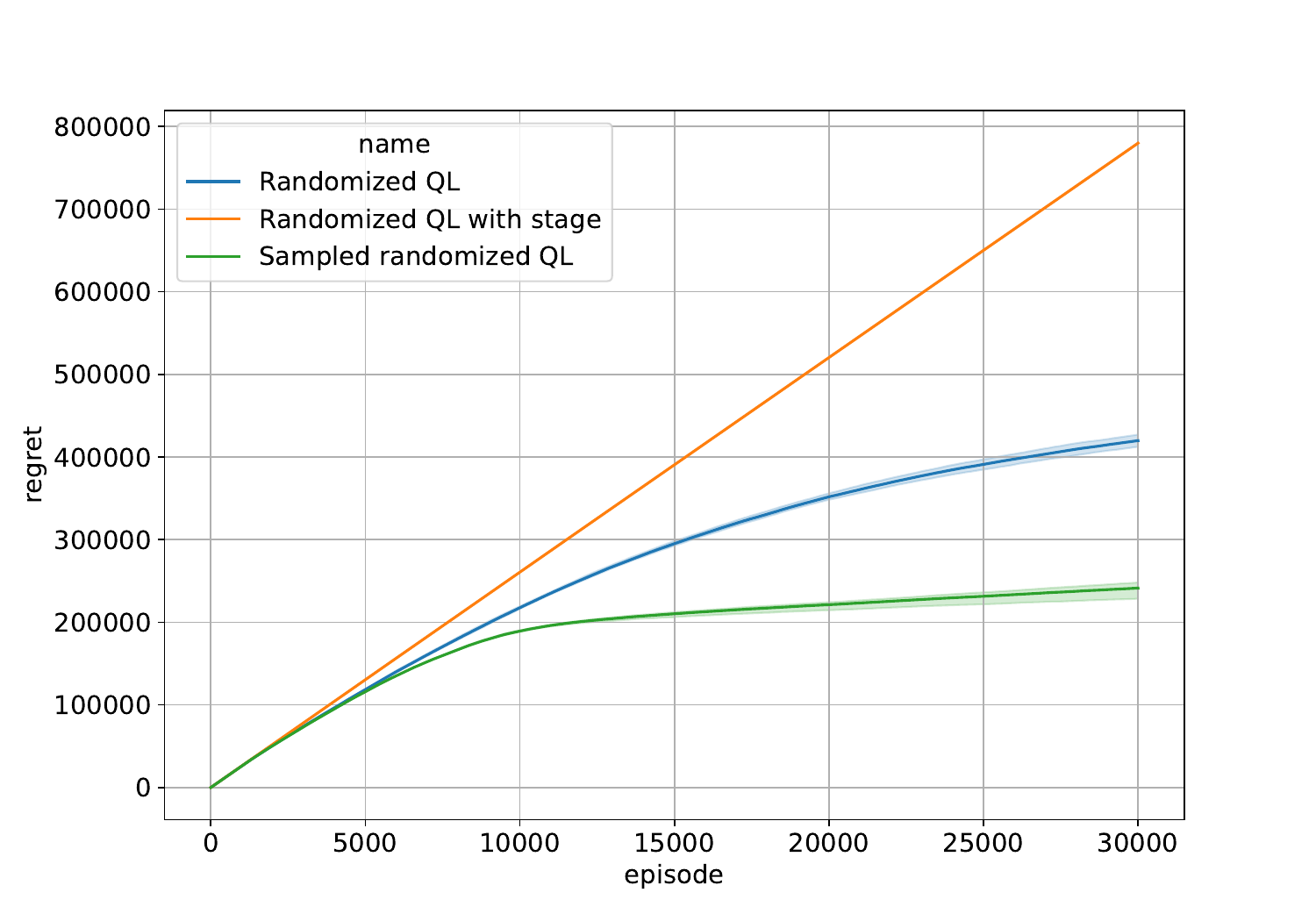}
    \caption{Regret curves of \RandQL, \StagedRandQL and \SampledRandQL on a grid-world environment with $100$ states and $4$ actions for $H = 50$ an transitions noise $0.2$. We show average over 4 seeds.}
    \label{fig:randql_vs_variant}
\end{figure}

\paragraph{Variations of randomized Q-learning} First, we compare the different variations of randomized Q-learning on the grid-world environment. Precisely, we consider:
\begin{itemize}
    \item \RandQL a randomized version of \OptQL, detailed in Appendix~\ref{app:description_randQL}.
    \item \StagedRandQL a staged version of \RandQL, described in Section~\ref{sec:algorithm}.
    \item \SampledRandQL a version of \RandQL which samples one Q-value function in the ensemble to act, described in Appendix~\ref{app:description_randQL}.
\end{itemize}
For these algorithms, we used the same parameters: posterior inflation $\kappa=1.0$, $n_0=1/S$ prior sample (same as \PSRL, see below), ensemble size  $J=10$. We use a similar ensemble size as the one used for the experiments with \OPSRL by \citet{tiapkin2022optimistic}. For \StagedRandQL we use stage of sizes  $\big((1+1/H)^k\big)_{k\geq1}$ without the $H$ factor, in order to have several epochs per state-action pair even for few episodes.

The comparison is presented in Figure~\ref{fig:randql_vs_variant}. We observe that \RandQL and \SampledRandQL behave similarly, with slightly better performance for \SampledRandQL. This is coherent with the experiment on the comparison between \OPSRL and \PSRL \citet{tiapkin2022optimistic} where the optimistic version performs worse than the fully randomized algorithm. We also note that even with the aggressive stage schedule, \StagedRandQL needs more episodes to converge. We conclude that, despite simplifying the analysis, it artificially slows down the learning in practice.

To ease the comparison with the baselines, for the rest of the experiments, we only use \RandQL because of its similarity with \OptQL.

\paragraph{Baselines}
We compare \RandQL algorithm to the following baselines:
\begin{itemize}
    \item \OptQL\citep{jin2018is} a model-free optimistic Q-learning.
    \item \UCBVI \citep{azar2017minimax} a model-based optimistic dynamic programming.
    \item \GreedyUCBVI \citep{efroni2019tight} optimistic real-time dynamic programming.
    \item \PSRL \citep{osband2013more} model-based posterior sampling.
    \item  \RLSVI \citep{russo2019worst} model-based randomized dynamic programming.
\end{itemize}

The selection of parameters can have a significant impact on the empirical regrets of an algorithm. For example, adjusting the multiplicative constants in the bonus of \UCBVI or the scale of the noise in \RLSVI can result in vastly different regrets. To ensure a fair comparison between algorithms, we have made the following parameter choices:
\begin{itemize}

\item For bonus-based algorithms, \UCBVI, \OptQL, we use simplified bonuses from an idealized Hoeffding inequality of the form
\begin{align}
\label{eq:simplified_bonus}
	\beta_h^t(s,a) \triangleq
	\min\left(
	\sqrt{\frac{1}{n_h^t(s,a)}} + \frac{H-h+1}{n_h^t(s,a)}, H-h+1
	\right)\,.
\end{align}
As explained by \citet{menard2021ucb}, this bonus does not necessarily result in a true upper-confidence bound on the optimal Q-value. However, it is a valid upper-confidence bound for $n_h^t(s,a)= 0$, which is important in order to discover new state-action pairs. 
\item For \RLSVI we use the variance of Gaussian noise equal to simplified Hoeffding bonuses described above in \eqref{eq:simplified_bonus}.
\item For \PSRL, we use a Dirichlet prior on the transition probability distribution with parameter $(1/S,\ldots,1/S)$ and for the rewards a Beta prior with parameter $(1,1)$. Note that since the reward $r$ is not necessarily in $\{0,1\}$, we just sample a new randomized reward $r'\sim\Ber(r)$ accordingly to a Bernoulli distribution of parameter $r$, to update the posterior, see \citet{agrawal2013further}. 
\end{itemize}

\begin{figure}
    \centering
    \includegraphics[width=0.45\linewidth]{figures/randql_vs_baseline.pdf}
    \includegraphics[width=0.45\linewidth]{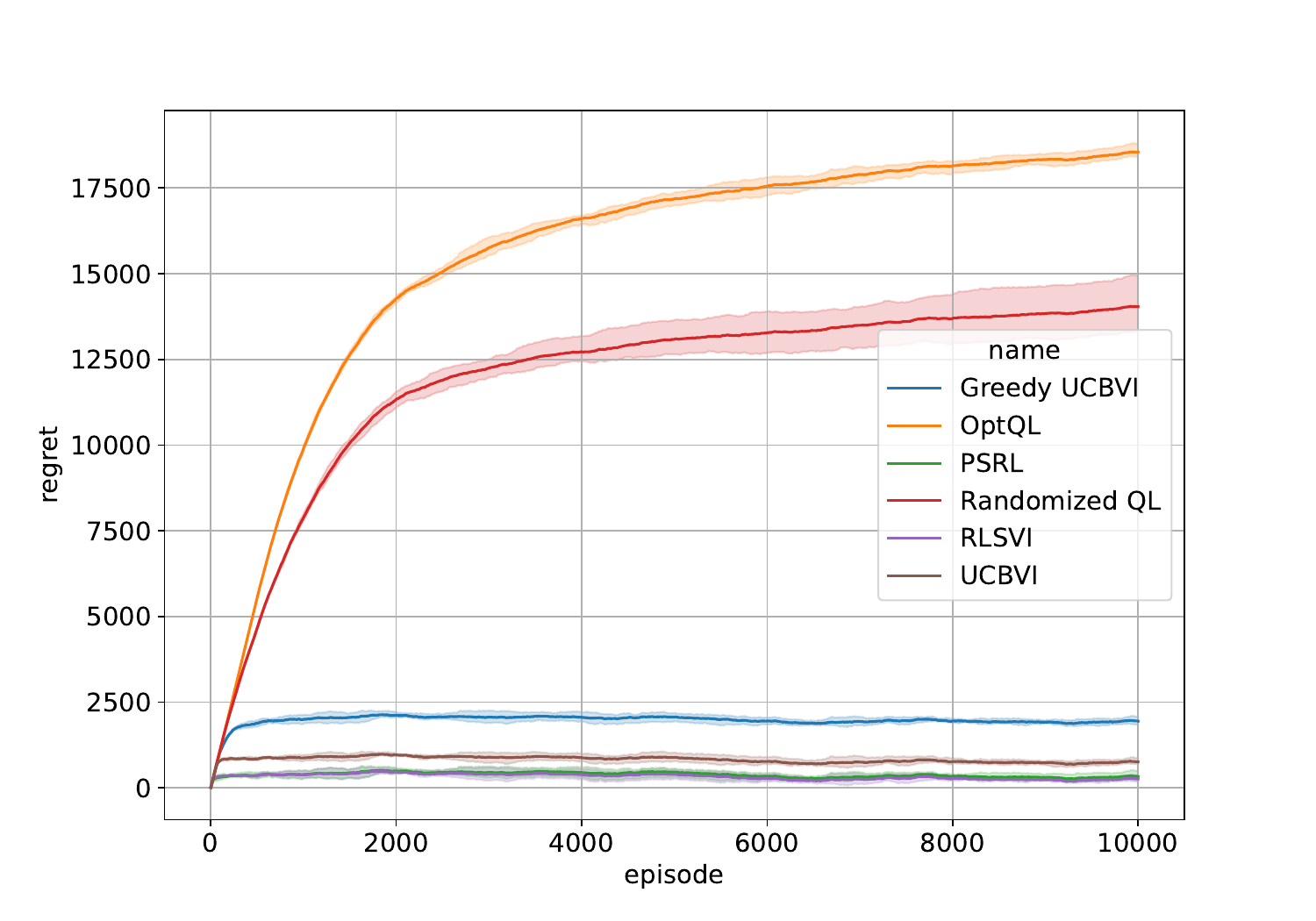}
    \caption{Regret curves of \RandQL and several baselines in (left) a grid-world environment with $100$ states and $4$ actions for $H = 50$ an transitions noise $0.2$, and (right) in a chain environment of length $L=15$, $2$ actions for $H=30$ with transition noise $0.1$: smaller is better. We show average and error bars over 4 seeds.}
    \label{fig:randql_vs_baseline}
\end{figure}

\paragraph{Results} Figure~\ref{fig:randql_vs_baseline} shows the result of the experiments. Overall, we see that \RandQL outperforms \OptQL algorithm on the tabular environment, but still degrades in comparison to model-based approaches, which is usual for model-free algorithms in tabular environments. Indeed, as explained by \citet{menard2021ucb}, using a model and backward induction allows new information to be more quickly propagated. For example, \UCBVI needs only one episode to propagate information about the last step $h=H$ to the first step $h=1$, whereas \OptQL or \RandQL need at least $H$ episodes. But as a counterpart,  \RandQL has a better time-complexity and space-complexity than model-based algorithms, see Table~\ref{tab:time_space_complexity}.

\begin{table}[t]
\setlength{\tabcolsep}{4pt}
\centering
\begin{tabular}{@{}lcl}\toprule
\textbf{Algorithm} & \textbf{Time-complexity (per episode)} & \textbf{Space complexity} \\
\midrule
\UCBVI~{\tiny{\citep{azar2017minimax}}} &  \multirow{3}{*}{$\tcO(HS^2A)$} & \multirow{4}{*}{$\tcO(HS^2A)$}\\
\PSRL~{\tiny{\citep{osband2013more}}} &  & \\
\RLSVI~{\tiny{\citep{russo2019worst}}} & & \\ 
\GreedyUCBVI~{\tiny{\citep{efroni2019tight}}} &  $\tcO(HSA)$ & \\
\OptQL~{\tiny{\citep{jin2018is}}} &  $\tcO(H)$ &  $\tcO(HSA)$\\
\midrule 
 \rowcolor[gray]{.90} \RandQL~{\tiny (this paper)} & $\tcO(H)$  &  $\tcO(HSA)$\\
\bottomrule
\end{tabular}
\caption{Time- and space-complexity of several tabular algorithms.}
\label{tab:time_space_complexity}
\end{table}

\subsection{Non-tabular experiments}

The second experiment was performed on a set of two-dimensional continuous environments \citep{rlberry} with levels of increasing exploration difficulty.

\paragraph{Environment} We use a ball environment with the 2-dimensional unit Euclidean ball as state-space $\cS = \{s\in\R^2, \norm{s}_2\leq 1\}$ and of horizon $H=30$. The action space is a list of 2-dimensional vectors $\cA = \{[0.0, 0.0], [-0.05, 0.0], [0.05, 0.0], [0.0, 0.05], [0.0, -0.05]\}$ that can be associated with the action of staying at the same place, moving left, right, up or down. Given a state $s_h$ and an action $a_h$ the next state is 
\[
s_{h+1} = \mathrm{proj}_{\cS}(s_{h} + a_h + \sigma z_h)
\]
where $z_h\sim \cN([0,0] , I_2)$ is some independent Gaussian noise with zero mean and identity covariance matrix and $\mathrm{proj}_B$ is the Euclidean projection on the unit ball $\cS$. The initial position $s_1 = \sigma_1 z_1$ with $z_1\sim \cN([0,0] , I_2)$ and $\sigma_1=0.001$, is sampled at random from a Gaussian distribution. The reward function is independent of the action and the step 
\[
r_h(s,a) = \max( 0,  1 - \norm{ s - s'}/c )
\]
where $s'=[0.5,0.5]\in\cS$ is the reward center and $c >0$ is some smoothness parameter. We distinguish $3$ levels by increasing exploration difficulty:
\begin{itemize}
    \item Level $1$, dense reward and small noise. The smoothness parameter is $c=0.5\cdot \sqrt{2}\approx 0.71$ and the transition standard deviation is $\sigma = 0.01$.
    \item Level $2$, sparse reward, and small noise.
    The smoothness parameter is $c=0.2$ and the transition standard deviation is $\sigma = 0.01$.
    \item Level $3$, sparse reward, and large noise.
    The smoothness parameter is $c=0.2$ and the transition standard deviation is $\sigma = 0.025$.
\end{itemize}

\paragraph{\RandQL algorithm} Among the different versions of \RandQL for continuous state-action space, see Section~\ref{sec:metric_space}, we pick the \AdaptiveRandQL algorithm, described in Appendix~\ref{app:adaptive_randql_proof}, as it is the closest version to the \AdaptiveQL algorithm. It combines the \RandQL algorithm and adaptive discretization. For \AdaptiveRandQL we used an ensemble of size $J=10 \approx \log(T)$, $\kappa = 10 \approx \log(T)$ and a prior number of samples of $n_0 =  0.33$. Note that we increased the number of prior samples in comparison to the tabular case as explained in Section~\ref{sec:metric_space}.

\paragraph{Baselines} We compare \AdaptiveRandQL algorithm to the following baselines:
\begin{itemize}
    \item  \AdaptiveQL \citep{sinclair2019adaptive,sinclair2022adaptive}, an adaptation of \OptQL algorithm to continuous state-space thanks to adaptive discretization;
    \item  \KernelUCBVI \citep{domingues2021kernel}, a kernel-based version of the \UCBVI algorithm;
    \item \DQN \citep{mnih2013playing}, a deep RL algorithm;
    \item \BootDQN \citep{osband2015bootstrap}, a deep RL algorithm with an additional exploration given by bootstraping several Q-networks;
\end{itemize}
For \AdaptiveQL and \KernelUCBVI baselines, we employ the same simplified bonuses \eqref{eq:simplified_bonus} used for the tabular experiments. For \KernelUCBVI we used a Gaussian kernel of bandwidth $0.025$ and the representative states technique, with $300$ representative states, described by \citet{domingues2021kernel}.

For \DQN and \BootDQN we use as a network a 2-layer multilayer perceptron (MLP) with a hidden layer size equal to $64$. For exploration, \DQN utilizes $\varepsilon$-greedy exploration with coefficient annealing from $1.0$ to $0.1$ during the first $10,000$ steps. For \BootDQN we use an ensemble of $10$ heads and do not use $\varepsilon$-greedy exploration.

\begin{figure}
    \centering
    \includegraphics[width=0.48\linewidth]{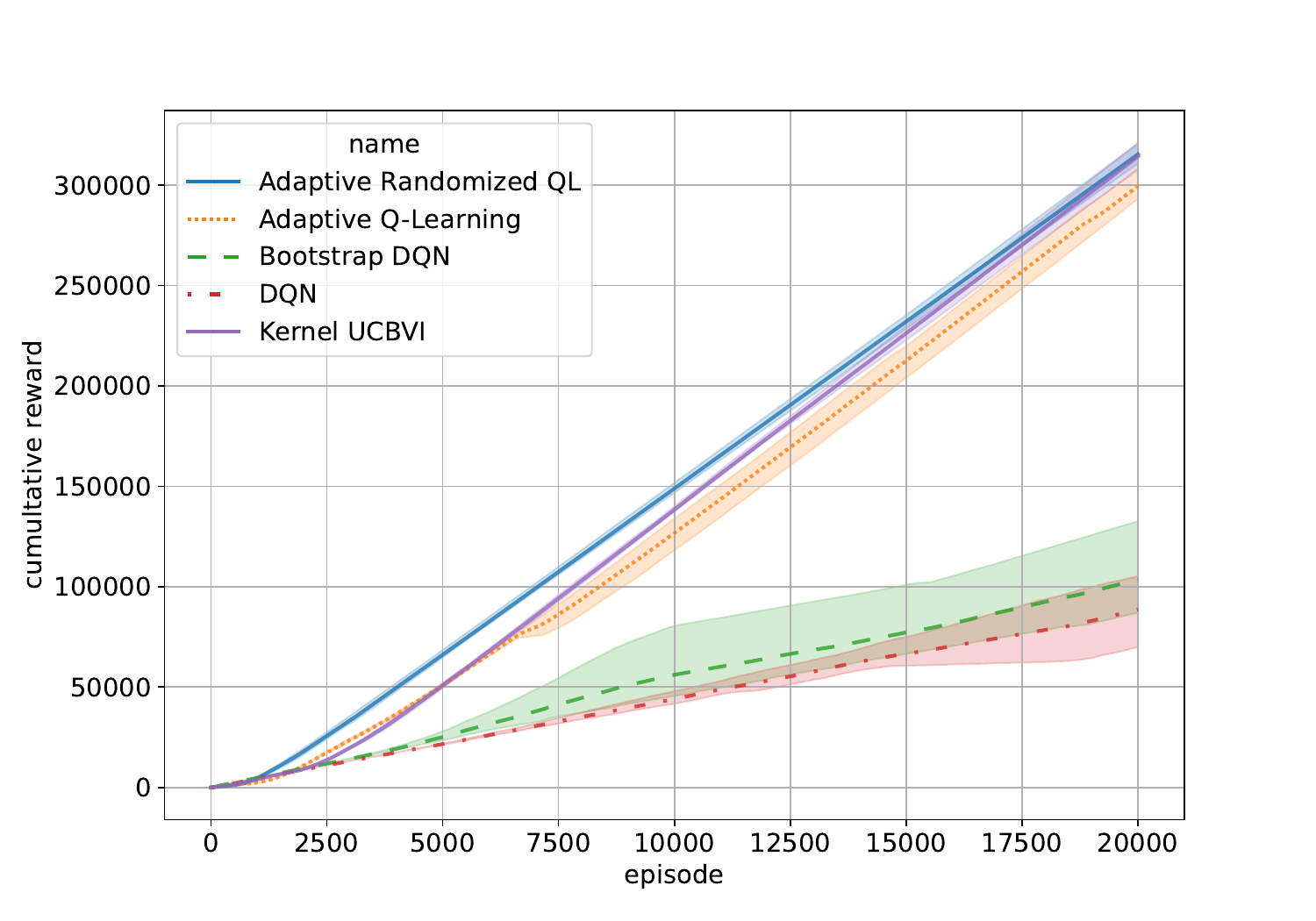}
    \includegraphics[width=0.48\linewidth]{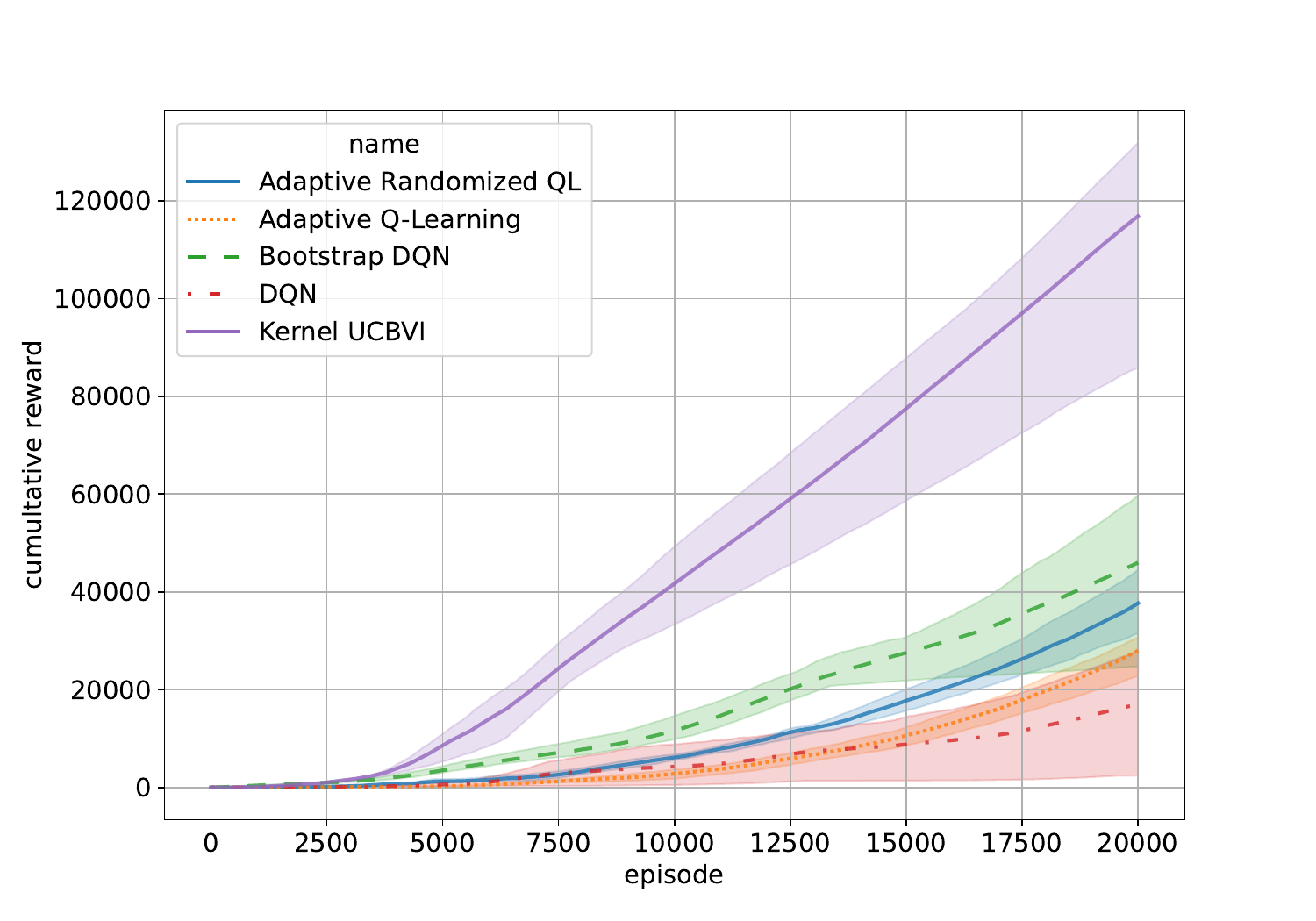}
    \includegraphics[width=0.48\linewidth]{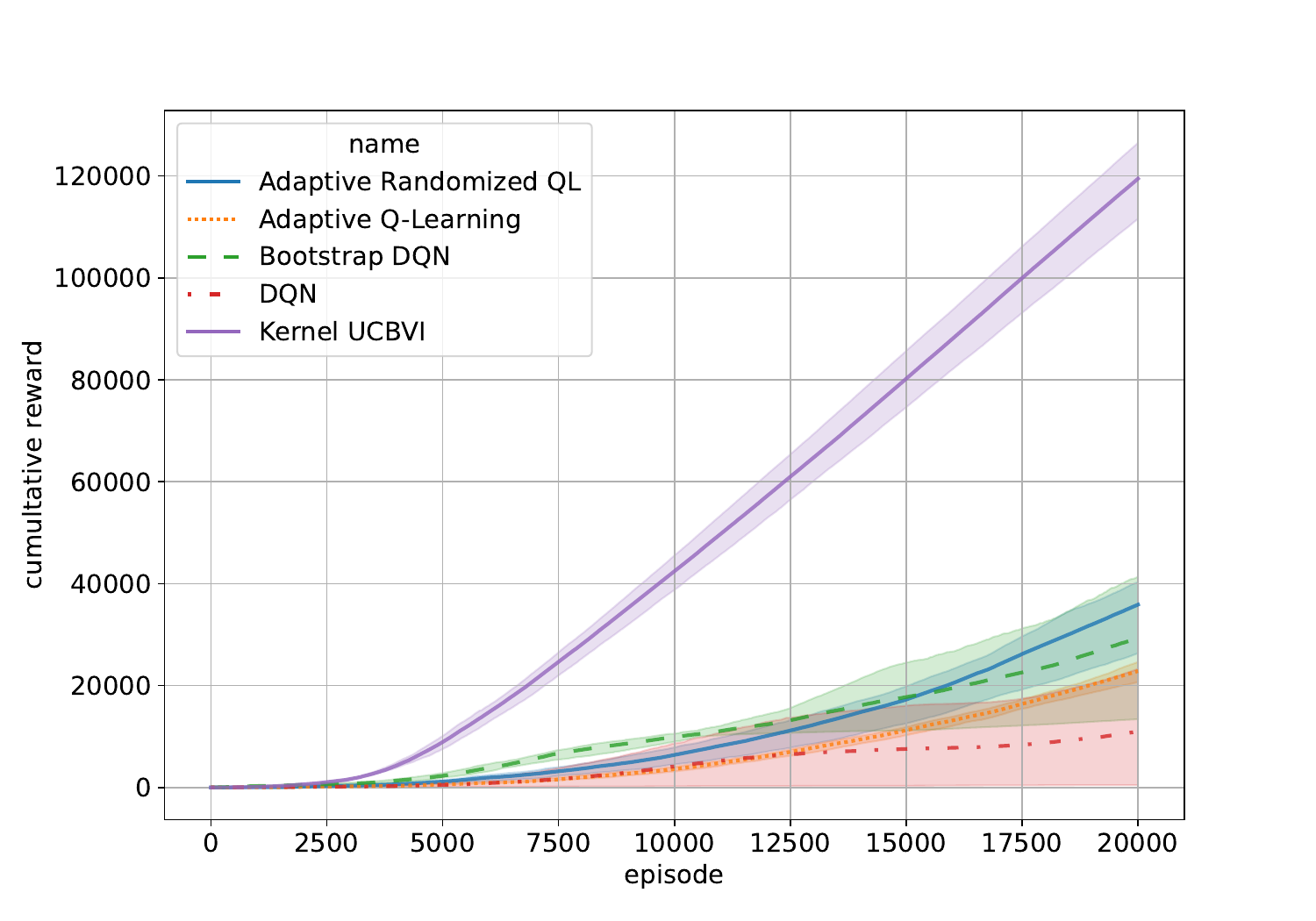}
    \caption{Cumulative rewards (higher is better) of \AdaptiveRandQL and several baselines in ball environments with increasing exploration difficulty: Upper Left displays Level 1, Upper Right shows Level 2, Down shows Level 3.   We show average and error bars over 4 seeds.}
    \label{fig:pball_randql}
\end{figure}

\paragraph{Results} Figure~\ref{fig:pball_randql} shows the results of non-tabular experiments.  Overall, we see that \AdaptiveRandQL outperforms \AdaptiveQL in all environments, especially in the sparse reward setting.  However, we see that the model-based algorithm is much more sample efficient than the model-free algorithm, as it was shown by \cite{domingues2021kernel}. This is connected to a low dimension of the presented environment, where the difference in theoretical regret bounds is not so large. However, this performance come at the price of $3$-times larger time complexity, see Table~\ref{tab:time_complexity}.

Regarding the comparison to neural-network-based algorithms, we see that approaches based on adaptive discretization always outperform \DQN and \BootDQN on an environment with non-sparse rewards. We connect this phenomenon to the fact that neural network algorithms are solving two problems at the same time: exploration and optimization, whereas discretization-based approaches solve only the exploration problem. 

In the setup of sparse rewards, it turns out that neural network-based approaches are competitive with \AdaptiveQL and \AdaptiveRandQL. Notably, \DQN shows itself as the worst one, whereas \AdaptiveRandQL and \BootDQN show similar performance, additionally justifying the exploration effect of ensemble learning and randomized exploration.

\begin{table}[t]
\setlength{\tabcolsep}{4pt}
\centering
\begin{tabular}{@{}lcl}\toprule
\textbf{Algorithm} & \textbf{Episode time (second)}\\
\midrule
\AdaptiveRandQL &  $5.780e^{-02}$ \\
\AdaptiveQL &  $4.213e^{-02}$\\
\KernelUCBVI &  $1.523e^{-01}$\\
\bottomrule
\end{tabular}
\caption{Average time of one episode in second (averaged over $20000$ episodes).}
\label{tab:time_complexity}
\end{table}

\end{document}